\pdfoutput=1
\RequirePackage[T1]{fontenc}
\documentclass[12pt]{article}
\usepackage[skip=1em plus 0.2em minus 0.1em]{parskip}
\usepackage[height=8.85in,width=6.45in]{geometry}

\usepackage[utf8]{inputenc}
\usepackage{amsmath}
\usepackage{amssymb}
\usepackage{mathtools}
\numberwithin{equation}{section}
\usepackage{slashed}
\usepackage{braket}
\usepackage{enumitem}
\usepackage[svgnames]{xcolor}
\usepackage[colorlinks,citecolor=DarkGreen,linkcolor=FireBrick,urlcolor=FireBrick,linktocpage,unicode]{hyperref}

\usepackage{tabularx}

\usepackage{tocloft}

\usepackage{footmisc}
\allowdisplaybreaks
\usepackage{graphicx}
\usepackage{tikz}
\usepackage{tikz-cd}
\usepackage{times}
 
\usepackage{bm}
\usepackage{physics}
\usepackage{xcolor}
\usepackage{natbib}
\usepackage{mdframed}
\usepackage{nicefrac}
\usepackage{booktabs}
\usepackage{lipsum}
\usepackage{titlesec}
\usepackage{wrapfig,lipsum,booktabs}
\usepackage{blindtext}
\usepackage{caption}

\usepackage{algorithm}
\usepackage{algpseudocode}
\newcommand{\commentsymbol}{//}%
\algrenewcommand\algorithmiccomment[1]{\hfill {\footnotesize \commentsymbol{} #1}}

\usepackage{titletoc}
\usepackage{todonotes}
\usepackage{setspace}
\usepackage{dsfont} %
\newcommand{\one}{\mathds{1}}

\usepackage[nameinlink,capitalize,noabbrev]{cleveref}

\usepackage{CJK}

\usepackage{array}
\usepackage{bbm}
\usepackage{makecell}

\usepackage{dashbox}
\usepackage{xcolor}
\usepackage{colortbl}
\definecolor{lightyellow}{rgb}{1.0, 0.95, 0.7}
\definecolor{Blue}{rgb}{0, 0, 0.8}
\definecolor{blue}{rgb}{0,0,1}
\definecolor{darkgreen}{rgb}{0,0.40,0}
\definecolor{firebrick}{rgb}{0.698,0.133,0.133}

\definecolor{colorA}{rgb}{1,0,0}
\definecolor{colorB}{rgb}{0,0.3,1}
\definecolor{colorC}{rgb}{0.9,0.8,0.2}
\definecolor{colorD}{rgb}{0,0.65,0}
\definecolor{lesslightgray}{rgb}{0.5,0.5,0.5}
\definecolor{light-gray}{gray}{0.95}

\let\tilde\widetilde
\let\hat\widehat

\newcommand{\Softmax}{{\rm{Softmax}}}

\def\P{\mathbb{P}}
\def\R{\mathbb{R}}

\def\p{\partial }
\let\cite\citep 

\usepackage{amsthm}
\usepackage[many]{tcolorbox}
\usepackage{etoolbox}
\BeforeBeginEnvironment{proof}{\par\vspace{-1\baselineskip}}
\AfterEndEnvironment  {proof}{\par\vspace{-1\baselineskip}}

\newtheoremstyle{theoremstyle}
  {.5\baselineskip} %
  {.5\baselineskip} %
  {}                  %
  {}                  %
  {\bfseries}        %
  {.}                 %
  {1em}               %
  {}                  %

\theoremstyle{theoremstyle}
\newtheorem{theorem}{Theorem}[section]
\newtheorem{lemma}{Lemma}[section]

\newtheorem{proposition}{Proposition}[section]
\newtheorem{definition}{Definition}[section]

\newtheorem{remark}{Remark}[section]

\tcolorboxenvironment{theorem}{
  breakable,
  colback=black!10,
  colframe=white,%
  width=\dimexpr\linewidth+10pt\relax,%
  enlarge left by=-5pt,%
  enlarge right by=-5pt,%
  boxsep=5pt,%
  boxrule=0pt,
  left=0pt,right=0pt,top=0pt,bottom=0pt,
  sharp corners,
  before skip=0.5\baselineskip, %
  after skip=0.5\baselineskip,  %
  fonttitle=\bfseries, %
  coltitle=black %
}
\tcolorboxenvironment{remark}{
  blanker,
  breakable,
  before skip=.8\baselineskip,  %
  after  skip=.8\baselineskip   %
}

\tcolorboxenvironment{proposition}{
  breakable,
  colback=black!10,
  colframe=white,%
  width=\dimexpr\linewidth+10pt\relax,%
  enlarge left by=-5pt,%
  enlarge right by=-5pt,%
  boxsep=5pt,%
  boxrule=0pt,
  left=0pt,right=0pt,top=0pt,bottom=0pt,
  sharp corners,
  before skip=0.5\baselineskip, %
  after skip=0.5\baselineskip,  %
  fonttitle=\bfseries, %
  coltitle=black %
}

\tcolorboxenvironment{lemma}{
  breakable,
  colback=black!10,
  colframe=white,%
  width=\dimexpr\linewidth+10pt\relax,%
  enlarge left by=-5pt,%
  enlarge right by=-5pt,%
  boxsep=5pt,%
  boxrule=0pt,
  left=0pt,right=0pt,top=0pt,bottom=0pt,
  sharp corners,
  before skip=0.5\baselineskip, %
  after skip=0.5\baselineskip,  %
  fonttitle=\bfseries, %
  coltitle=black %
}

\tcolorboxenvironment{corollary}{
  breakable,
  colback=black!10,
  colframe=white,%
  width=\dimexpr\linewidth+10pt\relax,%
  enlarge left by=-5pt,%
  enlarge right by=-5pt,%
  boxsep=5pt,%
  boxrule=0pt,
  left=0pt,right=0pt,top=0pt,bottom=0pt,
  sharp corners,
  before skip=0.5\baselineskip, %
  after skip=0.5\baselineskip,  %
  fonttitle=\bfseries, %
  coltitle=black %
}

\tcolorboxenvironment{definition}{
  breakable,
  colback=black!10,
  colframe=white,%
  width=\dimexpr\linewidth+10pt\relax,%
  enlarge left by=-5pt,%
  enlarge right by=-5pt,%
  boxsep=5pt,%
  boxrule=0pt,
  left=0pt,right=0pt,top=0pt,bottom=0pt,
  sharp corners,
  before skip=0.5\baselineskip, %
  after skip=0.5\baselineskip,  %
  fonttitle=\bfseries, %
  coltitle=black %
}
\tcolorboxenvironment{assumption}{
  breakable,
  colback=black!10,
  colframe=white,%
  width=\dimexpr\linewidth+10pt\relax,%
  enlarge left by=-5pt,%
  enlarge right by=-5pt,%
  boxsep=5pt,%
  boxrule=0pt,
  left=0pt,right=0pt,top=0pt,bottom=0pt,
  sharp corners,
  before skip=0.5\baselineskip, %
  after skip=0.5\baselineskip,  %
  fonttitle=\bfseries, %
  coltitle=black %
}

\crefname{theorem}{Theorem}{Theorems}
\crefname{proposition}{Proposition}{Propositions}
\crefname{lemma}{Lemma}{Lemmas}
\crefname{corollary}{Corollary}{Corollaries}
\crefname{definition}{Definition}{Definitions}
\crefname{assumption}{Assumption}{Assumptions}
\crefname{remark}{Remark}{Remarks}
\crefname{problem}{Problem}{Problems}
\crefname{property}{Property}{property}

\tcolorboxenvironment{hypothesis}{
  breakable,
  colback=black!10,
  colframe=white,%
  width=\dimexpr\linewidth+10pt\relax,%
  enlarge left by=-5pt,%
  enlarge right by=-5pt,%
  boxsep=5pt,%
  boxrule=0pt,
  left=0pt,right=0pt,top=0pt,bottom=0pt,
  sharp corners,
  before skip=0.5\baselineskip, %
  after skip=0.5\baselineskip,  %
  fonttitle=\bfseries, %
  coltitle=black %
}
\crefname{hypothesis}{Hypothesis}{Hypothesises}

\newtheorem{fact}{Fact}
\tcolorboxenvironment{fact}{
  breakable,
  colback=black!10,
  colframe=white,%
  width=\dimexpr\linewidth+10pt\relax,%
  enlarge left by=-5pt,%
  enlarge right by=-5pt,%
  boxsep=5pt,%
  boxrule=0pt,
  left=0pt,right=0pt,top=0pt,bottom=0pt,
  sharp corners,
  before skip=0.5\baselineskip, %
  after skip=0.5\baselineskip,  %
  fonttitle=\bfseries, %
  coltitle=black %
}
\crefname{fact}{Fact}{Facts}

\tcolorboxenvironment{example}{
  breakable,
  colback=black!10,
  colframe=white,%
  width=\dimexpr\linewidth+10pt\relax,%
  enlarge left by=-5pt,%
  enlarge right by=-5pt,%
  boxsep=5pt,%
  boxrule=0pt,
  left=0pt,right=0pt,top=0pt,bottom=0pt,
  sharp corners,
  before skip=0.5\baselineskip, %
  after skip=0.5\baselineskip,  %
  fonttitle=\bfseries, %
  coltitle=black %
}
\crefname{example}{Example}{Examples}

\tcolorboxenvironment{question}{
  breakable,
  colback=black!10,
  colframe=white,%
  width=\dimexpr\linewidth+10pt\relax,%
  enlarge left by=-5pt,%
  enlarge right by=-5pt,%
  boxsep=5pt,%
  boxrule=0pt,
  left=0pt,right=0pt,top=0pt,bottom=0pt,
  sharp corners,
  before skip=0.5\baselineskip, %
  after skip=0.5\baselineskip,  %
  fonttitle=\bfseries, %
  coltitle=black %
}

\crefname{question}{Question}{Questions}
\numberwithin{equation}{section}
\numberwithin{theorem}{section}
\numberwithin{proposition}{section}
\numberwithin{definition}{section}
\numberwithin{lemma}{section}
\numberwithin{assumption}{section}
\numberwithin{remark}{section}

\usepackage{lipsum}

\newcommand*{\annot}[1]{\tag*{\footnotesize{\textcolor{black!50}{\big(#1\big)}}}}

\makeatletter
\let\save@mathaccent\mathaccent
\newcommand*\if@single[3]{%
    \setbox0\hbox{${\mathaccent"0362{#1}}^H$}%
    \setbox2\hbox{${\mathaccent"0362{\kern0pt#1}}^H$}%
    \ifdim\ht0=\ht2 #3\else #2\fi
}
\newcommand*\rel@kern[1]{\kern#1\dimexpr\macc@kerna}
\newcommand*\widebar[1]{\@ifnextchar^{{\wide@bar{#1}{0}}}{\wide@bar{#1}{1}}}
\newcommand*\wide@bar[2]{\if@single{#1}{\wide@bar@{#1}{#2}{1}}{\wide@bar@{#1}{#2}{2}}}
\newcommand*\wide@bar@[3]{%
    \begingroup
    \def\mathaccent##1##2{%
        \let\mathaccent\save@mathaccent
        \if#32 \let\macc@nucleus\first@char \fi
        \setbox\z@\hbox{$\macc@style{\macc@nucleus}_{}$}%
        \setbox\tw@\hbox{$\macc@style{\macc@nucleus}{}_{}$}%
        \dimen@\wd\tw@
        \advance\dimen@-\wd\z@
        \divide\dimen@ 3
        \@tempdima\wd\tw@
        \advance\@tempdima-\scriptspace
        \divide\@tempdima 10
        \advance\dimen@-\@tempdima
        \ifdim\dimen@>\z@ \dimen@0pt\fi
        \rel@kern{0.6}\kern-\dimen@
        \if#31
        \overline{\rel@kern{-0.6}\kern\dimen@\macc@nucleus\rel@kern{0.4}\kern\dimen@}%
        \advance\dimen@0.4\dimexpr\macc@kerna
        \let\final@kern#2%
        \ifdim\dimen@<\z@ \let\final@kern1\fi
        \if\final@kern1 \kern-\dimen@\fi
        \else
        \overline{\rel@kern{-0.6}\kern\dimen@#1}%
        \fi
    }%
    \macc@depth\@ne
    \let\math@bgroup\@empty \let\math@egroup\macc@set@skewchar
    \mathsurround\z@ \frozen@everymath{\mathgroup\macc@group\relax}%
    \macc@set@skewchar\relax
    \let\mathaccentV\macc@nested@a
    \if#31
    \macc@nested@a\relax111{#1}%
    \else
    \def\gobble@till@marker##1\endmarker{}%
    \futurelet\first@char\gobble@till@marker#1\endmarker
    \ifcat\noexpand\first@char A\else
    \def\first@char{}%
    \fi
    \macc@nested@a\relax111{\first@char}%
    \fi
    \endgroup
    }
\makeatother
\let\bar\widebar

\newcommand*{\email}[1]{\footnote{\href{mailto:#1}{\texttt{#1}}}}

\setlist[itemize,enumerate]{
  parsep=\parskip,                                   %
  itemsep=\dimexpr .3em - \parskip\relax plus 2pt,   %
  topsep=\dimexpr 6pt - \parskip\relax plus 1pt minus 1pt,
  partopsep=0pt,
  listparindent=\parindent
}

\begin{document}
\begin{titlepage}

\begin{flushright}
Last Update: \today
\end{flushright}

\vskip 2.5em
\begin{center}

{
\LARGE \bfseries %
\begin{spacing}{1.15} %
On the Statistical Mechanics of Attention
\end{spacing}
}

\vskip 1em
Jerry Yao-Chieh Hu$^{\dagger\ddag *}$\email{jhu@u.northwestern.edu}
\quad
Yu-An Chen$^{\S*}$\email{b09202059@ntu.edu.tw}
\quad
Maojiang Su$^{\dagger}$\email{smj@u.northwestern.edu}
\quad
Han Liu$^{\dagger\natural}$\email{hanliu@northwestern.edu}

\def\thefootnote{*}
\footnotetext{These authors contributed equally to this work.
}

\vskip 1em

{\small
\begin{tabular}{ll}
 $^\dagger\;$Center for Foundation Models and Generative AI, Northwestern University\\
 \hphantom{$^\ddag\;$}Department of Computer Science, Northwestern University\\

 $^\ddag\;$Department of Computer Science, College of Connected Computing, Vanderbilt University\\
 $^\S\;$Department of Physics, National Taiwan University\\
 $^\natural\;$Department of Statistics and Data Science, Northwestern University
\end{tabular}}

\end{center}

\noindent
We revisit the statistical mechanics of Transformer attention via modern Hopfield models. 
We present a free-energy formalism to characterize parameter regimes and critical thresholds for robust attention inference. 
The resulting free-energy landscape incorporates randomness in query states and attention weights. 
Attention then emerges as the equation of motion of the corresponding statistical-mechanical system (i.e., modern Hopfield model) through the standard free-energy minimization principle. 
This allows us to identify first-order phase transitions in attention between glassy and robust-inference phases. 
We further extend to attentions with stochastic queries and keys. 
Together, our results provide a rigorous statistical-mechanics framework for characterizing phase structure and inference reliability across a family of Transformer attention mechanisms.

\end{titlepage}

{
\setlength{\parskip}{0em}
\setcounter{tocdepth}{2}
\tableofcontents
}
\setcounter{footnote}{0}

\section{Introduction}
\label{sec:intro}

We study the statistical-mechanics of Transformer attention through modern Hopfield models and characterize parameter regimes and critical threshold for robust attention inference.
Our key contribution is a statistical-mechanical formulation in which Transformer attention emerges from the standard free-energy minimization principle naturally.
Through this perspective, 
we give a clear characterization of first-order phase transitions of attention inference between glassy and robust-inference phases.
Consequently, our framework provides a rigorous way to characterize reliable inference of Transformer attention, and further allows extension to the full attention family.

During a Transformer forward pass, an input query selects and combines information from the entire context.
Attention mechanism captures the core function of this operation.
Concretely, for a query $q\in\R^{d_k}$ and stored key-value pairs $\{k_\mu,v_\mu\}_{\mu=1}^M$,  we denote $K$ and $V$ as the matrices with rows $k_\mu\in\R^{d_k}$ and $v_\mu\in\R^{d_v}$, respectively.
Then attention \cite{vaswani2017attention} computes
\begin{align*}
    {\rm Attn}(q,K,V)=\Softmax\Big(\beta\frac{qK^\top}{\sqrt{d_k}}\Big)V
    = \sum_{\mu=1}^M
    (\frac{e^{\beta z_\mu}}
    {\sum_{\nu=1}^M e^{\beta z_\nu}})v_\mu,
    \quad\text{with}\quad
    z_\mu := \frac{q k_\mu^\top}{\sqrt{d_k}}.
\end{align*}
Here $\beta$ is the inverse-temperature.
${\rm Attn}(q,K,V)$ specifies which key $k_\mu$ does the input query $q$ retrieve and contribute to the output via the value projection $V$. 

Recent works link this \textit{inference} of Transformer attention to the memory \textit{retrieval} of modern Hopfield models \cite{ramsauer2020hopfield,krotov2020large,hu2023sparse}, i.e., selecting or combining stored patterns based on the input query.
This connection is decomposable into two aspects:
\begin{itemize}
    \item First, \citet{hu2023sparse,wu2023stanhop} bridge between modern Hopfield models and attention family including softmax, sparsemax, and beyond \cite{martins2016softmax}.

    \item On the other hand, \citet{hu2023sparse} reveal that softmax solves an entropy-regularized optimization problem via variation \cite{peters2019sparse};
see \cite{santos2025hopfield} for a review.
\end{itemize}
Together, they view Transformer attentions as physical models and lead to useful methods \cite{widrich2020modern,hoover2023energy,auer2023conformal,hu2024outlier,wu2024uniform,dehmamy2025nrgpt}.
However, the prior work only investigate the retrieval rule itself. 
The parameter regimes for robust inference in these models remains unclear.
This raises a natural question in physics, with beneficial implications for Transformer inference.

In statistical mechanics, phase structure characterizes how a system's behavior changes as its parameters vary.
\citet{amit1985,AMIT198730,Nishimori2001} analyze the phase structure of classical Hopfield models \cite{classicalhopfield} via mean-field replica method.
However, they consider i.i.d. and infinite-dimension assumptions.
That is, how finite size and realistic structured patterns affect retrieval behavior remains open.
Modern Hopfield models offer a natural setting for extending this study of phase transitions: connecting the physics of associative memory to Transformer attention.
Yet their phase structure and the conditions governing these transitions remain poorly understood.

To address those issues, we utilize the free-energy minimization principle from statistical mechanics \cite{Landau:1980mil,binder1987,Zinn-Justinqft} to study phase structures of modern Hopfield models and hence of Transformer attention.
Within this framework, different entropy induces different %
sparsity and hence different attention mechanism \cite{martins2016softmax,peters2019sparse,ramsauer2020hopfield}.
Through this perspective, we explore first-order phase transitions and hence abrupt reorganization of attention weight beyond classical Hopfield models \cite{classicalhopfield,mezard_parisi1986}.
This leads to an %
parameter-dependent mechanism %
behind modern Hopfield models and hence the phase structure of Transformer attention.

\textbf{Contributions.} 
We develop a statistical-mechanical framework for Transformer attention and use it to understand its inference behavior. 
Our main contributions are:

\begin{itemize}

    \item \textbf{A Statistical-Mechanical Theory of Transformer Attention.}
    We formulate Transformer attention as a statistical-mechanical model of associative memory through the free energy of modern Hopfield networks.
    In this formulation, attention emerges naturally from the equations of motion (\cref{prp:eom_smhm,prp:eom_hopfield}) and its forward pass describes how a query retrieves stored information.
    This framework allows the unification of attention's temperature, entropy, sparsity,  fluctuations, and retrieval dynamics into one coherent physical picture.

    \item \textbf{Phase Transitions in Attention Inference.}
    We use the resulting free-energy landscape to study the reliability of information retrieval during a Transformer attention forward pass.
    Our analysis reveals two retrieval regimes: a glassy-like regime with competing metastable states and a robust-inference regime with stable, target-aligned retrieval.
    We characterize first-order phase transitions\footnote{A first-order phase transition occurs when the equilibrium
    state changes discontinuously as a control parameter varies.
    In our case, varying attention temperature can induce such
    transitions in memory retrieval.
    Memory load and key correlations further shape the phase structure.}
    between these regimes as attention parameters vary.  

    \item \textbf{Critical Conditions for Robust Attention Inference.}
    We further analyze what governs the stability of memory selection in attention inference.
    We characterize critical conditions for
    the coexistence of competing states (\cref{prp:coex_point}) and their spectral stability (\cref{prp:spinodal_point}) from correlations among learned keys.
    These conditions explain how learned key correlations and attention temperature determine whether target-memory selection remains stable or becomes vulnerable to interference from competing keys.

\end{itemize}

\textbf{Organization.}
\cref{sec:background} presents preliminaries. 
\cref{sec:stas_att_hopfield} presents physical formulation of attention through modern Hopfield models.
\cref{sec:retrieval_success} presents our phase transition results. 
\cref{sec:numerical_results} presents numerical validations.
\cref{sec:conclusion} presents concluding remarks and practical implications.
We defer related work discussion and statistical-mechanics tutorial in \cref{sec:related_works,sec:stat_mech_tutorial} due to space limit.
\cref{sec:fluctuated_version} extends to the stochastic version, and \cref{sec:dynamical_stability} analyzes the dynamical stability.
\cref{sec:numerical_studies,sec:add_exp} show experiment setup and additional experiments.

\begin{table}[t]
\centering
\caption{
Roadmap from the technical terms to their role in
attention.
}
\label{tab:framework-roadmap}
\small
\setlength{\tabcolsep}{4pt}
\renewcommand{\arraystretch}{1.15}
\resizebox{\textwidth}{!}{%
\begin{tabularx}{\linewidth}{
    >{\raggedright\arraybackslash}p{0.23\linewidth}
    >{\raggedright\arraybackslash}p{0.34\linewidth}
    >{\raggedright\arraybackslash}X
}
\toprule
\textbf{Concept}
&
\textbf{Meaning for Attention}
&
\textbf{Dictionary}
\\
\midrule

Fenchel Conjugate
&
It transforms a microscopic quantity into a macroscopic one, and constructs the free-energy objective.
&

\cref{def:fenchel_conjugation,prp:fenchel_young_duality_relation,thm:free_energy_tsallis_entropy}
\\

\addlinespace

Tsallis Entropy $\Psi_\alpha$
&
Attention Family:
softmax $(\alpha=1)$, entmax $(1<\alpha<2)$, and sparsemax
$(\alpha=2)$.
&
\cref{sec:tsallis_entropy,thm:free_energy_tsallis_entropy,prp:eom_attention}
\\

\addlinespace

Query-Key Similarity (Overlap $z=\frac{1}{\sqrt{d}}\xi^\top x$)
&
It measures the alignment between queries and memories.
A larger overpal indicates a stronger retrieval preference for that memory.
&
\cref{def:map_deterministic_variable}
\\

\addlinespace

Free-energy $\Gamma_\alpha$
&
It defines free-energy landscape and enables assigning energy value for each local minimum.
It encodes equation of motion as attention rule.
&
\cref{def:partition_function}, \cref{prp:eom_smhm,prp:eom_attention}
\\

\addlinespace

Coexistence (Energy Crossing)
&
It occurs when two distinct stable retrieval states have the same free energy $\Gamma_\alpha$ simultaneously.
&
\cref{def:coexist,prp:coex_point}
\\

\addlinespace

Spinodal
&
It occurs when the energy barrier of a local mimimum disappers.
This drives a retrieval state to the other.
&
\cref{def:spinodal,prp:spinodal_point}
\\

\addlinespace

Exponential Memory Load $\kappa=\log(M)/d$
&
It quantifies the growth rate of the number of memories.
This affects retrieval interference and reliability.
&
\cref{sec:memory_load_dependence}
\\

\bottomrule
\end{tabularx}
}
\end{table}

\section{Preliminaries}
\label{sec:background}
We first discuss convex analysis and geometric properties in statistical mechanics.
This induces entropy and sparsity via the free energy framework.
The purpose of this section is to identify the conjugate variables that characterize the retrieval dynamics and attention mechanism.

\subsection{Statistical Mechanics and Energy Formalism}
\label{sec:thermo_energy_formalism}
We introduce partition function characterizing a system with all possible configurations, and adopt it to discuss free energy through the statistical mechanics \cite{Zinn-Justinqft,Srednicki:2007qs}.

\begin{definition}[Partition Function]
\label{def:partition_function}
    Let $\mathcal{H}:\mathcal{X}\to\R$ be a given function called an energy function.
    We define the partition function as the Gibbs average over all configurations. 
    \begin{align*}
        Z[J]\coloneq\int_{\mathcal{X}}\exp\{-\beta\mathcal{H}[x]+\sum_{i=1}^dJ_ix_i\}~\dd\mu(x),
    \end{align*}
    where $\beta\coloneq1/T$ is the inverse temperature, $J\in\R^d$ is the generating source, and $\dd\mu(x)$ denotes the Lebesgue measure in $\mathcal{X}$.
    The partition function is positive.
    For simplicity, we consider $\mathcal{X}=\R^d$.
\end{definition}

\begin{remark}
    The generating source term represents the external magnetic field in a spin system.
    Furthermore, if the model is discrete, then we replace $\int_{\mathcal{X}}~\dd\mu(x)$ by $\sum_{x\in\mathcal{X}}$ with $\mathcal{X}=\{1,...,K\}$.
\end{remark}

\begin{definition}[Stationary Equations and Stationary Points]
\label{def:stationary_eq}
    The stationary equations are 
    \begin{align}
    \label{eq:stationary_eq}
        \nabla_{x}\{\beta\mathcal{H}[x]-\sum_{i=1}^dJ_ix_i\}|_{x=x_J}=0~,~\nabla^2_{x}\{\beta\mathcal{H}[x]-\sum_{i=1}^dJ_ix_i\}|_{x=x_J}\succ0~,
    \end{align}
    where $\succ$ means positive definite.
    We say that the solution $x_J$ is a stationary or (local) stable point.
\end{definition}

We obtain stable points in \cref{def:stationary_eq} through the energy function $\mathcal{H}[x]$.
However, it contains microscopic noise components.
Hence, $x_J$ is not a well-behaved stable point.
To extract the aggregated stability from the original energy function $\mathcal{H}[x]$, we should introduce the free energy (effective action) $\Gamma[\varphi]$. It later helps characterize the macroscopic averaged energy of the system.

\begin{definition}[Log-Partition Function and Free Energy]
\label{def:log_partition_effective_action}
    We have two different free energies that describe the partition function.
    To distinguish them, we define them carefully.
    \begin{itemize}
        \item \textbf{Log-Partition Function}: It is defined as 
        \begin{align}
        \label{eq:log_partition_function}
            W[J]\coloneq \log Z[J]=\log(\int_{\mathcal{X}}\exp\{-\beta\mathcal{H}[x]+\sum_{i=1}^dJ_ix_i\}~\dd\mu(x)).
        \end{align}
        
        \item \textbf{Free Energy (Effective Action)}: It is the Fenchel conjugate of the log-partition function.
        \begin{align}
        \label{eq:effective_action}
            \Gamma[\varphi]\coloneq\sup_{J\in\R^d}\{\sum_{i=1}^dJ_i\varphi_i-W[J]\},\qquad\text{with }~\varphi\in\R^d.
        \end{align}
        
    \end{itemize}
    
\end{definition}

\begin{remark}
\label{rmk:non_bijective_energy}
    From \cref{sec:basic_tool}, $W[J]$ and $\Gamma[\varphi]$ are both convex, $W[J]$ is smooth, $\nabla_J W[J]$ is single-valued, and $\Gamma[\varphi]$ is closed. 
    However, we do not assume strict convexity. 
    Consequently, the gradient maps $\nabla_JW[J]$ and $\nabla_\varphi\Gamma[\varphi]$ are not guaranteed to be bijective according to \cref{prp:properties_fenchel_conjugates}.
    In this way, distinct microscopic configurations are able to represent the same macroscopic state.
\end{remark}

The optimization problem becomes minimizing the free energy $\Gamma[\varphi]$ with respect to $\varphi$.
Next, we describe the physical interpretation/derivation of the Fenchel-Young duality.
\begin{proposition}[Fenchel-Young Duality Relation \cite{Srednicki:2007qs,blondel2020learning}]
\label{prp:fenchel_young_duality_relation}
    Let $p_J$ be the Gibbs measure with external source $J$.
    From the standard statistical field theory, the induced macroscopic field $\varphi=\mathbb{E}_{p_J}[x]$ reveals the following source-field relation:
    \begin{align*}
        \varphi=\mathbb{E}_{x\sim p_J}[x]\in\partial W[J]~\Leftrightarrow~J\in\partial\Gamma[\varphi]~,
    \end{align*}
    where $\partial$ is the subdifferential from \cref{def:subgradient}.
    If $W$ and $\Gamma$ are differentiable, then $\partial$ reduce to gradients $\nabla$.
    We call them equations of motion (EOMs) in the physics literature. 
    It also refers to the Fenchel-Young duality relation. 
    For $\varphi$ and $J$, we call them the classical solutions or saddle points.
\end{proposition}

\begin{proof}
    Please see \cref{sec:proof_fenchel_young} for a detailed proof.
\end{proof} 

From \cref{prp:fenchel_young_duality_relation}, the Fenchel conjugate variable $\varphi$ is macroscopic $\mathbb{E}_{x\sim p_J}[x]$ exactly.
$\varphi$ eliminates fluctuations and induces deterministic landscape $\Gamma[\varphi]$.
Please see \cref{fig:microscopic-to-effective-action} for illustration.

\begingroup
\setlength{\textfloatsep}{0em}
\setlength{\floatsep}{0em}
\begin{figure}[t]
    \centering
    \resizebox{\columnwidth}{!}{%
    \begin{tikzpicture}[scale=1, every node/.style={font=\small}]

    \begin{scope}[shift={(0,0)}]
        \draw[rounded corners=4pt, thick] (-0.5,-2.5) rectangle (4.8,2.8);
    
        \node[font=\bfseries] at (2.15,2.35) {Microscopic Configuration};
    
        \draw[->] (0,-1.8) -- (0,1.8) node[right] {$\beta \mathcal{H}[x]-Jx$};
        \draw[->] (0,-1.8) -- (4.2,-1.8) node[below] {$x$};

        \draw[thick, domain=0.5:3.8, smooth, variable=\x]
        plot ({\x},{0.85*(\x-2.1)^2 - 1.2});

        \fill (2.1,-1.2) circle (1.4pt);
        \node[below] at (2.1,-1.25) {$x_J$};

        \draw[dashed] (2.1,-0.55) ellipse (1.1 and 0.6);
        \fill[opacity=0.18] (2.1,-0.55) ellipse (1.1 and 0.6);
        \node at (2.1,0.35) {fluctuations};

        \fill (1.55,-0.94) circle (1.2pt);
        \fill (1.35,-0.72) circle (1.2pt);
        \fill (2.55,-1.03) circle (1.2pt);
        \fill (2.85,-0.72) circle (1.2pt);
        \fill (3.05,-0.43) circle (1.2pt);

        \node[align=center] at (2.15,1.3)
        {$p_J(x)\propto e^{-\beta \mathcal{H}[x]+Jx}$};

    \end{scope}

    \draw[->, thick] (5,0.1) -- (5.5,0.1);

    \begin{scope}[shift={(6.5,0)}]
        \draw[rounded corners=4pt, thick] (-0.5,-2.5) rectangle (4.8,2.8);

        \node[font=\bfseries] at (2.15,2.35) {Gibbs Average};

        \draw[->] (0,-2.0) -- (0,1.8) node[right] {$W[J]$};
        \draw[->] (0,-2.0) -- (4.2,-2.0) node[below] {$J$};

        \draw[thick, domain=0.6:3.7, smooth, variable=\x]
        plot ({\x},{0.23*(\x-0.8)^2 - 1.65});

        \fill (2.4,-1.06) circle (1.3pt);

        \draw[dashed] (1.35,-1.83) -- (3.45,-0.287);
        \node[rotate=22] at (3.35,-1) {\small slope $=\varphi$};

        \node[align=center] at (2.35,1.1)
        {$\varphi=\mathbb{E}_{x\sim p_J}[x]=\nabla_J W[J]$};
        
    \end{scope}

    \draw[->, thick] (11.5,0.1) -- (12,0.1);

    \begin{scope}[shift={(13.0,0)}]
        \draw[rounded corners=4pt, thick] (-0.5,-2.5) rectangle (4.8,2.8);

        \node[font=\bfseries] at (2.15,2.35) {Fenchel Conjugate};

        \draw[->] (0,-2.0) -- (0,1.8) node[right] {$\Gamma[\varphi]$};
        \draw[->] (0,-2.0) -- (4.2,-2.0) node[below] {$\varphi$};

        \draw[thick, domain=0.5:3.8, smooth, variable=\x]
        plot ({\x},{0.55*(\x-2.15)^2 - 1.25});

        \fill (2.15,-1.25) circle (1.4pt);
        \node[below] at (2.15,-1.3) {$\varphi^\star$};

        \node[align=center] at (2.15,1.15)
        {$J=\nabla_\varphi \Gamma[\varphi]$};

\end{scope}

\end{tikzpicture}%
}
\caption{
\textbf{Illustration of the procedure from the microscopic to macroscopic energy landscapes.}
{\small \textbf{Left.} ``(Microscopic) Fluctuations'' means $x$ is random, has nonzero variance, and follows Gibbs distribution $p_J(x)$.
Built from microscopic $x$, the local stable point $x_J$ exhibits randomness.  
\textbf{Middle.} $W[J]=\log(\int_\mathcal{X}\exp\{-\beta\mathcal{H}[x]+Jx\})$ encodes macroscopic responses, i.e., 
$\varphi=\mathbb{E}_{x\sim p_J}[x]=\nabla_J W[J]$.
Hence, the slope of $W[J]$ gives a deterministic average.
\textbf{Right.} $\Gamma[\varphi]=\sup_J\{J\varphi-W[J]\}$ is the Fenchel conjugate of $W[J]$. 
Its response is $J=\nabla_\varphi\Gamma[\varphi]$, so local stable macroscopic points occurs at $J=0$.
$\Gamma[\varphi]$ describes the energy landscape built from macroscopic $\varphi$ after integrating out microscopic fluctuations.
}}
\label{fig:microscopic-to-effective-action}
\end{figure}
\endgroup

\subsection{Connection between Free Energy and Tsallis Entropy}
\label{sec:entropy_function}

In this section, we connect our free energy to Tsallis entropy.
The value of $\alpha$
determines the sparsity structure of the retrieval dynamics.
As we shall see in \cref{sec:stas_att_hopfield}, this choice covers the general attention mechanisms, ranging from dense ($\alpha=1$) to the sparse case ($\infty>\alpha>1$).

For preliminaries on Tsallis entropy, $\alpha$-deformed exponential and logarithm family, please see \cref{sec:tsallis_entropy} for completeness.
They generalize free energy to the $\alpha$-deformed version.

\begin{theorem}[Tsallis Entropy and Deformed Free Energy]
\label{thm:free_energy_tsallis_entropy}
    If we replace the exponential functions and logarithm functions by $\alpha$-deformed functions, then we have the deformed log-partition function
    \begin{align*}
        -W_\alpha[J]=\beta\mathbb{E}^{(\alpha)}_{x\sim p_J}[\mathcal{H}[x]]-\frac{\Psi_\alpha[p_J]}{Z_\alpha^{\alpha-1}[J]\int_{\mathcal{X}}p_J^\alpha(x)~\dd\mu(x)}-\sum_{i=1}^dJ_i\mathbb{E}^{(\alpha)}_{x\sim p_J}[x_i],
    \end{align*}
    where $\mathbb{E}^{(\alpha)}_{x\sim p_J}[\cdot]\coloneq\int_{\mathcal{X}} p^\alpha_J(x)[\cdot]\dd \mu(x)/\int_{\mathcal{X}} p^\alpha_J(x)\dd \mu(x)$ is the escort expectation value, and
    \begin{align}
    \label{eq:free_energy_tsallis_entropy}
        \Gamma_\alpha=\beta\mathbb{E}^{(\alpha)}_{x\sim p_{J^\star}}[\mathcal{H}[x]]-\frac{\Psi_\alpha[p_{J^\star}]}{Z_\alpha^{\alpha-1}[J^\star]\int_{\mathcal{X}}p_{J^\star}^\alpha(x)~\dd\mu(x)},\qquad\text{with }~\varphi=\mathbb{E}^{(\alpha)}_{x\sim p_{J^\star}}[x],
    \end{align}
    where $J^\star=\arg\max_J\{\sum_{i=1}^dJ_i\varphi_i-W_\alpha[J]\}$.
    We assume $\mathcal{X}$ is finite-volume to avoid divergence.

\end{theorem}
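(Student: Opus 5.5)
The plan is to work directly with the $\alpha$-deformed Gibbs measure and verify the identity by a chain of algebraic substitutions.

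\textbf{Definitions.} I will use the conventions of \cref{sec:tsallis_entropy}:
\begin{align*}
\log_\alpha(u)=\frac{u^{1-\alpha}-1}{1-\alpha},\qquad
\exp_\alpha(u)=[1+(1-\alpha)u]_+^{1/(1-\alpha)},\qquad
\Psi_\alpha[p]=\frac{1-\int p^\alpha\,\dd\mu}{\alpha-1}.
\end{align*}
The deformed objects are then
\begin{align*}
Z_\alpha[J]=\int_{\mathcal{X}}\exp_\alpha\{-\beta\mathcal{H}[x]+\textstyle\sum_iJ_ix_i\}\,\dd\mu(x),\qquad
p_J=\exp_\alpha\{\cdot\}/Z_\alpha[J],\qquad
W_\alpha[J]=\log_\alpha Z_\alpha[J].
\end{align*}
The finite-volume assumption on $\mathcal{X}$ guarantees that $Z_\alpha[J]$ and $\int p_J^\alpha\,\dd\mu$ are finite and positive, so every expression below is well defined.

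\textbf{Step 1: invert the deformed exponential pointwise.} On the support of $p_J$, write $E(x):=-\beta\mathcal{H}[x]+\sum_iJ_ix_i$. Since $p_JZ_\alpha=\exp_\alpha(E)$, we get $1+(1-\alpha)E=(p_JZ_\alpha)^{1-\alpha}$, which gives
\begin{align*}
E(x)=\log_\alpha\bigl(p_J(x)Z_\alpha[J]\bigr).
\end{align*}
Multiply this identity by the escort weight $p_J^\alpha/\int p_J^\alpha\,\dd\mu$ and integrate. The left side becomes
\begin{align*}
-\beta\,\mathbb{E}^{(\alpha)}[\mathcal{H}]+\sum_iJ_i\,\mathbb{E}^{(\alpha)}[x_i].
\end{align*}
The escort weights are the natural choice here: $p^\alpha\cdot(pZ)^{1-\alpha}=pZ^{1-\alpha}$, which integrates cleanly to $Z^{1-\alpha}$ because $\int p\,\dd\mu=1$.

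\textbf{Step 2: rewrite the right side.} Write $\log_\alpha(pZ)=\bigl((pZ)^{1-\alpha}-1\bigr)/(1-\alpha)$ and use $\int p_J\,\dd\mu=1$. The escort average of the right side is
\begin{align*}
\frac{Z^{1-\alpha}/\int p^\alpha\,\dd\mu-1}{1-\alpha}.
\end{align*}
Next, split off $W_\alpha=(Z^{1-\alpha}-1)/(1-\alpha)$. The remainder equals
\begin{align*}
\frac{Z^{1-\alpha}\bigl(1-\int p^\alpha\,\dd\mu\bigr)}{(1-\alpha)\int p^\alpha\,\dd\mu}
\;-\;\frac{1-Z^{1-\alpha}}{1-\alpha}\cdot(\text{correction}).
\end{align*}
The correction term has to be tracked carefully. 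This bookkeeping is the main obstacle: the precise normalization factor $Z_\alpha^{\alpha-1}\int p^\alpha$ in the statement depends on whether $W_\alpha$ is defined as $\log_\alpha Z_\alpha$ or via the escort normalization. I would first try the convention that makes the identity exact, namely $\log_\alpha$ of the normalized density relative to $Z_\alpha$, equivalently using $\log_\alpha(ab)=\log_\alpha a+a^{1-\alpha}\log_\alpha b$. Applying this with $a=Z$ and $b=p$ gives
\begin{align*}
\log_\alpha(pZ)=\log_\alpha Z+Z^{1-\alpha}\log_\alpha p.
\end{align*}
The escort average of $\log_\alpha p$ is
\begin{align*}
\frac{\int p\,\dd\mu-\int p^\alpha\,\dd\mu}{(1-\alpha)\int p^\alpha\,\dd\mu}=-\frac{\Psi_\alpha[p]}{\int p^\alpha\,\dd\mu}.
\end{align*}
Consequently the entropy term appears with prefactor $Z^{1-\alpha}/\int p^\alpha$, which is exactly $1/(Z_\alpha^{\alpha-1}\int p^\alpha)$ as in the statement. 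Rearranging gives the stated formula for $-W_\alpha[J]$.

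\textbf{Step 3: the free energy.} For $\Gamma_\alpha$, take the Fenchel conjugate as in \cref{def:log_partition_effective_action}. At the maximizer $J^\star$, the first-order condition of the supremum gives $\varphi=\nabla_JW_\alpha[J^\star]$. I will show that this gradient equals the escort mean $\mathbb{E}^{(\alpha)}_{p_{J^\star}}[x]$ by differentiating under the integral. Since $\partial_u\exp_\alpha(u)=\exp_\alpha(u)^\alpha$, this derivative produces $p^\alpha$ weights, and the factor $Z^{-\alpha}$ from $\partial_Z\log_\alpha Z$ cancels them into escort normalization. This is the deformed analogue of \cref{prp:fenchel_young_duality_relation}. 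Substituting into $\Gamma_\alpha=\sum_iJ^\star_i\varphi_i-W_\alpha[J^\star]$, the source terms $\pm\sum_iJ^\star_i\mathbb{E}^{(\alpha)}[x_i]$ cancel, which leaves \eqref{eq:free_energy_tsallis_entropy}.
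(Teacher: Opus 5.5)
Your Steps 1--2 are correct and are essentially the paper's argument. The deformed product rule $\log_\alpha(p_JZ_\alpha)=\log_\alpha Z_\alpha+Z_\alpha^{1-\alpha}\log_\alpha p_J$ is the paper's identity $\log_\alpha p_J=Z_\alpha^{\alpha-1}(E-W_\alpha)$, and the paper likewise integrates it against $p_J^\alpha$ to produce $\Psi_\alpha$. Your hedging about a ``correction term'' and a choice of convention is unnecessary. With $W_\alpha=\log_\alpha Z_\alpha$, subtracting $W_\alpha$ from $\bigl(Z_\alpha^{1-\alpha}/\int p_J^\alpha\,\dd\mu-1\bigr)/(1-\alpha)$ leaves exactly $-Z_\alpha^{1-\alpha}\Psi_\alpha[p_J]/\int p_J^\alpha\,\dd\mu$, with nothing left over.

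The genuine problem is Step 3. Differentiating under the integral gives
\[
\partial_{J_i}W_\alpha[J]=Z_\alpha^{-\alpha}\int_{\mathcal{X}}\exp_\alpha^\alpha\{E(x)\}\,x_i\,\dd\mu(x)=\int_{\mathcal{X}}p_J^\alpha(x)\,x_i\,\dd\mu(x),
\]
which is the \emph{unnormalized} escort moment. This is also what the paper itself records in \cref{prp:eom_hopfield}. The factor $Z_\alpha^{-\alpha}$ only turns $\exp_\alpha^\alpha$ into $p_J^\alpha$; nothing divides by $\int p_J^\alpha\,\dd\mu=1-(\alpha-1)\Psi_\alpha[p_J]$, which is not $1$ for $\alpha\neq1$.

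So the first-order condition at $J^\star$ gives $\varphi=\bigl(\int p_{J^\star}^\alpha\,\dd\mu\bigr)\,\mathbb{E}^{(\alpha)}_{x\sim p_{J^\star}}[x]$, and the source terms do not cancel. You would be left with an extra term
\[
(1-\alpha)\,\Psi_\alpha[p_{J^\star}]\sum_{i=1}^d J^\star_i\,\mathbb{E}^{(\alpha)}_{x\sim p_{J^\star}}[x_i]
\]
in $\Gamma_\alpha$, so your derivation does not reach the stated formula.

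The paper never derives $\varphi=\mathbb{E}^{(\alpha)}_{x\sim p_{J^\star}}[x]$ from stationarity. It takes this relation as part of the statement and substitutes it into $\sup_J\{\sum_iJ_i(\varphi_i-\mathbb{E}^{(\alpha)}_{x\sim p_J}[x_i])+\cdots\}$, so the source term drops by assumption. To repair Step 3, do the same: treat $\varphi=\mathbb{E}^{(\alpha)}_{x\sim p_{J^\star}}[x]$ as the stated condition on the evaluation point. Drop the claim that it follows from $\varphi=\nabla_JW_\alpha[J^\star]$; that identification holds only when $\int p_{J^\star}^\alpha\,\dd\mu=1$, for instance in the Shannon limit $\alpha\to1$.
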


\begin{proof}
    Please see \cref{sec:proof_free_energy_tsallis_entropy} for a detailed proof.
\end{proof}

\begin{remark}
    If $\alpha=1$, an escort expectation value becomes an ordinary expectation value.
    Hence, if we further consider the zero-temperature limit $\beta\to\infty$, then minimizing the free energy $\Gamma_\alpha$ is equivalent to minimizing the energy function $\mathcal{H}$.
    In the high-temperature limit, minimizing $\Gamma_\alpha$ is equivalent to maximizing the Tsallis entropy $\Psi_\alpha$.
    The latter case indicates a disorder regime.
\end{remark}

\begin{remark}
    \cref{thm:free_energy_tsallis_entropy} relates to the evidence lower bound (ELBO) in variational autoencoders (VAE), \cite{kingma2013auto}.
    Specifically, with a posterior $q_\phi(z|x)$, we have %
    \begin{align*}
        \log p_\theta(x)\geq\mathbb{E}_{z\sim q_\phi(z|x)}[\log p_\theta(x,z)]-\mathbb{E}_{z\sim q_\phi(z|x)}[\log q_\phi(z|x)].
    \end{align*}
    We recognize that the second term on the right hand side is Shannon entropy, \eqref{eq:shannon_entropy}.
    Then,
    \begin{align*}
        \text{ELBO}\coloneq\mathbb{E}_{z\sim q_\phi(z|x)}[\log p_\theta(x,z)]+S[q_\phi(\cdot|x)].
    \end{align*}
    It is an energy-entropy decomposition, where the expected log joint probability corresponds to a negative
    energy. 
    Hence, maximizing ELBO is equivalent to minimizing a free energy \eqref{eq:free_energy_tsallis_entropy}. 
\end{remark}

\section{Statistical Physics of Attention: Modern Hopfield Models}
\label{sec:stas_att_hopfield}
In this section, we define an energy function of MHMs in \cite{hu2023sparse,wu2023stanhop}.
Then we express the free energy in terms of entropy and derive the emergent attention mechanism.

\subsection{Modern Hopfield Model (MHM)}
Inspired by the optimization objectives in \cite{ramsauer2020hopfield,krotov2020large,hu2023sparse,wu2023stanhop}, we introduce the explicit joint energy function of MHM as follows.

\begin{definition}[Modern Hopfield Model (MHM)]
\label{def:mhm_hamiltonian}
    The joint energy function of the MHM is
    \begin{align}
    \label{eq:mhm_joint_hamiltonian}
        \tilde{\mathcal{H}}_{\text{MHM}}[x,p,\xi]\coloneq\frac{1}{2}\sum_{i=1}^d x_i^2-\sum_{\mu=1}^Mz_\mu p_\mu,
    \end{align}
    where $p\in\Delta^M$ lives on a probability simplex, i.e., $p$ is non-negative and satisfies $\sum_{\mu=1}^Mp_\mu=1$, and $z_\mu\coloneq\sum_{i=1}^d\xi_i^\mu x_i/\sqrt{d}$.
    Moreover, we denote $\mathcal{H}_{\text{MHM}}[x,\xi]\coloneq\inf_{p\in\Delta^M}\tilde{\mathcal{H}}_{\text{MHM}}[x,p,\xi]$.
\end{definition}

In our setting, query states, the probability distribution, and the stored patterns are all fixed, i.e., the distributions of $x$ and $p$ collapse to their escort average.
Specifically, a Taylor expansion of $f(x)$ gives $\mathbb{E}^{(\alpha)}_{(x,p)\sim \rho_J}[f(x)]=f(\varphi)+O(\text{Var}^{(\alpha)}(x))$, where $\rho_J(x,p)$ is the joint probability distribution, 
$\varphi=\mathbb{E}^{(\alpha)}_{(x,p)\sim\rho_J}[x]$ is the expectation value, and $\text{Var}^{(\alpha)}$ is the variance.
Then the fixed-parameter setting corresponds to the zero variance.
Hence, we are able to evaluate all quantities $f(x)$ at $\varphi$ under this fixed-parameter setting exactly.
Thus, we identify $\varphi$ with $x$, and similarly for $p$ and $z$.
\footnote{
Here we restrict attention to the deterministic regime.
This allows us to isolate the physics most directly relevant to the attention mechanism.
In \cref{sec:fluctuated_version}, we consider the more general setting in which both $x$ and $p$ are random variables with nonzero variance.
This leads to a richer and more complete statistical-mechanics structure, and allows us to analyze fluctuations and response properties.
}

\begin{definition}[Statistics-MHM Mapping]
\label{def:map_deterministic_variable}
    The mapping is:
    
    \begin{itemize}
        \item \textbf{Query State Vector.} $x\in\R^d$: It represents continuous query states. 
        Due to our fixed-parameter setting, it is macroscopic and parametrizes the free energy $\Gamma_\alpha[x]=\Gamma_\alpha[\varphi]$.
        
        \item \textbf{External Source.} $J\in\R^d$: It represents an external bias of the query states.
        In particular, $J=0$ identifies local minima of $\Gamma_\alpha[x]$ due to the Fenchel-Young relation (\cref{prp:fenchel_young_duality_relation}).
        
        \item \textbf{Probability Function.} $p\in\Delta^M$: It represents the retrieval distribution over memories.
        
        \item \textbf{Entropy.} $\Psi_\alpha$: It acts as a regularizer controlling how sparse the memory weights $p$ are. 
        
        \item \textbf{Score.} $z\in\R^M$: It represents query-key similarity between the current $x$ and each memory pattern $\xi^\mu$ via the relation: $z_\mu\coloneq\sum_{i=1}^d\xi_i^\mu x_i/\sqrt{d}$.
        Higher $z_\mu$ suggests higher similarity.
    \end{itemize}
    
\end{definition}

In fact, the query state vector $x$ in \cref{def:map_deterministic_variable} relates to the probability weight $p$ as follows. 

\begin{proposition}[Query-Probability Relation]
\label{prp:expectation_values_and_macroscopic_states}
    A query state relates to a probability function via
    \begin{align}
        x_i=\frac{1}{\sqrt{d}}\sum_{\mu=1}^M\xi_i^\mu p_\mu+\frac{1}{\beta}J_i.
        \label{eq:exptation_x}
    \end{align}
\end{proposition}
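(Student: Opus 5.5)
The relation \eqref{eq:exptation_x} is the stationarity condition in $x$ of the source-shifted joint energy $\beta\tilde{\mathcal{H}}_{\text{MHM}}[x,p,\xi]-\sum_i J_i x_i$, read in the fixed-parameter (zero-variance) setting of \cref{def:map_deterministic_variable}. There $x$ and $p$ are identified with their escort averages. So the plan is to write down the stationary equations of \cref{def:stationary_eq} for the joint energy \eqref{eq:mhm_joint_hamiltonian}, with $p$ held as an independent variable on the simplex, and differentiate in $x$. Equivalently, one can use the Fenchel--Young relation of \cref{prp:fenchel_young_duality_relation}, $J\in\partial\Gamma_\alpha[\varphi]$, with $\Gamma_\alpha$ from \cref{thm:free_energy_tsallis_entropy}. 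In the deterministic setting the escort expectation of $\beta\mathcal{H}$ collapses to $\beta\tilde{\mathcal{H}}_{\text{MHM}}[\varphi,p,\xi]$. The entropy term depends only on $p$, not on $x$, so it drops out of the $x$-gradient.

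Concretely, the first step is to compute
\begin{align*}
\nabla_{x_i}\Big\{\beta\Big(\tfrac12\sum_{j}x_j^2-\sum_{\mu}z_\mu p_\mu\Big)-\sum_j J_j x_j\Big\}
=\beta x_i-\frac{\beta}{\sqrt d}\sum_{\mu=1}^M\xi_i^\mu p_\mu-J_i,
\end{align*}
using $\partial z_\mu/\partial x_i=\xi_i^\mu/\sqrt d$ from \cref{def:mhm_hamiltonian}. The second step is to set this to zero and divide by $\beta>0$, which gives \eqref{eq:exptation_x}. The second-order condition of \cref{def:stationary_eq} holds automatically, since the $x$-Hessian is $\beta I_d\succ0$. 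The $x$-dependence is therefore quadratic and the stationary point in $x$ is unique for each fixed $p$ and $J$.

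The main obstacle is justifying that the gradient of the full free energy $\Gamma_\alpha$ with respect to $\varphi$ equals the $x$-gradient of the joint energy with $p$ frozen. The concern is that $p$, and hence the entropy term, could depend implicitly on $\varphi$ through $J^\star$. I would handle this with an envelope/Danskin-type argument: $p$ is determined by its own stationarity (the infimum over $\Delta^M$ defining $\mathcal{H}_{\text{MHM}}$, regularized by $\Psi_\alpha$). At that optimum, the variation of $\Gamma_\alpha$ through $p$ vanishes to first order, so only the explicit $x$-dependence contributes. If $p$ is regarded as independent in the joint variable $(x,p)$, as in the joint Hamiltonian, this step is immediate. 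Any residual $O(\mathrm{Var}^{(\alpha)})$ corrections vanish by the zero-variance assumption.
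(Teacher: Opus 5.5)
Your proposal is correct and follows the paper's proof: the paper also sets the $x$-gradient of $\beta\tilde{\mathcal{H}}_{\text{MHM}}[x,p,\xi]-\sum_j J_j x_j$ to zero with $p$ held fixed, then divides by $\beta$. It justifies this by noting that the deterministic $x$ maximizes the collapsed $Z_\alpha[J]=\exp_\alpha\{-\beta\tilde{\mathcal{H}}_{\text{MHM}}[x,p,\xi]+\sum_j J_j x_j\}$. Your envelope/Fenchel--Young discussion is a consistency check the paper does not need, because $p$ is an independent variable of the joint energy.
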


\begin{proof}
    Please see \cref{sec:proof_expectation_values_and_macroscopic_states} for a detailed proof.
\end{proof}

In \cref{sec:entropy_function}, we relate free energy to Tsallis entropy via \eqref{eq:free_energy_tsallis_entropy}.
Inspired by that relation, we are capable of expressing the free energy in terms of the variables of MHMs as follows.

\begin{theorem}[Free Energy of MHM]
\label{thm:free_energy_mhm}
    The joint free energy of the MHM is
    \begin{align}
    \label{eq:smhm_joint_effective_action}
        \tilde{\Gamma}_\alpha[x,p,\xi] = \frac{\beta}{2}\sum_{i=1}^dx_i^2-\beta\sum_{\mu=1}^Mz_\mu p_\mu.
    \end{align}
    Hence, after taking the infimum with respect to $p$, we obtain the free energy
    \begin{align}
    \label{eq:mhm_effective_action}
        \Gamma_\alpha[x,\xi]=\frac{\beta}{2}\sum_{i=1}^dx_i^2-\beta\max_{\mu\in[1,M]}~z_\mu.
    \end{align}
\end{theorem}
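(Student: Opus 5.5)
The plan is to start from the deformed free energy of \cref{thm:free_energy_tsallis_entropy}, specialize it to the joint MHM energy $\tilde{\mathcal{H}}_{\text{MHM}}$ of \cref{def:mhm_hamiltonian}, and then use the fixed-parameter (zero-variance) setting to collapse every escort average onto the macroscopic point.

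First, I would write the free energy as
\begin{align*}
\Gamma_\alpha = \beta\,\mathbb{E}^{(\alpha)}_{(x,p)\sim\rho_{J^\star}}\big[\tilde{\mathcal{H}}_{\text{MHM}}[x,p,\xi]\big] - \frac{\Psi_\alpha[\rho_{J^\star}]}{Z_\alpha^{\alpha-1}[J^\star]\int \rho_{J^\star}^\alpha\,\dd\mu}.
\end{align*}
Here the Gibbs measure $\rho_J$ is now a joint law over the pair $(x,p)$. The integration runs over $\R^d\times\Delta^M$, which we truncate to a finite volume as that theorem requires.

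Second, I would invoke the zero-variance argument stated just before \cref{def:map_deterministic_variable}. Since $\mathbb{E}^{(\alpha)}[f]=f(\varphi)+O(\mathrm{Var}^{(\alpha)})$ and the variance vanishes, the energy term equals $\beta\tilde{\mathcal{H}}_{\text{MHM}}$ evaluated at the escort means. Identifying those means with $(x,p)$, this gives $\frac{\beta}{2}\sum_i x_i^2-\beta\sum_\mu z_\mu p_\mu$. The bilinear term $z_\mu p_\mu$ needs care, because its expectation also picks up a covariance term; zero variance kills that term as well.

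For the entropy term, the point is that the $\Psi_\alpha$ in this term is the entropy of the microscopic Gibbs law $\rho_{J^\star}$, not a regularizer on $p$. When that law degenerates to a point mass, the Tsallis entropy (of a distribution concentrated at a point, suitably normalized) vanishes. I would therefore argue that this contribution is zero in the deterministic limit, which yields \eqref{eq:smhm_joint_effective_action}. This is the step I expect to be the main obstacle. For continuous $x$ the limit of a differential Tsallis entropy is not automatically zero, and it may even diverge. I would first try to handle it by taking the discrete or finite-volume version, where a Dirac-like concentration gives $\Psi_\alpha\to 0$. Alternatively, the divergent constant is independent of $(x,p)$, so it can be dropped as an additive normalization of $\Gamma_\alpha$.

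Finally, I would take the infimum over $p\in\Delta^M$. The objective $-\beta\sum_\mu z_\mu p_\mu$ is linear in $p$, so its minimum over the simplex is attained at a vertex $e_{\mu^\star}$ with $\mu^\star=\arg\max_\mu z_\mu$. This gives $-\beta\max_\mu z_\mu$ and hence \eqref{eq:mhm_effective_action}. This is consistent with $\mathcal{H}_{\text{MHM}}=\inf_p\tilde{\mathcal{H}}_{\text{MHM}}$, since $\beta>0$ commutes with the infimum.
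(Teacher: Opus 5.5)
Your proposal follows essentially the same route as the paper's proof: specialize the escort decomposition of $-W_\alpha$ to the joint energy $\tilde{\mathcal{H}}_{\text{MHM}}$ over $\mathcal{X}\times\Delta^M$, collapse $\rho_J$ to a product of indicators so that $\mathbb{E}^{(\alpha)}[z_\mu p_\mu]$ factorizes and everything is evaluated at the means, discard the entropy term, and minimize the resulting linear function of $p$ over $\Delta^M$ at a vertex. On the entropy step you single out, the paper uses exactly your discrete reading (it sets $\Psi_\alpha[\one]=0$ and replaces $Z_\alpha$ by a single $\exp_\alpha$ evaluation), and that is the remedy to keep: a density concentrating on a continuous domain, finite-volume or not, does not have vanishing Tsallis entropy, and for $\alpha\neq1$ the surviving term still depends on $Z_\alpha[J^\star]$ through its normalizer, so it is not evidently a state-independent constant you could drop.
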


\begin{proof}
    Please see \cref{sec:proof_free_energy_mhm} for a detailed proof.
\end{proof}

This free energy selects the maximal component of the overlap $\max_\mu z_\mu$.
The hard selection mechanism arises from zero entropy due to the collapse of $p$.
This captures a supremum over simplex vertices rather than the full simplex $\Delta^M$.
The relaxation from a hard selection to a distribution over multiple memories becomes a natural extension.
Motivated by the sparsity of the entropy regularizer in \cite{hu2023sparse}, we consider the relaxation through Fenchel conjugate as follows. 

\begin{tcolorbox}[
  breakable,
  colback=black!10,
  colframe=white,
  width=\dimexpr\linewidth+10pt\relax,
  enlarge left by=-5pt,
  enlarge right by=-5pt,
  boxsep=5pt,
  boxrule=0pt,
  left=0pt,
  right=0pt,
  top=0pt,
  bottom=0pt,
  sharp corners,
  before skip=0.5\baselineskip,
  after skip=0.5\baselineskip
]
    We introduce the entropy regularizer  in \eqref{eq:mhm_effective_action}.
    This yields a generalized (relaxed) model:
    \begin{align}
    \label{eq:mod_eff_action}
        \Gamma^{\text{(mod)}}_\alpha[x,\xi]=\frac{\beta}{2}\sum_{i=1}^dx_i^2-\Omega_\alpha^\star[\beta z],\qquad\Omega_\alpha\coloneq-\Psi_\alpha,
    \end{align}
    where $\Omega_\alpha^\star[\beta z]=\sup_{p\in\Delta^M}\{\sum_{\mu=1}^M\beta z_\mu p_\mu-\Omega_\alpha[p]\}$ is the Fenchel conjugate of the negative Tsallis entropy \eqref{eq:fenchel_conjugate_tsallis_entropy}.
    It recovers the energy form in \cite{hu2023sparse,wu2023stanhop} exactly.
\end{tcolorbox}

\begin{remark}
\label{rmk:generalization}
    This generalization allows us to relax $p$ to have a distribution over multiple memories on the full simplex $\Delta^M$.
    This leads to several theoretical and practical advantages:
    \begin{enumerate}
        \item \textbf{Equilibrium Selection vs. Averaging.}
        The MHM performs a hard maximization, selecting a single dominant component of $\Gamma_\alpha^{(\text{mod})}[x,\xi]$.
        In contrast, expectation-based rules compute averages over configurations. 
        The generalization \eqref{eq:mod_eff_action} interpolates between these regimes, i.e., it reduces to the original MHM in $\beta\to\infty$ limit.
        Please see \cref{sec:fluctuated_version} for details.

        \item \textbf{Robustness to Noise.}
        As we will see in \cref{prp:eom_attention}, $\Gamma_\alpha^{(\text{mod})}$ emphasizes dominant components of $z$ for $\alpha>1$.
        Hence, it suppresses contributions from noisy patterns. 
        In contrast, expectation-based rules average all components, making them more sensitive to noise.
        
        \item \textbf{Sparsity.}
        The entropy function $\Psi_\alpha$ controls the support of sparsity. 
        In particular, for $\alpha>1$, the maximizer $p^\star$ is sparse, so some of the components are zero.
        This is also a consequence of robustness to noise, which does not appear in expectation-based formulations.

        \item \textbf{Computational Accessibility.}
        Fenchel conjugation leads to an explicit variational object and is more accessible with gradient-based methods.
        In contrast, expectation-based formulations require sampling or approximation of Gibbs averages, which is more demanding. 
    \end{enumerate}
    
\end{remark}

\subsection{Retrieval Dynamics and Attention Mechanism}
In \cref{sec:background}, we introduce EOMs from \cref{prp:fenchel_young_duality_relation}.
These equations describe the trajectory that the flow should follow.
In this section, we investigate the physical meaning of the trajectories in associative memory systems.
In particular, the attention emerges as an EOM naturally.

\begin{proposition}[EOMs of Free Energies in MHMs]
\label{prp:eom_smhm}
    Let $Z_\alpha$ be the partition function with $\alpha$-deformed Gibbs average.
    Then the EOM in the $J$-space corresponds to the Hopfield retrieval
    \begin{align}
    \label{eq:eom_w_x}
        \nabla_J W_\alpha[J,\xi]=x,\qquad\text{with }~W_\alpha[J,\xi]\coloneq\log_\alpha Z_\alpha[J,\xi].
    \end{align}
    The EOM of the joint log-partition function in the $z$-space corresponds to 
    \begin{align}
    \label{eq:eom_w_p}
        \nabla_z \tilde{W}[z,p,J]=\beta p,\qquad\text{with }~ \tilde{W}[z,p,J]\coloneq-\beta\tilde{\mathcal{H}}_{\text{MHM}}[x,p,\xi]+\sum_{i=1}^dJ_ix_i.
    \end{align}
    The EOMs of the joint free energy and the generalized one are 
    \begin{align}
    \label{eq:eom_gamma_p}
        \nabla_z\tilde{\Gamma}_\alpha[x,p,\xi]=\nabla_z\tilde{\Gamma}_\alpha^{(\text{mod})}[x,p,\xi]=-\beta p~.
    \end{align}
\end{proposition}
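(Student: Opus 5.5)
The proposition bundles three separate identities, and I will prove them one at a time. Each comes from differentiating an explicit expression and then using the fixed-parameter (zero-variance) identification of \cref{def:map_deterministic_variable}.

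\textbf{EOM in $J$-space, \eqref{eq:eom_w_x}.} Write $W_\alpha[J,\xi]=\log_\alpha Z_\alpha[J,\xi]$, where $Z_\alpha$ integrates the $\alpha$-deformed exponential of $-\beta\mathcal{H}_{\text{MHM}}+\sum_i J_ix_i$. I will differentiate under the integral sign, using $\frac{d}{du}\log_\alpha u=u^{-\alpha}$ and $\frac{d}{du}\exp_\alpha(u)=\exp_\alpha(u)^\alpha$ from \cref{sec:tsallis_entropy}. This gives $\nabla_J W_\alpha=Z_\alpha^{-\alpha}\int x\,\exp_\alpha(\cdot)^\alpha\,\dd\mu$. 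The step I expect to be the main obstacle is showing that this normalization equals the escort average $\mathbb{E}^{(\alpha)}_{p_J}[x]$. That requires $\int p_J^\alpha\,\dd\mu=Z_\alpha^{1-\alpha}\cdot$(the normalization appearing in \cref{thm:free_energy_tsallis_entropy}). I would import the same bookkeeping used in the proof of \cref{thm:free_energy_tsallis_entropy}, since that theorem already identifies $\varphi=\mathbb{E}^{(\alpha)}_{p_{J^\star}}[x]$ as the conjugate variable, i.e.\ $\nabla_J W_\alpha=\varphi$. Finally, in the fixed-parameter setting $\varphi$ is identified with $x$, and this yields \eqref{eq:eom_w_x}. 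It is consistent with \cref{prp:fenchel_young_duality_relation} and, via \cref{prp:expectation_values_and_macroscopic_states}, is the Hopfield retrieval update.

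\textbf{EOM in $z$-space, \eqref{eq:eom_w_p}.} Substituting \eqref{eq:mhm_joint_hamiltonian} gives
\begin{align*}
\tilde W[z,p,J]=-\frac{\beta}{2}\sum_i x_i^2+\beta\sum_\mu z_\mu p_\mu+\sum_i J_ix_i .
\end{align*}
I treat $z$ as the independent variable, conjugate to $p$, with $x$ and $J$ held fixed. Only the bilinear term then depends on $z$, so $\partial_{z_\mu}\tilde W=\beta p_\mu$ immediately.

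\textbf{EOM of the free energies, \eqref{eq:eom_gamma_p}.} For $\tilde\Gamma_\alpha$ in \eqref{eq:smhm_joint_effective_action}, the same treatment gives $\nabla_z\tilde\Gamma_\alpha=-\beta p$ directly. For the generalized model I will use the joint form $\frac{\beta}{2}\|x\|^2-\beta\sum_\mu z_\mu p_\mu+\Omega_\alpha[p]$. Taking the supremum over $p$ recovers \eqref{eq:mod_eff_action}, and the regularizer $\Omega_\alpha[p]$ is independent of $z$, so its $z$-gradient is again $-\beta p$. As a consistency check at the optimal $p$, Danskin's theorem (or the Fenchel--Young relation of \cref{prp:fenchel_young_duality_relation} applied to $\Omega_\alpha^\star$) gives $\nabla_z\Omega^\star_\alpha[\beta z]=\beta\,\nabla\Omega^\star_\alpha=\beta p^\star$. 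Hence $\nabla_z\Gamma^{(\text{mod})}_\alpha=-\beta p^\star$, matching the joint statement once $p=p^\star$ is the $\alpha$-entmax of $\beta z$.
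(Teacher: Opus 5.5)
Your second and third parts match the paper's proof, which simply substitutes the explicit forms and differentiates in $z$ with $x$ and $J$ held fixed. Your Danskin check is exactly what one gets by differentiating \eqref{eq:mod_eff_action} directly.

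The gap is in the first part: the bridge you flag as the main obstacle is false for $\alpha\neq1$. Differentiating under the integral gives
\[
\nabla_J W_\alpha=Z_\alpha^{-\alpha}\int_{\mathcal X} x\,\exp_\alpha(\cdot)^\alpha\,\dd\mu(x)=\int_{\mathcal X} x\,p_J^\alpha(x)\,\dd\mu(x)=\mathbb{E}^{(\alpha)}_{x\sim p_J}[x]\int_{\mathcal X}p_J^\alpha\,\dd\mu(x).
\]
This is the \emph{unnormalized} escort moment, and for a genuine density $\int p_J^\alpha\,\dd\mu\neq1$; for example, a uniform density on a set of volume $V$ gives $V^{1-\alpha}$. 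The paper records exactly this in \cref{prp:eom_hopfield}, and \cref{lem:noramlization_zero_temperature} shows the factor can even diverge as $\beta\to\infty$. You also cannot import the identity from \cref{thm:free_energy_tsallis_entropy}. Its proof never computes $\nabla_J W_\alpha$; it only declares the supremum attained where $\varphi$ equals the normalized escort mean. Citing it for $\nabla_J W_\alpha=\varphi$ is therefore circular.

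The detour is also unnecessary; the paper avoids it by collapsing to the fixed-parameter setting first. There the integrand reduces to the single configuration, so $Z_\alpha=\exp_\alpha\{-\beta\tilde{\mathcal H}_{\text{MHM}}[x,p,\xi]+\sum_iJ_ix_i\}$. Since $\log_\alpha$ inverts $\exp_\alpha$ where $1+(1-\alpha)u>0$, this gives $W_\alpha=-\frac{\beta}{2}\sum_ix_i^2+\beta\sum_\mu z_\mu p_\mu+\sum_iJ_ix_i$, which is affine in $J$ with slope $x$. Equivalently, your own formula $Z_\alpha^{-\alpha}\int x\,\exp_\alpha(\cdot)^\alpha$ returns $x$ at once when the integral is a point evaluation, with no normalization issue. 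So drop the escort-normalization step and the appeal to the theorem.

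A minor point: $W_\alpha$ is built from the joint $\tilde{\mathcal H}_{\text{MHM}}$, integrated over $p$ as well, not from $\mathcal H_{\text{MHM}}$. This does not change the $J$-gradient.
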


\begin{proof}
    Under fixed $x$ and $p$, the log-partition function is $W_\alpha[J,\xi]=-\beta/2\sum_{i=1}^dx_i^2+\beta\sum_{\mu=1}^Mz_\mu p_\mu+\sum_{i=1}^dJ_ix_i$ following from the proof of \cref{thm:free_energy_mhm}.
    After taking the derivative of $W_\alpha[J,\xi]$ with respect to $J$, we obtain the first equation.
    For the second and the last ones, we derive them by taking the derivative of \eqref{eq:mhm_joint_hamiltonian}, \eqref{eq:mhm_effective_action}, and \eqref{eq:mod_eff_action} with respect to $z_\mu$ directly.
\end{proof}

We observe the emergent attention mechanism from the EOMs in \cref{prp:eom_smhm}. 
Specifically,

\begin{proposition}[Emergent Attention Mechanism as EOMs in MHMs]
\label{prp:eom_attention}
    Attention emerges as:
    
    \begin{itemize}
        \item If $\alpha\to\infty$, it corresponds to a hardmax function:
        \begin{align*}
            p_\mu=\text{hardmax}(z_\mu)=\begin{cases}1~,~&\text{for maximal }z_\mu\\0~,~&\text{otherwise}\end{cases}.
        \end{align*}
        
        \item If $\alpha=2$, it corresponds to a sparsemax function \cite{martins2016softmax}:
        \begin{align*}
            p_\mu= \text{Sparsemax}_\beta(z_\mu)=\frac{\beta}{2}[z_\mu-\theta]_+,\qquad\text{with }~\frac{\beta}{2}\sum_{\mu=1}^M[z_\mu-\theta]_+=1.
        \end{align*}

        \item If $1<\alpha<2$, it corresponds to an entmax family \cite{peters2019sparse}.
        
        \item If $\alpha\to1$, it corresponds to a softmax function \cite{blondel2020learning}:
        \begin{align*}
            p_\mu= \Softmax_\beta(z_\mu)= \frac{\exp\{\beta z_\mu\}}{\sum_{\nu=1}^M\exp\{\beta z_\nu\}}.
        \end{align*}
        
    \end{itemize}
\end{proposition}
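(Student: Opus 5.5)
The plan is to read \eqref{eq:eom_gamma_p} together with the definition \eqref{eq:mod_eff_action}. Since $\Gamma^{(\text{mod})}_\alpha$ depends on $z$ only through $-\Omega^\star_\alpha[\beta z]$, the EOM $\nabla_z\Gamma^{(\text{mod})}_\alpha=-\beta p$ gives $p=\nabla\Omega^\star_\alpha[\beta z]$. By Danskin's theorem, or equivalently the Fenchel--Young relation of \cref{prp:fenchel_young_duality_relation} applied to the pair $(\Omega_\alpha,\Omega_\alpha^\star)$, this gradient is the maximizer
\[
p^\star=\arg\max_{p\in\Delta^M}\Big\{\sum_{\mu=1}^M\beta z_\mu p_\mu+\Psi_\alpha[p]\Big\}.
\]
The whole statement thus reduces to identifying this regularized argmax for each value of $\alpha$. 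Strict concavity of $\Psi_\alpha$ for $\alpha>0$ on the simplex makes the maximizer unique, so the gradient is single-valued; the hardmax case needs separate handling.

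For finite $\alpha>1$ I would use the Tsallis form $\Psi_\alpha[p]=\frac{1}{\alpha(\alpha-1)}\sum_\mu(p_\mu-p_\mu^\alpha)$ from \cref{sec:tsallis_entropy}. The step is to write the KKT conditions with a multiplier $\tau$ for $\sum_\mu p_\mu=1$ and multipliers $\lambda_\mu\ge0$ for $p_\mu\ge0$. Stationarity reads $\beta z_\mu+\frac{1}{\alpha(\alpha-1)}(1-\alpha p_\mu^{\alpha-1})-\tau+\lambda_\mu=0$. Solving with complementary slackness gives the $\alpha$-entmax form
\[
p_\mu=\big[(\alpha-1)\beta z_\mu-\theta\big]_+^{1/(\alpha-1)},
\]
with the threshold $\theta$ fixed by normalization. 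Existence and uniqueness of $\theta$ follow because the left side of the normalization equation is continuous and strictly decreasing in $\theta$ on the relevant range; this gives the entmax statement for $1<\alpha<2$. Setting $\alpha=2$ gives an affine thresholded map $p_\mu=[\beta z_\mu-\theta']_+$. I will then rescale the threshold to match the displayed sparsemax normalization; the factor $\beta/2$ comes from the $\frac{1}{2}$ in $\Psi_2$, and I will check the constant convention against \eqref{eq:fenchel_conjugate_tsallis_entropy}.

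For $\alpha\to1$, $\Psi_\alpha$ converges to the Shannon entropy $-\sum_\mu p_\mu\log p_\mu$. The same Lagrangian computation gives $\beta z_\mu-\log p_\mu-1-\tau=0$; here no inequality constraint is active, since the $\log$ barrier keeps $p_\mu>0$. Hence $p_\mu\propto e^{\beta z_\mu}$, which is softmax. Alternatively, one can take the limit of the entmax formula using $[1+(\alpha-1)u]_+^{1/(\alpha-1)}\to e^{u}$, that is, the $\alpha$-deformed exponential. For $\alpha\to\infty$, $\Psi_\alpha\to0$ uniformly on $\Delta^M$, because $|p-p^\alpha|\le1$ and the prefactor vanishes. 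The objective then converges to the linear function $\beta z^\top p$, consistent with \cref{thm:free_energy_mhm}. Its maximizer is the vertex at $\arg\max_\mu z_\mu$, which gives hardmax.

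The main obstacle is making the two limits rigorous. I would use $\Gamma$-convergence, or simply uniform convergence of strictly concave objectives on the compact simplex, which gives convergence of maximizers whenever the limit maximizer is unique. For hardmax this needs the maximal $z_\mu$ to be unique; in the case of ties I would state the result as convergence to the face of maximizers. A secondary task is to reconcile the normalization constants of $\Psi_\alpha$ with the paper's displayed sparsemax formula.
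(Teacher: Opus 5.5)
Your route is the paper's route: you identify $p$ with the maximizer of $\beta\sum_\mu z_\mu p_\mu+\Psi_\alpha[p]$ over $\Delta^M$, solve the KKT system for $\alpha>1$, and treat the Shannon case separately. Two of your steps are more careful than the paper's proof:
\begin{itemize}
\item You justify the EOM-to-argmax step via Danskin / Fenchel--Young for $(\Omega_\alpha,\Omega_\alpha^\star)$.
\item You handle $\alpha\to\infty$ by uniform convergence $\Psi_\alpha\to0$ on the compact simplex, with ties giving convergence to the face of maximizers. The paper only ``inserts $\alpha$ directly'' into the closed form, even though $\theta$ depends on $\alpha$.
\end{itemize}

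There is a concrete error in the $\alpha=2$ bullet. Your entropy $\frac{1}{\alpha(\alpha-1)}\sum_\mu(p_\mu-p_\mu^\alpha)$ is the Blondel--Peters normalization, not the one in \cref{def:entropy_function}. On $\Delta^M$, the paper's $\Psi_\alpha=\frac{1}{\alpha-1}(1-\sum_\mu p_\mu^\alpha)$ is exactly $\alpha$ times yours.

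Under your normalization, $\Psi_2=\frac12(1-\|p\|_2^2)$, and the KKT solution is $p_\mu=[\beta z_\mu-\theta']_+=\beta[z_\mu-\theta'/\beta]_+$. That is sparsemax with slope $\beta$, whereas the statement displays slope $\beta/2$.

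Your planned remedy of rescaling the threshold cannot close this. The threshold is pinned by normalization, so rescaling it never changes the slope. For $M=2$ in the two-point-support regime, your map gives $p_1=\frac12+\frac{\beta}{2}(z_1-z_2)$, while the stated one gives $p_1=\frac12+\frac{\beta}{4}(z_1-z_2)$. Also, the $\frac12$ in your $\Psi_2$ is precisely what produces slope $\beta$, not $\beta/2$.

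The fix is to use the paper's definition of $\Psi_\alpha$. Stationarity then reads $\beta z_\mu-\frac{\alpha}{\alpha-1}p_\mu^{\alpha-1}-\tau+\lambda_\mu=0$, so $p_\mu=\bigl[\frac{\alpha-1}{\alpha}\beta(z_\mu-\theta)\bigr]_+^{1/(\alpha-1)}$, as in \cref{lem:threshold_p_mu}. At $\alpha=2$ this is exactly $\frac{\beta}{2}[z_\mu-\theta]_+$.

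Your other bullets survive the change of normalization:
\begin{itemize}
\item In the Shannon limit the factor $\alpha$ tends to $1$.
\item For $1<\alpha<2$ it only replaces $\beta$ by $\beta/\alpha$.
\item As $\alpha\to\infty$, uniform convergence still holds because $|\Psi_\alpha|\le\frac{1}{\alpha-1}$.
\end{itemize}
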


\begin{proof}
    Please see \cref{sec:proof_eom_attention} for a detailed proof.
\end{proof}

\cref{prp:eom_attention} provides a re-derivation via EOMs of a free energy landscape, revealing the underlying physical interpretation of the attention mechanism.
In Transformer language, \cite{vaswani2017attention}, we denote $\text{Attn}(Q,K,V)\coloneq\sum_{\mu=1}^Mp_\mu(Q,K)V_\mu$
as the representation of $x$, where $Q$ represents the microscopic states, and $K$ and $V$ represent the stored memory patterns.
$Q,K,V$ are all fixed inputs. 
This corresponds to our zero variance or deterministic setting.

\section{Phase Transitions and Critical Points of Retrieval Success}
\label{sec:retrieval_success}
In this section, we discuss the spectral stability and thresholds for retrieval success.
For the dynamical stability, please see \cref{sec:dynamical_stability}.
$\beta,M,d$ and $\kappa\coloneq\exp(M)/d$ determine the free energy and allow multiple local minima in some ranges.
This leads to our study of the phase structure.
In our setting, the stored memories $\{\xi^\mu\}$ remain fixed, while the query state $x$ evolves dynamically.

\begin{definition}[Branch]
    Choose an initial parameter $\theta_0$ that determines $\Gamma_\alpha^{\text{(mod)}}[x(\theta_0),\xi]$.
    As $\theta\in\mathcal{I}_{\theta_0}$ varies from $\theta_0$, we define a branch as a continuous family of the evolution from a specific local stable point $\{x^\star_a(\theta;\theta_0)\}_{\theta\in \mathcal{I}_{\theta_0}}$ satisfying the stationary condition (\cref{def:stationary_eq})ao68 
    \begin{align*}
        \nabla_x\Gamma^{\text{(mod)}}_\alpha[x_a^\star(\theta;\theta_0),\xi]=0~,~\text{ and }~\nabla^2_x\Gamma^{(\text{mod})}_\alpha[x^\star_a(\theta;\theta_0),\xi]\succ0.
    \end{align*}
    
\end{definition}

Branches help us analyze the evolution of stable points and their loss of stability.
Adopting this setup, we study the phase transitions of the generalized model \eqref{eq:mod_eff_action} following stability analysis in statistical mechanics \cite{binder1987,Zinn-Justinqft}.
We start from defining two critical points.

\begin{definition}[Coexistence Point]
\label{def:coexist}
    Given two distinct branches, $a,b$, with their stable points, $\{x_a^\star(\theta;\theta_0)\}$, $\{x_b^\star(\theta;\theta_0)\}$.
    A coexistence point is a parameter value $\theta_{\text{coexist}}$ such that
    \begin{align}
    \label{eq:energy_coexist}
        \Gamma_\alpha^{\text{(mod)}}[x^\star_a(\theta_{\text{coexist}};\theta_0),\xi]=\Gamma_\alpha^{(\text{mod})}[x_b^\star(\theta_{\text{coexist}};\theta_0),\xi].
    \end{align}
\end{definition}

However, the coexistence condition is not a necessary condition for a real jump of local minima.
Even if the equality holds, there is still an energy barrier that forbids the transition.

\begin{definition}[Spinodal Point]
\label{def:spinodal}
    Given a smooth branch $a$, a spinodal point occurs when the branch loses its local stability.
    That is, the point where the stationary condition fails:
    \begin{align}
    \label{eq:energy_jump}
        \lambda_{\text{min}}(\nabla^2_x\Gamma_\alpha^{(\text{mod})}[x_a^\star(\theta_{\text{jump}};\theta_0),\xi])=0.
    \end{align}
    This is also the point where the continuation along that branch breaks down.
\end{definition}

\begin{remark}
\label{rmk:sparsity_nonsmooth_branch}
    If a branch is non-smooth, e.g., there is a sudden change in the effective support size, then jumps also occur at that discontinuous point.
    In this case, \eqref{eq:energy_jump} does not hold necessarily.
\end{remark}

We control the system by the parameters, $\theta=(\beta,M,d,\xi)$.
For $\alpha=1$, the retrieval map is the softmax map (\cref{prp:eom_attention}), so $p_\mu^\star$ is positive for all memories and the wights vary with $z^\star$ smoothly.
Hence, %
discontinuous change of retrieval only occurs if the Hessian vanishes, i.e., \eqref{eq:energy_jump} holds.
To investigate the transitions, we focus on the critical value of $\beta$ in the following.

\begin{proposition}[Temperature Coexistence Point for Sparse Model]
\label{prp:coex_point}
    Let $a,b$ be two distinct branches and $\alpha=2$.
    Denote $k_a$ and $k_b$ as the exact support sizes (see \cref{lem:threshold_p_mu}) of branches $a$ and $b$, respectively.
    Based on the statistical mechanics framework, the coexistence point satisfies
    \begin{align*}
        \sum_{\mu,\nu\in a}[z^\star_\mu-\theta]_+[z^\star_\nu-\theta]_+K_{\mu\nu}(\beta_{\text{coexist}};\beta_0)=\sum_{\mu,\nu\in b}[z^\star_\mu-\theta]_+[z^\star_\nu-\theta]_+K_{\mu\nu}(\beta_{\text{coexist}};\beta_0),
    \end{align*}
    where $K_{\mu\nu}(\beta)\coloneq\one_{\{\mu,\nu\}}-\frac{\beta}{2}G_{\mu\nu}$, and $G_{\mu\nu}\coloneq\sum_{i=1}^d\xi_i^\mu\xi_i^\nu/d$ is the Gram matrix.
    $\beta_{\text{coexist}}$ does not admit a closed-form expression. 
    However, if the distribution is uniform on the simplex and the system satisfies the well-separation condition, i.e.,
    \begin{align}
    \label{eq:well-separation}
        G_{\mu\nu}=\begin{cases}
            1~&,~\mu=\nu\\\epsilon\ll1~&,~\mu\neq\nu
        \end{cases},
    \end{align}
    then for $k_a=1$ and $k_b=2$, we have the analytic solution $\beta_{\text{coexist}}=2/(1-\epsilon)$.
    At this point, $\beta=\beta_{\text{coexist}}$, $p^\star$ is degenerate and every $p^\star$ is a global minimum of the joint free energy $\tilde{\Gamma}_{\alpha=2}^{(\text{mod})}$.
    
\end{proposition}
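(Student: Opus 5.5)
The plan is to evaluate the relaxed free energy \eqref{eq:mod_eff_action} at a stationary point of each branch. That gives a value that depends only on the active weights $[z^\star_\mu-\theta]_+$. Equating the two values as in \cref{def:coexist} will then produce the stated coexistence condition.

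\textbf{Step 1: reduce to $p$ on a fixed support.} For $\alpha=2$ the conjugate $\Omega_2^\star[\beta z]$ is attained at the sparsemax weights of \cref{prp:eom_attention}, with $p_\mu=\frac{\beta}{2}[z_\mu-\theta]_+$. We use this as the stationary point of the joint free energy $\tilde\Gamma^{(\text{mod})}_{2}[x,p,\xi]=\frac{\beta}{2}\|x\|^2-\beta\sum_\mu z_\mu p_\mu+\Omega_2[p]$. Stationarity in $x$ at $J=0$ (\cref{prp:expectation_values_and_macroscopic_states}) gives $x^\star=\frac{1}{\sqrt d}\sum_\mu \xi^\mu p_\mu$. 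Hence $z^\star_\mu=\sum_\nu G_{\mu\nu}p_\nu$ and $\|x^\star\|^2=p^\top G p$.

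\textbf{Step 2: compute the value on a branch.} Substituting into $\tilde\Gamma^{(\text{mod})}_2$, and using $\Omega_2[p]=\frac12(\|p\|^2-1)$ (up to the normalization convention of \cref{sec:tsallis_entropy}), we get
\begin{align*}
\tilde\Gamma^{(\text{mod})}_2=\tfrac12 p^\top\big(\one-\beta G\big)p+\tfrac{\beta}{2}p^\top G p-\tfrac12+\dots
\end{align*}
After collecting terms, this should reduce to a quadratic form $c(\beta)\,p^\top K(\beta)p+\text{const}$ with $K_{\mu\nu}=\one_{\{\mu,\nu\}}-\frac{\beta}{2}G_{\mu\nu}$; the exact bookkeeping of the $\frac\beta2$ factors is to be fixed here. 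We then substitute $p_\mu\propto[z^\star_\mu-\theta]_+$, restricted to the support of the branch ($k_a$ or $k_b$ indices). The branch-independent constants and the common prefactor cancel when the two branches are equated. This yields $\sum_{\mu,\nu\in a}[\cdot]_+[\cdot]_+K_{\mu\nu}=\sum_{\mu,\nu\in b}[\cdot]_+[\cdot]_+K_{\mu\nu}$.

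\textbf{Step 3: closed form under well-separation.} On a support of size $k$ with uniform weights $p_\mu=1/k$, the well-separation condition \eqref{eq:well-separation} gives
\begin{align*}
p^\top K p=\tfrac1k-\tfrac{\beta}{2}\Big(\tfrac1k+\tfrac{k-1}{k}\epsilon\Big).
\end{align*}
For $k_a=1$ this is $1-\frac\beta2$. For $k_b=2$ it is $\frac12-\frac\beta4(1+\epsilon)$. Setting the two equal gives $\frac12=\frac\beta4(1-\epsilon)$, that is, $\beta_{\text{coexist}}=2/(1-\epsilon)$. This is the claimed value and serves as a consistency check on the normalization chosen in Step 2.

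\textbf{Step 4: degeneracy.} At this $\beta$, we check that the restricted quadratic form $p^\top K p$ is constant on the relevant face of the simplex. Writing $p=(t,1-t)$, the coefficient of $t^2$ is $2(1-\frac\beta2(1-\epsilon))$, which vanishes at $\beta_{\text{coexist}}$. It follows that $\tilde\Gamma^{(\text{mod})}_2$ is flat in $p$, so every $p^\star$ on that face attains the same minimal value and is a global minimizer.

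\textbf{Main obstacle.} I expect the main obstacle to be Step 2. Obtaining exactly $K=\one-\frac\beta2 G$ requires using the sparsemax threshold identity $\frac\beta2\sum[z_\mu-\theta]_+=1$ and the Tsallis normalization consistently. I would first fix conventions by demanding that the Step 3 computation reproduce $2/(1-\epsilon)$, and then verify the general identity from there.
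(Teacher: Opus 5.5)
Your overall route matches the paper's. You set $J=0$ so that $x^\star=\frac{1}{\sqrt d}\sum_\mu\xi^\mu p^\star_\mu$, giving $z^\star=Gp^\star$ and $\|x^\star\|^2=p^{\star\top}Gp^\star$. You then reduce $\tilde{\Gamma}^{(\text{mod})}_2$ to a quadratic form in $p^\star$ and equate branches; Step~3 is exactly the paper's computation.

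\textbf{The gap is in Step~2.} That is the step that actually produces the stated identity, and as written it fails. You take $\Omega_2[p]=\frac12(\|p\|^2-1)$, but \eqref{eq:tsallis_entropy_function} at $\alpha=2$ gives $\Psi_2[p]=1-\sum_\mu p_\mu^2$, i.e.\ $\Omega_2[p]=\|p\|^2-1$. This is also the only normalization consistent with the weights $p_\mu=\frac{\beta}{2}[z_\mu-\theta]_+$ you use in Step~1: maximizing $\beta z^\top p+1-\|p\|^2$ on the simplex gives $2p_\mu=\beta(z_\mu-\theta)$ on the support. With your $\Omega_2$, the reduced form would be $\frac12p^\top(\one-\beta G)p-\frac12$, the weights would be $\beta[z_\mu-\theta]_+$, and Step~3 would return $1/(1-\epsilon)$. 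In your displayed intermediate expression the two $G$-terms even cancel outright. Choosing the convention so that Step~3 reproduces $2/(1-\epsilon)$ is circular, because the normalization is fixed by the definitions and cannot be tuned to the target.

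\textbf{The repair.} It needs no undetermined prefactor $c(\beta)$:
\begin{align*}
\tilde{\Gamma}^{(\text{mod})}_2[x^\star,p^\star,\xi]=\frac{\beta}{2}p^{\star\top}Gp^\star-\beta\,p^{\star\top}Gp^\star-\big(1-\|p^\star\|^2\big)=-1+\sum_{\mu,\nu}p^\star_\mu K_{\mu\nu}(\beta)\,p^\star_\nu .
\end{align*}
Inserting $p^\star_\mu=\frac{\beta}{2}[z^\star_\mu-\theta]_+$ gives $-1+\frac{\beta^2}{4}\sum_{\mu,\nu}[z^\star_\mu-\theta]_+[z^\star_\nu-\theta]_+K_{\mu\nu}$. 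The $-1$ and the $\beta^2/4$ are branch-independent and cancel when the branches are equated; this is the paper's argument.

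\textbf{A smaller point in Step~4.} A vanishing $t^2$ coefficient alone does not give flatness; you also need the linear coefficient to vanish. Here it equals minus the quadratic one, so it does. A cleaner argument, as in the paper: on the simplex, well-separation gives $p^\top Gp=(1-\epsilon)\|p\|^2+\epsilon$, so $\tilde{\Gamma}^{(\text{mod})}_2=\big(1-\frac{\beta}{2}(1-\epsilon)\big)\|p^\star\|^2-1-\frac{\beta}{2}\epsilon$. At $\beta=2/(1-\epsilon)$ this is constant on all of $\Delta^M$, not just on one edge.
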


\begin{proof}
    Please see \cref{sec:proof_coex_point} for a detailed proof.
\end{proof}

\begin{proposition}[Temperature Spinodal Point for Sparse Model]
\label{prp:spinodal_point}
    Consider the sparse case $\alpha=2$ and a branch $a$ with exact support size $k$.
    Then the spinodal critical value is
    \begin{align*}
        \beta_{\text{jump}}=\frac{2}{\lambda_{\text{max}}(PGP)},
    \end{align*}
    where $P_{\mu\nu}\coloneq\one_{\{\mu,\nu\}}-k^{-1}$, and $G$ is the Gram matrix.
    If the distribution is uniform and the system satisfies the well-separation condition, \eqref{eq:well-separation}, then we have the analytic solution $\beta_{\text{jump}}=2/(1-\epsilon)$.
\end{proposition}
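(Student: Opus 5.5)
The spinodal condition is the vanishing of the smallest Hessian eigenvalue (\cref{def:spinodal}), so the plan is to compute the Hessian of $\Gamma^{(\text{mod})}_{\alpha=2}[x,\xi]$ from \eqref{eq:mod_eff_action} along a branch with fixed support and find where it first becomes singular. The gradient is
\begin{align*}
\nabla_x\Gamma^{(\text{mod})}_2=\beta x-\tfrac{\beta}{\sqrt d}\Xi^\top p^\star(\beta z),
\end{align*}
with $\Xi\in\R^{M\times d}$ having rows $\xi^\mu$, $z=\Xi x/\sqrt d$, and $p^\star=\nabla\Omega_2^\star[\beta z]$ the sparsemax of \cref{prp:eom_attention}. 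On a smooth branch the support $S$ of size $k$ is locally constant; \cref{rmk:sparsity_nonsmooth_branch} excludes support changes. Restricted to $S$, the threshold $\theta$ is an affine function of $z_S$, chosen so that $\sum_\mu p_\mu=1$. The Jacobian of sparsemax is therefore $\partial p^\star_\mu/\partial z_\nu=c\,P_{\mu\nu}$ on $S$ and zero elsewhere. Here $P=I_k-\tfrac1k\mathbf 1\mathbf 1^\top$ is the centering projector, and $c$ is the scale factor fixed by the normalization of $\text{Sparsemax}_\beta$. I will fix $c$ so that the convention matches the factor $\beta/2$ appearing in \cref{prp:eom_attention}.

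The chain rule then gives
\begin{align*}
\nabla_x^2\Gamma^{(\text{mod})}_2=\beta\Big(I_d-\tfrac{c}{d}\,\Xi_S^\top P\,\Xi_S\Big).
\end{align*}
The Hessian is positive definite iff $\lambda_{\max}\big(\tfrac{c}{d}\Xi_S^\top P\Xi_S\big)<1$. Using $P^2=P$ and the fact that $AB$ and $BA$ have the same nonzero eigenvalues, I get $\lambda_{\max}(\Xi_S^\top P\Xi_S/d)=\lambda_{\max}(P\,\Xi_S\Xi_S^\top P/d)=\lambda_{\max}(PGP)$, with $G$ the Gram matrix restricted to $S$. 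The spinodal equation $\lambda_{\min}=0$ therefore reduces to $c(\beta)\lambda_{\max}(PGP)=1$. With $c=\beta/2$ this gives $\beta_{\text{jump}}=2/\lambda_{\max}(PGP)$.

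For the well-separated case \eqref{eq:well-separation}, on $S$ I write $G=(1-\epsilon)I_k+\epsilon\mathbf 1\mathbf 1^\top$. Since $P\mathbf 1=0$, this gives $PGP=(1-\epsilon)P$, whose largest eigenvalue is $1-\epsilon$ for any $k\ge2$. Hence $\beta_{\text{jump}}=2/(1-\epsilon)$, which coincides with $\beta_{\text{coexist}}$ in \cref{prp:coex_point}. The uniform-distribution assumption is used only to make the support symmetric, so that this projector computation applies with all weights equal.

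\textbf{Main obstacle: the normalization of $c$.} Pinning down $c$ is the delicate step, and it determines whether the constant comes out as $2$. Under the standard convention $\partial\,\text{sparsemax}(\beta z)/\partial z=\beta P$, the spinodal would instead read $\beta=1/\lambda_{\max}$. I will therefore derive the Jacobian directly from the paper's stated form $p_\mu=\tfrac{\beta}{2}[z_\mu-\theta]_+$ with $\tfrac\beta2\sum_\mu[z_\mu-\theta]_+=1$. Differentiating the constraint gives $\sum_{\mu\in S}(\delta z_\mu-\delta\theta)=0$, hence $\delta p_S=\tfrac\beta2 P\,\delta z_S$ and $c=\beta/2$, consistent with the coexistence computation's $K(\beta)=I-\tfrac\beta2G$.

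A second point to check is the degenerate case $k=1$. There $P=0$ and no spinodal occurs, so the statement implicitly requires $k\ge2$.
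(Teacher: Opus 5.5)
Your proposal is correct and follows essentially the same route as the paper: the sparsemax Jacobian $\tfrac{\beta}{2}P$ on the fixed support, the Hessian $\beta I-\tfrac{\beta^2}{2d}\Xi_S^\top P\,\Xi_S$, the transfer of $\lambda_{\max}$ to $PGP$, and $PGP=(1-\epsilon)P$ under well-separation; for the transfer step you invoke the $AB$/$BA$ spectrum fact, where the paper constructs the eigenvector $P\Xi u$ by hand. Your $k\ge 2$ caveat is a genuine refinement the paper omits, but the uniform-weight assumption plays no role in the projector computation, since the Hessian depends only on the support and not on the values of $p^\star$.
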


\begin{proof}
    Please see \cref{sec:proof_spinodal_point} for a detailed proof.
\end{proof}

\begin{remark}
\label{rmk:non_uniform}
    Under the uniform distribution condition and the well-separation condition, we conclude that $\beta_{\text{coexist}}=\beta_{\text{jump}}=2/(1-\epsilon)$. 
    However, we do not have these assumptions in general.
    Then the real jump and the coexistence point do not occur simultaneously, i.e., $\beta_{\text{jump}}\neq\beta_{\text{coexist}}$.
\end{remark}

\section{Numerical Results}
\label{sec:numerical_results}

We provide numerical evidence supporting the ideas in \cref{sec:retrieval_success}. 
Specifically, we display the phase structure of MHMs under temperature and memory-load scans, including both $\alpha=1,2$.

\subsection{Temperature Dependence}

We examine the temperature dependence of the phase structure in MHMs, including both the sparse ($\alpha=2$) and dense ($\alpha=1$) regimes.
Specifically, we characterize the temperature effect on the attention map (\cref{prp:eom_attention}) and the branches of free energy \eqref{eq:mod_eff_action}.
For the experimental setup, please see \cref{sec:numerical_studies}. 
We first start from the temperature analysis in the sparse case.

\begin{figure}[!htb]
    \centering
    \includegraphics[width=0.9\linewidth]{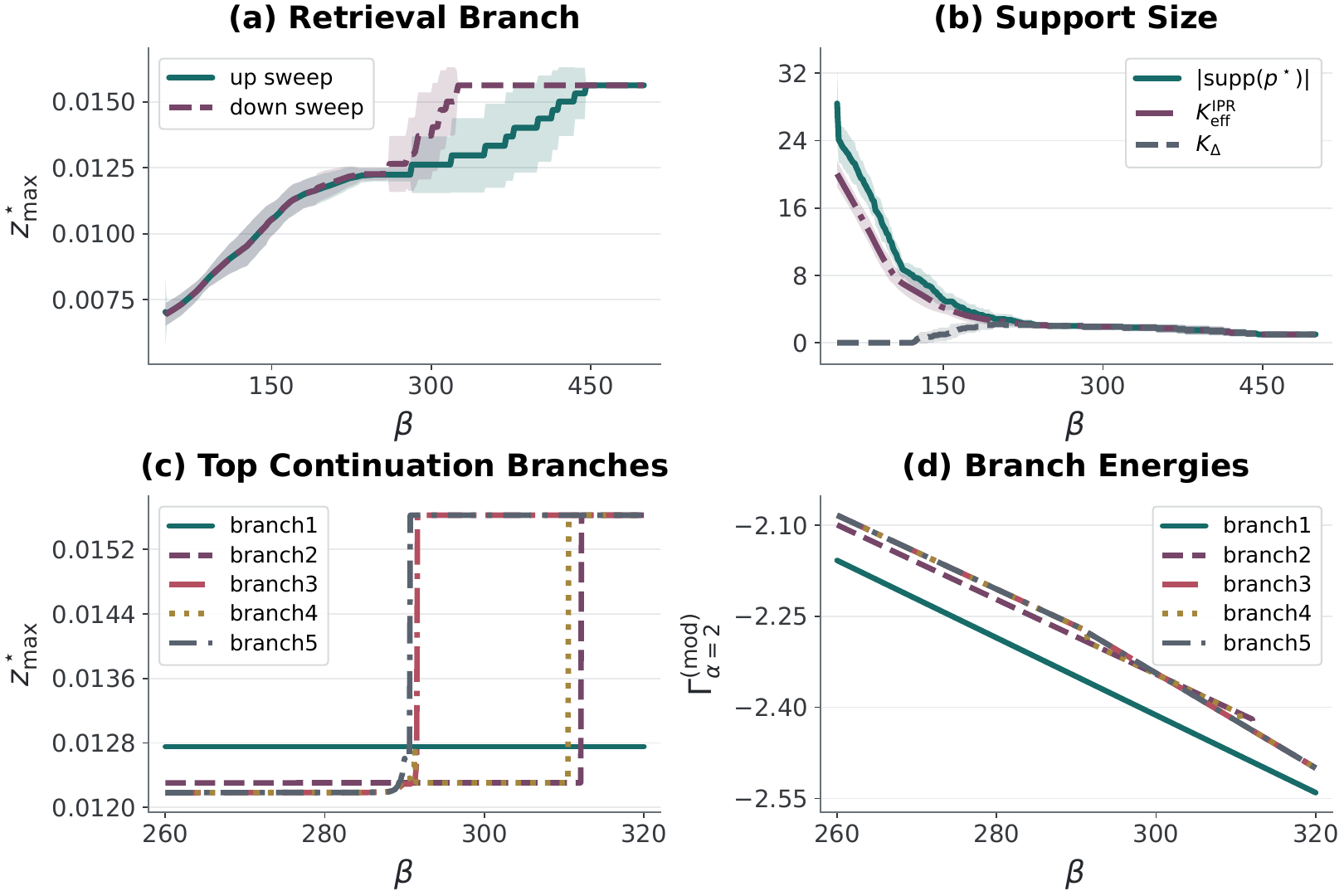}
    \vspace{-0.5em}
    \caption{\textbf{Temperature analysis of the retrieval structure of MHM in the sparse regime ($\alpha=2$).}
    {\small $(M=100,d=64)$.
    Results are averaged over 10 random pattern realizations, and shaded regions indicate one standard deviation across different seeds.
    \textbf{(a) Retrieval branches.}
    The maximal overlap $z_{\text{max}}^\star$ under up- and down-sweeps of $\beta$ exhibits hysteresis.
    \textbf{(b) Support size.} 
    We compare the exact support size (active components of the probability distribution $p$), the inverse participation ratio (IPR), and the fixed score-threshold count (\cref{sec:exp_settings}). 
    \textbf{(c) Top branches.} 
    We track the top five stable branches ranked by $z_{\text{max}}^\star$.
    \textbf{(d) Branch energies.} 
    We show the corresponding free energies of those top five branches.
    }}
    \label{fig:alpha2_phase_transition_beta}
    \vspace{-1em}
\end{figure}

\paragraph{Sparse case ($\alpha=2$).}
Here, we select four plots to represent the retrieval structure in \cref{fig:alpha2_phase_transition_beta}.
    
\begin{enumerate}[label=(\alph*)]
    \item The up- and down-sweep branches do not coincide, which shows a hysteresis window in $\beta$. 
    This indicates that the coexistence of multiple local stable retrieval states (\cref{def:coexist,prp:coex_point}).
    The selected branch depends on initialization, analogous to first-order transitions in freezing-melting hysteresis of water \cite{enninful2019cooling}.
    As $\beta$ increases, the maximal overlap $z_{\text{max}}^\star$ grows stepwise and the shaded regions collapse at large
    $\beta$.
    This indicates a seed-independent saturation.
    Specifically, it saturates at $z_{\text{max}}^\star\approx 0.0156$.
    Indeed, this is the consequence of one pattern retrieval, i.e.,
    \begin{align*}
        z^\star_\mu=\frac{1}{\sqrt{d}}\sum_{i=1}^d\xi_i^\mu x^\star_i=\frac{1}{\sqrt{d}}\times d\times\frac{1}{\sqrt{d}}\times \frac{1}{d}=\frac{1}{d}=\frac{1}{64}\approx0.0156.
    \end{align*}

    \item As $\beta$ increases, the exact support size and IPR decreases (\cref{sec:exp_settings}).
    This indicates a transition from distributed to sparse retrieval.
    The discrepancy of $|\text{supp}(p^\star)|$ and IPR arises since IPR counts the components non-uniformly.
    However, $K_\Delta$ is zero in the high-temperature regime, where noise dominates and no overlap exceeds the threshold $\Delta=0.01$.
    This suggests a disordered regime.
    In the low-temperature regime, all support sizes approach to one, so the system reaches single-pattern retrieval with large overlap.
    The collapse of the shaded regions again indicates a perfect retrieval and seed-independence.

    \item Initialization at $\beta_0=290$.
    Multi-branch continuation indicates several competing stable retrieval branches in the intermediate-$\beta$ regime. 
    The presence of these branches indicates multiple coexisting local minima.
    Their crossings and discontinuous jumps in $z_{\text{max}}^\star$ show that the sparse model admits branch switching between distinct local minima (\cref{prp:spinodal_point,rmk:sparsity_nonsmooth_branch}). 
    Hence, the selected state correspond to either stronger or weaker alignment depending on the branch we are tracking.

    \item Initialization at $\beta_0=290$.
    The branch exhibits coexist point from the plot (\cref{def:coexist,prp:coex_point}). 
    As $\beta$ varies, the global minimizer switches between branches.
    However, under continuation (warm-start dynamics), the system follows a given local branch rather than the instantaneous global minimizer.
    Hence, the transition between branches occur before reaching the exact crossing point.
    This leads to discontinuous changes along a branch. %
    Hence, we interpret this as a transition within the retrieval phase rather than a transition between retrieval and a disordered phase.
    This simulation provides a more precise phase structure than earlier analyses, \cite{amit1985,AMIT198730,Nishimori2001}.
\end{enumerate}

\begin{figure}[!htb]
    \centering
    \includegraphics[width=0.9\linewidth]{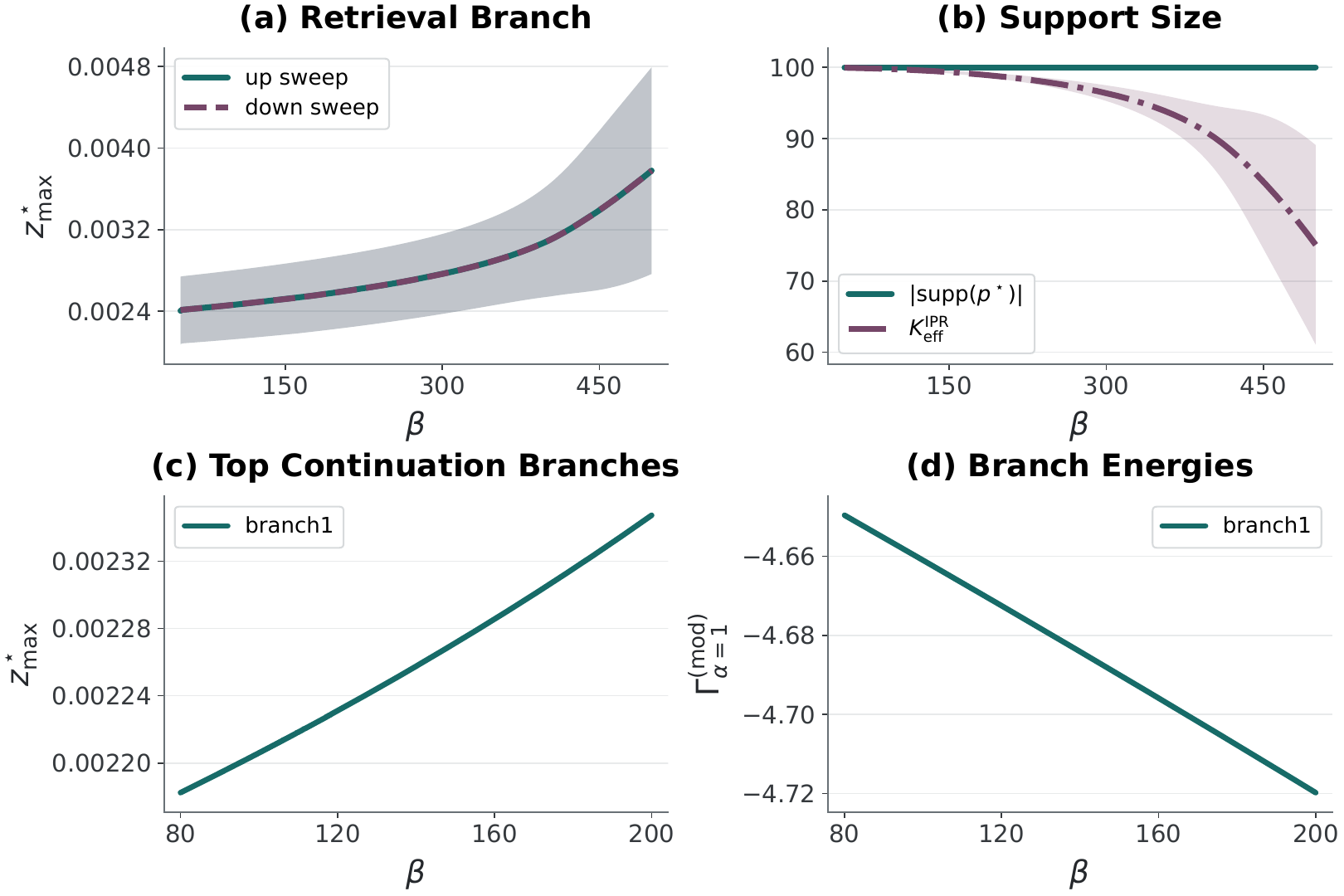}
    \vspace{-0.5em}
    \caption{\textbf{Temperature analysis of the retrieval structure of MHM in the dense regime ($\alpha=1$).}
    {\small $(M=100,d=64)$.
    Results are averaged over 10 random pattern realizations, and shaded regions indicate one standard deviation across different seeds.
    \textbf{(a) Retrieval branches.}
    The maximal overlap $z_{\text{max}}^\star$ under up- and down-sweeps of $\beta$ shows no hysteresis.
    \textbf{(b) Support size.} 
    We compare the exact support size (active components of the probability distribution $p$) and the inverse participation ratio (IPR) (\cref{sec:exp_settings}). 
    \textbf{(c) Top branches.} 
    We observe one single stable branch across the entire range of $\beta$.
    \textbf{(d) Branch energies.}
    We show the corresponding free energy of the branch.
    }}
    \label{fig:alpha1_phase_transition_beta}
    \vspace{-1em}
\end{figure}

\paragraph{Dense case ($\alpha=1$).}
Here, we select four plots to represent the retrieval structure in \cref{fig:alpha1_phase_transition_beta}.

\begin{enumerate}[label=(\alph*)]

    \item The up- and down-sweep branches coincide, which indicates the absence of hysteresis.
    Hence, the retrieval dynamics admits one stable solution for each $\beta$. 
    There is no metastable branch or discontinuous transition.
    The shaded region becomes broader as $\beta$ increases.
    Hence, the alignment with stored pattern increases, but the variance increases as well.

    \item The support size remains full for all $\beta$ (\cref{sec:exp_settings}), since $p_\mu^\star\propto\exp(\beta z_\mu^\star)$ assigns positive weight to every component $\mu$.
    As $\beta$ increases, $K_{\text{eff}}^{\text{IPR}}$ decreases, so the distribution becomes concentrated.
    However, this does not induce sparsity.
    The shaded region of $K_{\text{eff}}^{\text{IPR}}$ becomes broader as $\beta$ increases.
    This is due to the amplification of small differences from seeds.

    \item Initialization at $\beta_0=140$.
    The maximal overlap $z_{\text{max}}^\star$ grows with $\beta$.
    This indicates a continuous sharpening of alignment with the target pattern. 
    There is no discontinuity or spinodal instability due to the vanishing Hessian.
    Hence, it shows the absence of phase-transition-like behavior, based on our softmax experiment setup here.
    However, it does not indicate a unique local minimum necessarily.
    That is, other local minima may locate far away from our selected initial point so that the nearby search only displays one branch.

    \item Initialization at $\beta_0=140$.
    The energy decreases along the branch, and the retrieval state becomes more concentrated. 
    This reflects a continuous sharpening of the energy basin, rather than a restructuring of competing minima as in the sparse case.
    
\end{enumerate}

\begin{remark}
    \citet{koulischer2023exploring} analyze temperature-dependent behavior in MHM under softmax attention ($\alpha=1$).
    They show that as $\beta$ increases, the retrieval weights concentrate smoothly.
    Our results here are consistent with their findings.
    Besides, we further provide a characterization of this behavior by tracking the maximal overlap $z_{\text{max}}^\star$ and the support sizes.%
\end{remark}

\subsection{Memory-Load Dependence}
\label{sec:memory_load_dependence}

Here we study the exponential memory load $M=\exp(\kappa d)$, where $M$ is the pattern number and $d$ is the dimension of each pattern.
Specifically, we characterize the memory-load effect on the attention map (\cref{prp:eom_attention}) and the branches of free energy \eqref{eq:mod_eff_action}.
For the experimental setup, please see \cref{sec:numerical_studies}. 
We start from the memory-load analysis of the sparse case.

\begin{figure}[!htb]
    \centering
    \includegraphics[width=0.9\linewidth]{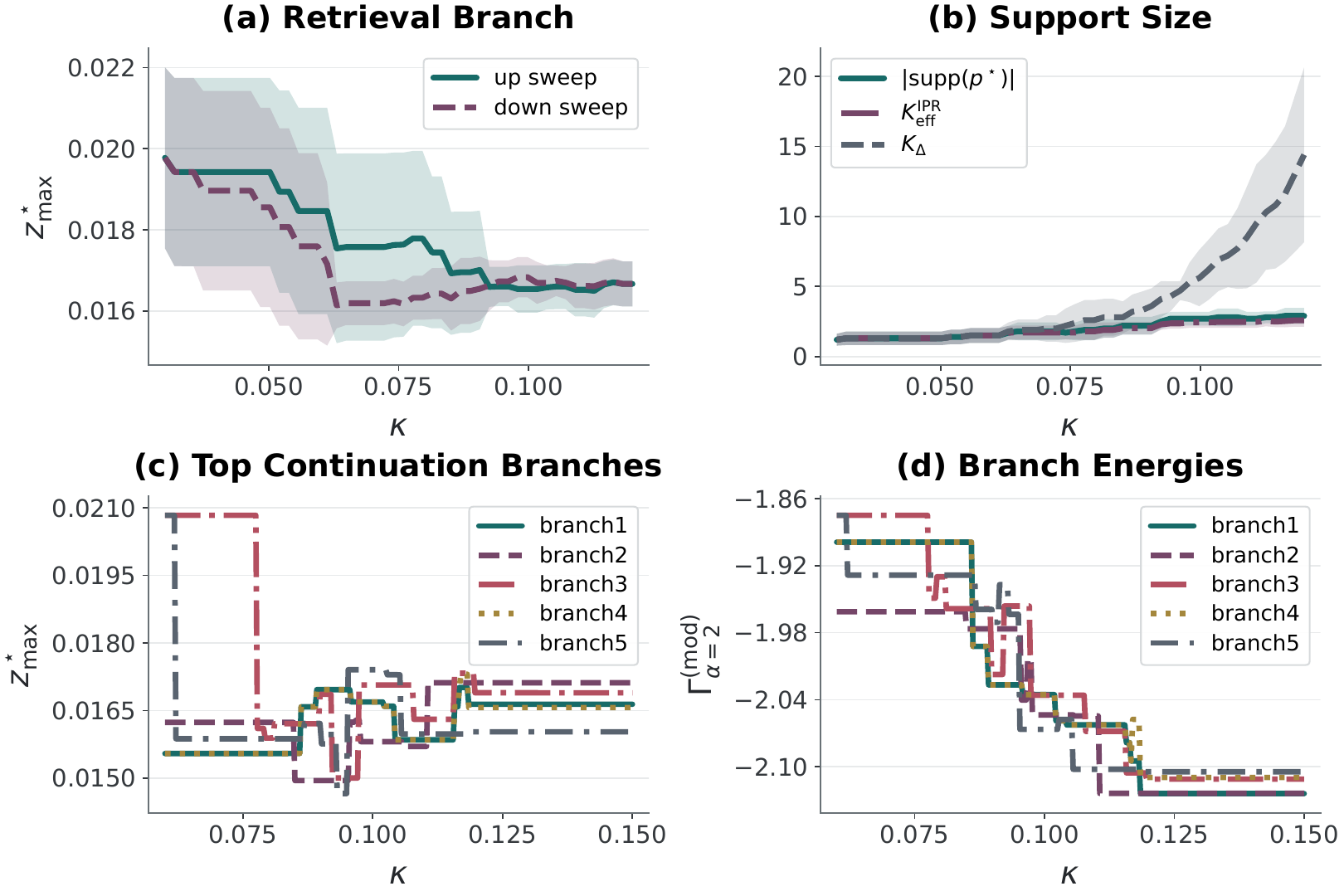}
    \vspace{-0.5em}
    \caption{\textbf{Memory-load analysis of the retrieval structure of MHM in the sparse regime ($\alpha=2$).}
    {\small ($d=48,\beta=180$).
    Results are averaged over 10 random pattern realizations, and shaded regions indicate one standard deviation across different seeds.
    \textbf{(a) Retrieval branches.}
    The maximal overlap $z_{\text{max}}^\star$ under up- and down-sweeps of $\kappa$ shows hysteresis.
    \textbf{(b) Support size.}
    We compare the exact support size $|\text{supp}(p^\star)|$ (active components of the probability distribution $p$) and the inverse participation ratio (IPR) (\cref{sec:exp_settings}).
    \textbf{(c) Top branches.}
    We track the top five stable branches ranked by $z^\star_{\text{max}}$. 
    \textbf{(d) Branch energies.} 
    We plot the corresponding free energy of the five branches.
    }}
    \label{fig:alpha2_phase_transition_kappa}
    \vspace{-1em}
\end{figure}

\paragraph{Sparse case ($\alpha=2$).}
Here, we select four plots to represent the retrieval structure in \cref{fig:alpha2_phase_transition_kappa}.

\begin{enumerate}[label=(\alph*)]

    \item The up- and down-sweep branches do not coincide, which shows a hysteresis window in $\kappa$. 
    This indicates the coexistence of multiple local stable retrieval states and the presence of first-order phase transitions.
    As $\kappa$ increases, the maximal overlap $z_{\text{max}}^\star$ decreases stepwise.
    This suggests the delocalization when the stored patterns are too many.
    
    \item The system remains in a sparse retrieval regime: although additional overlaps enter the active set as $\kappa$ increases, IPR remains close to one (\cref{sec:exp_settings}). 
    This indicates that a single dominant memory continues to control the retrieval. 
    Besides, the variance of fixed scored-threshold support size (\cref{sec:exp_settings}) grows with $\kappa$ due to the delocalization.

    \item Initialization at $\kappa_0=0.12$.
    Multiple branches coexist and represent distinct local stable retrieval states in this $\kappa$-window.
    The selected state depends on initialization and the continuation path.
    This reflects metastability in the energy landscape, i.e., multiple stable points coexist and the dynamics remains trapped in a given basin until stability is lost.

    \item Initialization at $\kappa_0=0.12$.
    We plot the corresponding free energies of the five branches.
    As $\kappa$ grows, all the branch energies decrease.
    However, it does not indicate that the retrieval is better.
    Instead, panel (a) suggests the delocalization (low score) at high $\kappa$.

\end{enumerate}

\begin{figure}[!htb]
    \centering
    \includegraphics[width=0.9\linewidth]{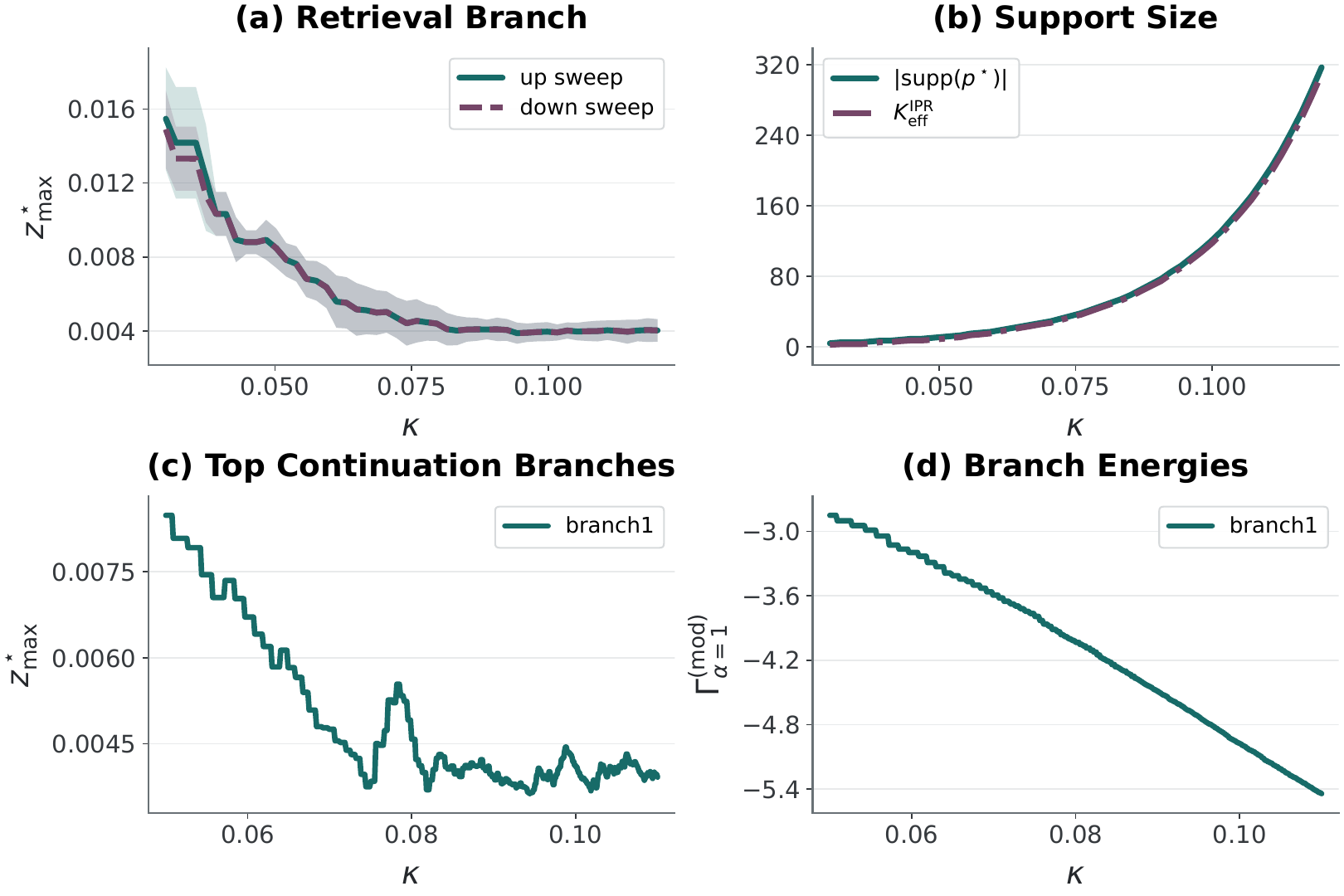}
    \vspace{-0.5em}
    \caption{\textbf{Memory-load analysis of the retrieval structure of MHM in the dense regime ($\alpha=1$).}
    {\small ($M=\exp(\kappa d),d=48,\beta=180$). 
    Results are averaged over random pattern realizations, and shaded regions indicate one standard deviation across different seeds.
    \textbf{(a) Retrieval branches.}
    The maximal overlap $z_{\text{max}}^\star$ under up- and down-sweeps of $\kappa$ shows no hysteresis.
    \textbf{(b) Support size.} 
    We compare the exact support size (active components of the probability distribution $p$) and the inverse participation ratio (IPR) (\cref{sec:exp_settings}).
    \textbf{(c) Top branches.}
    We track and observe only a single stable branch across the entire range of $\kappa$. 
    \textbf{(d) Branch energies.}
    We plot the corresponding free energy of the branch.
    }}
    \label{fig:alpha1_phase_transition_kappa}
    \vspace{-1em}
\end{figure}

\paragraph{Dense case ($\alpha=1$).}
Here, we select four plots to represent the retrieval structure in \cref{fig:alpha1_phase_transition_kappa}.

\begin{enumerate}[label=(\alph*)]
    
    \item The up- and down-sweep branches coincide, which shows the absence of hysteresis in $\kappa$. 
    That is, there is no coexistence window of multiple local stable retrieval states.
    As $\kappa$ increases, the maximal overlap $z_{\text{max}}^\star$ decreases.
    This suggests the delocalization at large $\kappa$.
    
    \item Both the exact support size and IPR (\cref{sec:exp_settings}) grow with $\kappa$. 
    This reflects the dense nature of softmax, i.e., 
    \begin{align*}
        p^\star_\mu=\frac{\exp\{\beta z^\star_\mu\}}{\sum_{\nu=1}^M\exp\{\beta z^\star_\nu\}}>0,\quad\forall\mu,\qquad\text{with overlap }~z^\star_\mu.
    \end{align*}
    All patterns have strictly positive weight. %
    Hence, the support size $|\text{supp}|$ remains full, and the delocalization drives IPR to grow exponentially with $\kappa$ as well.

    \item Initialization at $\kappa_0=0.08$.
    Only a single continuation branch appear.
    The energy decreases with $\kappa$ monotonically.
    This indicates a smooth deformation of the solution rather than transitions between competing states.
    Again, this does not indicate the uniqueness of this local minimum.
    Other local minima may locate far away from our initial point.

    \item Initialization at $\kappa_0=0.08$.
    We plot the corresponding free energies of the branch.
    As $\kappa$ grows, the branch energy decreases monotonically.
    However, it does not indicate that the retrieval is better.
    Instead, panel (a) suggests the delocalization (low $\kappa$) at high $\kappa$.
    
\end{enumerate}

\begin{remark}
    The curves show the non-smoothness as functions of $\kappa$.
    This is the consequence of the fact that we require $M$ to be an integer in our simulation.
\end{remark}

\section{Discussion and Conclusion}
\label{sec:conclusion}
We present a free energy framework for Transformer attention via modern Hopfield models. 
This viewpoint identifies attention as the equation of motion on an energy landscape. 
Our work shows the empirical reliable retrieval regime with respect to entropy, temperature, and exponential memory load (i.e., $\alpha/\beta/\kappa$).
We study the temperature and exponential memory load effects on phase structure.
In the sparse regime, discontinuous transitions occur due to support change in memories (keys).
The accompanying hysteresis indicates the coexistence of multiple stable states.
In contrast, the softmax regime exhibits smooth evolution without hysteresis.
Consequently, the softmax attention is a mixture of all patterns, while the query-key alignment grows with inverse-temperature smoothly.
These results reveal the entropy-dependent phase behavior of modern Hopfield models and hence in Transformers.

\paragraph{Practical Implications.}
We sketch some design principles from our results:
\begin{itemize}
    \item First, the attention should match the task: sparse one is support-efficient, while dense one is more accurate when many relevant values contribute (\cref{prp:eom_attention,sec:cap_reliability_memory_load,sec:support_tem_reliability}).

    \item Second, the retrieval depends on path under temperature changes (\cref{prp:coex_point,fig:alpha2_phase_transition_beta,fig:alpha1_phase_transition_beta}).
    Specifically, increasing versus decreasing $\beta$ lead to different different memory selection in the sparse case.
    The outcome also depends on the initial query (i.e., input context or prompt) supplied to the retrieval update during inference.

    \item Third, increasing the inverse-temperature strengthens the maximal query-key similarity $z_{\max}=\max_\mu\sum_{i=1}^dx_i\xi^\mu_i/\sqrt{d}$ (\cref{fig:alpha2_phase_transition_beta,fig:alpha1_phase_transition_beta}). 
    In particular, sparse attention reaches single-memory retrieval saturation, while dense one sharpens smoothly with full support.

    \item Finally, increasing memory load reduces retrieval accuracy (\cref{fig:alpha2_phase_transition_kappa,fig:alpha1_phase_transition_kappa}). 
    This further reveals a capacity-reliability trade-off in realistic Transformer architectures (\cref{sec:cap_reliability_memory_load}).
    
\end{itemize}

Together, these provide concrete guidance for selecting attention, temperature ane memory-load.

\paragraph{Limitations.}
Our analysis relies on simplified assumptions, such as prescribed correlation and noise structures, and the fixed memory setting.
In contrast, realistic Transformer architectures allow interactions across heads and layers.
This thereby leads to complicated noise.
Furthermore, the keys and queries co-evolve through learned projection matrices and upstream layers.
Hence, the theoretical analysis on phase transitions and retrieval reliability of full Transformers remain less explored.
Extending our framework along these directions is crucial for future work.

We defer \textbf{more experiments} to \cref{sec:numerical_studies,sec:add_exp} due to page limit.

\section*{Impact Statement}
By the theoretical nature of this work, we do not anticipate any negative social impact.

\section*{AI Usage Disclosure}
We used large language models (LLMs) to aid, polish writing and debug codes, such as improving clarity, grammar, and conciseness. We also used LLMs for retrieval and discovery, for example, exhausting literature to identify potential missing related work. Finally, we used LLMs to brainstorm ideas for selected experiments in \cref{sec:add_exp}, including suggestions for the model architecture, the number of attention heads, and the hidden dimensions \cite{vaswani2017attention,xiong2020layer}. All technical content, proofs, and results are original contributions by the authors. The authors take full responsibility for the paper's content and conclusions.

\section*{Acknowledgments}

JH is partially supported by Northwestern University’s Walter P. Murphy Fellowship and Terminal Year Fellowship (Paul K. Richter Memorial Award).
Han Liu is partially supported by NIH R01LM1372201, NSF
AST-2421845, Simons Foundation
MPS-AI-00010513, AbbVie , Dolby and Chan Zuckerberg Biohub Chicago Spoke Award.
The content is solely the responsibility of the authors and does not necessarily represent the official
views of the funding agencies.

\newpage
\appendix
\label{sec:append}
\part*{Appendix}
{
\setlength{\parskip}{-0em}
\startcontents[sections]
\printcontents[sections]{ }{1}{}
}

\clearpage

\section{Related Work}
\label{sec:related_works}

\textbf{Classical and Modern Hopfield Models.}
Hopfield-type models originate from the physical interpretation of associative memory in spin systems, including the Ising model \cite{ising}, the Sherrington-Kirkpatrick model \cite{skmodel}, and the classical Hopfield model \cite{classicalhopfield}.
Modern Hopfield models modify the energy function so that stored associative memories become local minima on the energy landscape, leading to a higher memory capacity \cite{krotov2020large}.

Attention mechanisms \cite{vaswani2017attention} have been highly successful in large-scale architectures.
\citet{ramsauer2020hopfield} establishes the connection between softmax modern Hopfield retrieval and Transformer attention.
\citet{hu2023sparse} further extends the softmax to sparse retrieval and sparse attention by applying convex conjugate.
\citet{peters2019sparse} introduce the entmax family of attention maps, which interpolates between dense and sparse attention.
Complementarily, \citet{pmlr-v162-millidge22a} develop universal Hopfield networks.
It formulates a retrieval dynamics through three operations: similarity, separation and projection.
\citet{santos2025hopfield} systematize this line of work into a Hopfield-Fenchel-Young framework, covering Fenchel-Young losses and the variational formulation of retrieval.

\textbf{Free Energy.}
Our formulation adopts the variational principles used in physics.
Specifically, this method appears in quantum field theory \cite{Zinn-Justinqft}, large-$N$ mean-field theory \cite{Moshe_2003}, and low-temperature theory \cite{Landau:1980mil}.
This viewpoint describes the retrieval process as a variational problem over macroscopic parameters, rather than microscopic states.
Related optimization-based interpretations of Transformer architectures are also explored in \cite{NEURIPS2022_efd1e27a}, where Transformer layers are viewed as unfolding steps of an energy minimization process.
Unlike traditional analyses of Hopfield networks that focus on energy minimization alone, the free energy framework incorporates entropy, leading to a richer landscape with multiple metastable states.

\textbf{Phase Transition.} 
Physicists often characterize phase transitions in energy-based systems through spontaneous symmetry breaking \cite{coleman_weinberg_ssb} and the emergence of metastable states. 
In classical Hopfield networks, \citet{amit1985,AMIT198730} and \citet{mezard_parisi1986} analyze this behavior using replica methods.
This reveals transitions between the retrieval and spin-glass phases, which contain numerous spurious states that lead to retrieval failure.
Free energy provides a description of phase transitions through the deformation of energy landscapes and the emergence of local minima \cite{binder1987}.
Recent work from \citet{koulischer2023exploring} discusses the phase transition in the softmax case of modern Hopfield models.
They identify a critical inverse temperature at which the mean minima are close to individual stored patterns, rather than one common averaged minimum.
However, their analysis does not include sparse retrieval map and the entmax family.

\paragraph{Dense Associative Memory.}
Dense associative memory models extend classical Hopfield models by introducing higher-order non-linear energy functions, which improved memory load storage in energy-based models \cite{Demircigil_2017,krotov2023frontier}.
These interactions further reshape the energy landscape and enhance robustness to noise \cite{krotov2018robustadversial}.

\textbf{Fuzzy Memory.}
Modern Hopfield models admit distributed retrieval over multiple memories.
Recent work studies how such retrieval behavior relates to memory capacity.
Specifically, \citet{wu2024uniform} introduce a learnable feature map that separates stored patterns in a kernel space.
This makes the distribution of memory minima more uniform.
On the other hand, \citet{hu2024provably} consider spherical codes, i.e., stored memories in the kernel feature space as points on a hypersphere.
This induces a tight capacity bound by analyzing how many feature vectors maintain angular separation on a hypersphere.
These works specify regimes where multiple patterns are activated simultaneously.
Such distributed states interpolate between sparse (single-pattern) and dense (multi-pattern) retrieval regimes.

Despite these prior works, a unified statistical framework of modern Hopfield models remains limited. 
Existing analyses focus on either retrieval capacity or dynamical behavior, without a systematic treatment of entropy, sparsity, and metastability within a single framework.
Furthermore, replica analyses characterize transitions between retrieval, spin-glass, and disordered phases, but not capture the detailed finite-dimensional geometry for modern Hopfield retrieval.
In this work, we introduce a free energy formulation that unifies these aspects and study the induced finite-dimensional retrieval landscape.
In particular, we focus on phase behavior inside the retrieval regime, rather than only transitions between retrieval and disordered phases.
This allows us to compare dense and sparse regime, analyze metastable states, and verify retrieval success.
In addition, we analyze how exponential memory load $\kappa$ changes empirical retrieval behaviors.
This probes memory-dependent phase regimes which are not addressed by existing Fenchel-Young or sparse Hopfield analyses.

\section{A Minimal Introduction of Statistical Mechanics}
\label{sec:stat_mech_tutorial}

This appendix gives a physical route from interacting spins to associative memory and then to attention.  
We set Boltzmann's constant to one, so the inverse temperature is \(\beta=1/T\).
\subsection{From Interacting Spins to Associative Memory}
\label{sec:tutorial_spins_to_memory}

A Hamiltonian assigns an energy to each microscopic configuration; a choice of dynamics specifies how the system moves through that landscape. 

\begin{definition}[Classical Hopfield Model \cite{classicalhopfield}]
    Consider binary spins \(x_i\in\{-1,+1\}\) with couplings $J_{ij}=d^{-1}\sum_{\mu=1}^M\xi_i^\mu\xi_j^\mu$ and $J_{ii}=0$.
    This gives the classical Hopfield model
    \begin{align}
        \mathcal H_{\text{Hop}}(x)=-\frac{1}{2d}\sum_{i\ne j}^d\sum_{\mu=1}^M\xi_i^\mu\xi_j^\mu x_i x_j,\label{eq:tutorial_pairwise_hamiltonian}
    \end{align}
    where memories are denoted as $\xi^1,\ldots,\xi^M\in\{-1,+1\}^d$.
\end{definition} 

Thus, zero-temperature Hopfield dynamics is a dissipative descent toward a local energy minimum.  At nonzero temperature the system instead fluctuates among configurations, as described in \cref{sec:tutorial_concrete_free_energy}.

\begin{definition}[Overlap]
    The overlap between the current state and memory $\mu$ is
    \begin{align}
    \label{eq:tutorial_classical_overlap}
        z_\mu(x)\coloneq \frac{1}{\sqrt{d}}\sum_{i=1}^d\xi_i^\mu x_i.
    \end{align}
    
\end{definition}

Therefore, low energy means large overlap with one or more stored patterns. 
\begin{definition}[Frustration \cite{kirkpatrick1977frustration}]
    A system is frustrated if it does not satisfy $J_{ij}x_ix_j>0$ for every pair with $J_{ij}\neq0$.
\end{definition}

A one-pattern Hopfield model is equivalent to an ordinary ferromagnet and retrieves without frustration.
However, with several patterns, the pairwise alignments become incompatible.
This creates frustration and additional metastable minima.  
Hence, stored patterns are attractors, but not every local minimum must be a stored memory.
This is the elementary mechanism of associative retrieval shown in \cref{fig:tutorial_retrieval_frustration}.

\begingroup
\setlength{\textfloatsep}{0.7em}
\begin{figure}[h]
\centering
\begin{tikzpicture}[every node/.style={font=\small}]
  \newcommand{\MemoryPattern}[2]{%
    \begin{scope}[shift={(#1,#2)},scale=0.3]
      \fill[white] (0,0) rectangle (7,7);
      \foreach \yy in {0,...,6}{
        \fill[blue!68!black] (0,\yy) rectangle ++(1,1);
        \fill[blue!68!black] (6,\yy) rectangle ++(1,1);
      }
      \foreach \xx in {0,...,6}{
        \fill[blue!68!black] (\xx,3) rectangle ++(1,1);
      }
      \draw[black!45,line width=0.35pt] (0,0) grid (7,7);
      \draw[black!65,line width=0.55pt] (0,0) rectangle (7,7);
    \end{scope}}
  \newcommand{\NoisyPattern}[2]{%
    \MemoryPattern{#1}{#2}
    \begin{scope}[shift={(#1,#2)},scale=0.3]
      \foreach \xx/\yy in {0/1,0/5,6/2,6/6,3/3}{
        \fill[white] (\xx,\yy) rectangle ++(1,1);
      }
      \foreach \xx/\yy in {2/0,4/1,1/5,3/6}{
        \fill[blue!68!black] (\xx,\yy) rectangle ++(1,1);
      }
      \draw[black!45,line width=0.35pt] (0,0) grid (7,7);
      \draw[black!65,line width=0.55pt] (0,0) rectangle (7,7);
    \end{scope}}

  \node[font=\bfseries] at (4.45,2.45) {(a) Associative retrieval};
  \MemoryPattern{0.15}{0.00}
  \NoisyPattern{3.25}{0.00}
  \MemoryPattern{6.55}{0.00}
  \node[align=center] at (1.05,-0.47) {stored memory\\$\xi^1$};
  \node[align=center] at (4.15,-0.47) {noisy query\\$x(t=0)$};
  \node[align=center] at (7.45,-0.47) {retrieved state\\$x^\star\simeq\xi^1$};
  \draw[->,very thick,blue!65!black] (5.45,0.92) -- (6.45,0.92)
    node[midway,above,align=center,font=\small] {$\mathcal H\downarrow$\\$z\uparrow$};

  \draw[black!25] (9.05,-1) -- (9.05,2.15);

  \node[font=\bfseries] at (12.05,2.45) {(b) Frustrated bonds};
  \coordinate (A) at (10.25,-0.15);
  \coordinate (B) at (13.85,-0.15);
  \coordinate (C) at (12.05,1.55);
  \draw[very thick] (A)--(C) node[midway,above left] {$J_{12}>0$};
  \draw[very thick] (C)--(B) node[midway,above right] {$J_{23}>0$};
  \draw[very thick,red!70!black,dashed] (A)--(B)
    node[midway,below] {$J_{31}<0$};
  \foreach \P in {A,B,C}{
    \filldraw[fill=white,thick] (\P) circle (0.27);
    \node at (\P) {$+$};
  }
  \node at (9.6,-0.15) {$x_1$};
  \node at (14.5,-0.15) {$x_3$};
  \node at (12.05,2) {$x_2$};
  \node[red!70!black,font=\bfseries] at (12.05,-0.15) {$\times$};

\end{tikzpicture}
\caption{\textbf{Retrieval and frustration in the classical Hopfield picture.}
\textbf{(a)} From a noisy query set, we retrieve a stored pattern by minimizing $\mathcal{H}$.
Equivalently, the overlap increases during this associative retrieval.
\textbf{(b)} Two ferromagnetic bonds and one antiferromagnetic bond on a triangle is impossible to satisfy pairwise energy minimization simultaneously.  This is the simplest illustration of frustration.}
\label{fig:tutorial_retrieval_frustration}
\end{figure}
\endgroup

\subsection{A Concrete Hopfield Free Energy}
\label{sec:tutorial_concrete_free_energy}

In this section, we take one-pattern Hopfield model for a self-contained example of the statistical mechanics construction.
That is, we choose $\mathcal{H}_{\text{1pat}}(x)=-\frac{1}{2}z^2(x)$.
At temperature $T=1/\beta$, the Gibbs distribution and partition function (\cref{def:partition_function}) are 
\begin{align}
\label{eq:tutorial_one_pattern_gibbs}
    p_J(x)=\frac{e^{-\beta\mathcal H_{\mathrm{1pat}}(x)+J^\top x}}{Z(J)},\qquad Z(J)=\sum_{x\in\{-1,+1\}^d}e^{-\beta\mathcal H_{\mathrm{1pat}}(x)+J^\top x}.
\end{align}
By grouping states via Gibbs distribution and Fenchel conjugate (\cref{def:fenchel_conjugation,def:log_partition_effective_action}),
\begin{align*}
    \Gamma(z)=\sup_{J}\big\{J^\top\varphi-\log Z(J)\big\},
\end{align*}
we obtain the exact form of one-pattern Hopfield free energy.
\begin{proposition}[Free Energy of One-Pattern Hopfield Model \cite{Friedli_Velenik_2017}]
\label{eq:tutorial_one_pattern_free_energy}
    The free energy of a one-pattern Hopfield model is
    \begin{align*}
        \Gamma(z)=-\frac{1}{2}z^2-\frac{1}{\beta}\log\begin{pmatrix}d\\\frac{d+\sqrt{d}z}{2}\end{pmatrix}.
    \end{align*}
    In the large-$d$ limit, it approaches to an explicit expression
    \begin{align}
        \Gamma(z)=\beta\big(U-T S\big)\xrightarrow{~d\to\infty~}\Gamma_{\infty}(q)=-\frac{\beta}{2}q^2-s(q),
    \end{align}
    where $U=-\frac{1}{2}\mathbb{E}_{p_J}[z^2]$ is the internal energy, $S$ is the entropy (\cref{def:entropy_function}) in the statistical mechanics language, $q\coloneq\lim_{d\to\infty}\mathbb{E}_{p_J}[z]$ and
    \begin{align}
    \label{eq:tutorial_binary_entropy}
        s(q)\coloneq-\frac{d+q\sqrt{d}}{2}\log\frac{1+q/\sqrt{d}}{2}-\frac{d-q\sqrt{d}}{2}\log\frac{1-q/\sqrt{d}}{2}.
\end{align}
\end{proposition}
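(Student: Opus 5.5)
The plan is to exploit the fact that $\mathcal{H}_{\mathrm{1pat}}(x)=-\tfrac12 z^2(x)$ depends on the configuration $x\in\{-1,+1\}^d$ only through the scalar overlap $z(x)=\tfrac{1}{\sqrt d}\sum_i\xi_i x_i$. Accordingly, I will take the source $J$ parallel to the pattern, so that $J^\top x=\sqrt d\,h\,z(x)$ with a single scalar source $h$. The field $\varphi$ is then parametrized by the macroscopic overlap $z$. The first step is a counting argument. Since $\xi_i x_i\in\{\pm1\}$, the overlap takes the value $z$ exactly when $n_+=(d+\sqrt d z)/2$ sites satisfy $\xi_i x_i=+1$. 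The gauge transformation $x_i\mapsto \xi_i x_i$ is a bijection of $\{-1,+1\}^d$, so the number of such configurations is $\binom{d}{(d+\sqrt d z)/2}$, independent of $\xi$. The partition function of \eqref{eq:tutorial_one_pattern_gibbs} therefore collapses to a one-dimensional sum,
\begin{align*}
Z(h)=\sum_{z}\binom{d}{\tfrac{d+\sqrt d z}{2}}\exp\Big\{\tfrac{\beta}{2}z^2+\sqrt d\,h\,z\Big\}
=\sum_z \exp\Big\{-\beta\,\Gamma(z)+\sqrt d\,h\,z\Big\},
\end{align*}
where $\Gamma(z)=-\tfrac12 z^2-\tfrac1\beta\log\binom{d}{(d+\sqrt d z)/2}$ is exactly the claimed expression.

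The second step is to identify this $\Gamma$ with the Fenchel conjugate from \cref{def:log_partition_effective_action}. At fixed $z$ (the microcanonical shell), $\beta\Gamma(z)$ is precisely minus the log of the restricted partition sum. I will make this precise as the constrained free energy: $\Gamma(z)=U-TS$ on the shell, with $U=-\tfrac12 z^2$ and $S=\log\binom{d}{n_+}$ the Boltzmann entropy of the shell. This matches the energy–entropy form of \cref{thm:free_energy_tsallis_entropy} at $\alpha=1$. I will then show that the Legendre transform $\sup_h\{\sqrt d h z-\log Z(h)\}$ agrees with $\beta\Gamma(z)$ up to convexification. By \cref{prp:fenchel_young_duality_relation}, the supremum is attained where $z=\mathbb{E}_{p_h}[z]$. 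In the limit the sum over $z$ is dominated by its maximal term (a Laplace/saddle-point argument), so $\log Z(h)\approx\sup_z\{-\beta\Gamma(z)+\sqrt d h z\}$. Conjugating twice then returns the convex envelope of $\beta\Gamma$.

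The third step is the large-$d$ asymptotics. I set $q=\lim\mathbb{E}_{p_J}[z]$ and apply Stirling's formula, $\log n!=n\log n-n+O(\log n)$, to $\binom{d}{(d+q\sqrt d)/2}$. The linear terms cancel, and writing $n_\pm=\tfrac{d\pm q\sqrt d}{2}$ gives $\log\binom{d}{n_+}=-n_+\log\tfrac{n_+}{d}-n_-\log\tfrac{n_-}{d}+O(\log d)$. This is exactly $s(q)$ in \eqref{eq:tutorial_binary_entropy}, since $n_\pm/d=(1\pm q/\sqrt d)/2$. Multiplying $\Gamma$ by $\beta$ to match the normalization $\Gamma=\beta(U-TS)$ yields $-\tfrac{\beta}{2}q^2-s(q)$. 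The $O(\log d)$ remainders are subleading relative to $s(q)$, which is of order $d$.

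The main obstacle is the second step. Because $\tfrac\beta2 z^2$ is concave, a genuine finite-$d$ Legendre transform of $\log Z$ returns only the convex hull of $\beta\Gamma$, not the nonconvex formula as stated. I would first try to present the statement as the constrained (shell) free energy, for which it is exact by the counting argument. I would then invoke the saddle-point dominance only to justify that the Legendre and constrained definitions coincide wherever $\beta\Gamma$ is convex, which covers the stable retrieval minima of interest. A minor technical point is that $z$ takes only the discrete values with $\sqrt d z\equiv d \pmod 2$. I will handle this by interpreting the binomial via the Gamma function, or by noting that it becomes irrelevant in the limit.
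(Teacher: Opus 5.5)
Your route is the standard Curie--Weiss computation from the cited reference, which is all the paper provides, since it gives no argument of its own. You gauge-transform to count configurations by overlap, collapse $Z$ to a sum over shells, read $\beta\Gamma(z)$ as minus the log of the shell partition sum, and apply Stirling with a uniform $O(\log d)$ error.

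Your diagnosis of the obstacle is also correct. The exact finite-$d$ formula holds only for the constrained (shell) free energy. The sandwich $\max_z\{\sqrt d\,hz-\beta\Gamma(z)\}\le\log Z(h)\le\max_z\{\sqrt d\,hz-\beta\Gamma(z)\}+\log(d+1)$ shows that the Fenchel conjugate $\sup_h\{\sqrt d\,hz-\log Z(h)\}$ equals the convex envelope of $\beta\Gamma$ up to $\log(d+1)$. For $\beta>1$ that envelope is not the double well the paper draws. You were also right to rescale by $\beta$, since the statement's two normalizations of $\Gamma$ disagree.

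One claim in your fallback needs correcting. The Legendre and constrained free energies agree, up to that $O(\log d)$, where $\beta\Gamma$ touches its convex envelope, not wherever it is locally convex.
\begin{itemize}
\item From $\Gamma_\infty''(q)=-\beta+(1-q^2/d)^{-1}$, the inflection point $\sqrt{d(1-1/\beta)}$ lies strictly inside $(0,q^\star)$, because $\Gamma_\infty''(q^\star)>0$.
\item So on $\sqrt{d(1-1/\beta)}<|q|<q^\star$ the function is locally convex, yet it lies strictly above the envelope, which is flat on $[-q^\star,q^\star]$.
\item The two readings match at $\pm q^\star$ only because these minima are global, by $\mathbb{Z}_2$ symmetry. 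Even there the Legendre version does not isolate them as distinct minima.
\item Genuinely metastable minima, which the paper's later sections are about, are invisible to the Legendre version.
\end{itemize}
Keep the shell free energy as the object being proved, and replace the local-convexity justification with the convex-envelope criterion.
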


\begin{proof}
    Please see \cite{Friedli_Velenik_2017} for the detailed proof.
\end{proof}

\begin{proposition}[Stationary Equation of One-Pattern Hopfield Model \cite{Friedli_Velenik_2017}]
    The stationary condition of the large-$d$ one-pattern Hopfield free energy is
    \begin{align}
    \label{eq:tutorial_curie_weiss_equation}
        q^\star=\sqrt{d}\tanh\Big(\frac{\beta q^\star}{\sqrt{d}}\Big).
    \end{align}
    This is an implicit equation. 
    The solutions depend on inverse-temperature $\beta$.
\end{proposition}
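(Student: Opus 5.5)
The plan is to differentiate the large-$d$ free energy $\Gamma_\infty(q)=-\frac{\beta}{2}q^2-s(q)$ from \cref{eq:tutorial_one_pattern_free_energy} with respect to $q$, set the derivative to zero, and solve for $q$. This is exactly the stationary condition of \cref{def:stationary_eq}, or equivalently the vanishing-source condition $J=\nabla_\varphi\Gamma=0$ from \cref{prp:fenchel_young_duality_relation}. First I would rewrite $s(q)$ in the variable $m\coloneq q/\sqrt{d}\in(-1,1)$:
\begin{align*}
s=-d\Big[\tfrac{1+m}{2}\log\tfrac{1+m}{2}+\tfrac{1-m}{2}\log\tfrac{1-m}{2}\Big].
\end{align*}
This is $d$ times the binary entropy of the magnetization $m$, and it is the Stirling limit of $\log\binom{d}{(d+\sqrt d z)/2}$.

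Next I would compute $\dd s/\dd m=-\frac{d}{2}\log\frac{1+m}{1-m}=-d\,\mathrm{artanh}(m)$. The chain rule then gives $\dd s/\dd q=-\sqrt d\,\mathrm{artanh}(q/\sqrt d)$. The stationary equation $\Gamma_\infty'(q^\star)=0$ becomes a balance between the energy and entropy contributions. Inverting the $\mathrm{artanh}$ then yields the claimed implicit equation $q^\star=\sqrt d\tanh(\beta q^\star/\sqrt d)$.

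The step I expect to be the main obstacle is fixing the normalizations consistently. As printed, $\Gamma_\infty$ carries $\beta$ on the energy term and none on the entropy, while the finite-$d$ formula has $1/\beta$ on the entropy. The overlap $z$ also carries a $1/\sqrt d$ scaling, which must be tracked so that the factors of $\beta$ and $\sqrt d$ land precisely inside the $\tanh$ as stated. A naive differentiation gives $\beta q=-\sqrt d\,\mathrm{artanh}(q/\sqrt d)$. The sign has to be reconciled with the convention that $\Gamma$ is minimized: $s$ enters with a minus sign, so its derivative contributes $+\sqrt d\,\mathrm{artanh}$. The scaling of $q$ must also be matched, since $q$ is $O(\sqrt d)$ in the magnetization picture. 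To do this I would first verify the equation via the mean-field route, then match constants. In that route one computes $\varphi=\nabla_J W$ with $\mathcal H_{\rm 1pat}=-\frac12 z^2$, linearizes $z^2$ about its mean (the saddle point, exact as $d\to\infty$), and the spins factorize. This gives $\mathbb{E}[x_i]=\xi_i\tanh(\beta z^\star/\sqrt d)$, and summing against $\xi_i/\sqrt d$ gives $z^\star=\sqrt d\tanh(\beta z^\star/\sqrt d)$ directly.

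Finally I would note that the resulting fixed-point equation is transcendental. Whether it has the trivial solution only, or also nontrivial solutions, depends on whether $\beta$ exceeds $1$, as seen from the slope of $\tanh$ at the origin. This justifies the remark that the solutions depend on $\beta$.
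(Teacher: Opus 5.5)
Your main route matches the paper's proof: differentiate $\Gamma_\infty(q)=-\frac{\beta}{2}q^2-s(q)$, use $s'(q)=-\frac{\sqrt d}{2}\log\frac{1+q/\sqrt d}{1-q/\sqrt d}$, and invert $\beta q^\star=\frac{\sqrt d}{2}\log\frac{1+q^\star/\sqrt d}{1-q^\star/\sqrt d}$ to obtain the $\tanh$ equation. The normalization ``obstacle'' you flag does not arise: the direct derivative already has the correct sign, and the overall factor $\beta$ between the finite-$d$ $\Gamma$ and $\Gamma_\infty$ does not move stationary points, so your mean-field cross-check is a valid consistency check but is not needed.
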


\begin{proof}
    Taking the derivative , we have
    \begin{align*}
        0=\Gamma_\infty'(q^\star)=-\beta q^\star+\frac{\sqrt{d}}{2}\log\frac{1+q^\star/\sqrt{d}}{1-q^\star/\sqrt{d}}.
    \end{align*}
    This hence gives the stationary points $q^\star$ and proves the proposition.
\end{proof}

At the origin, $\Gamma_\infty''(q^\star=0)=-\beta+1$.
Therefore, as illustrated in \cref{fig:tutorial_temperature_landscape}:

\begin{itemize}

    \item for \(\beta<1\) (high temperature), \(q^\star=0\) is the unique minimum.
    However, $q^\star=\lim_{d\to\infty}\mathbb{E}_{p_J}[z]=0$ indicates disorderness, i.e., zero overlap.

    \item at \(\beta_c=1\), the Hessian at the origin vanishes.

    \item for \(\beta>1\) (low temperature), \(q^\star=0\) becomes unstable and two minima \(q^\star_{\pm}\neq0\) appear.
    Since the local minima are symmetric, the retrieval is still unique and stable.
    
\end{itemize}

\begingroup
\setlength{\textfloatsep}{1em}
\begin{figure}[H]
\centering
\resizebox{\linewidth}{!}{%
\begin{tikzpicture}[every node/.style={font=\small}]
  \node[rotate=90] at (-2.95,0.82) {$\Gamma_\infty(q)-\Gamma_\infty(0)$};

  \begin{scope}[shift={(0,0)}]
    \draw[rounded corners=2pt,black!45] (-2.65,-0.58) rectangle (2.65,2.30);
    \draw[->] (-2.38,0) -- (2.42,0) node[below] {$q$};
    \draw[->] (-2.35,-0.42) -- (-2.35,2.08);
    \draw[very thick,blue!68!black,smooth] plot coordinates {
(-2.248,1.940) (-2.173,1.600) (-2.098,1.378) (-2.023,1.202) (-1.948,1.055) (-1.873,0.930) (-1.798,0.821) (-1.723,0.725) (-1.648,0.640) (-1.573,0.564) (-1.498,0.496) (-1.424,0.435) (-1.349,0.380) (-1.274,0.331) (-1.199,0.287) (-1.124,0.247) (-1.049,0.211) (-0.974,0.179) (-0.899,0.150) (-0.824,0.124) (-0.749,0.101) (-0.674,0.081) (-0.599,0.063) (-0.524,0.048) (-0.450,0.035) (-0.375,0.024) (-0.300,0.015) (-0.225,0.009) (-0.150,0.004) (-0.075,0.001) (0.000,0.000) (0.075,0.001) (0.150,0.004) (0.225,0.009) (0.300,0.015) (0.375,0.024) (0.450,0.035) (0.524,0.048) (0.599,0.063) (0.674,0.081) (0.749,0.101) (0.824,0.124) (0.899,0.150) (0.974,0.179) (1.049,0.211) (1.124,0.247) (1.199,0.287) (1.274,0.331) (1.349,0.380) (1.424,0.435) (1.499,0.496) (1.573,0.564) (1.648,0.640) (1.723,0.725) (1.798,0.821) (1.873,0.930) (1.948,1.055) (2.023,1.202) (2.098,1.378) (2.173,1.600) (2.248,1.940)};
    \fill (0,0) circle (1.7pt);
    \node[above] at (0,0.05) {$q^\star=0$};
    \node at (0,2.5) {(a) $\beta=0.7$: high $T$};
  \end{scope}

  \begin{scope}[shift={(5.55,0)}]
    \draw[rounded corners=2pt,black!45] (-2.65,-0.58) rectangle (2.65,2.30);
    \draw[->] (-2.38,0) -- (2.42,0) node[below] {$q$};
    \draw[->] (-2.35,-0.42) -- (-2.35,2.08);
    \draw[very thick,blue!68!black,smooth] plot coordinates {
(-2.248,0.759) (-2.173,0.561) (-2.098,0.443) (-2.023,0.356) (-1.948,0.289) (-1.873,0.235) (-1.798,0.191) (-1.723,0.155) (-1.648,0.126) (-1.573,0.101) (-1.498,0.081) (-1.424,0.064) (-1.349,0.051) (-1.274,0.040) (-1.199,0.030) (-1.124,0.023) (-1.049,0.017) (-0.974,0.013) (-0.899,0.009) (-0.824,0.006) (-0.749,0.004) (-0.674,0.003) (-0.599,0.002) (-0.524,0.001) (-0.450,0.001) (-0.375,0.000) (-0.300,0.000) (-0.225,0.000) (-0.150,0.000) (-0.075,0.000) (0.000,0.000) (0.075,0.000) (0.150,0.000) (0.225,0.000) (0.300,0.000) (0.375,0.000) (0.450,0.001) (0.524,0.001) (0.599,0.002) (0.674,0.003) (0.749,0.004) (0.824,0.006) (0.899,0.009) (0.974,0.013) (1.049,0.017) (1.124,0.023) (1.199,0.030) (1.274,0.040) (1.349,0.051) (1.424,0.064) (1.499,0.081) (1.573,0.101) (1.648,0.126) (1.723,0.155) (1.798,0.191) (1.873,0.235) (1.948,0.289) (2.023,0.356) (2.098,0.443) (2.173,0.561) (2.248,0.759)};
    \fill (0,0) circle (1.7pt);
    \draw[<->,thin] (-0.65,0.18) -- (0.65,0.18);
    \node[above] at (0,0.20) {$\Gamma_\infty''(0)=0$};
    \node at (0,2.5) {(b) $\beta_c=1$: critical};
  \end{scope}

  \begin{scope}[shift={(11.10,0)}]
    \draw[rounded corners=2pt,black!45] (-2.65,-0.58) rectangle (2.65,2.30);
    \draw[->] (-2.38,0) -- (2.42,0) node[below] {$q$};
    \draw[->] (-2.35,-0.42) -- (-2.35,2.08);
    \draw[very thick,blue!68!black,smooth] plot coordinates {
(-2.248,-0.159) (-2.173,-0.248) (-2.098,-0.284) (-2.023,-0.301) (-1.948,-0.307) (-1.873,-0.305) (-1.798,-0.298) (-1.723,-0.287) (-1.648,-0.274) (-1.573,-0.258) (-1.498,-0.242) (-1.424,-0.224) (-1.349,-0.206) (-1.274,-0.187) (-1.199,-0.169) (-1.124,-0.151) (-1.049,-0.133) (-0.974,-0.116) (-0.899,-0.100) (-0.824,-0.085) (-0.749,-0.071) (-0.674,-0.058) (-0.599,-0.046) (-0.524,-0.036) (-0.450,-0.026) (-0.375,-0.018) (-0.300,-0.012) (-0.225,-0.007) (-0.150,-0.003) (-0.075,-0.001) (0.000,0.000) (0.075,-0.001) (0.150,-0.003) (0.225,-0.007) (0.300,-0.012) (0.375,-0.018) (0.450,-0.026) (0.524,-0.036) (0.599,-0.046) (0.674,-0.058) (0.749,-0.071) (0.824,-0.085) (0.899,-0.100) (0.974,-0.116) (1.049,-0.133) (1.124,-0.151) (1.199,-0.169) (1.274,-0.187) (1.349,-0.206) (1.424,-0.224) (1.499,-0.242) (1.573,-0.258) (1.648,-0.274) (1.723,-0.287) (1.798,-0.298) (1.873,-0.305) (1.948,-0.307) (2.023,-0.301) (2.098,-0.284) (2.173,-0.248) (2.248,-0.159)};
    \fill (-1.932,-0.307) circle (1.7pt);
    \fill (1.932,-0.307) circle (1.7pt);
    \fill[white] (0,0) circle (1.7pt);
    \draw (0,0) circle (1.7pt);
    \node[above left] at (-1.6,0.1) {$q_-^\star$};
    \node[above right] at (1.6,0.1) {$q_+^\star$};
    \node at (0,2.5) {(c) $\beta=1.5$: low $T$};
  \end{scope}
\end{tikzpicture}%
}
\caption{\textbf{One-pattern free-energy landscape in the large-$d$ limit.}
The curves show the one-pattern Hopfield model free energy and its local minima.
Cooling the system turns the minimum at $q^\star=0$ into two retrieval minima.}
\label{fig:tutorial_temperature_landscape}
\end{figure}
\endgroup

The deformation of the energy landscape moves the local minimum.
It leads to a phase transition. 

\subsection{Landscapes, Metastability, and Retrieval Jumps}
\label{sec:tutorial_landscapes}

The one-pattern transition above is continuous: the nonzero minima emerge from $q^\star=0$ continuously. 
More complicated retrieval landscapes contain several separated local minima.
Specifically, two minima coexist when they have equal free energy. 
If one is higher, then the minimum is metastable since a barrier prevents movement to the lower state.  
A system in one basin remains there until that basin loses local stability.  
It moves only when the energy barrier vanishes, which is called a \textit{spinodal} (\cref{def:spinodal}).
We illustrate this mechanism in \cref{fig:tutorial_jump_mechanisms}.

\begingroup
\setlength{\textfloatsep}{1em}
\begin{figure}[H]
\centering
\resizebox{\linewidth}{!}{%
\begin{tikzpicture}[every node/.style={font=\small}]
  \begin{scope}[shift={(0,0)},xscale=1.5,yscale=2.5]
    \draw[rounded corners=2pt,black!45] (-1.75,-0.18) rectangle (1.75,1.12);
    \draw[->] (-1.50,0) -- (1.53,0) node[below] {$q$};
    \draw[->] (-1.45,0) -- (-1.45,0.94);
    \node[rotate=90,font=\scriptsize] at (-1.58,0.49) {$\Gamma$};
    \draw[very thick,domain=-1.40:1.40,samples=100,smooth,variable=\x,blue!68!black]
      plot ({\x},{0.25*\x*\x*\x*\x-0.5*\x*\x+0.12*\x+0.45});
    \fill (-1.055,0.077) circle (1.5pt);
    \fill (0.934,0.316) circle (1.5pt);
    \fill[white] (0.122,0.457) circle (1.5pt);
    \draw (0.122,0.457) circle (1.5pt);
    \node[above right] at (-1.38,0.2) {global};
    \node[below left] at (1.5,0.25) {metastable};
    \node[above] at (0.12,0.48) {barrier};
    \node at (0,1.20) {(a) Metastability};
  \end{scope}

  \begin{scope}[shift={(4.2,0)},xscale=1,yscale=1.6]
    \draw[rounded corners=2pt,black!45] (-0.35,-0.27) rectangle (4.50,1.75);
    \draw[->] (0,0.04) -- (4.20,0.04) node[below] {$\beta$};
    \draw[->] (0,0.04) -- (0,1.52);
    \node[rotate=90,font=\scriptsize] at (-0.18,0.78) {$q^\star$};
    \draw[very thick,blue!68!black]
      (0.35,0.28) .. controls (1.30,0.32) and (2.30,0.52) .. (2.95,0.78);
    \draw[very thick,dashed,black!65]
      (2.95,0.78) .. controls (2.65,1.02) and (2.35,1.15) .. (2.15,1.26);
    \draw[very thick,blue!68!black]
      (2.15,1.26) .. controls (2.70,1.42) and (3.40,1.48) .. (4.05,1.50);
    \draw[densely dashed] (2.95,0.04) -- (2.95,1.56);
    \draw[->,very thick,red!70!black] (2.98,0.82) -- (2.98,1.38);
    \fill (2.95,0.78) circle (1.7pt);
    \node[below] at (2.95,0.1) {$\beta_{\text{jump}}$};
    \node[align=center,font=\small] at (0.95,1.08)
      {$\lambda_{\min}\to0$\\at the fold};
    \node at (2.08,1.85) {(b) Spinodal};
  \end{scope}
\end{tikzpicture}%
}
\caption{\textbf{Phase-transition picture.}
\small{
\textbf{(a)} A metastable minimum is protected by a barrier.
\textbf{(b)} We plot two smooth branches. In one smooth branch, the minimal Hessian eigenvalue vanishes at a spinodal. This leads to a jump onto another branch.}}
\label{fig:tutorial_jump_mechanisms}
\end{figure}
\endgroup

\clearpage
\section{Definitions and Proofs in the Main Text}
\label{sec:def_proof}

\subsection{Basic Tools for Convex Analysis}
\label{sec:basic_tool}

Here we define several fundamental terminologies from standard convex analysis \cite{rockafellar1997convex}.

\begin{definition}[Proper]
\label{def:proper}
    A function $f$ is proper if the support domain of $f$ satisfies 
    \begin{align*}
        \text{dom}(f)\coloneq\{x\in\R^d:f(x)<\infty\}\neq \emptyset,\qquad\text{ and }~ f\neq+\infty.
    \end{align*}
\end{definition}

\begin{definition}[Lower Semi-Continuous (lsc)]
    A function $f$ is lower semi-continuous (lsc) if 
    \begin{align*}
        f(x)\leq\lim\inf_{\tau\to\infty}f(x_\tau),
    \end{align*}
    for every sequence $(x_i)_\tau\to x_i$.
\end{definition}

\begin{definition}[Closed Convex]
    A function $f$ is closed convex if $f$ is proper, lsc and convex.
\end{definition}

\begin{definition}[Essentially Smooth]
\label{def:essentially_smooth}
    A function $f$ is essentially smooth if $f$ is differentiable on $\text{int~dom}(f)$, and $\|\nabla f(x)\|\to\infty$ whenever $x_i\rightarrow\bar{x}_i\in\partial(\text{dom}(f))$.
\end{definition}

\begin{definition}[Subgradient and Subdifferential]
\label{def:subgradient}
    Let $f:\R^d\to[-\infty,+\infty)$ be a proper convex function.
    A vector $g\in\R^d$ is called a subgradient of $f$ at $x\in\text{dom}(f)$ if
    \begin{align*}
        f(y)\geq f(x)+\sum_{i=1}^dg_i(y_i-x_i)\qquad\forall~y\in\R^d.
    \end{align*}
    The set of all subgradients of $f$ at $x$ is called the subdifferential of $f$ at $x$, and is denoted by
    \begin{align*}
        \partial f(x)=\{g\in\R^d:f(y)\geq f(x)+\sum_{i=1}^dg_i(y_i-x_i)~\forall~y\in\R^d\}.
    \end{align*}
    If $f$ is differentiable at $x$, then the subdifferential reduces to $\partial f(x)=\{\nabla f(x)\}$.
\end{definition}

\begin{definition}[Fenchel Conjugation \cite{rockafellar1997convex}]
\label{def:fenchel_conjugation}
    Consider a function $f:\R^d\to\R$ and an input $x\in\R^d$. 
    Then we define the Fenchel conjugate of $f(x)$ by the following variational formula,
    \begin{align*}
        f^\star(k)\coloneq\sup_{x\in\R^d}\{\sum_{i=1}^d k_ix_i-f(x)\}\in\R~,\qquad k\in\R^d.
    \end{align*}
\end{definition}

\begin{fact}[Fenchel Conjugation Properties \cite{rockafellar1997convex}]
\label{prp:properties_fenchel_conjugates}
\cref{def:fenchel_conjugation} has several properties:

    \begin{enumerate}
        \item \textbf{Convexity.} A Fenchel conjugate $f^\star(x)$ is by definition a convex function.
        
        \item  \textbf{Involutivity.} If a function $f$ is closed convex, then applying Fenchel conjugation twice yields the identity map, i.e., $f^{\star\star}=f$ for all $x\in\R^d$.
        This is the well-known Fenchel-Moreau Theorem.
        
        \item \textbf{Bijectiveness.} If a function $f$ is proper, lsc, strictly convex and essentially smooth, then $\nabla f^{\star}:\text{int~dom}(f^\star)\leftrightarrow\text{int~dom}(f)$ is a bijection, where int and dom denote interior and domain.
    \end{enumerate}
\end{fact}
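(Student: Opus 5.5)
The statement just above is \cref{prp:properties_fenchel_conjugates}, which collects three standard facts from convex analysis. I would prove them in order, using only \cref{def:fenchel_conjugation,def:subgradient,def:proper,def:essentially_smooth} and textbook separation arguments.

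\textbf{Convexity.} For each fixed $x$, the map $k\mapsto\sum_i k_ix_i-f(x)$ is affine in $k$. A pointwise supremum of affine functions is convex, since its epigraph is an intersection of closed half-spaces. For the same reason $f^\star$ is lsc, which I will reuse in the next step.

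\textbf{Involutivity (Fenchel--Moreau).} First I would record the Fenchel--Young inequality $f(x)+f^\star(k)\ge\sum_i k_ix_i$. Taking the supremum over $k$ gives $f^{\star\star}\le f$ for any $f$.

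For the reverse inequality, suppose $f$ is closed convex and $f^{\star\star}(x_0)<f(x_0)$ for some $x_0$. Then the point $(x_0,f^{\star\star}(x_0))$ lies outside the closed convex set $\operatorname{epi} f$. Strict separation produces an affine function $a(x)=\sum_i k_ix_i-c$ with two properties: $a\le f$ everywhere, and $a(x_0)>f^{\star\star}(x_0)$.

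From $a\le f$ we get $c\ge f^\star(k)$. Therefore $f^{\star\star}(x_0)\ge \sum_i k_ix_{0,i}-f^\star(k)\ge a(x_0)$, which is a contradiction.

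\textbf{Main obstacle.} I expect the delicate point to be the case where the separating hyperplane is vertical. This can happen when $x_0\notin\operatorname{dom}f$. The standard fix is to use properness to obtain one non-vertical affine minorant of $f$, and then add a large multiple of the vertical separator to it to tilt the hyperplane. I would follow Rockafellar, Thm.~12.2, for this step.

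\textbf{Bijectiveness.} From Fenchel--Young equality one has $k\in\partial f(x)\iff f(x)+f^\star(k)=\sum_i k_ix_i\iff x\in\partial f^\star(k)$; the second equivalence uses $f^{\star\star}=f$. So $\partial f$ and $\partial f^\star$ are inverse multivalued maps. I would then establish three facts.
\begin{enumerate}
\item Essential smoothness of $f$ makes $\partial f$ single-valued, equal to $\{\nabla f\}$, on $\operatorname{int}\operatorname{dom} f$, and empty on the boundary, because the gradient norm blows up there.
\item Strict convexity of $f$ makes $\nabla f$ injective. If $\nabla f(x)=\nabla f(y)=k$ with $x\ne y$, then $f$ would be affine on $[x,y]$, contradicting strict convexity.
\item By the standard duality (Rockafellar, Thm.~26.3), strict convexity of $f$ corresponds to essential smoothness of $f^\star$. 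Hence $\partial f^\star$ is single-valued on $\operatorname{int}\operatorname{dom} f^\star$.
\end{enumerate}
Combining these, $\nabla f^\star$ and $\nabla f$ are mutually inverse bijections between $\operatorname{int}\operatorname{dom} f^\star$ and $\operatorname{int}\operatorname{dom} f$.

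I expect the third fact, the duality between strict convexity and essential smoothness, to be the main technical step. For it I would cite Rockafellar's Legendre-type theorem rather than reprove it.
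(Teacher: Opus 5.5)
Your outline is correct and takes essentially the same route as the paper, which states this Fact without proof and simply cites Rockafellar; you reconstruct that textbook argument, namely Thm.~12.2 for Fenchel--Moreau and Thms.~26.1, 26.3 and 26.5 for the Legendre-type bijection. Two things are worth making explicit: your first bijectiveness fact must also be applied to $f^\star$, so that $\partial f^\star=\emptyset$ outside $\text{int~dom}(f^\star)$, which gives $\nabla f(\text{int~dom}(f))\subseteq\text{int~dom}(f^\star)$ and hence surjectivity of $\nabla f^\star$ onto $\text{int~dom}(f)$; and you are using Rockafellar's convention that essential smoothness includes $\text{int~dom}(f)\neq\emptyset$, which the paper's definition leaves out.
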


\begin{definition}[Legendre Transformation]
\label{def:legendre}
    A function $f$ is called a Legendre function if it is proper, lsc, strictly convex and essentially smooth.
    In other words, Legendre functions satisfy involutivity and bijectiveness.
    In this case, we refer to the Fenchel conjugate $f^\star$ as the Legendre transformation.
\end{definition}

\begin{remark}
    A Legendre function $f$ satisfies the Fenchel-Moreau involutivity $f^{\star\star}=f$, and its gradient map $\nabla f$ is a bijection between the interior domains from \cref{prp:properties_fenchel_conjugates} automatically.
\end{remark}

\begin{remark}
    Here we adopt the definition of the Legendre transformation from \cite{rockafellar1997convex}.
    However, in the physics and machine learning literature, for example, in \cite{krotov2020large} and \cite{ramsauer2020hopfield}, people denote the same conjugation without verifying the conditions.
    Specifically, in many cases, the domain is $\R^d$, so $\partial\R^d=\emptyset$ leads to the essential smoothness condition being vacuous.
    The real issue is the non-bijectiveness of gradient maps.
    We should understand the transformations in these works as the Fenchel conjugations rather than the true Legendre transforms.

\end{remark}

\subsection{Proof of \texorpdfstring{\cref{prp:fenchel_young_duality_relation}}{}}
\label{sec:proof_fenchel_young}

\begin{proposition}[\cref{prp:fenchel_young_duality_relation} Restated: Fenchel-Young Duality Relation \cite{Srednicki:2007qs,blondel2020learning}]
    Let $p_J$ be the Gibbs measure with external source $J$.
    From the standard statistical field theory, the induced macroscopic field $\varphi=\mathbb{E}_{p_J}[x]$ reveals the following source-field relation:
    \begin{align*}
        \varphi=\mathbb{E}_{x\sim p_J}[x]\in\partial W[J]~\Leftrightarrow~J\in\partial\Gamma[\varphi]~,
    \end{align*}
    where $\partial$ is the subdifferential from \cref{def:subgradient}.
    If $W$ and $\Gamma$ are differentiable, then $\partial$ reduce to gradients $\nabla$.
    We call them equations of motion (EOMs) in the physics literature. 
    It also refers to the Fenchel-Young duality relation. 
    For $\varphi$ and $J$, we call them the classical solutions or saddle points.
\end{proposition}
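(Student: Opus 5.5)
The proof has two parts: compute $\nabla W$ directly, then turn the equality case of the Fenchel--Young inequality into the stated equivalence. Throughout, $W$ is the log-partition function \eqref{eq:log_partition_function} and $\Gamma=W^\star$ is given by \eqref{eq:effective_action}.

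\textbf{Step 1: the gradient of $W$ is the Gibbs mean.} Assume $Z[J]<\infty$ on an open set of sources; this is needed for $W$ to be finite near $J$. Then differentiation under the integral sign is justified by dominated convergence, since $|x_i|e^{J\cdot x}\le C\, e^{(J\pm\delta e_i)\cdot x}$ for some constant $C$. This gives
\begin{align*}
    \partial_{J_i}W[J]=\frac{1}{Z[J]}\int_{\mathcal{X}}x_i\,e^{-\beta\mathcal{H}[x]+\sum_jJ_jx_j}\,\dd\mu(x)=\mathbb{E}_{x\sim p_J}[x_i].
\end{align*}
Hence $W$ is differentiable and $\partial W[J]=\{\nabla W[J]\}=\{\mathbb{E}_{p_J}[x]\}$, which is the left membership in the statement.

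Convexity of $W$ follows from H\"older's inequality. Equivalently, its Hessian is the covariance $\mathrm{Cov}_{p_J}(x)\succeq0$. Since $W$ is also finite and continuous on its domain, it is closed convex, and \cref{prp:properties_fenchel_conjugates} gives $W^{\star\star}=W$.

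\textbf{Step 2: the Fenchel--Young inequality.} For all $J,\varphi$, the definition of $\Gamma$ gives
\begin{align*}
    W[J]+\Gamma[\varphi]\ge\sum_iJ_i\varphi_i.
\end{align*}
I then show that equality holds iff $\varphi\in\partial W[J]$.

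Equality means $J$ attains the supremum defining $\Gamma[\varphi]$. That is, $\sum_iK_i\varphi_i-W[K]\le\sum_iJ_i\varphi_i-W[J]$ for all $K$. Rearranged, this reads $W[K]\ge W[J]+\sum_i\varphi_i(K_i-J_i)$, which is exactly \cref{def:subgradient} for $\varphi\in\partial W[J]$.

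\textbf{Step 3: the reverse direction.} Apply the same argument to the pair $(\Gamma,\Gamma^\star)$. Equality holds iff $J$ attains $\sup_J\{\sum_iJ_i\varphi_i-W[J]\}$, and since $\Gamma^\star=W^{\star\star}=W$, this is iff $J\in\partial\Gamma[\varphi]$. Combining with Step 2 yields $\varphi\in\partial W[J]\Leftrightarrow J\in\partial\Gamma[\varphi]$.

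When $\Gamma$ is differentiable, $\partial\Gamma[\varphi]=\{\nabla\Gamma[\varphi]\}$ by \cref{def:subgradient}, so the relations reduce to the gradient EOMs.

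\textbf{Main obstacle.} The delicate point is Step 3: it needs $W$ to be closed (proper and lsc) so that Fenchel--Moreau applies. This requires a growth assumption on $\mathcal{H}$ so that $Z[J]$ is finite for some $J$ and $W$ is lsc. Lower semicontinuity follows from Fatou's lemma applied to the integrand.

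Non-strict convexity (\cref{rmk:non_bijective_energy}) causes no difficulty. The equivalence is stated as set memberships, so multivalued $\partial\Gamma$ and non-bijective gradient maps are allowed.
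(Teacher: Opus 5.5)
Your proof is correct and essentially the paper's argument: both reduce each membership to attainment of the supremum in the Fenchel--Young inequality, and both need Fenchel--Moreau ($W=W^{\star\star}=\Gamma^\star$, hence closedness of $W$) for the implication $J\in\partial\Gamma[\varphi]\Rightarrow\varphi\in\partial W[J]$. Your Step 1 (differentiating under the integral to get $\partial W[J]=\{\mathbb{E}_{p_J}[x]\}$) and your Fatou argument for lower semicontinuity make explicit what the paper takes for granted; two small slips to fix are that in Step 3 the supremum being attained is $\sup_\varphi\{\sum_iJ_i\varphi_i-\Gamma[\varphi]\}=\Gamma^\star[J]=W[J]$ (over $\varphi$, not $J$), and that the dominating function in Step 1 should be the sum $C\,(e^{(J+\delta e_i)\cdot x}+e^{(J-\delta e_i)\cdot x})\,e^{-\beta\mathcal{H}[x]}$ rather than a single sign.
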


\begin{proof}
    We repeat the proof from \cite{blondel2020learning,rockafellar1997convex} for completeness.
    By definition, $\Gamma[\varphi]=\sup_{J\in\R^d}\{\sum_{i=1}^dJ_i\varphi_i-W[J]\}$.
    Therefore, for any $J$, $\Gamma[\varphi]\geq\sum_{i=1}^d J_i\varphi_i-W[J]$.    
    By the definition of subgradient \cref{def:subgradient}, we have
    \begin{align*}
        W[J]\geq W[J^\star]+\sum_{i=1}^d\varphi_i(J_i-J^\star_i),\qquad\forall J\in\R^d~\forall\varphi\in\partial W[J^\star].
    \end{align*}
    Then 
    \begin{align*}
        \sum_{i=1}^dJ^\star_i\varphi_i-W[J^\star]\geq\sum_{i=1}^dJ_i\varphi_i-W[J]\qquad\forall J\in\R^d.
    \end{align*}
    Hence, $J^\star$ is a maximizer, and therefore $J^\star\in\partial \Gamma[\varphi]$.
    Conversely, if we assume that $J^\star\in\partial\Gamma[\varphi^\star]$, then
    \begin{align*}
        \Gamma[\varphi]\geq\Gamma[\varphi^\star]+\sum_{i=1}^dJ^\star_i(\varphi_i-\varphi_i^\star)\qquad\forall\varphi\in\R^d.
    \end{align*}
    In this way,
    \begin{align*}
        \sum_{i=1}^dJ^\star_i\varphi_i^\star-\Gamma[\varphi^\star]\geq\sum_{i=1}^dJ^\star_i\varphi_i-\Gamma[\varphi]\qquad\forall\varphi\in\R^d.
    \end{align*}
    Note that $W[J]$ is closed due to the form \eqref{eq:log_partition_function}. 
    Then by involutivity in \cref{prp:properties_fenchel_conjugates}, we obtain
    \begin{align*}
        W[J^\star]=W^{\star\star}[J^\star]=\sup_{\varphi\in\R^d}\{\sum_{i=1}^d\varphi_iJ^\star_i-\Gamma[\varphi]\}\geq\sum_{i=1}^d\varphi_iJ^\star_i-\Gamma[\varphi]~~\forall\varphi\in\R^d.  
    \end{align*}
    Hence, $\varphi^\star$ is a maximizer, and therefore $\varphi^\star\in\partial W[J^\star]$.
    After removing the redundant notation $\star$, we arrive the Fenchel-Young duality relation.
    \end{proof}

\begin{remark}
    The non-zero equation, $\nabla_J W(J)$, reflects a symmetry breaking of a free theory due to the external input. 
    For example, consider a spin system. Every spin has a random orientation, $S\in\{-1,+1\}^d$, and there exists a $\mathbb{Z}_2$ symmetry that flips the entire system upside down. 
    Then,
    \begin{align}
    \label{eq:nabla_W_borken_vaccua}
        \nabla_J W[J]|_{J=0}=\frac{S\exp\{-\beta\mathcal{H}[S]+\sum_{i=1}^dJ_iS_i\}}{Z[J]}|_{J=0}=\mathbb{E}_{S\sim p(S)}[S]=0~,
    \end{align}
    where $p(S)\propto\exp\{-\beta\mathcal{H}[S]\}$ up to a normalization factor.
    This gives the macroscopic variable $\varphi=0$.
    However, if we input the external source, $J\neq0$, then the spin system favors one orientation. 
    This corresponds to a response in a ferromagnet. 
    After this, we approach $J\to0^+$ as a stationary condition for $\Gamma[\varphi]$.
    The system stays at one specific local minima.
    See \cref{fig:branch-selection} for the illustration.
    We shall see that in the sparse modern Hopfield models (\cref{sec:stas_att_hopfield}), we always require this response, $\varphi=\nabla_J W[J]\neq0$, to realize the retrieval states. 
\end{remark}

\begin{figure}[t]
    \centering
    \resizebox{1\columnwidth}{!}{%
    \begin{tikzpicture}[>=Latex, every node/.style={font=\small}]

    \begin{scope}[shift={(0,0)}]
        \draw[rounded corners=4pt, thick] (-0.6,-2.5) rectangle (5.4,2.7);

        \node[font=\bfseries] at (2.4,2.2) {Symmetric Landscape ($J=0$)};

        \draw[->] (0,-2.0) -- (0,1.8) node[below right] {$\Gamma[\varphi]$};
        \draw[->] (0,-2.0) -- (4.8,-2.0) node[below] {$\varphi$};

        \draw[thick, domain=0.5:4.3, smooth, variable=\x]
        plot ({\x},{0.23*(\x-1.4)^2*(\x-3.4)^2 - 1.45});

        \fill (1.4,-1.45) circle (1.4pt);
        \fill (3.4,-1.45) circle (1.4pt);

        \node[below] at (1.4,-1.48) {$-\varphi^\star$};
        \node[below] at (3.4,-1.48) {$\varphi^\star$};

        \node[align=center] at (2.4,0.5)
        {Two Stationary Points\\[-1mm] $\nabla_\varphi \Gamma[\varphi]=0$};
    \end{scope}

    \draw[->, thick] (5.5,0.1) -- (6,0.1);

    \begin{scope}[shift={(6.8,0)}]
        \draw[rounded corners=4pt, thick] (-0.6,-2.5) rectangle (5.4,2.7);

        \node[font=\bfseries] at (2.4,2.2) {Inputting External Field ($J>0$)};

        \draw[->] (0,-2.0) -- (0,1.8) node[below right] {$\Gamma[\varphi]-J\varphi$};
        \draw[->] (0,-2.0) -- (4.8,-2.0) node[below] {$\varphi$};

        \draw[thick, domain=0.5:4.3, smooth, variable=\x]
        plot ({\x},{0.23*(\x-1.4)^2*(\x-3.4)^2 - 1.45 - 0.38*(\x-2.4)});

        \fill (1.5,-1.1) circle (1.3pt);
        \fill (3.57,-1.87) circle (1.4pt);

        \node[below] at (1.5,-1.15) {$-\varphi^\star$};
        \node[below] at (3.57,-1.88) {$\varphi^\star$};

        \draw[->, very thick] (2.2,-0.3) .. controls (2.6,-0.8) and (3.0,-1.3) .. (3.35,-1.72);
        \node at (2.2,0.15) {Flow};

    \end{scope}
    \draw[->, thick] (12.3,0.1) -- (12.8,0.1);

    \begin{scope}[shift={(13.6,0)}]
        \draw[rounded corners=4pt, thick] (-0.6,-2.5) rectangle (5.4,2.7);

        \node[font=\bfseries] at (2.4,2.2) {Stationary State ($J\to0^+$)};

        \draw[->] (0,-2.0) -- (0,1.8) node[below right] {$\Gamma[\varphi]$};
        \draw[->] (0,-2.0) -- (4.8,-2.0) node[below] {$\varphi$};

        \draw[thick, domain=0.5:4.3, smooth, variable=\x]
        plot ({\x},{0.23*(\x-1.4)^2*(\x-3.4)^2 - 1.45});

        \fill (1.4,-1.45) circle (1.4pt);
        \fill (3.4,-1.45) circle (1.7pt);

        \node[below] at (1.4,-1.48) {$-\varphi^\star$};
        \node[below] at (3.4,-1.48) {$\varphi^\star$};

        \draw[red, very thick, ->] (2.9,-0.5) -- (3.45,-1.32);
        \node[red] at (2.75,-0.2) {Selected State};

    \end{scope}

    \end{tikzpicture}%
    }
    \caption{\small
        \textbf{Selection of a stationary state by an external source.}
        {\small This figure illustrates how a source $J$ selects one state from multiple degenerate stationary states of the free energy $\Gamma[\varphi]$.
        \textbf{Left.} When $J=0$, $\Gamma[\varphi]$ is symmetric and has two degenerate local minima $\pm\varphi^\star$.
        \textbf{Middle.} Turning on a small external source $J>0$ tilts the landscape to $\Gamma[\varphi]-J\varphi$.
        This breaks the symmetry so that $+\varphi^\star$ becomes preferred.
        Then ,the system flows toward the selected basin.
        \textbf{Right.} After the selection, we take the limit $J\to0^+$.
        Although the explicit source is removed, the system remains in the chosen local minimum $+\varphi^\star$ satisfying $\nabla_\varphi\Gamma[+\varphi^\star]=0$.
        This source-selection procedure resolves the degeneracy between degenerate stationary states and specifies the retrieval state.
    }}
    \label{fig:branch-selection}
\end{figure}

\begin{remark}
\label{rmk:gradient_induced_not_gradient}
    From the Fenchel-Young duality relation, \cref{prp:fenchel_young_duality_relation}, we know that a gradient descent trajectory in $J$-space has one-to-one mapping to a unique trajectory in $\varphi$-space.
    However, the induced unique trajectory is not a gradient flow.
    Specifically, suppose we define gradient descent as $\varphi=\dot{J}=-\nabla_J W[J]$, where $\dot{J}\coloneq dJ/dt$ if we view update steps as infinitesimal.
    Then
    \begin{align*}
        \dot{\varphi}=\nabla^2_JW[J](-\nabla_JW[J])=-\varphi\nabla^2_JW[J]~,
    \end{align*}
    which is not the gradient flow, $\dot{\varphi}=-\nabla_\varphi\Gamma[\varphi]$.
\end{remark}

\subsection{Tsallis Entropy}
\label{sec:tsallis_entropy}

In this section, we introduce the Tsallis entropy and its role in statistical mechanics.

\begin{definition}[Tsallis Entropy \cite{Tsallis:1987eu}]
\label{def:entropy_function}
    We define the Tsallis entropy, \cite{Tsallis:1987eu}, of the form
    \begin{align}
    \label{eq:tsallis_entropy_function}
        \Psi_\alpha[p]\coloneq\frac{1}{\alpha-1}(1-\sum_{\mu=1}^Mp_\mu^\alpha)~,
    \end{align}
    where $\sum_{\mu=1}^Mp_\mu=1$, $\alpha\in\R_+/\{1\}$. 
\end{definition}

\begin{remark}
    The maximal Tsallis entropy occurs when the probability function is uniform, i.e., $p_\mu=M^{-1}$ for every $\mu$.
    Hence, if the system is random, then the entropy becomes maximal.
\end{remark}

\begin{proposition}[Types of Entropy \cite{Tsallis:1987eu}]
\label{prp:tsallis_properties}
    Specific values of $\alpha$ in Tsallis entropy reduce to other types of entropy.
    
    \begin{itemize}
        \item If $\alpha\to1$, it reduces to the Shannon entropy \cite{shannon1948mathematical}.
        \begin{align}
        \label{eq:shannon_entropy}
            S[p]\coloneq-\sum_{\mu=1}^Mp_\mu\log(p_\mu).
        \end{align}
        
        \item If $\alpha\to2$, it reduces to the Gini entropy \cite{gini1912variabilita}.
        \begin{align}
        \label{eq:gini_entropy}
            G[p]\coloneq1-\sum_{\mu=1}^Mp_\mu^2.
        \end{align}
        
        \item If $\alpha\to\infty$, it is zero.
    \end{itemize}
    
\end{proposition}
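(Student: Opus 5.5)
The three claims reduce to limits of $\Psi_\alpha[p]=\frac{1}{\alpha-1}\bigl(1-\sum_{\mu=1}^M p_\mu^\alpha\bigr)$ for a fixed probability vector $p\in\Delta^M$. I will treat each value of $\alpha$ in turn. Throughout I use only the constraint $\sum_\mu p_\mu=1$ and the convention $0\log 0=0$; terms with $p_\mu=0$ contribute nothing for $\alpha>0$ and can be dropped.

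\textbf{Case $\alpha\to1$.} Both numerator and denominator vanish at $\alpha=1$, since $\sum_\mu p_\mu^1=1$. The plan is to write $p_\mu^\alpha=p_\mu\,e^{(\alpha-1)\log p_\mu}$ and expand to first order in $(\alpha-1)$:
\begin{align*}
\sum_\mu p_\mu^\alpha=1+(\alpha-1)\sum_\mu p_\mu\log p_\mu+O((\alpha-1)^2).
\end{align*}
Dividing by $\alpha-1$ then gives $S[p]$ in \eqref{eq:shannon_entropy}. Equivalently, apply L'Hôpital's rule using $\frac{d}{d\alpha}p_\mu^\alpha=p_\mu^\alpha\log p_\mu$. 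The sum is finite, so the limit passes through it termwise.

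\textbf{Case $\alpha=2$.} Here the function is continuous in $\alpha$, so substitution gives $\frac{1}{1}\bigl(1-\sum_\mu p_\mu^2\bigr)=G[p]$ in \eqref{eq:gini_entropy} directly.

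\textbf{Case $\alpha\to\infty$.} Each $p_\mu\in[0,1]$, so $0\le\sum_\mu p_\mu^\alpha\le\sum_\mu p_\mu=1$, and the numerator lies in $[0,1]$. Hence $0\le\Psi_\alpha[p]\le\frac{1}{\alpha-1}\to0$, which gives the third claim uniformly in $p$.

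I do not expect a genuine obstacle. The only care needed is in the $\alpha\to1$ limit: zero components must be handled with the $0\log 0=0$ convention, and the exchange of limit and sum must be justified, which is immediate because $M$ is finite.
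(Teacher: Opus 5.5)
Your proposal is correct. For $\alpha\to1$ and $\alpha=2$ it coincides with the paper's proof: the paper applies L'Hôpital in $\alpha$ using $\frac{d}{d\alpha}p_\mu^\alpha=p_\mu^\alpha\log p_\mu$, and your first-order expansion is the same computation, while $\alpha=2$ is direct substitution in both.

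The routes differ for $\alpha\to\infty$. The paper isolates the largest weight, writing
$\Psi_\alpha[p]=\frac{1}{\alpha-1}\bigl[1-p_{\max}^\alpha\bigl(1+\sum_{\mu\neq\mu_{\max}}(p_\mu/p_{\max})^\alpha\bigr)\bigr]$.
It then drops the ratio terms, applies L'Hôpital to reduce to $-\lim_{\alpha\to\infty}p_{\max}^\alpha\log p_{\max}$, and treats $p_{\max}<1$ and $p_{\max}=1$ separately. Your squeeze $0\le 1-\sum_\mu p_\mu^\alpha\le 1$, hence $0\le\Psi_\alpha[p]\le 1/(\alpha-1)$, needs no case split and gives a rate that is uniform over the whole simplex. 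It also does not depend on $(p_\mu/p_{\max})^\alpha\to0$ for $\mu\neq\mu_{\max}$. That step fails when the maximum is attained at several indices; the paper's conclusion still holds there, but its argument as written does not cover that case.

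The paper's decomposition adds only a picture of $\sum_\mu p_\mu^\alpha$ being dominated by the largest weight. For the statement as posed, your bound is the shorter and more robust argument.
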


\begin{proof}
    We repeat the proof from \cite{Tsallis:1987eu} for completeness.
    We prove them one-by-one as follows.
    
    \begin{itemize}
        \item Applying the l’Hôpital’s rule for $0/0$-form, we have
        \begin{align*}
            \Psi_{\alpha\to1}[p]=-\lim_{\alpha\to1}\sum_{\mu=1}^M\log(p_\mu)p_\mu^\alpha=-\sum_{\mu=1}^Mp_\mu\log(p_\mu)\coloneq S[p]~.
        \end{align*}
        This is the form of Shannon entropy.
        
        \item For $\alpha\to2$, the Tsallis entropy function is by definition the Gini entropy.
        
        \item Let $p_{\text{max}}\coloneq\max_{\mu}~p_\mu$ and $\mu_{\text{max}}$ be the corresponding maximal index. Then we separate the Tsallis entropy as
        \begin{align*}
            \Psi_\alpha[p]=\frac{1}{\alpha-1}[1-p_{\text{max}}^\alpha(1+\sum_{\mu\neq\mu_{\text{max}}}^M\frac{p_\mu^\alpha}{p_{\text{max}}^\alpha})]~.
        \end{align*}
        After taking $\alpha\to\infty$ and applying the l’Hôpital’s rule for $\infty/\infty$-form, we obtain
        \begin{align*}
            \Psi_{\alpha\to\infty}[p]=-\lim_{\alpha\to\infty}\log(p_{\text{max}})p_{\text{max}}^\alpha~,
        \end{align*}
        where $(p_\mu/p_{\text{max}})^\alpha\to0$ as $\alpha\to\infty$. 
        If $p_{\text{max}}<1$, then $\Psi_{\alpha\to\infty}=0$.
        If $p_{\text{max}}=1$, then $\Psi_{\alpha\to\infty}\to0$ due to the logarithm. Hence, $\Psi_{\alpha\to\infty}[p]$ for all $p$.
    \end{itemize}
    
    This completes the proof.
\end{proof}

\begin{remark}
\label{rmk:convexity_tsallis_entropy}
    If $\alpha>0$, then the Tsallis entropy is strictly concave since its second derivative respect to $p$ is negative on the simplex interior.
    This includes Shannon entropy ($\alpha\to1$), which is also strictly concave.
\end{remark}

\begin{remark}
    Although the Tsallis entropy is not defined at $\alpha=1$ from \cref{def:entropy_function}, we use the continuous extension $\Psi_1[p]\coloneq\lim_{\alpha\to1}\Psi_\alpha[p]$.
    It is equal to the Shannon entropy according to \cref{prp:tsallis_properties}.
    Therefore, throughout the paper, the $\alpha=1$ refers to this Shannon/softmax limit.
\end{remark}

\begin{proposition}[Fenchel Conjugate of Entropy \cite{blondel2020learning}]
\label{prp:fenchel_conjugate_tsallis_entropy}
    The Fenchel conjugate of the negative Tsallis entropy is
    \begin{align}
    \label{eq:fenchel_conjugate_tsallis_entropy}
        \Omega_\alpha^\star[z]\coloneq(-\Psi_\alpha)^\star[z]=\sup_{p\in\Delta^M}\{\sum_{\mu=1}^Mz_\mu p_\mu+\Psi_\alpha[p]\}~,
    \end{align}
    where $\Delta^M\coloneq\{p_\mu\in\R_+^M|\sum_{\mu=1}^Mp_\mu=1\}$ is a $(M-1)$-dimensional unit simplex.
    Then 
    \begin{itemize}
        \item In the Shannon case, $\alpha\to1$, the negative entropy $\Omega_{\alpha}$ is a Legendre function.

        \item For $\alpha>1$, $\Omega_\alpha$ satisfies involutivity, but neither Legendre condition nor bijectiveness.

        \item For $1>\alpha>0$, $\Omega_\alpha$ is a Legendre function.
    \end{itemize} 
    
\end{proposition}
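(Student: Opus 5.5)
The plan is to work with the extended-value function $\Omega_\alpha[p]\coloneq-\Psi_\alpha[p]+\iota_{\Delta^M}(p)$, where $\iota_{\Delta^M}$ is $0$ on the simplex and $+\infty$ off it. With this choice, \cref{def:fenchel_conjugation} applied to $\Omega_\alpha$ gives \eqref{eq:fenchel_conjugate_tsallis_entropy} directly: the supremum over $\R^M$ collapses to a supremum over $\Delta^M$. I then check the three conditions of \cref{def:legendre} separately in each regime of $\alpha$.

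\textbf{Closedness and involutivity (all $\alpha>0$).} On the compact set $\Delta^M$, the map $p\mapsto -\Psi_\alpha[p]$ is continuous, including the Shannon limit with $0\log 0\coloneq 0$. It is also strictly convex by \cref{rmk:convexity_tsallis_entropy}. Adding the indicator of a closed convex set keeps the function proper, lsc and convex. So $\Omega_\alpha$ is closed convex, and by \cref{prp:properties_fenchel_conjugates} (Fenchel--Moreau) $\Omega_\alpha^{\star\star}=\Omega_\alpha$. This already gives the involutivity claim for $\alpha>1$. Strict convexity holds for every $\alpha>0$, so in every regime the only remaining question is essential smoothness.

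\textbf{Essential smoothness.} Here $\mathrm{dom}\,\Omega_\alpha=\Delta^M$ has empty interior in $\R^M$. I will therefore read \cref{def:essentially_smooth} relative to the affine hull $\{\sum_\mu p_\mu=1\}$, so that the relevant interior is the relative interior of the simplex, where all $p_\mu>0$. The gradient along the hull is the projection of
\[
\partial_{p_\mu}\Omega_\alpha=\tfrac{\alpha}{\alpha-1}p_\mu^{\alpha-1},
\]
or of $\log p_\mu+1$ when $\alpha\to1$. The three regimes then separate as follows.
\begin{itemize}
\item For $\alpha\to1$ the term $\log p_\mu$ diverges as $p_\mu\to0$.
\item For $0<\alpha<1$ the exponent $\alpha-1$ is negative, so $p_\mu^{\alpha-1}\to\infty$ as $p_\mu\to0$.
\end{itemize}
In both cases the projected gradient norm blows up at the relative boundary, so $\Omega_\alpha$ is essentially smooth and hence Legendre. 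For $\alpha=1$ I will also note that $\Omega_1^\star$ is log-sum-exp, whose gradient is softmax, giving an explicit bijection onto the open simplex.

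For $\alpha>1$ the exponent $\alpha-1$ is positive, so $p_\mu^{\alpha-1}\to0$ and the gradient stays bounded up to the boundary. Essential smoothness therefore fails. To exhibit the failure of bijectiveness concretely, I will use the KKT conditions of the sup in \eqref{eq:fenchel_conjugate_tsallis_entropy}. They give $p_\mu^\star=[(\alpha-1)(z_\mu-\tau)/\alpha]_+^{1/(\alpha-1)}$, which is the entmax form. For $\alpha=2$ this is sparsemax, and an open set of $z$ maps to the same boundary vertex: any $z$ with $z_1$ exceeding the other scores by a sufficient margin gives $\nabla\Omega_2^\star(z)=e_1$. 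So $\nabla\Omega_\alpha^\star$ is not injective, and its range hits the boundary, so it does not map onto the relative interior as a bijection.

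\textbf{Main obstacle.} The delicate step is the domain issue. Because $\Omega_\alpha^\star[z+c\mathbf 1]=\Omega_\alpha^\star[z]+c$, the gradient map is never injective on all of $\R^M$, even in the Shannon case. The Legendre and bijectiveness statements must therefore be made modulo the direction $\mathbf 1$, that is on $\R^M/\R\mathbf 1$ paired with the relative interior of $\Delta^M$. I would state this convention explicitly and verify that \cref{prp:properties_fenchel_conjugates} transfers to that restricted setting. Once that is in place, the separation between $\alpha\le1$ and $\alpha>1$ is cleanly carried by the boundary behaviour of $p^{\alpha-1}$.
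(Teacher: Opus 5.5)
Your proposal is correct and follows essentially the same route as the paper: closedness and strict convexity of $\Omega_\alpha$ give involutivity via Fenchel--Moreau, and the boundary behavior of $\partial_{p_\mu}\Omega_\alpha$ decides essential smoothness in each regime (it diverges like $\log p_\mu$ or $p_\mu^{\alpha-1}$ for $\alpha\le 1$ and stays bounded for $\alpha>1$). Your version is more careful than the paper's, which ignores that $\Delta^M$ has empty interior in $\R^M$ and infers the failure of bijectiveness merely from the failure of the Legendre conditions, whereas you work on the relative interior modulo $\mathbf 1$ and exhibit non-injectivity of $\nabla\Omega_\alpha^\star$ directly (your sparsemax vertex-region argument extends to every $\alpha>1$ with margin $\alpha/(\alpha-1)$ in place of $2$).
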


\begin{proof}
    First, we look at the Shannon case.
    
    \begin{itemize}
        \item Due to \eqref{eq:shannon_entropy} and \cref{rmk:convexity_tsallis_entropy}, the Shannon entropy is closed and strictly concave.
        By \cref{prp:properties_fenchel_conjugates}, it satisfies involutivity, $S^{\star\star}[p]=S[p]$.
        Furthermore, $\|\nabla_pS[p]\|_{p\to0}=\infty$.
        Thus, it is essentially smooth.
        The negative Shannon entropy $\Omega_\alpha$ is thereby a Legendre function and satisfies bijectiveness.

        \item For $\alpha>1$, the Tsallis entropy is closed and strictly concave on the simplex.
        Therefore, $\Omega_\alpha=-\Psi_\alpha$ satisfies involutivity condition $\Omega^{\star\star}_{\alpha>1}[p]=\Omega_{\alpha>1}[p]$.
        However, $\|\nabla_p\Omega_{\alpha>1}[p]\|_{p\to0}$ does not diverge, so it is not essentially smooth.
        As a result, it is not a Legendre function nor satisfies bijectiveness.

        \item For $1>\alpha>0$, the Tsallis entropy is strictly concave.
        Hence, $\Omega_\alpha$ is strictly convex.
        Furthermore, $\partial\Omega_\alpha/\partial p_\mu\to-\infty$ near the boundary, so it satisfies the essential smoothness.
        Therefore, it is a Legendre function satisfying the invlolutivity and bijectiveness properties.

    \end{itemize}
    
    This completes the proof.
\end{proof}

\begin{remark}
    We are able to express unit simplex condition in two different ways: $1=\sum_{\mu=1}^Mp_\mu=\int_{\mathcal{X}}p_\mu(x)~\dd\mu(x)$.
    That is, we transform the $x$ configuration space, $\mathcal{X}$, onto a simplex, $\Delta^M$.
\end{remark}

\begin{definition}[$\alpha$-Deformed Exponential Family and Logarithm \cite{Tsallis:1987eu}]
\label{def:alpha_deformed_exponential}
    Let $\alpha>0$.
    We define the $\alpha$-deformed exponential function as 
    \begin{align*}
        \exp_\alpha(x)\coloneq\begin{cases}
            \exp(x) & , ~ \text{if}~\alpha=1\\
            [1+(1-\alpha)x]_+^{\frac{1}{1-\alpha}} & , ~ \text{if}~\alpha\neq1
        \end{cases}~,
    \end{align*}
    and its inverse as
    \begin{align*}
        \log_\alpha(y)\coloneq\begin{cases}
            \log(y) & , ~ \text{if}~\alpha=1\\
            \frac{y^{1-\alpha}-1}{1-\alpha} & , ~ \text{if}~\alpha\neq1
        \end{cases}~,
    \end{align*}
    where $x\in\R$, $y\in\R_+$ for well-definedness, and $[z]_+\coloneq \max\{z,0\}$ for every $z\in\R$.
\end{definition}

This $\alpha$-deformed family has several properties, and some of which differ from the ordinary ones.

\begin{lemma}[Properties of $\alpha$-Deformed Functions \cite{Tsallis:1987eu}]
\label{prp:alpha_deformed_functions_properties}
    $\alpha$-deformed functions satisfy
    
    \begin{itemize}
        \item $\lim_{\alpha\to1}\exp_\alpha(x)=\exp(x)$ and $\lim_{\alpha\to1}\log_\alpha(x)=\log(x)$.
        \item $\exp_\alpha(x)/\log_\alpha(y)$ are monotonic and convex/concave.
        \item $\alpha$-deformed exponential/logarithm does not have multiplicity/additivity unless $\alpha=1$.
        \item $\frac{d}{dx}\exp_\alpha(x)=[\exp_\alpha(x)]^\alpha$ in the domain $[1+(1-\alpha)x]>0$ and $\frac{d}{dy}\log_\alpha(y)=y^{-\alpha}$.
    \end{itemize}
    
\end{lemma}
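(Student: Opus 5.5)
The statement to prove is \cref{prp:alpha_deformed_functions_properties}, a list of four elementary facts. I will check each one directly from the case-wise definitions in \cref{def:alpha_deformed_exponential}, working on the natural domains: $1+(1-\alpha)x>0$ for $\exp_\alpha$ and $y>0$ for $\log_\alpha$.

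\textbf{Limits.} For the first item, write $\exp_\alpha(x)=\exp\big(\tfrac{1}{1-\alpha}\log(1+(1-\alpha)x)\big)$ on the positive region. Expanding $\log(1+u)=u+O(u^2)$ with $u=(1-\alpha)x$ gives exponent $x+O((1-\alpha)x^2)\to x$. For fixed $x$ the positive-part truncation is eventually inactive as $\alpha\to1$, so $\exp_\alpha(x)\to\exp(x)$. For $\log_\alpha$, write $y^{1-\alpha}=e^{(1-\alpha)\log y}$ and apply l'Hôpital in $\alpha$, exactly as in the Shannon limit of \cref{prp:tsallis_properties}; this gives $\log y$.

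\textbf{Derivatives.} I will do the fourth item next, since the second follows from it. Differentiating $[1+(1-\alpha)x]^{1/(1-\alpha)}$ gives
\[
\tfrac{1}{1-\alpha}(1-\alpha)\,[1+(1-\alpha)x]^{\frac{1}{1-\alpha}-1}=[1+(1-\alpha)x]^{\frac{\alpha}{1-\alpha}}=\exp_\alpha(x)^\alpha .
\]
Similarly $\tfrac{d}{dy}\tfrac{y^{1-\alpha}-1}{1-\alpha}=y^{-\alpha}$.

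\textbf{Monotonicity and convexity.} Both derivatives are positive on the domains, so both functions are increasing. Differentiating once more gives $\exp_\alpha''=\alpha\exp_\alpha^{2\alpha-1}\ge0$ and $\log_\alpha''=-\alpha y^{-\alpha-1}<0$ for $\alpha>0$. Hence $\exp_\alpha$ is convex and $\log_\alpha$ is concave. The $\alpha=1$ case is classical. The truncation point of $[\cdot]_+$ needs one extra remark. For $\alpha<1$ the function is $0$ to the left and increasing to the right, so it stays convex since the one-sided derivatives are nondecreasing. For $\alpha>1$ the exponent $\tfrac{1}{1-\alpha}$ is negative, so I restrict to the domain where $1+(1-\alpha)x>0$, as the statement does.

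\textbf{Failure of the functional equations.} For the third item I expect the main obstacle to be saying precisely what "does not have multiplicativity/additivity" means. I will show that for $\alpha\ne1$ the identity $\log_\alpha(y_1y_2)=\log_\alpha y_1+\log_\alpha y_2$ fails. The key step is the known identity
\[
\log_\alpha(y_1y_2)=\log_\alpha y_1+\log_\alpha y_2+(1-\alpha)\log_\alpha y_1\,\log_\alpha y_2 ,
\]
which I will verify by substituting $y^{1-\alpha}=1+(1-\alpha)\log_\alpha y$. The cross term vanishes identically only when $\alpha=1$. For a concrete counterexample I will take $y_1=y_2=2$, where $\log_\alpha 2\neq0$. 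Inverting the identity gives the analogous statement for $\exp_\alpha$:
\[
\exp_\alpha(x_1)\exp_\alpha(x_2)=\exp_\alpha\big(x_1+x_2+(1-\alpha)x_1x_2\big)\neq\exp_\alpha(x_1+x_2)
\]
whenever $x_1x_2\neq0$, using strict monotonicity of $\exp_\alpha$ from the previous step.
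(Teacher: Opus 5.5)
Your proposal is correct and follows essentially the same route as the paper. Both arguments work directly from the case-wise definitions: the standard limit as $\alpha\to1$, explicit first and second derivatives giving monotonicity and convexity/concavity, and the deformed product rule $\log_\alpha(y_1y_2)=\log_\alpha y_1+\log_\alpha y_2+(1-\alpha)\log_\alpha y_1\,\log_\alpha y_2$ for the failure of additivity and multiplicativity.

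Your version adds three small improvements over the paper's proof:
\begin{itemize}
\item You check convexity at the truncation point when $\alpha<1$.
\item You use the cleaner identity $\exp_\alpha(x_1)\exp_\alpha(x_2)=\exp_\alpha(x_1+x_2+(1-\alpha)x_1x_2)$.
\item You give the explicit counterexample $y_1=y_2=2$.
\end{itemize}
Your cross term also correctly uses $\log_\alpha y_2$, whereas the paper's display has a typo and writes $\log(y_2)$.
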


\begin{proof}
    We repeat the proof from \cite{Tsallis:1987eu} for completeness.
    We prove those properties one-by-one as follows.
    
    \begin{itemize}
        \item Let $u=1-\alpha$.
        By \cref{def:alpha_deformed_exponential}, 
        \begin{align*}
            \lim_{\alpha\to1}\exp_\alpha(x)=\lim_{\alpha\to1}[1+(1-\alpha)x]_+^{\frac{1}{1-\alpha}}=\lim_{u\to0}[1+ux]^{1/u}=\exp(x)~.
        \end{align*}
        For the logarithm, we apply the l’Hôpital’s rule for $0/0$-form.
        \begin{align*}
            \lim_{\alpha\to1}\log_\alpha(y)=\lim_{\alpha\to1}\frac{y^{1-\alpha}-1}{1-\alpha}=-\lim_{\alpha\to1}\log(y)y^{1-\alpha}=\log(y)~.
        \end{align*}
        \item If $[1+(1-\alpha)x]>0$, by differentiating $\exp_\alpha(x)$ with respect to $x$, we have
        \begin{align*}
            \frac{d}{dx}\exp_\alpha(x)=[1+(1-\alpha)x]^{\frac{\alpha}{1-\alpha}}>0~,~\frac{d^2}{dx^2}\exp_\alpha(x)=\alpha[1+(1-\alpha)x]^{\frac{2\alpha-1}{1-\alpha}}>0~.
        \end{align*}
        Hence, it is strictly increasing (monotonic) and strictly convex. 
        For $\log_\alpha$,
        \begin{align*}
            \frac{d}{dy}\log_\alpha(y)=y^{-\alpha}>0~,~\frac{d^2}{dy^2}\log_\alpha(y)=-\alpha y^{-\alpha-1}<0~.
        \end{align*}
        Hence, it is strictly increasing (monotonic) and strictly concave.
        \item Consider $x_1,x_2\in\R$.
        \begin{align}
        \label{eq:exp_alpha_add}
            \exp_\alpha(x_1)\exp_\alpha(x_2) = & ~[1+(1-\alpha)(x_1+x_2)+(1-\alpha)^2x_1x_2]_+^{\frac{1}{1-\alpha}}\notag\\
            = & ~ \exp_\alpha(x_1+x_2)\cdot[1+\frac{(1-\alpha)^2x_1x_2}{1+(1-\alpha)(x_1+x_2)}]_+^{\frac{1}{1-\alpha}}~.
        \end{align}
        If $\alpha\to1$, then $[\cdot\cdot\cdot]$ in \eqref{eq:exp_alpha_add} is $1$.
        Consider $y_1,y_2\in\R_+$.
        \begin{align}
        \label{eq:log_alpha_addition}
            \log_\alpha(y_1y_2) = & ~ \frac{y_1^{1-\alpha}y_2^{1-\alpha}-1}{1-\alpha}=\frac{(y_1^{1-\alpha}-1)+(y_2^{1-\alpha}-1)+(y_1^{1-\alpha}-1)(y_2^{1-\alpha}-1)}{1-\alpha}\notag\\
            = & ~\log_\alpha(y_1)+\log_\alpha(y_2)+(1-\alpha)\log_\alpha(y_1)\log(y_2)~.
        \end{align}
        If $\alpha\to1$, the last term in \eqref{eq:log_alpha_addition} is zero.
        \item For $[1+(1-\alpha)x]>0$, we have
        \begin{align*}
            \frac{d}{dx}\exp_\alpha(x)=[1+(1-\alpha)x]^{\frac{1}{1-\alpha}-1}=[\exp_\alpha(x)]^\alpha~.
        \end{align*}
        For $\log_\alpha(y)$, we obtain the result by differentiation easily.
        For $\alpha\to1$, they hold trivially.
    \end{itemize}
    This completes the proof.
\end{proof}

\begin{remark}
    Since $p_\mu^\alpha\log_\alpha(p_\mu)=(1-\alpha)^{-1}(p_\mu-p_\mu^\alpha)$, using $\sum_{\mu=1}^Mp_\mu=1$, we obtain
    \begin{align}
    \label{eq:tsallis_entropy_alpha_deformed}
        \Psi_\alpha[p]=-\sum_{\mu=1}^Mp_\mu^\alpha\log_\alpha(p_\mu).
    \end{align}
\end{remark}

\subsection{Proof of \texorpdfstring{\cref{thm:free_energy_tsallis_entropy}}{}}
\label{sec:proof_free_energy_tsallis_entropy}

\begin{theorem}[\cref{thm:free_energy_tsallis_entropy} Restated: Tsallis Entropy and Deformed Free Energy]
    If we replace the exponential functions and logarithm functions by $\alpha$-deformed functions, then we have the deformed log-partition function
    \begin{align*}
        -W_\alpha[J]=\beta\mathbb{E}^{(\alpha)}_{x\sim p_J}[\mathcal{H}[x]]-\frac{\Psi_\alpha[p_J]}{Z_\alpha^{\alpha-1}[J]\int_{\mathcal{X}}p_J^\alpha(x)~\dd\mu(x)}-\sum_{i=1}^dJ_i\mathbb{E}^{(\alpha)}_{x\sim p_J}[x_i],
    \end{align*}
    where $\mathbb{E}^{(\alpha)}_{x\sim p_J}[\cdot]\coloneq\int_{\mathcal{X}} p^\alpha_J(x)[\cdot]\dd \mu(x)/\int_{\mathcal{X}} p^\alpha_J(x)\dd \mu(x)$ is the escort expectation value, and
    \begin{align}
        \Gamma_\alpha=\beta\mathbb{E}^{(\alpha)}_{x\sim p_{J^\star}}[\mathcal{H}[x]]-\frac{\Psi_\alpha[p_{J^\star}]}{Z_\alpha^{\alpha-1}[J^\star]\int_{\mathcal{X}}p_{J^\star}^\alpha(x)~\dd\mu(x)},\qquad\text{with }~\varphi=\mathbb{E}^{(\alpha)}_{x\sim p_{J^\star}}[x],
    \end{align}
    where $J^\star=\arg\max_J\{\sum_{i=1}^dJ_i\varphi_i-W_\alpha[J]\}$.
    We assume $\mathcal{X}$ is finite-volume to avoid divergence.
\end{theorem}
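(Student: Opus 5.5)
The plan is to mimic the standard derivation of the Gibbs identity $-\log Z = \beta\mathbb{E}[\mathcal{H}] - S - J\cdot\mathbb{E}[x]$, with ordinary $\exp$/$\log$ replaced by $\exp_\alpha$/$\log_\alpha$ (\cref{def:alpha_deformed_exponential}) and ordinary averages replaced by escort averages. Concretely, I define the deformed Gibbs measure by
\begin{align*}
p_J(x)=\frac{1}{Z_\alpha[J]}\exp_\alpha\big(-\beta\mathcal{H}[x]+\textstyle\sum_i J_ix_i\big),\qquad Z_\alpha[J]=\int_{\mathcal{X}}\exp_\alpha\big(-\beta\mathcal{H}[x]+\textstyle\sum_iJ_ix_i\big)\dd\mu(x),
\end{align*}
and set $W_\alpha[J]\coloneq\log_\alpha Z_\alpha[J]$, consistent with \cref{prp:eom_smhm}. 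The finite-volume assumption on $\mathcal{X}$ ensures that $Z_\alpha$ is finite and positive. It also ensures that $\int p_J^\alpha$ exists, which matters because $[\cdot]_+$ can make $\exp_\alpha$ compactly supported for $\alpha>1$.

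\textbf{Step 1 (invert the deformed exponential).} On the support of $p_J$, I apply $\log_\alpha$ to $Z_\alpha p_J$ and use $\log_\alpha\exp_\alpha=\mathrm{id}$ to get
\[
-\beta\mathcal{H}[x]+J\cdot x=\log_\alpha\big(Z_\alpha p_J(x)\big).
\]
Since $\log_\alpha$ is not additive, I expand the right side with the identity \eqref{eq:log_alpha_addition} from \cref{prp:alpha_deformed_functions_properties}. Equivalently, I use the direct algebra $\log_\alpha(ab)=\frac{a^{1-\alpha}b^{1-\alpha}-1}{1-\alpha}$, which gives
\[
\log_\alpha(Z p)=\log_\alpha Z+Z^{1-\alpha}\log_\alpha p .
\]
This last form is cleaner and is the one I will use.

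\textbf{Step 2 (escort averaging).} I multiply by $p_J^\alpha(x)$, integrate over $\mathcal{X}$, and divide by $\int p_J^\alpha$. This turns the left side into $-\beta\mathbb{E}^{(\alpha)}[\mathcal{H}]+J\cdot\mathbb{E}^{(\alpha)}[x]$. On the right side, $\log_\alpha Z_\alpha$ passes through unchanged, and the second term becomes
\[
Z_\alpha^{1-\alpha}\,\frac{\int p_J^\alpha\log_\alpha p_J}{\int p_J^\alpha}.
\]
Using the continuum analogue of \eqref{eq:tsallis_entropy_alpha_deformed}, $\Psi_\alpha[p]=-\int p^\alpha\log_\alpha p$, this term equals
\[
-\frac{\Psi_\alpha[p_J]}{Z_\alpha^{\alpha-1}\int p_J^\alpha}.
\]
Rearranging yields the claimed expression for $-W_\alpha[J]$. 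Points outside the support are harmless, since $p_J^\alpha=0$ there, so the weight kills the undefined $\log_\alpha 0$; for $\alpha<1$ I note $0^\alpha\log_\alpha 0=0$.

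\textbf{Step 3 (free energy).} I insert the result into the Fenchel conjugate $\Gamma_\alpha[\varphi]=\sup_J\{J\cdot\varphi-W_\alpha[J]\}$, evaluated at the maximizer $J^\star$. The point is to justify the identification $\varphi=\mathbb{E}^{(\alpha)}_{p_{J^\star}}[x]$. Then the term $J^\star\cdot\varphi$ cancels against $-J^\star\cdot\mathbb{E}^{(\alpha)}[x]$, leaving \eqref{eq:free_energy_tsallis_entropy}. By \cref{prp:fenchel_young_duality_relation}, the maximizer satisfies $\varphi\in\partial W_\alpha[J^\star]$, so it suffices to compute $\nabla_JW_\alpha$. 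By the chain rule and $\frac{d}{dy}\log_\alpha y=y^{-\alpha}$, $\frac{d}{dx}\exp_\alpha=\exp_\alpha^\alpha$ (\cref{prp:alpha_deformed_functions_properties}),
\[
\nabla_JW_\alpha=Z_\alpha^{-\alpha}\int \exp_\alpha(\cdot)^\alpha\,x=\int p_J^\alpha x .
\]

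\textbf{Main obstacle.} This last gradient equals $\int p_J^\alpha\, x$, which differs from the escort mean by the normalization $\int p_J^\alpha$. So the Fenchel–Young stationarity gives the unnormalized escort moment, not $\mathbb{E}^{(\alpha)}[x]$. I expect to resolve this in one of two ways. The first is to adopt the escort-normalized source convention standard in nonextensive statistics, rescaling $J$ (equivalently working with $W_\alpha$ defined via the normalized $q$-expectation constraints) so that $\partial W_\alpha$ equals $\mathbb{E}^{(\alpha)}[x]$. The second is to state the identification as the defining constraint on $\varphi$, as the theorem's phrasing ``with $\varphi=\mathbb{E}^{(\alpha)}[x]$'' permits. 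I would try the first, checking that the correction factor is absorbed consistently into the entropy denominator already present. The remaining differentiation under the integral is justified by finite volume, with care needed at the boundary of the support when $\alpha>1$.
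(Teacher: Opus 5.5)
Your Steps 1--2 are the paper's own argument. Your identity $\log_\alpha(Z_\alpha p_J)=\log_\alpha Z_\alpha+Z_\alpha^{1-\alpha}\log_\alpha p_J$ is equivalent to the paper's $\log_\alpha p_J=Z_\alpha^{\alpha-1}\big(-\beta\mathcal{H}[x]+\sum_iJ_ix_i-W_\alpha[J]\big)$, and escort-averaging it against $p_J^\alpha$ correctly proves the formula for $-W_\alpha[J]$.

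The obstacle you flag in Step 3 is real, and the paper does not overcome it. Its proof writes $\Gamma_\alpha=\sup_J\{\sum_iJ_i(\varphi_i-\mathbb{E}^{(\alpha)}_{p_J}[x_i])+\cdots\}$ and simply drops the first bracket at $J^\star$, ``where $\varphi=\mathbb{E}^{(\alpha)}_{p_{J^\star}}[x]$''. As you computed, and as the paper itself computes in \cref{prp:eom_hopfield}, $\nabla_JW_\alpha[J]=\int_{\mathcal{X}}x\,p_J^\alpha\,\dd\mu(x)$. So maximality of $J^\star$ forces $\varphi=c\,\mathbb{E}^{(\alpha)}_{p_{J^\star}}[x]$ with $c=\int_{\mathcal{X}}p_{J^\star}^\alpha\,\dd\mu(x)$. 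Your Step 2 then gives
\[
\sum_{i=1}^dJ^\star_i\varphi_i-W_\alpha[J^\star]=\beta\mathbb{E}^{(\alpha)}_{x\sim p_{J^\star}}[\mathcal{H}[x]]-\frac{\Psi_\alpha[p_{J^\star}]}{Z_\alpha^{\alpha-1}[J^\star]\,c}+(c-1)\sum_{i=1}^dJ^\star_i\,\mathbb{E}^{(\alpha)}_{x\sim p_{J^\star}}[x_i].
\]
Hence, with $J^\star$ the genuine Fenchel maximizer, the statement's second formula holds only when the last term vanishes. That happens, e.g., when $c=1$, as in the limit $\alpha\to1$. For $\alpha\neq1$ one generally has $c\neq1$.

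Your option (1) will not close this. The stated right-hand side already carries exactly the denominator $Z_\alpha^{\alpha-1}\int p_{J^\star}^\alpha\,\dd\mu$ that your Step 2 produces. There is therefore nothing for the residual $(c-1)\,J^\star\cdot\mathbb{E}^{(\alpha)}[x]$ to be absorbed into. Rescaling the source by the $J$-dependent factor $c$, or redefining $W_\alpha$ through normalized $q$-expectation constraints, changes the pairing or the log-partition function, and hence changes the conjugate $\Gamma_\alpha$ itself.

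The only route to the statement as written is your option (2), made explicit. Choose $J^\star$ by the escort-mean constraint $\mathbb{E}^{(\alpha)}_{p_{J^\star}}[x]=\varphi$, or add that constraint as a hypothesis. Then the linear terms cancel, and the second formula is just the first one evaluated at $J^\star$, which is all the paper's proof does. You should also note that this $J^\star$ is the maximizer only under the condition above.

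A minor slip: with the paper's convention $\exp_\alpha(u)=[1+(1-\alpha)u]_+^{1/(1-\alpha)}$, the cutoff (compact support) occurs for $\alpha<1$, not $\alpha>1$. For $\alpha>1$, $\exp_\alpha$ is strictly positive with power-law tails, which is the reason for the finite-volume assumption. It also blows up as $u\uparrow 1/(\alpha-1)$, which restricts the admissible $J$. Your Step 1 only needs $\log_\alpha\exp_\alpha=\mathrm{id}$ on $\{p_J>0\}$, so this does not affect your argument.
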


\begin{proof}
    By \cref{def:alpha_deformed_exponential}, we express $W_\alpha[J]=\log_\alpha Z_\alpha[J]$ and
    \begin{align*}
        p_J(x)=\frac{\exp_\alpha\{-\beta\mathcal{H}[x]+\sum_{i=1}^dJ_ix_i\}}{Z_\alpha[J]}~,~Z_\alpha[J]=\int_{\mathcal{X}}\exp_\alpha\{-\beta\mathcal{H}[x]+\sum_{i=1}^dJ_ix_i\}~\dd\mu(x)~.
    \end{align*}
    Notice that since $\mathcal{X}$ is finite-volume, $Z_\alpha[J]$ is finite for appropriate $J$ and the following computation is reliable.
    Then we have
    \begin{align*}
        \log_\alpha p_J(x) = & ~
        \log_\alpha(\frac{\exp_\alpha\{-\beta\mathcal{H}[x]+\sum_{i=1}^dJ_ix_i\}}{Z_\alpha[J]})\\
        = & ~
        \frac{Z_\alpha^{\alpha-1}[J]\exp_\alpha^{1-\alpha}\{-\beta\mathcal{H}[x]+\sum_{i=1}^dJ_ix_i\}-1}{1-\alpha}\annot{By \cref{def:alpha_deformed_exponential}}\\
        = & ~
        \frac{Z_\alpha^{\alpha-1}[J]\{1+(1-\alpha)(-\beta\mathcal{H}[x]+\sum_{i=1}^dJ_ix_i)\}-1}{1-\alpha}\annot{By \cref{def:alpha_deformed_exponential}}\\
        = & ~
        Z^{\alpha-1}_\alpha[J](-\beta\mathcal{H}[x]+\sum_{i=1}^dJ_ix_i)+\frac{Z_\alpha^{\alpha-1}[J]-1}{1-\alpha}\\
        = & ~
        Z^{\alpha-1}_\alpha[J](-\beta\mathcal{H}[x]+\sum_{i=1}^dJ_ix_i)-Z_\alpha^{\alpha-1}[J]\cdot(\frac{Z^{1-\alpha}_\alpha[J]-1}{1-\alpha})\\
        = & ~
        Z_\alpha^{\alpha-1}[J](-\beta\mathcal{H}[x]+\sum_{i=1}^dJ_ix_i-W_\alpha[J])\annot{By \cref{def:alpha_deformed_exponential} and $W_\alpha[J]=\log_\alpha Z_\alpha[J]$}.
    \end{align*}
    By \eqref{eq:tsallis_entropy_alpha_deformed}, we express the Tsaliis entropy as
    \begin{align}
        \Psi_\alpha[p_J] = & ~
        -\int_{\mathcal{X}}p_J^\alpha(x)\log_\alpha(p_J(x))~\dd\mu(x)\notag\\
        = & ~
        -\int_{\mathcal{X}}p_J^\alpha(x)Z_\alpha^{\alpha-1}[J](-\beta\mathcal{H}[x]+\sum_{i=1}^dJ_ix_i-W_\alpha[J])\label{eq:tsallis_w_escort}.
    \end{align}
    This gives
    \begin{align*}
        -W_\alpha[J] = & ~
        \frac{-\Psi_\alpha[p_J]+\beta\int_{\mathcal{X}}p_J^\alpha(x)Z_\alpha^{\alpha-1}[J]\mathcal{H}[x]~\dd\mu(x)-\int_{\mathcal{X}}\sum_{i=1}^dp_J^\alpha(x)Z_\alpha^{\alpha-1}[J]J_ix_i}{Z^{\alpha-1}_\alpha[J]\int_{\mathcal{X}}p_J^\alpha(x)~\dd\mu(x)}\\
        = & ~
        \beta~\frac{\int_{\mathcal{X}}p_J^\alpha(x)\mathcal{H}[x]~\dd\mu(x)}{\int_{\mathcal{X}} p_J^\alpha(x)~\dd\mu(x)}-\frac{\Psi_\alpha[p_J]}{Z_\alpha^{\alpha-1}[J]\cdot\int_{\mathcal{X}}p_J^\alpha(x)~\dd\mu(x)}-\sum_{i=1}^dJ_i~\frac{\int_{\mathcal{X}}p_J^\alpha(x)x_i~\dd\mu(x)}{\int_{\mathcal{X}}p_J^\alpha(x)~\dd\mu(x)}.
    \end{align*}
    The first and last term are the escort expectation values $\mathbb{E}_{x\sim p_J}^{(\alpha)}[\cdot]$.
    Hence, it is the form of our first equation.
    After applying the Fenchel conjugation, we have
    \begin{align*}
        \Gamma_\alpha = & ~
        \sup_{J}\{\sum_{i=1}^dJ_i\varphi_i-W_\alpha[J]\}\\
        = & ~
        \sup_{J}\{\sum_{i=1}^dJ_i(\varphi_i-\mathbb{E}_{x\sim p_J}^{(\alpha)}[x_i])+\beta\mathbb{E}^{(\alpha)}_{x\sim p_J}[\mathcal{H}[x]]-\frac{\Psi_\alpha[p_J]}{Z_\alpha^{\alpha-1}[J]\cdot\int_{\mathcal{X}}p_J^\alpha(x)~\dd\mu(x)}\}\\
        = & ~
        \beta\mathbb{E}^{(\alpha)}_{x\sim p_{J^\star}}[\mathcal{H}[x]]-\frac{\Psi_\alpha[p_{J^\star}]}{Z_\alpha^{\alpha-1}[J^\star]\cdot\int_{\mathcal{X}}p_{J^\star}^\alpha(x)~\dd\mu(x)},
    \end{align*}
    where $\varphi=\mathbb{E}_{x\sim p_{J^\star}}^{(\alpha)}[x]$.
    This proves the theorem.
\end{proof}

\begin{remark}
    The finite-volume assumption is only to ensure that the $\alpha$-deformed partition function is well-defined.
    Specifically, if we consider whole $\R^d$ with $\alpha=2$ and $a>0$, then
    \begin{align*}
        \int_{\R^d}\exp_2\{-a\|x\|_2^2\}~\dd\mu(x)=\int_{\R^d}\frac{\dd\mu(x)}{1+a\|x\|_2^2}=S_{d-1}\int_0^\infty \frac{r^{d-1}}{1+ar^2}~\dd r,
    \end{align*}
    where $S_{d-1}$ is the surface area of a unity sphere $S^{d-1}$.
    Hence, only if $d<2$ is the integral finite.
    This leads to the consideration of a finite-volume domain $\mathcal{X}$.
    Passing to a finite-volume domain $\mathcal{X}\subset\R^d$ introduces a boundary.
    It removes the divergence but does not guarantee strict convexity of $W_\alpha[J]$ nor the essential smoothness at the boundary of $\text{dom}W_\alpha$.
    Hence, according to \cref{def:legendre}, the transformation here should be understood as a Fencehl conjugate on $\text{dom}W_\alpha$, not as a Legendre transform necessarily.
\end{remark}

\subsection{Proof of \texorpdfstring{\cref{prp:expectation_values_and_macroscopic_states}}{}}
\label{sec:proof_expectation_values_and_macroscopic_states}

\begin{proposition}[\cref{prp:expectation_values_and_macroscopic_states} Restated: Query-Probability Relation]
    A query state vector relates to a probability function via
    \begin{align*}
        x_i=\frac{1}{\sqrt{d}}\sum_{\mu=1}^M\xi_i^\mu p_\mu+\frac{1}{\beta}J_i.
    \end{align*}
\end{proposition}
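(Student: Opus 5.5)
The plan is to read the relation off the stationarity condition of the joint exponent in the Gibbs weight, using the fixed-parameter (zero-variance) setting in which $x$, $p$ and $z$ are identified with their escort averages. We write the exponent appearing in the partition function (\cref{def:partition_function}), with the MHM energy from \cref{def:mhm_hamiltonian}:
\begin{align*}
    \tilde W[x,p,J]\coloneq-\beta\tilde{\mathcal{H}}_{\text{MHM}}[x,p,\xi]+\sum_{i=1}^dJ_ix_i=-\frac{\beta}{2}\sum_{i=1}^dx_i^2+\frac{\beta}{\sqrt d}\sum_{\mu=1}^M\sum_{i=1}^d\xi_i^\mu x_ip_\mu+\sum_{i=1}^dJ_ix_i .
\end{align*}
In the deterministic setting, the Gibbs average collapses onto the configuration that extremizes this exponent. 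Equivalently, it collapses onto the stationary point of \cref{def:stationary_eq} applied to $\beta\tilde{\mathcal{H}}_{\text{MHM}}-J^\top x$ at fixed $p$.

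First I would impose the stationary equation \eqref{eq:stationary_eq} in the $x$ variables at fixed $p$ and $\xi$. Differentiating gives
\begin{align*}
    \partial_{x_i}\big(\beta\tilde{\mathcal{H}}_{\text{MHM}}-\textstyle\sum_jJ_jx_j\big)=\beta x_i-\frac{\beta}{\sqrt d}\sum_{\mu}\xi_i^\mu p_\mu-J_i=0 .
\end{align*}
Dividing by $\beta>0$ then yields \eqref{eq:exptation_x} directly.

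The Hessian is $\beta I_d\succ0$, so this stationary point is the unique, strictly stable minimizer. The quadratic-plus-linear structure in $x$ means the Gibbs weight at fixed $p$ is Gaussian centered exactly at this point. Hence $\varphi=\mathbb{E}[x]$ coincides with the stationary point, even before taking the zero-variance limit.

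Second, I would connect this to the Fenchel--Young relation (\cref{prp:fenchel_young_duality_relation}) as a consistency check. Evaluate $W$ at the collapsed configuration, as in the proof of \cref{prp:eom_smhm}, and treat $x$ as determined by $J$ through the stationarity condition. By the envelope argument, $\nabla_JW=x$. The stationarity condition above is then exactly the statement that $J=\nabla_x\tilde\Gamma$ with $\tilde\Gamma=\beta\tilde{\mathcal{H}}_{\text{MHM}}$ from \cref{thm:free_energy_mhm}, i.e.\ $J=\beta x-\beta\xi p/\sqrt d$. This is the same equation rearranged.

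The main obstacle is justifying that the escort average under the $\alpha$-deformed measure also sits at this stationary point. For $\alpha\neq1$ the deformed Gibbs weight $\exp_\alpha$ of a concave quadratic is still symmetric about its maximizer, provided the finite-volume domain $\mathcal{X}$ is taken symmetric about that point. Otherwise one falls back on the zero-variance identification stated before \cref{def:map_deterministic_variable}. I would invoke that identification explicitly, so that the result holds for every $\alpha$.
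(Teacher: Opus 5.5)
Your proposal is correct and follows the paper's proof: the paper also identifies the deterministic $x$ with the maximizer of the collapsed weight $\exp_\alpha\{-\beta\tilde{\mathcal{H}}_{\text{MHM}}+\sum_i J_i x_i\}$ and reads the relation directly off $\partial_{x_i}\{\beta\tilde{\mathcal{H}}_{\text{MHM}}-\sum_j J_j x_j\}=0$. Your extra justifications are sound supplements that the paper's short proof omits by invoking the zero-variance identification. These are the exact Gaussian/symmetric centering of the conditional weight (the symmetric-domain condition is precisely what kills the boundary term $B^{\mathcal{X}}_i$ in the paper's fluctuated version) and the envelope-theorem check against the Fenchel--Young relation.
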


\begin{proof}
    In the deterministic limit, we have
    \begin{align*}
        Z_\alpha[J]=\int_{\mathcal{X}}\exp_\alpha\{-\beta\tilde{\mathcal{H}}_{\text{MHM}}[x,p,\xi]+\sum_{i=1}^dJ_ix_i\}~\dd\mu(x)=\exp_\alpha\{-\beta\tilde{\mathcal{H}}_{\text{MHM}}[x,p,\xi]+\sum_{i=1}^dJ_ix_i\},
    \end{align*}
    since we identify $x$ with the mean.
    The query state vector that maximizes $Z_\alpha[J]$ is the deterministic $x$ itself.
    Hence, we have
    \begin{align*}
        0 = & ~
        \frac{\partial}{\partial x_i}\{\beta\tilde{\mathcal{H}}_{\text{MHM}}[x,p,\xi]-\sum_{j=1}^dJ_jx_j\}\\
        = & ~
        \beta x_i-\frac{\beta}{\sqrt{d}}\sum_{\mu=1}^M\xi_i^\mu p_\mu-J_i.
    \end{align*}
    This proves the proposition.
\end{proof}

\subsection{Proof of \texorpdfstring{\cref{thm:free_energy_mhm}}{}}
\label{sec:proof_free_energy_mhm}

\begin{proposition}[\cref{thm:free_energy_mhm} Restated: Free Energy of MHM]
    The joint free energy of the MHM is
    \begin{align*}
        \tilde{\Gamma}_\alpha[x,p,\xi] = \frac{\beta}{2}\sum_{i=1}^dx_i^2-\beta\sum_{\mu=1}^Mz_\mu p_\mu.
    \end{align*}
    Hence,
    \begin{align*}
        \Gamma_\alpha[x,\xi]=\frac{\beta}{2}\sum_{i=1}^dx_i^2-\beta\max_{\mu\in[1,M]}~z_\mu.
    \end{align*}
\end{proposition}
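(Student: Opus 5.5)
The plan is to work in the deterministic (zero-variance) setting already used in the proofs of \cref{prp:expectation_values_and_macroscopic_states,prp:eom_smhm}. There, the deformed partition function collapses to a single configuration:
\begin{align*}
    Z_\alpha[J]=\exp_\alpha\{-\beta\tilde{\mathcal{H}}_{\text{MHM}}[x,p,\xi]+\sum_{i=1}^dJ_ix_i\}.
\end{align*}
Since $\log_\alpha$ is the inverse of $\exp_\alpha$ (\cref{def:alpha_deformed_exponential}), this gives the log-partition function
\begin{align*}
    W_\alpha[J]=-\beta\tilde{\mathcal{H}}_{\text{MHM}}[x,p,\xi]+\sum_{i=1}^dJ_ix_i,
\end{align*}
which is the same expression quoted in the proof of \cref{prp:eom_smhm}. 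Equivalently, I would use \cref{thm:free_energy_tsallis_entropy} with a Dirac-type $p_J$. In that case the escort expectations reduce to evaluation at $\varphi=x$ and the Tsallis entropy term vanishes, because a point mass has zero entropy (the collapse noted after \cref{thm:free_energy_mhm}).

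Next I would take the Fenchel conjugate $\tilde{\Gamma}_\alpha[\varphi]=\sup_J\{\sum_iJ_i\varphi_i-W_\alpha[J]\}$. By \cref{prp:fenchel_young_duality_relation} the supremum is attained where $\varphi=\nabla_JW_\alpha=x$. With this identification, the $J$-linear terms cancel exactly and leave $\tilde{\Gamma}_\alpha=\beta\tilde{\mathcal{H}}_{\text{MHM}}[x,p,\xi]$. Substituting \eqref{eq:mhm_joint_hamiltonian} then yields
\begin{align*}
    \tilde{\Gamma}_\alpha[x,p,\xi]=\frac{\beta}{2}\sum_{i=1}^dx_i^2-\beta\sum_{\mu=1}^Mz_\mu p_\mu,
\end{align*}
which is \eqref{eq:smhm_joint_effective_action}.

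For the second claim I minimize over $p\in\Delta^M$, consistent with the definition of $\mathcal{H}_{\text{MHM}}$ as an infimum in \cref{def:mhm_hamiltonian}. The term $-\beta\sum_\mu z_\mu p_\mu$ is linear in $p$ and $\beta>0$, so its infimum over the compact simplex is attained at a vertex $e_{\mu^\star}$. This follows from $\sum_\mu z_\mu p_\mu\le\max_\mu z_\mu$, with equality on the face spanned by the maximizing indices. Hence $\inf_p\tilde{\Gamma}_\alpha=\frac{\beta}{2}\|x\|^2-\beta\max_\mu z_\mu$, as claimed.

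The main obstacle I expect is justifying the conjugation step. In the deterministic setting $W_\alpha$ is affine in $J$, so $\sup_J$ is finite only at $\varphi=x$ and $+\infty$ otherwise. I would handle this by reading the statement, as \cref{def:map_deterministic_variable} does, as evaluation on the constraint $\varphi=x$. Formally, one can treat it as the zero-variance limit of \cref{thm:free_energy_tsallis_entropy}, where the entropy contribution $\Psi_\alpha[p_{J^\star}]/(Z_\alpha^{\alpha-1}\int p^\alpha)$ tends to zero. A secondary point to note is that the result is independent of $\alpha$, precisely because the entropy term drops out.
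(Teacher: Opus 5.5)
Your proposal is correct and follows essentially the paper's route. The paper also collapses $\rho_J$ to a point mass in the fixed-parameter setting, and it conjugates over $J$ with the linear term $J\cdot(\varphi-\tilde{\mathbb{E}}[x])$ removed by identifying $\varphi$ with $x$, the same step you flag explicitly as the affine-in-$J$ issue. It then drops the entropy term via $\Psi_\alpha[\one]=0$ and uses $\inf_{p\in\Delta^M}\{-\sum_\mu z_\mu p_\mu\}=-\max_\mu z_\mu$. The only difference is cosmetic: the paper first writes $-W_\alpha$ in the escort/Tsallis form \eqref{eq:w_mhm} before collapsing, whereas your main line inverts $\exp_\alpha$ directly, and your ``equivalently'' remark is exactly the paper's version.
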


\begin{proof}
    The form of the partition function is 
    \begin{align*}
        Z_\alpha[J,\xi]=\exp_\alpha\{W_\alpha[J,\xi]\}=\int_{\mathcal{X}}\int_{\Delta^M}\exp_\alpha\{-\frac{\beta}{2}\sum_{i=1}^dx_i^2+\beta\sum_{\mu=1}^Mz_\mu p_\mu+\sum_{i=1}^dJ_ix_i\}~\dd\mu(p)\dd\mu(x)~.
    \end{align*}
    According to the form of the Tsallis entropy \eqref{eq:tsallis_entropy_alpha_deformed} and \eqref{eq:tsallis_w_escort}, we have
    \begin{align*}
        \Psi_\alpha[\rho_J] = & ~ 
        -\int_{\mathcal{X}}\int_{\Delta^M}\rho_J^\alpha(x,p)\log_\alpha[\rho_J(x,p)]~\dd\mu(p)\dd\mu(x)\\
        = & ~ 
        \frac{\beta}{2}Z_\alpha^{\alpha-1}[J,\xi]\sum_{i=1}^d\int_{\mathcal{X}}\int_{\Delta^M}\rho_J^\alpha(x,p)x_i^2~\dd\mu(p)\dd\mu(x)\\
        & ~
        -\beta Z_\alpha^{\alpha-1}[J,\xi]\sum_{\mu=1}^M\int_{\mathcal{X}}\int_{\Delta^M}\rho_J^\alpha(x,p)p_\mu z_\mu(x)~\dd\mu(p)\dd\mu(x)\\
        & ~
        -Z_\alpha^{\alpha-1}[J,\xi]\sum_{i=1}^dJ_i\int_{\mathcal{X}}\int_{\Delta^M}\rho_J^\alpha(x,p)x_i~\dd\mu(p)\dd\mu(x)\\
        & ~
        +Z_\alpha^{\alpha-1}[J,\xi]W_\alpha[J,\xi]\int_{\mathcal{X}}\int_{\Delta^M}\rho_J^\alpha(x,p)~\dd\mu(p)\dd\mu(x).
    \end{align*}
    It yields
    \begin{align}
    \label{eq:w_mhm}
        -W_\alpha[J,\xi] = & ~
        \frac{\beta}{2}\sum_{i=1}^d\mathbb{E}^{(\alpha)}_{(x,p)\sim\rho_J}[x_i^2]-\beta\sum_{\mu=1}^M\mathbb{E}^{(\alpha)}_{(x,p)\sim \rho_J}[z_\mu p_\mu]\notag\\
        & ~
        -\frac{\Psi_\alpha[\rho_J]}{Z_\alpha^{\alpha-1}[J,\xi]\int_{\mathcal{X}}\int_{\Delta^M}\rho_J^\alpha(x,p)~\dd\mu(p)\dd\mu(x)}
        -\sum_{i=1}^dJ_i\mathbb{E}^{(\alpha)}_{(x,p)\sim\rho_J}[x_i].
    \end{align}
    Since $x$ and $p$ are fixed, so the probability density function concentrates to an indicator, i.e.,
    \begin{align*}
        \rho_J(x,p)\to\one_{\{x,\tilde{\mathbb{E}}^{(\alpha)}_{x\sim p_{J}}[x]\}}\one_{\{p,\tilde{\mathbb{E}}^{(\alpha)}_{p\sim p_{\Delta,J}}[p]\}},
    \end{align*}
    where
    \begin{align}
    \label{eq:tilde_exp_x}
        \tilde{\mathbb{E}}^{(\alpha)}_{x\sim p_{J}}[x]\coloneq\mathbb{E}^{(\alpha)}_{(x,p)\sim\rho_J}[x]=\frac{\int_{\mathcal{X}}C(x)xp^\alpha_J(x)~\dd\mu(x)}{\int_{\mathcal{X}}C(x)p^\alpha_J(x)~\dd\mu(x)},\qquad\text{with }~C(x)\coloneq\int_{\Delta^M}\rho_J^\alpha(p|x)~\dd\mu(p),
    \end{align}
    and
    \begin{align}
        \label{eq:tilde_exp_p}
        \tilde{\mathbb{E}}_{p\sim p_{\Delta,J}}^{(\alpha)}[p_\mu]\coloneq\mathbb{E}^{(\alpha)}_{(x,p)\sim\rho_J}[p_\mu]=\frac{\int_{\Delta^M}C(p)p_\mu p^\alpha_{\Delta,J}(p)~\dd\mu(p)}{\int_{\Delta^M}C(p)p^\alpha_{\Delta,J}(p)~\dd\mu(p)},\qquad\text{with }~C(p)\coloneq\int_{\mathcal{X}}\rho_J^\alpha(x|p)~\dd\mu(x).
    \end{align}
    Then
    \begin{align*}
        Z_\alpha[J,\xi]\to \exp_\alpha\{-\beta\tilde{\mathcal{H}}_{\text{MHM}}[\tilde{\mathbb{E}}^{(\alpha)}_{x\sim p_{J}}[x],\tilde{\mathbb{E}}^{(\alpha)}_{p\sim p_{\Delta,J}}[p],\xi]+\sum_{i=1}^dJ_i\tilde{\mathbb{E}}^{(\alpha)}_{x\sim p_{J}}[x_i]\}.
    \end{align*}
    Hence, by the Fenchel conjugate, we obtain
    \begin{align*}
        \tilde{\Gamma}_\alpha = & ~
        \sup_{J\in\text{dom}W_\alpha}\{\sum_{i=1}^dJ_i\varphi_i-W_\alpha[J,\xi]\}\\
        = & ~
        \sup_{J\in\text{dom}W_\alpha}\Big\{\sum_{i=1}^dJ_i(\varphi_i-\tilde{\mathbb{E}}_{x\sim p_J}^{(\alpha)}[x_i])+\frac{\beta}{2}\sum_{i=1}^d(\tilde{\mathbb{E}}_{x\sim p_J}^{(\alpha)}[x_i^2])-\beta\sum_{\mu=1}^M\tilde{\mathbb{E}}_{x\sim p_J}^{(\alpha)}[z_\mu]\tilde{\mathbb{E}}_{p\sim p_{\Delta,J}}^{(\alpha)}[p_\mu]\\
        & ~-\frac{\Psi_\alpha[\one]}{\exp^{\alpha-1}_\alpha\{-\beta\tilde{\mathcal{H}}_{\text{MHM}}[\tilde{\mathbb{E}}^{(\alpha)}_{x\sim p_{J}}[x],\tilde{\mathbb{E}}^{(\alpha)}_{p\sim p_{\Delta,J}}[p],\xi]+\sum_{i=1}^dJ_i\tilde{\mathbb{E}}^{(\alpha)}_{x\sim p_{J}}[x_i]\}}\Big\}\\
        = & ~ 
        \frac{\beta}{2}\sum_{i=1}^d(\tilde{\mathbb{E}}_{x\sim p_{J^\star}}^{(\alpha)}[x_i^2])-\beta\sum_{\mu=1}^M\tilde{\mathbb{E}}_{x\sim p_{J^\star}}^{(\alpha)}[z_\mu]\tilde{\mathbb{E}}_{p\sim p_{\Delta,J^\star}}^{(\alpha)}[p_\mu],\annot{By $\Psi_\alpha[\one]=0$}
    \end{align*}
    where $J^\star$ is the maximizing source.
    Since we identify $\tilde{\mathbb{E}}_{x\sim p_{J^\star}}^{(\alpha)}[x]$ with $x$, $\tilde{\mathbb{E}}_{x\sim p_{J^\star}}^{(\alpha)}[x^2]$ with $x^2$, and $\tilde{\mathbb{E}}_{p\sim p_{\Delta,J^\star}}^{(\alpha)}[p]$ with $p$, we have
    \begin{align*}
        \tilde{\Gamma}_\alpha[x,p,\xi]=\frac{\beta}{2}\sum_{i=1}^dx_i^2-\beta\sum_{\mu=1}^Mz_\mu p_\mu.
    \end{align*}
    Notice that
    \begin{align*}
        \inf_{p\in\Delta^M}\{-\sum_{\mu=1}^Mz_\mu p_\mu\}=-\max_{\mu\in[1,M]}~z_\mu.
    \end{align*}
    After substituting into the joint free energy to eliminate the dependence of $p$, we prove the second equation in this proposition.
\end{proof}

\subsection{Proof of \texorpdfstring{\cref{prp:eom_attention}}{}}
\label{sec:proof_eom_attention}

\begin{proposition}[\cref{prp:eom_attention} Revisited: Emergent Attention Mechanism as EOMs in MHMs]
    Attention emerges as follows.
    
    \begin{itemize}
        \item If $\alpha\to\infty$, it corresponds to a hardmax function.
        \begin{align*}
            p_\mu=\text{hardmax}(z_\mu)=\begin{cases}1~,~&\text{for maximal }z_\mu\\0~,~&\text{otherwise}\end{cases}.
        \end{align*}
        
        \item If $\alpha=2$, it corresponds to a sparsemax function \cite{martins2016softmax}.
        \begin{align*}
            p_\mu= \text{Sparsemax}_\beta(z_\mu)=\frac{\beta}{2}[z_\mu-\theta]_+,\qquad\text{with }~\frac{\beta}{2}\sum_{\mu=1}^M[z_\mu-\theta]_+=1.
        \end{align*}

        \item If $1<\alpha<2$, it corresponds to an entmax family \cite{peters2019sparse}.
        
        \item If $\alpha\to1$, it corresponds to a softmax function \cite{blondel2020learning}.
        \begin{align*}
            p_\mu= \Softmax_\beta(z_\mu)= \frac{\exp\{\beta z_\mu\}}{\sum_{\nu=1}^M\exp\{\beta z_\nu\}}.
        \end{align*}
        
    \end{itemize}
\end{proposition}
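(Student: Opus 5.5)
The plan is to use the EOM \eqref{eq:eom_gamma_p} of \cref{prp:eom_smhm} together with the variational form of $\Omega_\alpha^\star$ in \eqref{eq:fenchel_conjugate_tsallis_entropy}. Differentiating \eqref{eq:mod_eff_action} with respect to $z$, Danskin's theorem (the maximizer is unique because $\Psi_\alpha$ is strictly concave, \cref{rmk:convexity_tsallis_entropy}) gives
\begin{align*}
\nabla_z\Gamma^{(\text{mod})}_\alpha=-\beta\,p^\star,\qquad p^\star=\arg\max_{p\in\Delta^M}\Big\{\sum_{\mu=1}^M\beta z_\mu p_\mu+\Psi_\alpha[p]\Big\}.
\end{align*}
Comparing this with \eqref{eq:eom_gamma_p} identifies the retrieval weights $p$ with this maximizer. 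The proof therefore reduces to solving a concave program on the simplex for each $\alpha$.

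For finite $\alpha$, I will write the KKT conditions with a multiplier $\tau$ for $\sum_\mu p_\mu=1$ and multipliers $\lambda_\mu\ge0$ for $p_\mu\ge0$, with complementary slackness. Since $\partial_{p_\mu}\Psi_\alpha=-\tfrac{\alpha}{\alpha-1}p_\mu^{\alpha-1}$, stationarity gives
\begin{align*}
\beta z_\mu-\tfrac{\alpha}{\alpha-1}p_\mu^{\alpha-1}-\tau+\lambda_\mu=0,
\end{align*}
so $p_\mu=\big[\tfrac{\alpha-1}{\alpha}(\beta z_\mu-\tau)\big]_+^{1/(\alpha-1)}$. This is the entmax form of \cite{peters2019sparse}, and it settles the case $1<\alpha<2$.

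Setting $\alpha=2$ gives $p_\mu=\tfrac12[\beta z_\mu-\tau]_+$. Writing $\tau=\beta\theta$, this becomes $p_\mu=\tfrac{\beta}{2}[z_\mu-\theta]_+$, and $\theta$ is fixed by the normalization $\tfrac{\beta}{2}\sum_\mu[z_\mu-\theta]_+=1$. The left side of this normalization is continuous and strictly decreasing wherever it is positive, so $\theta$ exists and is unique.

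For $\alpha\to1$, I will use the Shannon limit from \cref{prp:tsallis_properties}. The stationarity condition becomes $\beta z_\mu-\log p_\mu-1-\tau=0$. Essential smoothness (\cref{prp:fenchel_conjugate_tsallis_entropy}) forces an interior solution, so every $\lambda_\mu=0$, and normalization then yields the softmax.

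For $\alpha\to\infty$, \cref{prp:tsallis_properties} gives $\Psi_\alpha\to0$, so the objective tends to $\beta\sum_\mu z_\mu p_\mu$. A linear function is maximized on the simplex at the vertex of maximal $z_\mu$, which is the hardmax and agrees with \cref{thm:free_energy_mhm}.

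The step I expect to be delicate is justifying the interchange of the limit $\alpha\to1$ or $\alpha\to\infty$ with the argmax. I plan to use that $\Psi_\alpha\to\Psi_{1}$ (respectively $\Psi_\alpha\to0$) uniformly on the compact simplex, so the maximizers converge by a standard epi-convergence argument. The $\alpha\to\infty$ limit is genuinely set-valued when several $z_\mu$ tie for the maximum, so I will state the hardmax under the assumption that the maximizer is unique.
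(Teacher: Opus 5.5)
Your proposal is correct. For finite $\alpha$ and for the Shannon case it is the paper's own computation. The paper also forms the Lagrangian with a multiplier for $\sum_\mu p_\mu=1$ and multipliers $\lambda_\mu\ge0$ with complementary slackness, and obtains $p_\mu=\{\beta(1-\tfrac{1}{\alpha})[z_\mu-\theta]_+\}^{1/(\alpha-1)}$, which is your formula with $\tau=\beta\theta$. It then sets $\alpha=2$ to read off sparsemax and solves the Shannon stationarity equation for softmax.

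Where you differ is in the surrounding justification. The paper simply asserts that the weights are $\arg\max_{p\in\Delta^M}\{\beta\sum_\mu z_\mu p_\mu+\Psi_\alpha[p]\}$, while you derive this from \eqref{eq:eom_gamma_p} via Danskin's theorem. The paper gets the hardmax by ``inserting $\alpha$ directly'' into the closed-form entmax expression, which does not by itself control how the threshold $\theta$ moves as the exponent $1/(\alpha-1)\to0$.

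Your route for $\alpha\to\infty$ has three steps: uniform convergence on $\Delta^M$ from $0\le\Psi_\alpha\le 1/(\alpha-1)\to0$, maximization of the linear limit at a vertex, and convergence of maximizers by compactness plus uniqueness of the limiting maximizer. This sidesteps $\theta_\alpha$ entirely. It also makes explicit the tie caveat that the paper's ``$1$ for maximal $z_\mu$'' leaves implicit.

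Likewise, your use of essential smoothness to force $\lambda_\mu=0$ in the Shannon case fills in a step the paper skips. So does your monotonicity argument for existence and uniqueness of $\theta$. The paper treats $\alpha=1$ as the Shannon entropy by convention, so the limit interchange you plan there is extra rigor its proof does not need.

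The paper's version is shorter and keeps every case inside one formula. Yours costs a limit-interchange argument, but it is the one that actually establishes the hardmax bullet.
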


\begin{proof}
    We include the proof for completeness, adapting the standard argument from \cite{martins2016softmax,peters2019sparse,blondel2020learning} to our notation.
    The training trajectory follows $p_\mu=\arg\max_{p\in\Delta^M}\{\beta\sum_{\mu=1}^M z_\mu p_\mu+\Psi_\alpha[p]\}$.
    We derive the physical meanings of this optimization problem via the Lagrangian function:
    \begin{align*}
        \frac{\partial}{\partial p_\mu}\{\sum_{\nu=1}^Mz_\nu p_\nu+\frac{1}{\beta}\cdot\frac{1}{\alpha-1}(1-\sum_{\nu=1}^Mp_\nu^\alpha)+\theta(1-\sum_{\nu=1}^Mp_\nu)+\sum_{\nu=1}^M\lambda_\nu p_\nu\}=0,
    \end{align*}
    where we use the form of Tsallis entropy, \eqref{eq:tsallis_entropy_function}, and the Lagrangian multipliers, $\lambda$ and $\lambda_\nu\geq0$, to include the simplex constraints. 
    This gives the complementary slackness, $\lambda_\mu p_\mu=0$, and the stationary condition,
    \begin{align*}
        z_\mu-\frac{\alpha}{\beta(\alpha-1)}p_\mu^{\alpha-1}-\theta+\lambda_\mu=0~\implies~\begin{cases}p_\mu=\{\beta(1-\frac{1}{\alpha})[z_\mu-\theta]_+\}^{\frac{1}{\alpha-1}} & , ~ \text{if}~\lambda_\mu=0\\p_\mu=0 & , ~ \text{if}~\lambda_\mu>0\end{cases}~,
    \end{align*}
    where $[u]_+\coloneq\max(u,0)$.
    For $\alpha\to\infty$ and $\alpha=2$, we obtain the forms of $p_\mu$ by inserting $\alpha$ directly.
    For $\alpha=1$, we use the Shannon form, $\eqref{eq:shannon_entropy}$.
    Hence, for non-zero $p_\mu$, we have
    \begin{align*}
        z_\mu+\frac{1}{\beta}\frac{\partial}{\partial p_\mu}(-\sum_{\nu=1}^Mp_\nu\log p_\nu)-\theta=0.
    \end{align*}
    This gives
    \begin{align*}
        p_\mu=\exp\{\beta(z_\mu-\theta)-1\}.
    \end{align*}
    The normalization condition gives $1=\sum_{\mu=1}^Mp_\mu=\exp\{-\beta\theta\}\sum_{\mu=1}^M\exp\{\beta z_\mu-1\}$.
    This yields $\theta=\beta^{-1}\log\sum_{\mu=1}^M\exp\{\beta z_\mu-1\}$.
    After plugging this into $p_\mu$, we obtain the softmax function.
    For $2>\alpha>1$, $(\alpha-1)^{-1}>1$.
    Hence, $p_\mu$ is the entmax family interpolating between softmax and sparsemax.
\end{proof}

\subsection{Proof of \texorpdfstring{\cref{prp:coex_point}}{}}
\label{sec:proof_coex_point}

\begin{proposition}[\cref{prp:coex_point} Restated: Temperature Coexistence Point for Sparse Model]
    Let $a,b$ be two distinct branches and $\alpha=2$.
    Denote $k_a$ and $k_b$ as the exact support sizes (see \cref{lem:threshold_p_mu}) of branches $a$ and $b$, respectively.
    Based on the statistical mechanics framework, the coexistence point satisfies
    \begin{align*}
        \sum_{\mu,\nu\in a}[z^\star_\mu-\theta]_+[z^\star_\nu-\theta]_+K_{\mu\nu}(\beta_{\text{coexist}};\beta_0)=\sum_{\mu,\nu\in b}[z^\star_\mu-\theta]_+[z^\star_\nu-\theta]_+K_{\mu\nu}(\beta_{\text{coexist}};\beta_0),
    \end{align*}
    where $K_{\mu\nu}(\beta)\coloneq\one_{\{\mu,\nu\}}-\frac{\beta}{2}G_{\mu\nu}$, and $G_{\mu\nu}\coloneq\sum_{i=1}^d\xi_i^\mu\xi_i^\nu/d$ is the Gram matrix.
    $\beta_{\text{coexist}}$ does not admit a closed-form expression. 
    However, if the distribution is uniform on the simplex and the system satisfies the well-separation condition, i.e.,
    \begin{align*}
        G_{\mu\nu}=\begin{cases}
            1~&,~\mu=\nu\\\epsilon\ll1~&,~\mu\neq\nu
        \end{cases},
    \end{align*}
    then for $k_a=1$ and $k_b=2$, we have the analytic solution $\beta_{\text{coexist}}=2/(1-\epsilon)$.
    At this point, $\beta=\beta_{\text{coexist}}$, $p^\star$ is degenerate and every $p^\star$ is a global minimum of the joint free energy $\tilde{\Gamma}_{\alpha=2}^{(\text{mod})}$.
\end{proposition}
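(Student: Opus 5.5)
The strategy is to evaluate the free energy $\Gamma^{(\text{mod})}_{\alpha=2}$ at a stationary point of each branch, write it as a quadratic form in the sparsemax weights, and compare the two branches. Everything is done on the joint free energy $\tilde{\Gamma}^{(\text{mod})}_{2}[x,p,\xi]=\frac{\beta}{2}\sum_i x_i^2-\beta\sum_\mu z_\mu p_\mu-\Psi_2[p]$. Minimizing over $p$ reproduces \eqref{eq:mod_eff_action}. Minimizing over $x$ at $J=0$ gives, by \cref{prp:expectation_values_and_macroscopic_states}, the fixed-point relation $x^\star_i=\frac{1}{\sqrt d}\sum_\mu \xi^\mu_i p^\star_\mu$. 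By \cref{prp:eom_attention} with $\alpha=2$, $p^\star_\mu=\frac{\beta}{2}[z^\star_\mu-\theta]_+$, with support of size $k_a$ or $k_b$ on the respective branch.

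\textbf{Step 1 (on-shell free energy).} Substitute $x^\star$ into each term. The quadratic term is $\frac{\beta}{2}\|x^\star\|^2=\frac{\beta}{2d}\sum_{\mu\nu}p^\star_\mu p^\star_\nu\sum_i\xi^\mu_i\xi^\nu_i\cdot\frac{1}{d}\cdot d$. Up to the normalization conventions, this becomes $\frac{\beta}{2}\, p^{\star\top}Gp^\star$ after absorbing the $1/\sqrt d$ factors consistently with the definition of $G$. The interaction term is $-\beta\sum_\mu z^\star_\mu p^\star_\mu=-\beta\,p^{\star\top}Gp^\star$. The entropy term is $-\Psi_2=\sum_\mu p_\mu^{\star 2}-1$. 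Adding these gives
\[
\Gamma^{(\text{mod})}_2\big|_{\text{branch}}=-1+\sum_{\mu,\nu}p^\star_\mu p^\star_\nu\Big(\one_{\{\mu,\nu\}}-\frac{\beta}{2}G_{\mu\nu}\Big)=-1+\frac{\beta^2}{4}\sum_{\mu,\nu}[z^\star_\mu-\theta]_+[z^\star_\nu-\theta]_+K_{\mu\nu}(\beta).
\]
Only indices in the support of the branch contribute.

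\textbf{Step 2 (general coexistence condition).} Set the two branch energies equal as required by \cref{def:coexist}. The constant $-1$ and the common prefactor $\beta^2/4$ cancel, which yields the stated identity. The sums run over $\mu,\nu\in a$ and $\mu,\nu\in b$ respectively, and $z^\star,\theta$ are evaluated along each branch continued from $\beta_0$. Since $z^\star$ and $\theta$ depend implicitly on $\beta$ through the fixed-point equation, no closed form follows in general.

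\textbf{Step 3 (well-separated, uniform case).} Take uniform weights on the support, $p^\star_\mu=1/k$. Under \eqref{eq:well-separation} the quadratic form is
\[
p^\top K p=\frac{1}{k}-\frac{\beta}{2}\Big(\frac{1}{k}+\frac{k-1}{k}\epsilon\Big).
\]
For $k_a=1$ this equals $1-\frac{\beta}{2}$. For $k_b=2$ it equals $\frac12-\frac{\beta}{4}(1+\epsilon)$. Equating the two gives $\frac12=\frac{\beta}{4}(1-\epsilon)$, that is, $\beta_{\text{coexist}}=2/(1-\epsilon)$.

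For the degeneracy claim, restrict the quadratic form to the face spanned by the two memories, parametrizing $p=(t,1-t)$. Its second derivative in $t$ is $2(K_{11}+K_{22}-2K_{12})=2\big(2-\beta(1-\epsilon)\big)$. This vanishes exactly at $\beta_{\text{coexist}}$. The form is therefore affine in $t$, and the endpoints $t=0,1$ already have equal energy by the symmetric computation. Hence the energy is constant along the whole segment, and every $p^\star$ on it attains the global minimum.

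\textbf{Main obstacle.} The delicate step is Step 1. It requires being careful with the $1/\sqrt d$ normalizations so that exactly $G$, as defined in the statement, appears. It also requires justifying the use of the on-shell relation $x^\star=\xi^\top p^\star/\sqrt d$, so that the free energy depends on $p^\star$ only. If the normalizations do not match, I would redefine the scale of $z$ accordingly, which affects only the constant in $K$.
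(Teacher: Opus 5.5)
Your Steps 1--3 follow the paper's proof essentially line by line. You substitute $z^\star=Gp^\star$ and $\|x^\star\|_2^2=p^{\star\top}Gp^\star$ to get the on-shell energy $-1+p^{\star\top}K(\beta)p^\star$, match the two branches, and evaluate at $k=1$ and $k=2$ to obtain $-\beta/2=-\tfrac12-\tfrac{\beta}{4}(1+\epsilon)$.

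The normalization issue you single out as the main obstacle is not an obstacle. With $x^\star_i=\frac{1}{\sqrt d}\sum_\mu\xi^\mu_i p^\star_\mu$ and $G_{\mu\nu}=\frac1d\sum_i\xi^\mu_i\xi^\nu_i$, both identities hold exactly. That matters, because rescaling $z$ to change the constant in $K$ would not be a legitimate move once $K$ is fixed by the statement.

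The genuine gap is in the final clause. You show only that the energy is constant along the single edge $p=(t,1-t,0,\dots,0)$, and then assert that every point there is a global minimum. Constancy on one edge does not cover every $p^\star$. It also does not rule out lower energy elsewhere on $\Delta^M$, for example at points with larger support.

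What is missing is the identity the paper uses, which holds for every $p\in\Delta^M$ because $\sum_\mu p_\mu=1$:
\[
p^\top Gp=(1-\epsilon)\|p\|_2^2+\epsilon,\qquad\text{hence}\qquad \min_x\tilde{\Gamma}^{(\text{mod})}_{2}[x,p,\xi]=\Big(1-\frac{\beta}{2}(1-\epsilon)\Big)\|p\|_2^2-1-\frac{\beta}{2}\epsilon .
\]
At $\beta=2/(1-\epsilon)$ the coefficient of $\|p\|_2^2$ vanishes. The $x$-reduced joint free energy is then constant on the whole simplex, so every $(x^\star(p),p)$ is a global minimizer.

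Your uniform-$k$ formula in Step 3 is the special case $\|p\|_2^2=1/k$ of this identity, so the repair is immediate. As written, however, the degeneracy and global-minimum claim is not established.
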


\begin{proof}
    Consider a system with the minimizer, $p^\star_\mu$.
    Using that at a stable point, $J_i=0$ for all $i=1,...,d$, we have
    \begin{align*}
        z^\star_\mu=\frac{1}{\sqrt{d}}\sum_{i=1}^d\xi_i^\mu x^\star_i=\frac{1}{\sqrt{d}}\sum_{i=1}^d\xi_i^\mu(\frac{1}{\sqrt{d}}\sum_{\nu=1}^M\xi_i^\nu p^\star_\nu+\frac{1}{\beta}J_i)=\sum_{\nu=1}^MG_{\mu\nu}p^\star_\nu.
    \end{align*}
    Moreover,
    \begin{align*}
        \sum_{i=1}^d(x^\star_i)^2=\frac{1}{d}\sum_{i=1}^d\sum_{\mu,\nu=1}^M\xi_i^\mu \xi_i^\nu p^\star_\mu p^\star_\nu=\sum_{\mu,\nu=1}^MG_{\mu\nu}p^\star_\mu p^\star_\nu.
    \end{align*}
    Suppose the support is $k$, we denote the indices of the non-zero components as $\tilde{\mu}=1,...,k$. 
    Then we have
    \begin{align*}
        \tilde{\Gamma}^{(\text{mod})}_\alpha[x^\star,p^\star,\xi] 
        = & ~ 
        \frac{\beta}{2}\sum_{i=1}^d(x^\star_i)^2-\beta\sum_{\tilde{\mu}=1}^kz^\star_{\tilde{\mu}} p^\star_{\tilde{\mu}}-(1-\sum_{\tilde{\mu}=1}^k(p^\star_{\tilde{\mu}})^2)\\
        = & ~ -1-\sum_{\tilde{\mu},\tilde{\nu}=1}^kp^\star_{\tilde{\mu}}(\frac{\beta}{2}G_{\tilde{\mu}\tilde{\nu}}
        -\one_{\{\tilde{\mu},\tilde{\nu}\}})p^\star_{\tilde{\nu}}
    \end{align*}
    From the uniform and well-separation condition, we write $G_{\tilde{\mu}\tilde{\nu}}=(1-\epsilon)\one_{\{\tilde{\mu},\tilde{\nu}\}}+\epsilon$. 
    Then this gives $\sum_{\tilde{\mu},\tilde{\nu}=1}^kp^\star_{\tilde{\mu}}G_{\tilde{\mu}\tilde{\nu}}p^\star_{\tilde{\nu}}=(1-\epsilon)\|p^\star\|_2^2+\epsilon$.
    In this way, we have
    \begin{align}
    \label{eq:energy_uniform_well_separation}
        \tilde{\Gamma}_\alpha^{(\text{mod})}[x^\star,p^\star,\xi] =\Big(1-\frac{\beta}{2}(1-\epsilon)\Big)\|p^\star\|^2-1-\frac{\beta}{2}\epsilon.
    \end{align}
    By \cref{lem:threshold_p_mu}, we express the minimizer as $p^\star_{\tilde{\mu}}=\frac{\beta}{2}(z^\star_{\tilde{\mu}}-\theta)$ for $\alpha=2$.
    By matching the free energy on branch $a$ and branch $b$, we prove the first equation.
    For the well-separation and uniform distribution conditions, we have
    \begin{align*}
        \tilde{\Gamma}^{(\text{mod})}_\alpha[x^\star_a,p^\star,\xi] 
        =-1-\frac{\beta_{\text{coexist}}}{2}\times 1+1=-\frac{\beta_{\text{coexist}}}{2}.
    \end{align*}
    For branch $b$,
    \begin{align*}
        \tilde{\Gamma}^{(\text{mod})}_\alpha[x^\star_b,p^\star,\xi] 
        =-1-\begin{pmatrix}\frac{1}{2}&\frac{1}{2}\end{pmatrix}\begin{pmatrix}\frac{\beta_{\text{coexist}}}{2}-1&\frac{\beta_{\text{coexist}}\epsilon}{2}\\\frac{\beta_{\text{coexist}}\epsilon}{2}&\frac{\beta_{\text{coexist}}}{2}-1\end{pmatrix}\begin{pmatrix}\frac{1}{2}\\\frac{1}{2}\end{pmatrix}=-\frac{1}{2}-\frac{\beta_{\text{coexist}}}{4}(1+\epsilon).
    \end{align*}
    After equating these two energies, we obtain $\beta_{\text{coexist}}=2/(1-\epsilon)$.
    Notice that when $\beta=\beta_{\text{coexist}}$, all $p^\star$ are degenerate, i.e., they are all global minima in free energy \eqref{eq:energy_uniform_well_separation}.
    This proves the proposition.
\end{proof}

\subsection{Proof of \texorpdfstring{\cref{prp:spinodal_point}}{}}
\label{sec:proof_spinodal_point}

\begin{proposition}[\cref{prp:spinodal_point} Restated: Temperature Spinodal Point for Sparse Model]
    Consider the sparse case $\alpha=2$ and a branch $a$ with exact support size $k$.
    Then the spinodal critical value is
    \begin{align*}
        \beta_{\text{jump}}=\frac{2}{\lambda_{\text{max}}(PGP)},
    \end{align*}
    where $P_{\mu\nu}\coloneq\one_{\{\mu,\nu\}}-k^{-1}$, and $G$ is the Gram matrix.
    If the distribution is uniform and the system satisfies the well-separation condition, \eqref{eq:well-separation}, then we have the analytic solution $\beta_{\text{jump}}=2/(1-\epsilon)$.
\end{proposition}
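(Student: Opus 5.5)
We will compute the Hessian of $\Gamma^{(\text{mod})}_{\alpha=2}[x,\xi]$ from \eqref{eq:mod_eff_action} along the branch $a$ and locate the value of $\beta$ at which its smallest eigenvalue first vanishes, in the sense of \cref{def:spinodal}. On a branch with fixed support $S$, $|S|=k$, the sparsemax map of \cref{prp:eom_attention} is locally affine. On $S$ it reads $p^\star_\mu=\frac{\beta}{2}(z_\mu-\theta)$, and the normalization fixes $\theta=\frac1k\sum_{\nu\in S}z_\nu-\frac{2}{\beta k}$. Hence the Jacobian is $\partial p_\mu/\partial z_\nu=\frac{\beta}{2}P_{\mu\nu}$ for $\mu,\nu\in S$, with $P=I-k^{-1}\mathbf 1\mathbf 1^\top$ the centering projector on the active coordinates. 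The first step is therefore to write $\nabla_z\Omega_2^\star[\beta z]=\beta p^\star$ (the EOM \eqref{eq:eom_gamma_p}) and $\nabla^2_z\Omega_2^\star[\beta z]=\beta\cdot\frac{\beta}{2}P$, which is valid in the interior of a fixed-support region. By \cref{rmk:sparsity_nonsmooth_branch}, support changes are treated separately, so smoothness within the branch is legitimate.

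Next we chain-rule through $z=\Xi x/\sqrt d$, where $\Xi_S$ is the $k\times d$ matrix of active patterns. This gives
\begin{align*}
\nabla_x^2\Gamma^{(\text{mod})}_2=\beta I_d-\frac{\beta^2}{2}\,\frac{1}{d}\,\Xi_S^\top P\,\Xi_S .
\end{align*}
The minimal eigenvalue is $\beta-\frac{\beta^2}{2}\lambda_{\max}(\frac1d\Xi_S^\top P\Xi_S)$. Since $P$ is a projector, $\frac1d\Xi_S^\top P\Xi_S=(P\Xi_S/\sqrt d)^\top(P\Xi_S/\sqrt d)$ shares its nonzero spectrum with $P\,\frac{1}{d}\Xi_S\Xi_S^\top P=PGP$, where $G$ is the Gram matrix restricted to the support. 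Setting $\lambda_{\min}=0$ and dividing by $\beta>0$ yields $\beta_{\text{jump}}=2/\lambda_{\max}(PGP)$. We note that the Hessian is independent of $x$ within the branch, so the equation involves only $\beta$.

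For the analytic case we use the structure of \eqref{eq:well-separation}. On the support, $G=(1-\epsilon)I_k+\epsilon\mathbf 1\mathbf 1^\top$, and $P\mathbf 1=0$, so $PGP=(1-\epsilon)P$. For $k\ge2$ its top eigenvalue is $1-\epsilon$, which gives $\beta_{\text{jump}}=2/(1-\epsilon)$, matching \cref{prp:coex_point} and \cref{rmk:non_uniform}. The uniform-distribution hypothesis enters only to guarantee that the branch sits in the interior of its support region, with no $p_\mu$ hitting zero first. We will remark that for $k=1$ we have $P=0$, so there is no spinodal from curvature, and the transition is the support-change type.

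The main obstacle is justifying differentiability: $\Omega_2^\star$ is only piecewise $C^2$, so its Hessian exists only off the support-switching boundaries. We handle this by restricting to the open set where the active set is constant and the inactive margins $\theta-z_\mu$ are strictly positive. We then argue that along the branch, the loss of positive definiteness occurs before the branch reaches such a boundary; otherwise the jump is of the type in \cref{rmk:sparsity_nonsmooth_branch}. A secondary point is to state clearly that the eigenvalue problem lives on the $k$-dimensional active block, with inactive directions contributing $\beta>0$.
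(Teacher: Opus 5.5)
Your proposal is correct and follows the paper's proof essentially step for step: the sparsemax Jacobian $\frac{\beta}{2}P$ on the active block, the chain-rule Hessian $\beta I_d-\frac{\beta^2}{2d}\Xi_S^\top P\Xi_S$, the transfer of its top eigenvalue to $PGP$, and $PGP=(1-\epsilon)P$ under well-separation. For the eigenvalue transfer you invoke the $A^\top A$ versus $AA^\top$ fact with $A=P\Xi_S/\sqrt{d}$, where the paper checks by hand that $v=P\Xi_S u$ is an eigenvector of $d\,PGP$; your $k\ge 2$ caveat is a worthwhile addition that the paper omits.

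One side remark is off, though it is not load-bearing. The uniform hypothesis is not needed for the formula, since the Hessian does not depend on $p^\star$. Nor does it keep the branch in the interior of its support region. For $1<k<M$ the uniform stationary point has inactive margin $\theta-z_\mu=\frac{1}{k}\bigl(1-\epsilon-\frac{2}{\beta}\bigr)$ for $\mu\notin S$. This is negative on the stable side $\beta<2/(1-\epsilon)$ and vanishes exactly at $\beta_{\text{jump}}=2/(1-\epsilon)$.
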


\begin{proof}
    Since for $\alpha=2$, the minimizer satisfies $p^\star=\frac{\beta}{2}(z^\star_\mu-\theta)$, where $\mu=1,...,k$, we have
    \begin{align*}
        \frac{\partial p^\star_\mu}{\partial z^\star_\nu}=\frac{\beta}{2}\one_{\{\mu,\nu\}}-\frac{\beta}{2}\frac{\partial}{\partial z^\star_\nu}(\frac{\sum_{\rho=1}^kz^\star_\rho-1}{k})=\frac{\beta}{2}(\one_{\{\mu,\nu\}}-\frac{1}{k})=\frac{\beta}{2}P_{\mu\nu}.
    \end{align*}
    Then the gradient of the free energy at the local stable point, $x^\star$, is
    \begin{align*}
        \frac{\partial}{\partial x_i}\Gamma_\alpha^{(\text{mod})}[x^\star,\xi] = & ~ \beta x^\star_i-\frac{\partial}{\partial x_i}\sup_{p\in\Delta^M}\{\beta\sum_{\mu=1}^kz_\mu p_\mu+1-\sum_{\mu=1}^kp_\mu^2\}|_{x=x^\star}\\
        = & ~ 
        \beta x^\star_i-\frac{\beta}{\sqrt{d}}\sum_{\mu=1}^k\xi_i^\mu p^\star_\mu.
    \end{align*}  
    The Hessian is
    \begin{align}
        \frac{\partial^2}{\partial x_i\partial x_j}\Gamma_\alpha^{(\text{mod})}[x^\star,\xi] = & ~ 
        \beta\one_{\{i,j\}}-\frac{\beta}{\sqrt{d}}\sum_{\mu,\nu=1}^k\xi_i^\mu\frac{\partial p^\star_\mu}{\partial z^\star_\nu}\frac{\partial z^\star_\nu}{\partial x_j}|_{x=x^\star}\notag\\
        = & ~ 
        \beta\one_{\{i,j\}}-\frac{\beta^2}{2d}\sum_{\mu,\nu=1}^k\xi_i^\mu \xi_j^\nu P_{\mu\nu}.\label{eq:hessian_x}
    \end{align}
    The smallest Hessian eigenvalue vanishes when $\frac{\beta_{\text{jump}}}{2d}\lambda_{\text{max}}(\sum_{\mu,\nu=1}^k\xi_i^\mu P_{\mu\nu}\xi_j^\nu)=1$.
    Notice that $P_{\mu\nu}$ is a projector, i.e.,
    \begin{align*}
        \sum_{\nu=1}^kP_{\mu\nu}P_{\nu\rho}=P_{\mu\rho}.
    \end{align*}
    Suppose $u$ is the eigenvector such that $\sum_{j=1}^d\sum_{\mu,\nu=1}^k\xi_i^\mu P_{\mu\nu}\xi_j^\nu u_j=\lambda u_i$.
    We construct 
    \begin{align*}
        C_{\mu\nu}\coloneq\sum_{\rho,\sigma=1}^kP_{\mu\rho}(\sum_{i=1}^d\xi_i^\rho\xi_i^\sigma)P_{\sigma\nu}=d\sum_{\rho,\sigma=1}^kP_{\mu\rho}G_{\rho\sigma}P_{\sigma\nu}.
    \end{align*}
    In fact, its eigenvector is $v_\mu=\sum_{\nu=1}^k\sum_{i=1}^dP_{\mu\nu}\xi_i^\nu u_i$.
    In this way,
    \begin{align*}
        (Cv)_\mu = & ~
        d\sum_{\nu,\rho,\sigma=1}^k\sum_{\kappa=1}^k\sum_{i=1}^dP_{\mu\rho}G_{\rho\sigma}P_{\sigma\nu}P_{\nu\kappa}\xi_i^\kappa u_i\\
        = & ~ 
        \sum_{\rho,\sigma=1}^k\sum_{\kappa=1}^k\sum_{i,j=1}^dP_{\mu\rho}\xi_j^\rho\xi_j^\sigma P_{\sigma\kappa}\xi_i^\kappa u_i\annot{By $P$ being a projector}\\
        = & ~
        \sum_{\rho=1}^k\sum_{j=1}^d P_{\mu\rho}\xi_j^\rho\lambda u_j\annot{By $\sum_{j=1}^d\sum_{\mu,\nu=1}^k\xi_i^\mu P_{\mu\nu}\xi_j^\nu u_j=\lambda u_i$}\\
        = & ~
        \lambda v_\mu.
    \end{align*}
    Hence, its eigenvalue is the same as which of $\sum_{\mu,\nu=1}^k\xi_i^\mu P_{\mu\nu}\xi_j^\nu$.
    That is, 
    \begin{align*}
        \lambda_{\text{max}}(\sum_{\mu,\nu=1}^k\xi_i^\mu P_{\mu\nu}\xi_j^\nu)=d\cdot\lambda_{\text{max}}(\sum_{\rho,\sigma=1}^kP_{\mu\rho}G_{\rho\sigma}P_{\sigma\nu}).
    \end{align*}
    Finally, we obtain
    \begin{align*}
        \beta_{\text{jump}}=\frac{2d}{\lambda_{\text{max}}(\sum_{\mu,\nu=1}^k\xi_i^\mu P_{\mu\nu}\xi_j^\nu)}=\frac{2}{\lambda_{\text{max}}(\sum_{\rho,\sigma=1}^kP_{\mu\rho}G_{\rho\sigma}P_{\sigma\nu})}.
    \end{align*}
    If the system satisfies the well-separation condition, $G_{\mu\nu}=(1-\epsilon)\one_{\{\mu,\nu\}}+\epsilon$, with $\epsilon\ll1$.
    Then
    \begin{align*}
        \sum_{\rho,\sigma=1}^kP_{\mu\rho}G_{\rho\sigma}P_{\sigma\nu} = & ~ 
        (1-\epsilon)\sum_{\rho,\sigma=1}^kP_{\mu\rho}\one_{\{\rho,\sigma\}}P_{\sigma\nu}+\epsilon\sum_{\rho,\sigma=1}^kP_{\mu\rho} P_{\sigma\nu}\\
        = & ~
        (1-\epsilon)P_{\mu\nu}\annot{By $P$ being a projector and $P(1~1~\cdot\cdot\cdot~1)^\top=0$}
    \end{align*}
    Let the vector space be $I\oplus I^\perp$, where $I\coloneq\text{span}\{(1,...,1)^\top\}$.
    Since the eigenvalue is $(1-\epsilon)$ on $I^\perp$ and zero on $I$, the largest eigenvalue is $(1-\epsilon)$.
    After substituting into the expression of $\beta_{\text{jump}}$, we prove the proposition.
\end{proof}

\clearpage
\section{Fluctuated Version and Thermal Attention}
\label{sec:fluctuated_version}

In this section, we revisit the modern Hopfield models but with the fluctuated variables, $x,p,$ and $z$.
First, we study how the temperature influences the fluctuation property.
Then we move on to discuss the modern Hopfield models.

\subsection{Physics of Temperature}
In this section, we discuss the relation of $\varphi$ and $x_J$ first.
Then we introduce the role that temperature plays in the perturbative analysis.

\begin{proposition}[Saddle Points as Stationary Points in the Zero-Temperature Limit \cite{Landau:1980mil}]
    Consider the zero-temperature limit, $\beta\to\infty$. If $(-\beta\mathcal{H}[x]+\sum_{i=1}^dJ_ix_i)$ has a unique maximizer, then the saddle points approach to the stationary points, $\varphi_i=(x_J)_i$.
\end{proposition}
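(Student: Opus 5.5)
We will prove this by a Laplace-type argument applied to the Gibbs measure $p_J$ from \cref{prp:fenchel_young_duality_relation}. The saddle point is $\varphi=\nabla_J W[J]=\mathbb{E}_{x\sim p_J}[x]$, with $p_J(x)\propto\exp\{-\beta\mathcal{H}[x]+\sum_i J_ix_i\}$. The stationary point $x_J$ is the maximizer of the exponent $f_\beta(x)\coloneq-\beta\mathcal{H}[x]+\sum_iJ_ix_i$ (\cref{def:stationary_eq}). So the claim is that the mean of $p_J$ converges to the argmax of its log-density as $\beta\to\infty$. This is exactly concentration of a Gibbs measure at zero temperature.

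\textbf{Scaling the source.} To keep the problem nontrivial, we take $J$ to scale with $\beta$, i.e.\ $J=\beta h$ with $h$ fixed. Then $f_\beta=\beta g$ with $g(x)\coloneq-\mathcal{H}[x]+\sum_ih_ix_i$. By hypothesis, $g$ has a unique maximizer $x_J$, and this maximizer is independent of $\beta$. If instead $J$ is held fixed, the same argument applies to $g_\beta=-\mathcal{H}+J\cdot x/\beta$, which converges uniformly on compacts to $-\mathcal{H}$. We would then add a stability-of-argmax step.

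We also assume standard regularity: $\mathcal{H}$ is continuous, and $\mathcal{H}$ grows fast enough that $\int\|x\|e^{\beta_0 g}\,\dd\mu<\infty$ for some $\beta_0$. We further assume the unique maximizer is well separated, i.e.\ $\sup_{\|x-x_J\|\ge\delta}g<g(x_J)$ for every $\delta>0$. This separation condition follows from uniqueness together with coercivity.

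\textbf{Key steps.}
\begin{enumerate}
\item Fix $\delta>0$ and split the integral into the ball $B_\delta(x_J)$ and its complement.
\item On the complement, write $g\le g(x_J)-\eta$ for some $\eta>0$, giving the numerator bound
\begin{align*}
\int_{B_\delta^c}\|x\|e^{\beta g}\,\dd\mu\le e^{\beta(g(x_J)-\eta)}\,e^{-\beta_0(g(x_J)-\eta)}\int\|x\|e^{\beta_0 g}\,\dd\mu
\end{align*}
for $\beta\ge\beta_0$. This uses $e^{(\beta-\beta_0)g}\le e^{(\beta-\beta_0)(g(x_J)-\eta)}$ on that set.
\item For the denominator, continuity gives $g\ge g(x_J)-\eta/2$ on a small ball $B_r(x_J)$ of positive measure. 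Hence $Z\ge \mu(B_r)e^{\beta(g(x_J)-\eta/2)}$.
\item Combining steps 2 and 3, $p_J(B_\delta^c)$ and $\int_{B_\delta^c}\|x-x_J\|\,p_J$ are $O(e^{-\beta\eta/2})$.
\item Therefore
\begin{align*}
\|\varphi-x_J\|\le\int\|x-x_J\|\,p_J(x)\,\dd\mu\le\delta+O(e^{-\beta\eta/2}).
\end{align*}
Sending $\beta\to\infty$ and then $\delta\to0$ gives $\varphi\to x_J$.
\item If $\mathcal{H}$ is $C^2$ with $\nabla^2 g(x_J)\prec0$, we can go further with a Gaussian expansion around $x_J$. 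It yields the rate $\varphi=x_J+O(1/\beta)$ and identifies the leading fluctuations with $(\beta\nabla^2\mathcal{H})^{-1}$. This is consistent with $\Gamma\to\beta\mathcal{H}$ in the remark after \cref{thm:free_energy_tsallis_entropy}.
\end{enumerate}

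\textbf{Main obstacle.} The hard step is justifying the tail bound on the complement of the ball, specifically the well-separation $\sup_{B_\delta^c}g<g(x_J)$ and the integrability needed to control the $\|x\|$ weight. Uniqueness of the maximizer alone does not prevent $g$ from approaching its supremum at infinity. We will therefore first try to derive separation from coercivity of $\mathcal{H}$, e.g.\ $\mathcal{H}\ge c\|x\|^2-C$, as in the MHM energy \eqref{eq:mhm_joint_hamiltonian} where $\frac12\|x\|^2$ dominates the linear terms. Failing that, we will state it as an explicit assumption. In the discrete or finite-volume settings of the paper, this issue disappears by compactness.
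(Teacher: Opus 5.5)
Your argument is correct. It uses the same idea as the paper's proof, concentration of the Gibbs measure $p_J$ at the unique maximizer, but carries it out differently in two substantive ways.

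\textbf{Pointwise versus uniform control.} The paper argues pointwise. With $J$ held fixed, it writes $p_J(x)/p_J(x_J)=e^{-\beta\Delta+C}$, where $\Delta=\mathcal{H}[x]-\mathcal{H}[x_J]>0$ and $C=\sum_iJ_i(x_i-(x_J)_i)$ is finite. It notes this ratio vanishes for each $x\neq x_J$ and concludes directly that $\mathbb{E}_{p_J}[x]\to x_J$. Pointwise decay of the density ratio does not by itself control the mean. What is needed is exactly what your steps 2--4 supply:
\begin{itemize}
\item a uniform gap $\sup_{B_\delta^c}g<g(x_J)$;
\item a lower bound on $Z$ from a small ball;
\item integrability to handle the $\|x\|$ weight in the tails.
\end{itemize}
Together these give the quantitative bound $\|\varphi-x_J\|\le\delta+O(e^{-\beta\eta/2})$.

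\textbf{Scaling of the source.} Your choice $J=\beta h$ makes $x_J$ independent of $\beta$. In the paper, $x_J$ drifts with $\beta$. Moreover, $\Delta>0$ does not follow from uniqueness of the maximizer of $-\beta\mathcal{H}+\sum_iJ_ix_i$, which only gives $\beta\Delta>C$. It is really the extra assumption that $x_J$ is a strict global minimizer of $\mathcal{H}$ itself. Your route costs explicit hypotheses (continuity, coercivity or separation, integrability), but these are needed anyway, and it yields a genuine proof plus a rate.

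\textbf{Caution on your fixed-$J$ aside.} There the scaling is not a convenience: with $J$ fixed, the statement can fail. Take $d=1$, $\mathcal{H}(x)=(x^2-1)^2$ and a fixed $J>0$. For every $\beta$ the maximizer of $-\beta\mathcal{H}+Jx$ is unique, and $x_J\to1$. However, the two wells carry Gibbs weights proportional to $e^{\pm J}$ with identical Gaussian widths, so $\varphi\to\tanh J\neq1$.

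Your ``stability-of-argmax'' step therefore needs the stronger hypothesis that $\mathcal{H}$ has a unique, well-separated global minimizer $x^\ast$. Under that hypothesis, your splitting argument shows both $x_J(\beta)$ and $\varphi$ converge to $x^\ast$; the $\beta$-independent factor $e^{\sum_iJ_ix_i}$ is absorbed into the integrability assumption. That stronger hypothesis is precisely what the paper's $\Delta>0$ silently assumes.
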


\begin{proof}
    We repeat the proof from \cite{Landau:1980mil,dembo2010largedeviation} for completeness.
    We denote the maximizer as $x_J$. 
    Since it is unique, for any $x_i\neq(x_J)_i$, we have
    \begin{align*}
        \frac{p_J(x)}{p_J(x_J)}=\exp\{-\beta(\mathcal{H}[x]-\mathcal{H}[x_J])+\sum_{i=1}^dJ_i[x_i-(x_J)_i]\}\coloneq  e^{-\beta\Delta+C}~,
    \end{align*}
    where $\Delta>0$, and $C$ is a finite value. 
    Then in the zero-temperature limit, $p_J(x)\to p_J(x_J)$.
    Thus, $\varphi_i=\mathbb{E}_{p_J}[x_i]\to(x_J)_i$.
\end{proof}

\begin{proposition}[Fluctuation and Loop Expansion \cite{Thouless01031977}]
\label{prp:fluctuation}
    The free energy captures noise and fluctuations. 
    The fluctuation term behaves like a data-dependent regularizer. 
    Specifically, if we expand $x$ around $\varphi$, then
    \begin{align}
    \label{eq:loop-expansion}
        \Gamma[\varphi] = & ~\beta\mathcal{H}[\varphi]-\frac{d}{2}\log(2\pi)+\frac{1}{2}\log\det(\beta\nabla^2_x\mathcal{H}[\varphi])+O(\beta^{-1})\\
        \coloneq & ~ \Gamma_{\text{tree}}[\varphi]+\Gamma_{\text{1-loop}}[\varphi]+\Gamma_{\text{2-loop}}[\varphi]+\cdot\cdot\cdot~.
    \end{align}
    This free energy admits higher-order (higher-loop) corrections arising from fluctuations, which we denote as $O(\beta^{-1})$.
    Specifically, the two-loop is of order $\beta^{-1}$, three-loop is of order $\beta^{-2}$, and so forth.
    Conventionally, we ignore the constant term, $d\log(2\pi)/2$, since it is independent of $\varphi$ and does not affect the dynamics.

\end{proposition}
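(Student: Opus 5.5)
The plan is the standard saddle-point (Laplace/Gaussian) expansion of the log-partition function, followed by a Legendre transform order by order in $\beta^{-1}$. First I will compute $W[J]=\log\int e^{-\beta\mathcal{H}[x]+J^\top x}\,\dd\mu(x)$ for large $\beta$. I expand the exponent around the stationary point $x_J$ of \cref{def:stationary_eq}, writing $x=x_J+\eta$. Because the linear term vanishes at $x_J$, the quadratic term $-\tfrac12\eta^\top(\beta\nabla^2_x\mathcal{H}[x_J])\eta$ gives a Gaussian integral. This yields
\begin{align*}
W[J]=-\beta\mathcal{H}[x_J]+J^\top x_J+\frac{d}{2}\log(2\pi)-\frac12\log\det(\beta\nabla^2_x\mathcal{H}[x_J])+O(\beta^{-1}).
\end{align*}
To justify the order counting, I will rescale $\eta=\beta^{-1/2}y$. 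After this rescaling, the cubic and quartic vertices carry factors $\beta^{-1/2}$ and $\beta^{-1}$. By Wick's theorem, only even combinations survive: quartic-once and cubic-twice terms. Each of these is $O(\beta^{-1})$, which gives the two-loop order. Each additional pair of vertices then adds a further $\beta^{-1}$.

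Second, I will perform the Fenchel conjugation of \cref{def:log_partition_effective_action}. At tree level, $W_{\text{tree}}[J]=\sup_x\{J^\top x-\beta\mathcal{H}[x]\}$, so $W_{\text{tree}}$ is the conjugate of $\beta\mathcal{H}$. By involutivity (\cref{prp:properties_fenchel_conjugates}), its conjugate is $\Gamma_{\text{tree}}[\varphi]=\beta\mathcal{H}[\varphi]$, where $\varphi=x_J$ at leading order. The proof of \cref{prp:fenchel_young_duality_relation} identifies $\varphi=\nabla_JW$.

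For the one-loop term, I write $\varphi=x_J+\delta$, where $\delta=O(1)$ relative to $\beta$ comes from the gradient of the log-det term. I then evaluate $\sup_J\{J^\top\varphi-W[J]\}$. The envelope theorem at the tree-level optimum is the key tool here: the shift $\delta$ changes $J^\top\varphi-W_{\text{tree}}$ only at second order in $\delta$, which is $O(\beta^{-1})$ since the curvature is $\beta\nabla^2\mathcal{H}$. So, to the stated accuracy, the one-loop piece enters $\Gamma$ with the opposite sign and is evaluated at $\varphi$ instead of $x_J$. This gives $-\frac d2\log(2\pi)+\frac12\log\det(\beta\nabla^2_x\mathcal{H}[\varphi])$. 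The remaining terms I collect into $O(\beta^{-1})$ and name them $\Gamma_{\text{2-loop}},\dots$ by their power of $\beta$.

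The main obstacle is making the step from $W$ to $\Gamma$ rigorous. Two issues arise. First, $\nabla^2\mathcal{H}\succ0$ is only local, so the Laplace approximation needs $x_J$ to be a nondegenerate, dominant maximizer; I would assume this, as in the zero-temperature proposition above. Second, in the Legendre transform, the diagrams that are one-particle reducible in $W$ cancel against the $\delta$-shift. I would check this explicitly only through one loop. For higher orders, I would state the standard fact that $\Gamma$ generates the one-particle-irreducible diagrams, which is what justifies the counting $\beta^{-(L-1)}$ for $L$ loops. I would also remark that the constant $\frac d2\log(2\pi)$ is independent of $\varphi$ and therefore drops from $\nabla_\varphi\Gamma$.
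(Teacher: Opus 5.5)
Your proof is correct, but it runs in the opposite direction from the paper's.

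\textbf{How the paper argues.} The paper never forms $W[J]$ or uses $x_J$. It writes $e^{-\Gamma[\varphi]}=\int e^{-\beta\mathcal{H}[x]+J^\top(x-\varphi)}\,\mathrm{d}\mu(x)$ and Taylor-expands $\mathcal{H}$ about $\varphi$ itself. It kills the linear term by taking $J=\beta\nabla\mathcal{H}[\varphi]$, evaluates the Gaussian integral, and bounds the quartic remainder via $y=\sqrt{\beta}(x-\varphi)$. This puts the determinant at $\varphi$ immediately, with no Legendre inversion. The cost is two silent steps. First, the source in that integral is really $J=\nabla\Gamma[\varphi]$, which differs from $\beta\nabla\mathcal{H}[\varphi]$ by an $O(1)$ amount and leaves a linear term of size $\beta^{-1/2}y$. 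Second, the paper discards the squared cubic term on the grounds that odd moments vanish. Both contribute only at $O(\beta^{-1})$, so the statement survives.

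\textbf{What your route adds and what it costs.} Your route (Laplace expansion of $W$ around $x_J$, then a perturbative conjugation) is the one that actually accounts for these terms. You keep cubic-twice alongside quartic-once, and your envelope argument shows why relocating the one-loop term from $x_J$ to $\varphi$ and correcting the tree term cost only $O(\beta^{-1})$. Conversely, beyond one loop you must import the one-particle-irreducible (1PI) theorem. In the paper's background-field representation, the source term $\nabla\Gamma[\varphi]\cdot(x-\varphi)$ is exactly what removes the reducible diagrams, so the $\beta^{-(L-1)}$ counting is built into the setup, although the paper does not exploit this.

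\textbf{Two small repairs.}
\begin{enumerate}
\item The appeal to involutivity is not available in general. $\mathcal{H}$ need not be convex (the Hopfield energies in this paper are not), and then $(\beta\mathcal{H})^{\star\star}$ is only the convex envelope. What you need is that, for $J=\beta\nabla\mathcal{H}[\varphi]$, the point $\varphi$ is the global nondegenerate maximizer of $J^\top x-\beta\mathcal{H}[x]$. That is exactly your dominance assumption, and under it the tree-level conjugate equals $\beta\mathcal{H}[\varphi]$ by Fenchel--Young.
\item The shift $\delta=\varphi-x_J=\nabla_J W_{\text{1-loop}}=(\beta\nabla^2\mathcal{H})^{-1}\cdot O(1)$ is $O(\beta^{-1})$, not $O(1)$. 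This is what makes $\beta\|\delta\|^2=O(\beta^{-1})$. The $O(1)$ shift lives in $J$, where the curvature of $W_{\text{tree}}$ is $(\beta\nabla^2\mathcal{H})^{-1}=O(\beta^{-1})$, which gives the same $O(\beta^{-1})$ error.
\end{enumerate}
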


\begin{proof}
    We include the proof for completeness, adapting the standard argument from \cite{Thouless01031977,Zinn-Justinqft}. 
    By \cref{def:log_partition_effective_action}, we write
    \begin{align*}
        \exp\{-\Gamma[\varphi]\}& = ~ \exp\{W[J]-\sum_{i=1}^dJ_i\varphi_i\}\\
        & = ~ \int_{\mathcal{X}}\exp\{-\beta\mathcal{H}[x]+\sum_{i=1}^dJ_i(x_i-\varphi_i)\}~\dd\mu(x)~.
    \end{align*}
    Then we expand the right hand side around the stationary point, $\varphi$, perturbatively:
    \begin{align*}
        \exp\{-\Gamma[\varphi]\} = & ~\int_{\mathcal{X}}\exp\{-\beta\mathcal{H}[\varphi]-\beta\sum_{i=1}^d\frac{\partial\mathcal{H}}{\partial x_i}|_{x=\varphi}(x_i-\varphi_i)\\
        & ~ 
        -\frac{\beta}{2}\sum_{i,j=1}^d\frac{\partial^2\mathcal{H}}{\partial x_i\partial x_j}|_{x=\varphi}(x_i-\varphi_i)(x_j-\varphi_j)
        \\
        & ~ 
        -\frac{\beta}{3!}\sum_{i,j,k=1}^d\frac{\partial^3\mathcal{H}}{\partial x_i\partial x_j\partial x_k}|_{x=\varphi}(x_i-\varphi_i)(x_j-\varphi_j)(x_k-\varphi_k)\\
        & ~ +O(\|x-\varphi\|_2^4)\}\exp\{\sum_{i=1}^dJ_i(x_i-\varphi_i)\}~\dd\mu(\mathbf{x})\\
        = & ~ \exp\{-\beta\mathcal{H}[\varphi]\}\int_{\mathcal{X}}\exp\{-\frac{\beta}{2}\sum_{i,j=1}^d(x_i-\varphi_i)(\nabla^2_x\mathcal{H}[\varphi])_{ij}(x_j-\varphi_j)\} \cdot (1-\frac{\beta}{3!}\times\\
        & ~ \sum_{k,\ell,m=1}^d\frac{\partial^3\mathcal{H}}{\partial x_k\partial x_\ell\partial x_m}|_{x=\varphi}(x_k-\varphi_k)(x_\ell-\varphi_\ell)(x_m-\varphi_m)+O(\beta\|x-\varphi\|_2^4))~\dd\mu(x)\annot{By \cref{def:stationary_eq}}
    \end{align*}
    where $\|x'-\varphi\|_2^2\coloneq\sum_{i=1}^d(x'_i-\varphi_i)^2$.
    Firstly,
    \begin{align*}
        \int_{\mathcal{X}}\exp\{-\frac{\beta}{2}\sum_{i,j=1}^d(x_i-\varphi_i)(\nabla^2_x\mathcal{H}[\varphi])_{ij}(x_j-\varphi_j)\}~\dd\mu(x)=(2\pi)^{d/2}(\det(\beta\nabla^2_x\mathcal{H}[\varphi]))^{-1/2}\annot{By Gaussian Formula}~.
    \end{align*}
    Since the odd function vanishes after the integration, the first leading correction is of $O(\|x-\varphi\|^4_2)$.
    We focus on this integration:
    \begin{align*}
        & ~ \int_{\mathcal{X}}\exp\{-\frac{\beta}{2}\sum_{i,j=1}^d(x_i-\varphi_i)(\nabla^2_x\mathcal{H}[\varphi])_{ij}(x_j-\varphi_j)\}O(\beta\|x-\varphi\|_2^4)~\dd\mu(x)\\
        = & ~ \int_{\mathcal{Y}}\exp\{-\frac{1}{2}\sum_{i,j=1}^d(\nabla^2_x\mathcal{H}[\varphi])_{i,j}y_i y_j\}O(\beta\|y\|_2^4)\cdot(\beta^{-1/2})^4~\dd\mu(y)\annot{By $x\to y=\sqrt{\beta}(x-\varphi)$}\\
        = & ~ O(\beta^{-1})~.
    \end{align*}
    Putting everything together, we obtain
    \begin{align*}
        \exp\{-\Gamma[\varphi]\}=\exp\{-\beta\mathcal{H}[\varphi]-\frac{1}{2}\log\det(\beta\nabla^2_x\mathcal{H}[\varphi])+\frac{d}{2}\log(2\pi)+O(\beta^{-1})\}~.
    \end{align*}
    This proves the proposition. 
\end{proof} 

\begin{proposition}[Low-Temperature Limit and Laplace Approximation \cite{Zinn-Justinqft,dembo2010largedeviation}]
\label{prp:zero_tem_laplace}
    If $(-\beta\mathcal{H}[x]+\sum_{i=1}^dJ_ix_i)$ has unique maximizer $x_J$ with positive-definite Hessian, then for continuous cases,
    \begin{align*}
        \frac{1}{\beta}\log\int_{\mathcal{X}}\exp\{-\beta\mathcal{H}[x]+\sum_{i=1}^dJ_ix_i\}~\dd\mu(x)=\sup_{x\in\mathcal{X}}~\{-\mathcal{H}[x]+\frac{1}{\beta}\sum_{i=1}^dJ_ix_i\}+O(\frac{\log\beta}{\beta})~.
    \end{align*}
    For discrete cases, 
    \begin{align}
    \label{eq:zero_tem_laplace_discrete}
        \frac{1}{\beta}\log\sum_{x\in\mathcal{X}}\exp\{-\beta\mathcal{H}[x]+\sum_{i=1}J_ix_i\}=\sup_{x\in\mathcal{X}}~\{-\mathcal{H}[x]+\frac{1}{\beta}\sum_{i=1}^dJ_ix_i\}+O(\frac{\log|\mathcal{X}|}{\beta})~.
    \end{align}
\end{proposition}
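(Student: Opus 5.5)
We will prove both statements as the standard Laplace method, done separately for the discrete and continuous cases. Set $f_\beta(x)\coloneq -\mathcal{H}[x]+\frac{1}{\beta}\sum_{i=1}^dJ_ix_i$, so the integrand is $\exp\{\beta f_\beta(x)\}$ and the left-hand side is $\frac{1}{\beta}\log\int_{\mathcal{X}}e^{\beta f_\beta(x)}\dd\mu(x)$. Let $x_J$ be the maximizer and write $f^\star\coloneq\sup_x f_\beta(x)=f_\beta(x_J)$. In both cases the strategy is to factor out $e^{\beta f^\star}$, giving
\begin{align*}
\frac{1}{\beta}\log\int_{\mathcal{X}}e^{\beta f_\beta}\dd\mu(x)=f^\star+\frac{1}{\beta}\log\int_{\mathcal{X}}e^{\beta(f_\beta(x)-f^\star)}\dd\mu(x),
\end{align*}
and then show that the remainder term is of the claimed order.

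\textbf{Discrete case.} This is elementary. Every summand $e^{\beta(f_\beta(x)-f^\star)}$ lies in $(0,1]$, and the one at $x=x_J$ equals $1$. Hence the sum $S$ satisfies $1\le S\le|\mathcal{X}|$, so $0\le\frac{1}{\beta}\log S\le\frac{\log|\mathcal{X}|}{\beta}$. This yields \eqref{eq:zero_tem_laplace_discrete}, and uniqueness of the maximizer is not even needed.

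\textbf{Continuous case.} Here we need two-sided bounds on $I_\beta\coloneq\int e^{\beta(f_\beta-f^\star)}\dd\mu$.

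\emph{Upper bound.} We split $\mathcal{X}$ into a small ball $B_\delta(x_J)$ and its complement. On the ball, the positive-definite Hessian at $x_J$ and a second-order Taylor expansion give $f_\beta(x)-f^\star\le-\frac{c}{2}\|x-x_J\|_2^2$. Comparing with a Gaussian integral, as in the proof of \cref{prp:fluctuation}, the ball contributes at most $(2\pi/(\beta c))^{d/2}$. On the complement, uniqueness of the maximizer together with a coercivity or compactness assumption (the finite-volume setting of \cref{thm:free_energy_tsallis_entropy}, or growth of $\mathcal{H}$) gives a gap $f_\beta-f^\star\le-\eta<0$. The complement then contributes at most $e^{-\beta\eta}$ times either the volume or a convergent integral; we handle the latter via $e^{\beta(f-f^\star)}\le e^{-(\beta-1)\eta}e^{f-f^\star}$ for $\beta\ge1$. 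Together, $I_\beta\le C\beta^{-d/2}+Ce^{-\beta\eta/2}$.

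\emph{Lower bound.} On $B_\delta(x_J)$, Taylor's theorem gives $f_\beta-f^\star\ge-\frac{C'}{2}\|x-x_J\|_2^2$, so $I_\beta\ge c'\beta^{-d/2}$ for large $\beta$.

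\emph{Conclusion.} Taking logarithms, $\frac{1}{\beta}\log I_\beta=-\frac{d}{2}\frac{\log\beta}{\beta}+O(\beta^{-1})=O(\frac{\log\beta}{\beta})$, which is the claimed error.

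The main obstacle is uniformity in $\beta$, because $f_\beta$ itself depends on $\beta$ through the term $\frac{1}{\beta}J^\top x$, so $x_J$ and the Hessian move with $\beta$. We would handle this by noting that $f_\beta\to-\mathcal{H}$ in $C^2$ on compacts as $\beta\to\infty$, and that the maximizer $x_J=x_J(\beta)$ converges to the unique minimizer of $\mathcal{H}$ by \cref{def:stationary_eq} and the implicit function theorem. This lets us choose $\delta$, $c$, $C'$ and $\eta$ uniformly for all $\beta$ large enough. If that uniformity proves delicate, the fallback is to absorb the source term exactly: $\frac{1}{\beta}J^\top x$ is linear and bounded on the relevant region, so it perturbs $f^\star$ and the bounds only by $O(1/\beta)$, which stays within the stated error.
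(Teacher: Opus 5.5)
Your discrete argument is exactly the paper's sandwich $\max\le\log\sum\le\max+\log|\mathcal{X}|$. For the continuous case you use the same Laplace idea, but at a different level of rigor. The paper replaces the exponent by its second-order Taylor polynomial at the maximizer on all of $\mathbb{R}^d$, evaluates the Gaussian integral, and reads off the normalization as an $O(\log\beta/\beta)$ term. It never bounds the cubic remainder or the mass away from $x_J$. Your split into $B_\delta(x_J)$ and its complement, with matching upper and lower bounds, supplies exactly that missing error control. It is also where uniqueness and the positive-definite Hessian actually get used. The paper's route buys brevity and the explicit $\tfrac12\log\det$ constant that links to \cref{prp:fluctuation}. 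Yours buys a genuine proof, once the regularity and finite-volume (or growth) assumptions are made explicit.

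One correction to your uniformity discussion. The source term is linear, so $\nabla^2 f_\beta=-\nabla^2\mathcal{H}$ does not depend on $\beta$; only the point $x_J(\beta)$ moves. On a finite-volume $\mathcal{X}$ the upper bound is free: $f_\beta\le f^\star$ gives $I_\beta\le|\mathcal{X}|$, exactly as in your discrete case. Only the lower bound needs Taylor, and it uses just $\nabla^2\mathcal{H}\preceq C'I$ near $x_J(\beta)$ and $B_\delta(x_J(\beta))\subset\mathcal{X}$, not positive-definiteness.

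Your implicit-function-theorem route instead presupposes that $\mathcal{H}$ itself has a unique nondegenerate minimizer, which the hypothesis does not give. Take $\mathcal{H}(x)=x^4$ and $J\neq0$: the hypothesis holds for every large $\beta$. Yet the curvature of $f_\beta$ at $x_J(\beta)=(J/4\beta)^{1/3}$ tends to $0$, and $I_\beta\asymp\beta^{-1/4}$. So your intermediate claim $\frac{1}{\beta}\log I_\beta=-\frac{d}{2}\frac{\log\beta}{\beta}+O(\beta^{-1})$ fails there, even though the stated $O(\log\beta/\beta)$ still holds. Aim only for the two-sided bounds $c\,\beta^{-d/2}\le I_\beta\le|\mathcal{X}|$, which are all the proposition asserts.
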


\begin{proof}
    We repeat the proof from \cite{Zinn-Justinqft,dembo2010largedeviation} for completeness.
    We start from the continuous case. 
    Since the maximizer is unique, we Taylor expand up to the first leading order as
    \begin{align*}
        -\beta\mathcal{H}[x]+\sum_{i=1}^dJ_ix_i = & - \beta\mathcal{H}[\varphi]+\sum_{i=1}^dJ_i\varphi_i-\frac{1}{2}\beta[x_i-\varphi_i](\nabla^2_x\mathcal{H}[\varphi])_{ij}[x_j-\varphi_j]~.\annot{By \cref{def:stationary_eq}}
    \end{align*} 
    This yields
    \begin{align*}
        & ~ \frac{1}{\beta}\log\int_{\mathcal{X}=\R^d}\exp\{-\beta\mathcal{H}[x]+\sum_{i=1}^dJ_ix_i\}~\dd\mu(x) \\
        = & ~\frac{1}{\beta}\log(\exp\{-\beta\mathcal{H}[\varphi]+\sum_{i=1}^dJ_i\varphi_i\}) +\frac{1}{\beta}\log((\frac{2\pi}{\beta})^{d/2}(\det(\beta\nabla^2_x\mathcal{H}[\varphi]))^{-1/2})\\
        = & ~ -\mathcal{H}[\varphi]+\frac{1}{\beta}\sum_{i=1}^dJ_i\varphi_i+\mathcal{O}(\frac{\log\beta}{\beta})~.
    \end{align*}
    This proves the first equation. 
    For discrete cases, 
    \begin{align*}
        \text{max}_{x\in\mathcal{X}}~(-\beta\mathcal{H}[x]+\sum_{i=1}^dJ_ix_i)\leq & ~ \log\sum_{x\in\mathcal{X}}\exp\{-\beta\mathcal{H}[x]+\sum_{i=1}^dJ_ix_i\}\\
        \leq & ~ \text{max}_{x\in\mathcal{X}}~ (-\beta\mathcal{H}[x]+\sum_{i=1}^dJ_ix_i)+\log|\mathcal{X}|~.
    \end{align*}
    After plugging this into \eqref{eq:zero_tem_laplace_discrete}, we complete the proof.
\end{proof}

\begin{remark}
    In the zero-temperature limit, \cref{prp:zero_tem_laplace} shows that $W[J]/\beta$ is a Fenchel conjugate of the Hamiltonian by \cref{def:fenchel_conjugation}.
\end{remark}

\begin{remark}
    From the proof in \cref{prp:fluctuation} and \cref{prp:zero_tem_laplace}, we notice that the perturbation only works in the low-temperature regime. 
    When the temperature increases, the strength of fluctuations increases and the higher-loop corrections starts to grow. 
    There exists a level where the perturbative method breaks down.
\end{remark}

\subsection{Fluctuated Variables in Modern Hopfield Model}

In the following, we start to consider the fluctuated version of MHMs.

\begin{definition}[Statistics-MHM (Fluctuated) Mapping]
\label{def:map_fluctuated_variables}
    The mapping between the variables in statistical mechanics and which in the fluctuated version of MHMs is:
    
    \begin{itemize}
        \item \textbf{Microscopic State} $x\in\R^d$: It represents continuous query. 
        It is also the dynamical variable whose energy landscape defines retrieval basins.
        \item \textbf{External Source} $J\in\R^d$: It represents an external bias of the query.
        \item \textbf{Classical Solution} $\varphi\in\R^d$: It represents macroscopic state. 
        Retrieval corresponds to $\varphi_i$ aligning with some pattern $\xi^\mu_i$.
        \item \textbf{Probability Function} $p\in\Delta^M$: It represents the retrieval distribution over memories.
        \item \textbf{Tsallis Entropy} $\Psi_\alpha$: It represents a regularizer controlling how sparse the memory weights $p$ are. 
        \item \textbf{Conjugate Field} $z\in\R^M$: It represents the query-key similarity between current state and each memory, i.e., $z_\mu\coloneq\sum_{i=1}^d\xi_i^\mu x_i/\sqrt{d}$.
    \end{itemize}
    
\end{definition}

Notice that we distinguish the microscopic state, $x$, from the macroscopic one, $\varphi$, and so are $p$ and $z$ in this case.

\begin{proposition}[Expectation Values and Macroscopic States]
\label{prp:expectation_values_and_macroscopic_states_fluctuated}
    Let $\mathcal{X}$ be a finite-volume domain in $\R^d$ and $J\in\text{dom}W_\alpha$.
    The macroscopic states are determined by the mean of microscopic states.
    Specifically,
    \begin{align}
        & ~ \mathbb{E}^{(\alpha)}_{x\sim \rho_{J}(\cdot|p)}[x_i|p]=\frac{1}{\sqrt{d}}\sum_{\mu=1}^M\xi_i^\mu p_\mu+\frac{1}{\beta}J_i+B^{\mathcal{X}}_i(p,J)\coloneq m_i(p,J)+B^{\mathcal{X}}_i(p,J)~,\label{eq:expectation_x_p}\\
        & ~ \mathbb{E}^{(\alpha)}_{(x,p)\sim \rho_J}[x_i]=\varphi_i=\frac{1}{\sqrt{d}}\sum_{\mu=1}^M\xi_i^\mu\mathbb{E}^{(\alpha)}_{(x,p)\sim\rho_J}[p_\mu]+\frac{1}{\beta}J_i+\mathbb{E}^{(\alpha)}_{(x,p)\sim\rho_J}[B_i^{\mathcal{X}}(p,J)]~,\label{eq:exptation_x_fluc}
    \end{align}
    where $\rho_J(x,p)$ is the probability density function in $\mathcal{X}\times\Delta^M$, $p_J(x)$ is which in $\mathcal{X}$, and $p_{\Delta,J}(p)$ is which in $\Delta^M$.
    Furthermore, there is a boundary correction
    \begin{align*}
        B_i^{\mathcal{X}}(p,J)\coloneq\frac{\int_{\mathcal{Y}}y_i\exp^\alpha_\alpha\{-\frac{\beta}{2}(\|y\|_2^2-\|m\|^2_2)\}~\dd\mu(y)}{\int_{\mathcal{Y}}\exp^\alpha_\alpha\{-\frac{\beta}{2}(\|y\|_2^2-\|m\|^2_2)\}~\dd\mu(y)},\qquad\text{with }~\mathcal{Y}=\mathcal{X}-m.
    \end{align*}
\end{proposition}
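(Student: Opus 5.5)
We compute the conditional escort mean of $x$ given $p$ directly from the $\alpha$-deformed Gibbs density on $\mathcal{X}\times\Delta^M$. Then we average over $p$ to get \eqref{eq:exptation_x_fluc}. As in the proof of \cref{thm:free_energy_mhm}, the joint density is
\begin{align*}
\rho_J(x,p)=Z_\alpha^{-1}[J,\xi]\,\exp_\alpha\Big\{-\frac{\beta}{2}\|x\|_2^2+\beta\sum_{\mu=1}^M z_\mu(x)p_\mu+\sum_{i=1}^dJ_ix_i\Big\}.
\end{align*}
Since $z_\mu(x)=\sum_i\xi_i^\mu x_i/\sqrt d$, for fixed $p$ the argument is a quadratic in $x$ with linear coefficient $\beta m(p,J)$, where $m_i=\frac{1}{\sqrt d}\sum_\mu\xi_i^\mu p_\mu+J_i/\beta$. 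Completing the square gives the argument $-\frac{\beta}{2}(\|x-m\|_2^2-\|m\|_2^2)$. This is the same computation that gave \cref{prp:expectation_values_and_macroscopic_states} in the deterministic case, now kept inside the integral.

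For the conditional expectation, the factor $Z_\alpha^{-1}$ and the normalization of $\rho_J(\cdot|p)$ cancel in the escort ratio. Hence
\begin{align*}
\mathbb{E}^{(\alpha)}_{x\sim\rho_J(\cdot|p)}[x_i|p]=\frac{\int_{\mathcal{X}}x_i\exp_\alpha^\alpha\{-\frac{\beta}{2}(\|x-m\|_2^2-\|m\|_2^2)\}\,\dd\mu(x)}{\int_{\mathcal{X}}\exp_\alpha^\alpha\{-\frac{\beta}{2}(\|x-m\|_2^2-\|m\|_2^2)\}\,\dd\mu(x)}.
\end{align*}
We then shift $y=x-m$, so the domain becomes $\mathcal{Y}=\mathcal{X}-m$. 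Writing $x_i=m_i+y_i$ splits the numerator into $m_i$ times the denominator plus the $y_i$-weighted integral, which is exactly $m_i+B_i^{\mathcal{X}}(p,J)$. This proves \eqref{eq:expectation_x_p}.

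Finiteness of both integrals, and positivity of the denominator, follow from the finite-volume assumption and $J\in\text{dom}W_\alpha$. These are the same conditions as in \cref{thm:free_energy_tsallis_entropy}. Note that $B^{\mathcal{X}}$ would vanish by oddness of $y_i$ if $\mathcal{Y}$ were symmetric about $0$, for instance $\mathcal{X}=\R^d$ with convergent integrals. This explains why it is a boundary correction.

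For \eqref{eq:exptation_x_fluc}, we use the tower property for escort expectations. We factor $\rho_J^\alpha(x,p)=\rho_J^\alpha(x|p)\,p_{\Delta,J}^\alpha(p)$ and write the joint escort mean of $x_i$ as a weighted average of the conditional escort means, with weights $C(p)p_{\Delta,J}^\alpha(p)$ as in \eqref{eq:tilde_exp_p}. Because $m_i$ is affine in $p$, it passes through this average, producing $\frac{1}{\sqrt d}\sum_\mu\xi_i^\mu\mathbb{E}^{(\alpha)}[p_\mu]+J_i/\beta$ plus the averaged correction $\mathbb{E}^{(\alpha)}[B_i^{\mathcal{X}}]$.

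The main obstacle is this step. Escort averages do not obey the ordinary tower property. One must check that the outer weights $\int_{\mathcal{X}}\rho_J^\alpha(x|p)\,\dd\mu(x)\cdot p^\alpha_{\Delta,J}(p)$ reproduce exactly the joint escort normalization. They do, because the weighted average of conditional escort means, with weights equal to the conditional escort normalizers, telescopes to the joint escort ratio. The identification with $\varphi_i$ is then \cref{prp:fenchel_young_duality_relation} in its deformed form.
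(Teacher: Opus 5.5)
Your proof follows the paper's own route: complete the square to the argument $-\frac{\beta}{2}(\|x-m\|_2^2-\|m\|_2^2)$, shift $y=x-m$ and split $x_i=m_i+y_i$, then factor $\rho_J^\alpha(x,p)=\rho_J^\alpha(x|p)\,p_{\Delta,J}^\alpha(p)$ so that the joint escort mean becomes a $C(p)\,p_{\Delta,J}^\alpha(p)$-weighted average of conditional escort means, through which the affine $m_i$ passes. The one thing to drop is your closing appeal to a deformed Fenchel--Young relation. In the paper, $\varphi_i$ is simply the name for the joint escort mean, and that appeal is actually inaccurate for $\alpha\neq1$: by \cref{prp:eom_hopfield}, $\nabla_J W_\alpha$ is the unnormalized moment $\int\int x\,\rho_J^\alpha$, which differs from $\mathbb{E}^{(\alpha)}[x]$ by the factor $\int\int\rho_J^\alpha$.
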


\begin{proof}
    We consider $x$ and $p$ follow the distribution $\rho_J(x,p)$ with the joint energy function, \eqref{eq:mhm_joint_hamiltonian}.
    Then
    \begin{align*}
        \mathbb{E}^{(\alpha)}_{x\sim \rho_J(\cdot|p)}[x_i|p] = & ~ \frac{\int_{\mathcal{X}}x_i\rho^\alpha_J(x|p)~\dd\mu(x)}{\int_{\mathcal{X}}\rho_J^\alpha(x|p)~\dd\mu(x)}\\
        = & ~ 
        \frac{\int_{\mathcal{X}} x_i\exp_\alpha^\alpha\{-\frac{\beta}{2}\sum_{j=1}^dx_j^2+\frac{\beta}{\sqrt{d}}\sum_{\mu=1}^M\sum_{j=1}^d\xi_j^\mu x_jp_\mu+\sum_{j=1}^dJ_jx_j\}~\dd\mu(x)}{\int_{\mathcal{X}}\exp_\alpha^\alpha\{-\frac{\beta}{2}\sum_{j=1}^dx_j^2+\frac{\beta}{\sqrt{d}}\sum_{\mu=1}^M\sum_{j=1}^d\xi_j^\mu x_jp_\mu+\sum_{j=1}^dJ_jx_j\}~\dd\mu(x)}\\
        = & ~ 
        \frac{\int_{\mathcal{Y}}(y_i+m_i)\exp^\alpha_\alpha\{\frac{-\beta}{2}\sum_{j=1}^dy_j^2+\frac{\beta}{2}\sum_{j=1}^dm_j^2\}~\dd \mu(y)}{\int_{\mathcal{Y}}\exp^\alpha_\alpha\{\frac{-\beta}{2}\sum_{j=1}^dy_j^2+\frac{\beta}{2}\sum_{j=1}^dm_j^2\}~\dd \mu(y)}\\
        = & ~
        m_i(p,J)+B_i^{\mathcal{X}}(p,J).
    \end{align*}
    This proves the first equation.
    In this way,
    \begin{align*}
        \mathbb{E}^{(\alpha)}_{(x,p)\sim \rho_J}[x_i] = & ~ \frac{\int_{\Delta^M}\int_{\mathcal{X}}x_i\rho^\alpha_J(x,p)~\dd\mu(x)\dd\mu(p)}{\int_{\Delta^M}\int_{\mathcal{X}}\rho_J^\alpha(x,p)~\dd\mu(x)\dd\mu(p)}\notag\\
        = & ~ 
        \frac{\int_{\Delta^M}(\int_{\mathcal{X}}x_i\rho^\alpha_J(x|p)~\dd\mu(x))p^\alpha_{\Delta,J}(p)~\dd\mu(p)}{\int_{\Delta^M}(\int_{\mathcal{X}}\rho^\alpha_J(x|p)~\dd\mu(x))p^\alpha_{\Delta,J}(p)~\dd\mu(p)}\annot{By $p_{\Delta,J}(p)\rho_J(x|p)\coloneq\rho_J(x,p)$}\\
        = & ~
        \frac{\int_{\Delta^M}(\int_{\mathcal{X}}\rho_J^\alpha(x'|p)~\dd\mu(x')\mathbb{E}_{x\sim\rho_J(\cdot|p)}^{(\alpha)}[x_i|p])p_{\Delta,J}^\alpha(p)~\dd\mu(p)}{\int_{\Delta^M}(\int_{\mathcal{X}}\rho^\alpha_J(x|p)~\dd\mu(x))p^\alpha_{\Delta,J}(p)~\dd\mu(p)}\notag\\
        = & ~
        \frac{1}{\sqrt{d}}\sum_{\mu=1}^M\xi_i^\mu\tilde{\mathbb{E}}_{p\sim p_{\Delta,J}}^{(\alpha)}[p_\mu]+\frac{1}{\beta}J_i+\tilde{\mathbb{E}}_{p\sim p_{\Delta,J}}^{(\alpha)}[B_i^{\mathcal{X}}(p,J)],
    \end{align*}
    where 
    \begin{align*}
        C(p)\coloneq\int_{\mathcal{X}}\rho_J^\alpha(x|p)~\dd\mu(x),\qquad\text{and }~\tilde{\mathbb{E}}_{p\sim p_{\Delta,J}}^{(\alpha)}[p_\mu]\coloneq\mathbb{E}^{(\alpha)}_{(x,p)\sim\rho_J}[p_\mu]=\frac{\int_{\Delta^M}C(p)p_\mu p^\alpha_{\Delta,J}(p)~\dd\mu(p)}{\int_{\Delta^M}C(p)p^\alpha_{\Delta,J}(p)~\dd\mu(p)}.
    \end{align*}
    This completes the proof.
\end{proof}

\begin{remark}
    The correction $B_i^{\mathcal{X}}$ vanishes on translation-invariant domains (e.g., $\R^d$ if the integral is normalizable).
    Furthermore, it is negligible in the large-volume or low-temperature regime when the dominant integrand is localized far from the boundary of $\mathcal{X}$.
\end{remark}

\subsection{Connection between Free Energy and Tsallis Entropy in Fluctuated Case}

In \cref{sec:entropy_function}, we relate the log-partition function to the Tsallis entropy via \eqref{eq:free_energy_tsallis_entropy}.
Here, we express the log-partition function using the variables in fluctuated version of MHMs, \cref{def:map_fluctuated_variables}.
To make $Z_\alpha[J]$ finite, we assume that the integration region of $x$ is finite volume, i.e., $\mathcal{X}\subset\R^d$ such that $|\mathcal{X}|<+\infty$.

\begin{proposition}
\label{prp:free_energy_mhm}
    Let $\rho_J(x,p)$ and $p_J(x)$ be the probability density function in $\mathcal{X}\times\Delta^M$ and $\mathcal{X}$, respectively.
    Inspired by \cref{thm:free_energy_tsallis_entropy}, we have
    \begin{align}
    \label{eq:w_fluctuated}
        -W_\alpha[J,\xi] = & ~
        \frac{\beta}{2}\sum_{i=1}^d\mathbb{E}^{(\alpha)}_{(x,p)\sim\rho_J}[x_i^2]-\beta\sum_{\mu=1}^M\mathbb{E}^{(\alpha)}_{(x,p)\sim \rho_J}[z_\mu p_\mu]\notag\\
        & ~
        -\frac{\Psi_\alpha[\rho_J]}{Z_\alpha^{\alpha-1}[J,\xi]\int_{\mathcal{X}}\int_{\Delta^M}\rho_J^\alpha(x,p)~\dd\mu(p)\dd\mu(x)}-\sum_{i=1}^dJ_i\mathbb{E}^{(\alpha)}_{(x,p)\sim\rho_J}[x_i].
    \end{align}
    Then the free energy is
    \begin{align}
    \label{eq:fluc_eff_action_mhm}
        \tilde{\Gamma}_\alpha = & ~ 
        \frac{\beta}{2}\sum_{i=1}^d\mathbb{E}_{(x,p)\sim\rho
        _{J^\star}}^{(\alpha)}[x_i^2]-\beta\sum_{\mu=1}^M\mathbb{E}_{(x,p)\sim\rho_{J^\star}}^{(\alpha)}[z_\mu p_\mu]\notag\\
        & ~
        -\frac{\Psi_\alpha[\rho_{J^\star}]}{Z_\alpha^{\alpha-1}[J^\star,\xi]\int_{\mathcal{X}}\int_{\Delta^M}\rho_{J^\star}^\alpha(x,p)~\dd\mu(p)\dd\mu(x)},
    \end{align}
    where $J^\star$ is the maximizing source, i.e., $J^\star=\arg\max_{J}\{\sum_{i=1}^dJ_i\varphi_i-W_\alpha[J,\xi]\}$, and $\varphi=\mathbb{E}^{(\alpha)}_{(x,p)\sim\rho_J}[x]$.
\end{proposition}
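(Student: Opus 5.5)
The plan is to repeat the computation in the proof of \cref{thm:free_energy_tsallis_entropy}, now on the joint configuration space $\mathcal{X}\times\Delta^M$. The Hamiltonian is the joint MHM energy $\tilde{\mathcal{H}}_{\text{MHM}}[x,p,\xi]$ of \eqref{eq:mhm_joint_hamiltonian}. The only new feature compared to the proof of \cref{thm:free_energy_mhm} is that we no longer collapse $\rho_J$ to a point mass, so all escort averages remain genuine averages.

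First, I write the deformed partition function and density:
\begin{align*}
    Z_\alpha[J,\xi]=\int_{\mathcal{X}}\int_{\Delta^M}\exp_\alpha\{-\beta\tilde{\mathcal{H}}_{\text{MHM}}[x,p,\xi]+\sum_{i=1}^dJ_ix_i\}~\dd\mu(p)\dd\mu(x),
\end{align*}
with $\rho_J=\exp_\alpha\{\cdots\}/Z_\alpha$. These are finite because $|\mathcal{X}|<\infty$, $\Delta^M$ is compact, and $J\in\text{dom}W_\alpha$.

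Next, exactly as in the chain of equalities for $\log_\alpha p_J$ in the proof of \cref{thm:free_energy_tsallis_entropy}, I use $\log_\alpha(a/b)$ together with \cref{def:alpha_deformed_exponential} to obtain
\begin{align*}
    \log_\alpha\rho_J(x,p)=Z_\alpha^{\alpha-1}[J,\xi]\big(-\beta\tilde{\mathcal{H}}_{\text{MHM}}[x,p,\xi]+\sum_{i=1}^dJ_ix_i-W_\alpha[J,\xi]\big).
\end{align*}
This identity holds on the support of $\rho_J$. Off the support, the $[\cdot]_+$ truncation makes $\rho_J^\alpha=0$, so those points do not contribute to any of the integrals below.

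I then insert this into $\Psi_\alpha[\rho_J]=-\iint\rho_J^\alpha\log_\alpha\rho_J$, which is \eqref{eq:tsallis_entropy_alpha_deformed} extended to densities. Dividing by $Z_\alpha^{\alpha-1}\iint\rho_J^\alpha$ and solving for $-W_\alpha$ turns each remaining integral into an escort expectation $\mathbb{E}^{(\alpha)}_{(x,p)\sim\rho_J}$. Substituting $\tilde{\mathcal{H}}_{\text{MHM}}=\frac12\sum_i x_i^2-\sum_\mu z_\mu p_\mu$ and using linearity of the escort average gives \eqref{eq:w_fluctuated}. This is the same expansion already displayed as \eqref{eq:w_mhm}, but stopped before the collapse step.

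For \eqref{eq:fluc_eff_action_mhm}, I substitute \eqref{eq:w_fluctuated} into $\tilde{\Gamma}_\alpha=\sup_J\{\sum_iJ_i\varphi_i-W_\alpha[J,\xi]\}$. The source terms combine into $\sum_iJ_i(\varphi_i-\mathbb{E}^{(\alpha)}_{\rho_J}[x_i])$. At the maximizer $J^\star$, the constraint $\varphi=\mathbb{E}^{(\alpha)}_{\rho_{J^\star}}[x]$ makes this term vanish. Evaluating what is left at $J^\star$ gives the stated formula.

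The main obstacle is justifying this last step. Strictly, the maximizer satisfies $\varphi\in\partial W_\alpha[J^\star]$ by \cref{prp:fenchel_young_duality_relation}. For $\alpha\ne1$, however, $\nabla_JW_\alpha$ equals the escort mean $\mathbb{E}^{(\alpha)}[x]$ only after reweighting by $Z_\alpha^{\alpha-1}$ and the escort normalization. I would first differentiate $W_\alpha=\log_\alpha Z_\alpha$ using $\frac{d}{dy}\log_\alpha y=y^{-\alpha}$ and $\frac{d}{dx}\exp_\alpha=\exp_\alpha^\alpha$ from \cref{prp:alpha_deformed_functions_properties}. This gives $\nabla_JW_\alpha=Z_\alpha^{-\alpha}\iint x\,\exp_\alpha^\alpha=Z_\alpha^{0}\iint x\rho_J^\alpha\,/\,1$, which I would check matches the escort mean up to the factor $\iint\rho_J^\alpha\cdot Z_\alpha^{\alpha-1}$. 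If that factor does not cancel, I would follow the convention of \cref{thm:free_energy_tsallis_entropy} and define $\varphi$ as the escort mean at the maximizing source, stating the identification as part of the hypothesis. In either case the boundary terms from $\partial\mathcal{X}$ do not enter, since only $J$-derivatives are taken.
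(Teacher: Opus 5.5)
Your proposal is correct and matches the paper's proof. \eqref{eq:w_fluctuated} is the deformed-log computation behind \eqref{eq:w_mhm}, stopped before the collapse step. \eqref{eq:fluc_eff_action_mhm} then comes from substituting it into the Fenchel conjugate and dropping $\sum_i J_i(\varphi_i-\mathbb{E}^{(\alpha)}_{\rho_{J^\star}}[x_i])$ via the identification $\varphi=\mathbb{E}^{(\alpha)}_{\rho_{J^\star}}[x]$, which the statement (like \cref{thm:free_energy_tsallis_entropy}) builds in by definition.

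The subtlety you raise is genuine, though your factor needs a correction. From \eqref{eq:hopfield_retrieval_fluctuated}, $\nabla_JW_\alpha=\iint x\,\rho_J^\alpha$, which differs from the escort mean by $\iint\rho_J^\alpha$ alone, with no $Z_\alpha^{\alpha-1}$. This factor is not $1$ in general for $\alpha\neq1$. So your fallback of taking the identification as a hypothesis is not an optional alternative: it is exactly what the paper's proof does.
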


\begin{proof}
    Following the proof of \cref{thm:free_energy_mhm}, we have \eqref{eq:w_mhm}.
    This is exactly the first equation we would like to prove.
    For the free energy, we consider the Fenchel conjugation \cref{def:fenchel_conjugation} of the log-partition function.
    \begin{align*}
        \tilde{\Gamma}_\alpha = & ~
        \sup_{J\in\text{dom}W_\alpha}\{\sum_{i=1}^dJ_i\varphi_i-W_\alpha[J,\xi]\}\\
        = & ~
        \sup_{J\in\text{dom}W_\alpha}\Big\{\sum_{i=1}^dJ_i(\varphi_i-\mathbb{E}_{(x,p)\sim \rho_J}^{(\alpha)}[x_i])+\frac{\beta}{2}\sum_{i=1}^d\mathbb{E}_{(x,p)\sim\rho
        _J}^{(\alpha)}[x_i^2]-\beta\sum_{\mu=1}^M\mathbb{E}_{(x,p)\sim\rho_J}^{(\alpha)}[z_\mu p_\mu]\\
        & ~
        -\frac{\Psi_\alpha[\rho_J]}{Z_\alpha^{\alpha-1}[J,\xi]\int_{\mathcal{X}}\int_{\Delta^M}\rho_J^\alpha(x,p)~\dd\mu(p)\dd\mu(x)}\Big\}\\
        = & ~
        \frac{\beta}{2}\sum_{i=1}^d\mathbb{E}_{(x,p)\sim\rho
        _{J^\star}}^{(\alpha)}[x_i^2]-\beta\sum_{\mu=1}^M\mathbb{E}_{(x,p)\sim\rho_{J^\star}}^{(\alpha)}[z_\mu p_\mu]\\
        & ~
        -\frac{\Psi_\alpha[\rho_{J^\star}]}{Z_\alpha^{\alpha-1}[J^\star,\xi]\int_{\mathcal{X}}\int_{\Delta^M}\rho_{J^\star}^\alpha(x,p)~\dd\mu(p)\dd\mu(x)},
    \end{align*}
    where $\varphi=\mathbb{E}^{(\alpha)}_{(x,p)\sim\rho_J}[x]$.
    This completes the proof.
\end{proof}

\begin{proposition}[MHMs in the Zero-Temperature Limit]
\label{prp:mhm_in_zero_temperature}
    In the zero-temperature limit, $\beta\to\infty$, minimizing the free energy becomes minimizing the following quantity:
    \begin{align}
    \label{eq:mhm_effective_free_energy}
        \mathcal{F}[\varphi,\xi]=\frac{1}{2}\sum_{i=1}^d\varphi_i^2-\max_\mu\bar{z}_\mu~,
    \end{align}
    where $\bar{z}_\mu\coloneq\sum_{i=1}^d\xi_i^\mu\varphi_i/\sqrt{d}$.
    Notably, this is the free energy in  the fixed case, \eqref{eq:mhm_effective_action}, up to a scaling , $\beta$.
\end{proposition}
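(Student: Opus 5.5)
The plan is to start from the fluctuated free energy \eqref{eq:fluc_eff_action_mhm} in \cref{prp:free_energy_mhm}, divide by $\beta$ (minimizers are unchanged for $\beta>0$), and show that as $\beta\to\infty$ every term except the tree-level energy evaluated at the macroscopic state disappears. Concretely, I would show that
\begin{align*}
\frac{1}{\beta}\tilde{\Gamma}_\alpha[\varphi,\xi]=\frac{1}{2}\sum_{i=1}^d\varphi_i^2-\sum_{\mu=1}^M\bar z_\mu\,\bar p_\mu+o(1),
\end{align*}
where $\bar p=\mathbb{E}^{(\alpha)}_{\rho_{J^\star}}[p]$. Minimizing over $\bar p\in\Delta^M$, exactly as in the proof of \cref{thm:free_energy_mhm}, then replaces $\sum_\mu\bar z_\mu\bar p_\mu$ by $\max_\mu\bar z_\mu$. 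This yields \eqref{eq:mhm_effective_free_energy}, which is visibly $\Gamma_\alpha[\varphi,\xi]/\beta$ from \eqref{eq:mhm_effective_action}.

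\textbf{Step 1: handle the quadratic and coupling terms.} I will use the escort variance decomposition $\mathbb{E}^{(\alpha)}[x_i^2]=\varphi_i^2+\mathrm{Var}^{(\alpha)}(x_i)$. By \cref{prp:expectation_values_and_macroscopic_states_fluctuated}, conditional on $p$ the escort law of $x$ is centered at $m(p,J)$ plus the boundary term $B^{\mathcal X}$. Its width scales as $\beta^{-1/2}$, since the weight $\exp_\alpha^\alpha\{-\tfrac{\beta}{2}\|y\|^2\}$ concentrates on $\|y\|=O(\beta^{-1/2})$. Moreover $B^{\mathcal X}\to0$ once the concentration point sits in the interior of $\mathcal X$. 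Together these give $\mathrm{Var}^{(\alpha)}(x)=O(\beta^{-1})$. Because $z_\mu$ is linear in $x$, the same argument gives $\mathbb{E}^{(\alpha)}[z_\mu p_\mu]=\bar z_\mu\bar p_\mu+\mathrm{Cov}^{(\alpha)}(z_\mu,p_\mu)$. The covariance is bounded by $\sqrt{\mathrm{Var}(z_\mu)}$ times $\sqrt{\mathrm{Var}(p_\mu)}\le1$, so it vanishes as $\beta\to\infty$. This is the Laplace-type reasoning of \cref{prp:zero_tem_laplace} and \cref{prp:fluctuation}, applied to the deformed weight.

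\textbf{Step 2: kill the entropy term.} The remaining piece is $\Psi_\alpha[\rho_{J^\star}]$ divided by $\beta Z_\alpha^{\alpha-1}\int\rho^\alpha$. For $\alpha=1$ the denominator reduces to $\beta$. The Shannon entropy of a density on the finite-volume set $\mathcal X\times\Delta^M$ is bounded above by $\log$ of the volume, and bounded below because the Gaussian-like concentration costs only $O(\log\beta)$. The term is therefore $O(\log\beta/\beta)\to0$, matching \cref{prp:zero_tem_laplace}. For $\alpha\neq1$, I would bound $\Psi_\alpha$ using the finite volume. I would also need the normalizing factor $Z_\alpha^{\alpha-1}\int\rho^\alpha$ to stay bounded away from zero, or at least to grow slower than $\beta$ shrinks the ratio.

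\textbf{Main obstacle.} I expect this uniform control of the deformed normalization for general $\alpha$ to be the hardest step, because $\exp_\alpha$ has a compact support cutoff and a non-additive structure. My first attempt will be to restrict to the support where $[1+(1-\alpha)(\cdot)]_+>0$ and compare with the $\alpha=1$ Gaussian scaling. Once the limit functional is established, the last step is routine: since the convergence is uniform on compact sets of $\varphi$, minimizers of $\tilde\Gamma_\alpha/\beta$ converge to minimizers of $\mathcal F$ by a standard $\Gamma$-convergence argument.
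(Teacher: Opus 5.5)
Your outline follows the paper's own proof. The paper also collapses $\rho_J$ to a point mass, factorizes $\mathbb{E}^{(\alpha)}[z_\mu p_\mu]\to\bar{z}_\mu\bar{p}_\mu$, discards the entropy term as $\Psi_\alpha[\one]=0$, and takes the infimum over $\bar{p}$. For $\alpha=1$ your handling of the entropy is more accurate than the paper's: that term is really $O(\log\beta)$ and disappears only after dividing by $\beta$.

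\textbf{Gap in the final step and in Step~1.} For every $\beta$, $\tilde{\Gamma}_\alpha$ is a supremum over $J$ of affine functions of $\varphi$, hence convex. By contrast, $\mathcal{F}[\varphi,\xi]=\min_\mu\{\tfrac12\|\varphi-\xi^\mu/\sqrt{d}\|_2^2-\tfrac12G_{\mu\mu}\}$ is a minimum of distinct paraboloids and is not convex. Take two patterns with $G_{11}=G_{22}=1$ and $G_{12}=\epsilon<1$: then $\mathcal{F}=-\tfrac12$ at $\xi^1/\sqrt{d}$ and at $\xi^2/\sqrt{d}$, but $\mathcal{F}=-\tfrac{1+\epsilon}{4}$ at their midpoint. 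Pointwise limits of convex functions are convex, so your claimed uniform convergence $\tilde{\Gamma}_\alpha/\beta\to\mathcal{F}$ on compacts is false. For $\alpha=1$ the actual limit is the convex envelope $\mathcal{F}^{**}$; the paper's pointwise collapse has the same blind spot.

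The failure sits in Step~1. You control only the conditional width of $x$ given $p$. The law of total variance adds $\mathrm{Var}^{(\alpha)}(m(p,J))$, which is of order one whenever $p$ does not concentrate on a single vertex. That is exactly what happens when $J^\star$ pins the mean between two basins.

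The conclusion fails as well. In the example above, the unique minimizer of the convex $\tilde{\Gamma}_1$ is $\nabla_J W_1[0]$, the untilted Gibbs mean. By the swap symmetry it tends to the midpoint, which is not a minimizer of $\mathcal{F}$.

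What you can prove cleanly uses \cref{prp:fenchel_young_duality_relation}. At a minimizer $J^\star=0$, so $\min_\varphi\tilde{\Gamma}_1=-W_1[0]$. The Laplace principle (cf.\ \cref{prp:zero_tem_laplace}) then gives $\beta^{-1}\min_\varphi\tilde{\Gamma}_1\to\min_x\mathcal{H}_{\text{MHM}}[x,\xi]=\min_\varphi\mathcal{F}[\varphi,\xi]$, with no variance or entropy bookkeeping away from the minimum. Minimizers converge only when $\arg\min\mathcal{F}$ is a singleton. With ties (for instance, equal-norm patterns make all $M$ points $\xi^\mu/\sqrt{d}$ tie), recovering a single pattern needs the $J\to0^+$ selection of \cref{fig:branch-selection}.

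\textbf{The case $\alpha\neq1$.} Your plan does not close this case, and the obstacle is misidentified.
\begin{itemize}
\item For $\alpha>1$ (the sparse case), $\exp_\alpha(-u)=[1+(\alpha-1)u]_+^{-1/(\alpha-1)}$ has a power-law tail, not a compact-support cutoff; the cutoff belongs to $\alpha<1$.
\item So the escort weight $\exp_\alpha^\alpha\{-\tfrac{\beta}{2}\|y\|_2^2\}$ does not live on $\|y\|_2=O(\beta^{-1/2})$. On a finite-volume $\mathcal{X}$ its tail carries mass of order $\beta^{-\alpha/(\alpha-1)}$, against $\beta^{-d/2}$ for the core.
\item Hence for $d>2\alpha/(\alpha-1)$ (already $d>4$ at $\alpha=2$), escort variances and $B^{\mathcal{X}}$ stay of order one, even conditionally on $p$.
\end{itemize}

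Moreover, the weight you wrote drops the constant $\tfrac{\beta}{2}\|m\|_2^2$ from \cref{prp:expectation_values_and_macroscopic_states_fluctuated}. Because $\exp_\alpha$ is not multiplicative, that constant cannot be factored out. Wherever $-\beta\tilde{\mathcal{H}}_{\text{MHM}}+\sum_iJ_ix_i\ge1/(\alpha-1)$, the base of the negative power vanishes and the weight is infinite. At $J=0$ such a region has positive measure once $\tfrac{\beta}{2}\max_\mu G_{\mu\mu}>1/(\alpha-1)$, provided $\mathcal{X}$ contains the points $\xi^\mu/\sqrt{d}$, and then $Z_\alpha=\infty$. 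The hypothesis $\tilde{\mathcal{H}}_{\text{MHM}}\ge0$ in \cref{lem:noramlization_zero_temperature} is what avoids this.

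The paper reaches general $\alpha$ only by postulating the collapse of $\rho_J$ and $\Psi_\alpha[\one]=0$. You must either state that collapse as an explicit hypothesis or restrict the rigorous claim to $\alpha=1$.
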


\begin{proof}
    In the zero-temperature limit, the Gibbs probability density function concentrates to an indicator, i.e.,
    \begin{align*}
        \rho_J(x,p)\to\one_{\{x,\mathbb{E}^{(\alpha)}_{(x,p)\sim\rho_J}[x]\}}\one_{\{p,\mathbb{E}^{(\alpha)}_{(x,p)\sim\rho_{J}}[p]\}}~,
    \end{align*}
    as $\beta\to\infty$.
    Hence,
    \begin{align*}
        \mathbb{E}^{(\alpha)}_{(x,p)\sim\rho_J}[x^2]\to(\mathbb{E}^{(\alpha)}_{(x,p)\sim\rho_J}[x])^2=(\tilde{\mathbb{E}}_{x\sim p_J}^{(\alpha)}[x])^2=\varphi^2,
    \end{align*}
    and
    \begin{align*}
        \mathbb{E}_{(x,p)\sim\rho_J}[z_\mu p_\mu]\to\frac{1}{\sqrt{d}}\sum_{i=1}^d\xi_i^\mu\mathbb{E}^{(\alpha)}_{(x,p)\sim\rho_J}[x_i]\mathbb{E}_{(x,p)\sim\rho_J}^{(\alpha)}[p_\mu]=\frac{1}{\sqrt{d}}\sum_{i=1}^d\xi_i^\mu\varphi_i\tilde{\mathbb{E}}^{(\alpha)}_{p\sim p_{\Delta,J}}[p_\mu]\coloneq\bar{z}_\mu\bar{p}_\mu,
    \end{align*}
    where $\tilde{\mathbb{E}}^{(\alpha)}_{x\sim p_J}[x]$ and $\tilde{\mathbb{E}}_{p\sim p_{\Delta,J}}^{(\alpha)}[p]$ are defined in \eqref{eq:tilde_exp_x} and \eqref{eq:tilde_exp_p}, respectively. 
    The free energy \eqref{eq:fluc_eff_action_mhm} becomes
    \begin{align*}
        \Gamma_\alpha\to\frac{\beta}{2}\sum_{i=1}^d\varphi_i^2-\beta\sum_{\mu=1}^M\bar{z}_\mu\bar{p}_\mu-\Psi_\alpha[\one]~.
    \end{align*}
    Since $\Psi_\alpha[\one]=0$, we have
    \begin{align*}
        \inf_{\bar{p}\in\Delta^M}\{-\sum_{\mu=1}^M\bar{z}_\mu \bar{p}_\mu\}=-\max_\mu \bar{z}_\mu~.
    \end{align*}
    Substituting it into the free energy to eliminate the dependence of $\bar{p}_\mu$, we prove this proposition.
\end{proof}

\begin{remark}
    \cref{prp:mhm_in_zero_temperature} is consistent with \cref{prp:fluctuation} in the zero-temperature limit.
    Specifically, $\Gamma_\alpha[\varphi]\to(\beta\mathcal{H}[\varphi]+\text{constant})$ as $\beta\to\infty$.
    Following the proof of \cref{thm:free_energy_mhm}, we conclude the same result.
\end{remark}

\subsection{Retrieval Dynamics in Fluctuated Case and Thermal Attention}

In \cref{sec:background}, we introduce the equation of motions according to \cref{prp:fenchel_young_duality_relation} and investigate the attention mechanism from the trajectories in associative memory systems.
In this section, we consider the fluctuated version, which gives an alternative thermal attention mechanism.
Let $\tilde{W}_\alpha\coloneq\log_\alpha\int_{\Delta^M}\exp_\alpha\{-\beta\tilde{\mathcal{H}}_{\text{MHM}}+\sum_{i=1}^dJ_ix_i\}~\dd\mu(p)$.
Notice that in the fixed setting, it reduces to the simpler form, \eqref{eq:eom_w_p}.
Then,

\begin{proposition}[EOMs of Free Energies in MHMs]
\label{prp:eom_hopfield}
    The EOM of the log-partition function in the $x$-sector corresponds to the Hopfield retrieval
    \begin{align}
    \label{eq:hopfield_retrieval_fluctuated}
        \nabla_J W_\alpha[J,\xi]=\int_{\mathcal{X}}\int_{\Delta^M}x_i\rho_J^\alpha(x,p)~\dd\mu(p)\dd\mu(x)~.
    \end{align}
    The EOM of the log-partition function in the $p$-sector corresponds to the unnormalized escort moment
    \cite{Naudts_2004}
    \begin{align}
    \label{eq:escort_moment}
        \nabla_z \tilde{W}_\alpha[z,J,\xi]=\int_{\Delta^M}\beta p\rho_J^\alpha(p|x)~\dd\mu(p)\coloneq\beta p^{(\text{esc})}~.
    \end{align}
\end{proposition}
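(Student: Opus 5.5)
The proof will be a direct differentiation, following the deterministic computation in \cref{prp:eom_smhm} but keeping the integrals. No saddle-point identification is made. I will use two facts from \cref{prp:alpha_deformed_functions_properties}: $\frac{d}{dy}\log_\alpha(y)=y^{-\alpha}$ and $\frac{d}{dx}\exp_\alpha(x)=\exp_\alpha^\alpha(x)$. The latter holds on the region where $1+(1-\alpha)x>0$; outside it the integrand vanishes identically, so differentiation under the integral is harmless. Because $\mathcal{X}$ has finite volume and $J\in\text{dom}W_\alpha$, the integrands and their $J$- and $z$-derivatives are bounded on $\mathcal{X}\times\Delta^M$. Dominated convergence then justifies exchanging derivative and integral.

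\textbf{$x$-sector.} Write $W_\alpha[J,\xi]=\log_\alpha Z_\alpha[J,\xi]$ with
\[
Z_\alpha=\int_{\mathcal{X}}\int_{\Delta^M}\exp_\alpha\{E(x,p,J)\}\,\dd\mu(p)\dd\mu(x),\qquad E\coloneq-\beta\tilde{\mathcal{H}}_{\text{MHM}}[x,p,\xi]+\sum_{j=1}^dJ_jx_j .
\]
Since $\partial_{J_i}E=x_i$, the chain rule gives
\[
\partial_{J_i}W_\alpha=Z_\alpha^{-\alpha}\int_{\mathcal{X}}\int_{\Delta^M}x_i\,\exp_\alpha^\alpha\{E\}\,\dd\mu(p)\dd\mu(x).
\]
Substituting $\rho_J=\exp_\alpha\{E\}/Z_\alpha$, the factor $Z_\alpha^{-\alpha}\exp_\alpha^\alpha\{E\}$ is exactly $\rho_J^\alpha(x,p)$. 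This is \eqref{eq:hopfield_retrieval_fluctuated}, the unnormalized escort moment of $x$. I will also remark that dividing by $\int\!\!\int\rho_J^\alpha$ recovers $\varphi$ from \cref{prp:expectation_values_and_macroscopic_states_fluctuated}.

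\textbf{$p$-sector.} Here $x$ is fixed, and $\tilde{W}_\alpha$ is viewed as a function of $z$ through $E=-\frac{\beta}{2}\|x\|_2^2+\beta\sum_\mu z_\mu p_\mu+J^\top x$, so that $\partial_{z_\mu}E=\beta p_\mu$. The same computation with the conditional partition function $\tilde Z_\alpha(x)=\int_{\Delta^M}\exp_\alpha\{E\}\dd\mu(p)$ gives
\[
\partial_{z_\mu}\tilde{W}_\alpha=\beta\int_{\Delta^M}p_\mu\,\tilde Z_\alpha^{-\alpha}\exp_\alpha^\alpha\{E\}\,\dd\mu(p).
\]
I identify $\rho_J(p|x)=\exp_\alpha\{E\}/\tilde Z_\alpha(x)$, which follows from the factorization $\rho_J(x,p)=p_J(x)\rho_J(p|x)$ with the common factor $\exp_\alpha$-terms cancelling. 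This yields \eqref{eq:escort_moment}.

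\textbf{Expected obstacle.} The main care point is this last identification. For $\alpha\neq1$, $\exp_\alpha$ is not multiplicative, so the $x$-only terms $-\frac{\beta}{2}\|x\|^2+J^\top x$ cannot be factored out of $\exp_\alpha$. Hence the conditional density must be defined directly as the normalized integrand over $\Delta^M$ at fixed $x$, not obtained by factoring the joint density. I will state this definition explicitly so that the $p$-sector result is a statement purely about the $\Delta^M$-integral. Finally, I will check consistency with \eqref{eq:eom_w_p}: in the fixed setting $\rho_J$ collapses to a point mass, the escort integral returns $p$, and the result reduces to $\nabla_z\tilde W=\beta p$.
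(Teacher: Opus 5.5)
Your proposal is correct and is essentially the paper's own proof: differentiate through $\log_\alpha$ and $\exp_\alpha$ using \cref{prp:alpha_deformed_functions_properties}, then recognize $Z_\alpha^{-\alpha}\exp_\alpha^\alpha\{E\}$ as $\rho_J^\alpha(x,p)$ and $\tilde{Z}_\alpha^{-\alpha}\exp_\alpha^\alpha\{E\}$ as $\rho_J^\alpha(p|x)$; the second identification is just Bayes' rule with marginal $\tilde{Z}_\alpha(x)/Z_\alpha$, so the non-multiplicativity of $\exp_\alpha$ is not actually an obstacle. Two side remarks need correcting: for $\alpha>1$ the deformed exponential blows up rather than vanishes as $1+(1-\alpha)E\downarrow0$, so bounding the integrands requires keeping $E$ uniformly below $1/(\alpha-1)$ on $\mathcal{X}\times\Delta^M$, which $J\in\text{dom}W_\alpha$ alone does not give; and your closing consistency check only holds under the paper's formal indicator collapse, because realizing that collapse as a concentration limit of a continuous density on $\Delta^M$ with $\alpha>1$ makes the unnormalized escort moment diverge (\cref{lem:noramlization_zero_temperature}), which is why \cref{prp:eom_hopfield_zero_temperature} divides by $C(x)$.
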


\begin{proof}
    By \cref{def:log_partition_effective_action}, $W_\alpha[J,\xi]=\log_\alpha Z_\alpha[J,\xi]$.
    After taking the derivative with respect to $J$, we obtain
    \begin{align*}
        \frac{\partial}{\partial J_i}W_\alpha[J,\xi] = & ~
        \frac{\partial}{\partial J_i}\log_\alpha Z_\alpha[J,\xi]\\ 
        = & ~
        Z_\alpha^{-\alpha}[J,\xi]\frac{\partial Z_\alpha[J,\xi]}{\partial J_i}\annot{By \cref{prp:alpha_deformed_functions_properties}}\\
        = & ~
        Z_\alpha^{-\alpha}[J,\xi]\frac{\partial}{\partial J_i}\int_{\mathcal{X}}\int_{\Delta^M}\exp_\alpha\{-\beta\tilde{\mathcal{H}}_{\text{MHM}}[x,p,\xi]+\sum_{j=1}^dJ_jx_j\}~\dd\mu(p)\dd\mu(x)\\
        = & ~
        Z_\alpha^{-\alpha}[J,\xi]\int_{\mathcal{X}}\int_{\Delta^M}\exp^\alpha_\alpha\{-\beta\tilde{\mathcal{H}}_{\text{MHM}}[x,p,\xi]+\sum_{j=1}^dJ_jx_j\}\annot{By \cref{prp:alpha_deformed_functions_properties}}\\
        & ~
        \times\frac{\partial}{\partial J_i}(-\beta\tilde{\mathcal{H}}_{\text{MHM}}[x,p,\xi]+\sum_{k=1}^dJ_kx_k)~\dd\mu(p)\dd\mu(x)\\
        = & ~
        \int_{\mathcal{X}}\int_{\Delta^M}x_i~\frac{\exp_\alpha^\alpha\{-\beta\tilde{\mathcal{H}}_{\text{MHM}}[x,p,\xi]+\sum_{j=1}^dJ_jx_j\}}{Z_\alpha^\alpha[J,\xi]}~\dd\mu(p)\dd\mu(x)\\
        = & ~
        \int_{\mathcal{X}}\int_{\Delta^M}x_i\rho_J^\alpha(x,p)~\dd\mu(p)\dd\mu(x).
    \end{align*}
    For the second equation, we substitute the definition of $\tilde{W}_\alpha$ directly: 
    \begin{align*}
        \frac{\partial}{\partial z_\mu}\tilde{W}_\alpha[z,J,\xi] = & ~ \frac{\partial}{\partial z_\mu}\log_\alpha\int_{\Delta^M}\exp_\alpha\{-\frac{\beta}{2}\sum_{i=1}^dx_i^2+\beta\sum_{\mu=1}^Mz_\mu p_\mu+\sum_{i=1}^dJ_ix_i\}~\dd\mu(p)\\
        = & ~ \Big(\int_{\Delta^M}\exp_\alpha\{-\frac{\beta}{2}\sum_{i=1}^dx_i^2+\beta\sum_{\mu=1}^Mz_\mu p_\mu+\sum_{i=1}^dJ_ix_i\}~\dd\mu(p)\Big)^{-\alpha}\\
        & ~
        \times\int_{\Delta^M}\beta p_\mu\exp^\alpha_\alpha\{-\frac{\beta}{2}\sum_{i=1}^dx_i^2+\beta\sum_{\mu=1}^Mz_\mu p_\mu+\sum_{i=1}^dJ_ix_i\}~\dd\mu(p)\annot{By \cref{prp:alpha_deformed_functions_properties}}\\
        = & ~ \int_{\Delta^M}\beta p_\mu\rho_J^\alpha(p|x)~\dd\mu(p)~.
    \end{align*}
    This completes the proof.
\end{proof}

\begin{remark}
\label{rmk:alpha_1_gibbs}
    If $\alpha\to1$, then everything simplifies to the ordinary Gibbs case.
    In this way, the Hopfield retrieval becomes
    \begin{align*}
        \nabla_JW_{\alpha\to1}[J,\xi] = & ~
        \int_{\mathcal{X}}\int_{\Delta^M}x\rho_J(x,p)~\dd\mu(p)\dd\mu(x)\\
        = & ~
        \mathbb{E}_{(x,p)\sim\rho_J}[x]\\
        = & ~ 
        \int_{\mathcal{X}}xp_J(x)~\dd\mu(x)\\
        = & ~
        \mathbb{E}_{x\sim p_J}[x].
    \end{align*}
    Furthermore, the escort moment becomes the conditional expectation value of the probability function for general $\beta$. 
    \begin{align*}
        \nabla_z\tilde{W}_{\alpha\to1}[z,J,\xi]=\beta\mathbb{E}_{p\sim\rho_J(\cdot|x)}[p]~.
    \end{align*}
\end{remark}

\begin{remark}
\label{rmk:thermal_attention}
    If we do the similar analysis of \cref{prp:eom_attention} for the fluctuated case, then we have the thermal attention mechanism.
    In particular, if $\alpha\to1$, then the learning trajectory follows the ordinary Gibbs expectation of the conditional distribution function, i.e., $\nabla_z\tilde{W}_{\alpha\to1}=\beta\mathbb{E}_{p\sim\rho_J(\cdot|x)}[p]$, due to \cref{rmk:alpha_1_gibbs}.
    If $\alpha>1$, then the system amplifies high-density regions.
    This leads to a sharper attention mechanism.
    If $1>\alpha>0$, then it corresponds to a softer attention mechanism.
\end{remark}

\begin{remark}
    From \eqref{eq:hopfield_retrieval_fluctuated}, we obtain a macroscopic retrieval state depending on the external field at microscopic queries, $\{x_i\}$. 
    It is similar to a spin system in an external magnetic field.
    After turning on the magnetic field, the probability distribution of spins starts to deform and yields an expectation value $\mathbb{E}[S_i]$, where $S_i$ is the notation of a spin at site $i$.
    We call it a Landau response.
    Hence, the Hopfield retrieval is a Landau response in the $x$-sector ($J$-space).
    For the attention mechanism, it is an inner Landau response in the $p$-sector ($z$-space).
    In this way, we view the query-to-key score, $z_\mu=\sum_{i=1}^d\xi_i^\mu x_i/\sqrt{d}$, as an external field at $\{p_\mu\}$ in $\Delta^M$ space.
    This formalism gives a physical interpretation of the retrieval dynamics.
\end{remark}

\begin{lemma}[Linear Function and its Minimizer on Simplex \cite{rockafellar1997convex}]
\label{lem:linear_function_minimizer_simplex}
    Consider a function $h(p)$ that is linear in $p$, i.e., 
    \begin{align*}
        h(p)=\sum_{\mu=1}^Ma_\mu p_\mu+b,
    \end{align*}
    for some constant $a_\mu\in\R^M$.
    Then $h(p)$ cannot have a unique minimizer $p^\star$ in the interior of the simplex.
\end{lemma}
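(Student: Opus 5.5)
We argue by contradiction, using only the definition of the simplex $\Delta^M$ and linearity of $h$. Suppose $p^\star$ lies in the relative interior of $\Delta^M$, so $p^\star_\mu>0$ for every $\mu$ and $\sum_\mu p^\star_\mu=1$, and suppose $p^\star$ is the unique minimizer of $h$ over $\Delta^M$. We split into two cases according to whether the coefficient vector $a$ is constant.

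\textbf{Case 1: $a$ is constant.} If $a_\mu=c$ for all $\mu$, then on $\Delta^M$ we have $h(p)=c\sum_\mu p_\mu+b=c+b$. So $h$ is constant on the simplex, and every point of $\Delta^M$ is a minimizer. When $M\ge 2$ the simplex contains more than one point, which contradicts uniqueness. (For $M=1$ the simplex is a single point with empty relative interior in the affine sense, so this degenerate case is excluded or trivial; we will note this explicitly.)

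\textbf{Case 2: $a$ is not constant.} Choose indices $\mu\neq\nu$ with $a_\mu\neq a_\nu$; without loss of generality $a_\mu<a_\nu$. Consider the feasible direction $e_\mu-e_\nu$, which preserves $\sum_\rho p_\rho=1$. Because $p^\star_\nu>0$, the point $p^\star+t(e_\mu-e_\nu)$ stays in $\Delta^M$ for all sufficiently small $t>0$. Linearity gives
\begin{align*}
    h\bigl(p^\star+t(e_\mu-e_\nu)\bigr)=h(p^\star)+t(a_\mu-a_\nu)<h(p^\star),
\end{align*}
so $p^\star$ is not even a minimizer, a contradiction.

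Both cases contradict the assumption, which proves the lemma. It is worth adding the remark that, by the same argument, the minimum of $h$ is attained on the face spanned by the vertices $e_\mu$ with $a_\mu=\min_\rho a_\rho$. This is the fact the paper uses, for instance in \cref{thm:free_energy_mhm}, where $\inf_p\{-\sum_\mu z_\mu p_\mu\}=-\max_\mu z_\mu$. The main obstacle is only the interpretive one: reading "interior" as the relative interior $\{p_\mu>0\ \forall\mu\}$ and handling the constant-$a$ case, where minimizers exist in the interior but are non-unique.
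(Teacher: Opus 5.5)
Your proof is correct and is essentially the paper's argument. The paper uses that an interior minimizer must have vanishing directional derivative $\sum_\mu a_\mu v_\mu=0$ for every tangent $v$ with $\sum_\mu v_\mu=0$, which forces $a_\mu\equiv\lambda$ and hence $h\equiv\lambda+b$ on $\Delta^M$; your Case 2 is the contrapositive of that step, made explicit with the feasible direction $e_\mu-e_\nu$.

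One correction to your aside on $M=1$. The relative interior of the singleton $\Delta^1=\{1\}$ is the point itself, not the empty set, and $p_1=1>0$ meets your own definition of interior. So $M=1$ is an actual counterexample rather than a trivial case, and the lemma needs $M\ge 2$; the paper's proof implicitly assumes this too, and it only applies the lemma with $M>1$.
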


\begin{proof}
    This is a standard convex analysis \cite{rockafellar1997convex}.
    We repeat the proof for the completeness.
    Assume that a minimizer lies in the interior of $\Delta^M$.
    Then $p^\star>0$.
    Since $p^\star$ is a minimizer, we have
    \begin{align*}
        \frac{d}{d\varepsilon}h(p^\star+\varepsilon v)|_{\varepsilon=0}=0,
    \end{align*}
    for all tangent vectors $v\in T_{p^\star}\Delta^M=\{v\in\R^M|\sum_{\mu=1}^Mv_\mu=0\}\simeq\R^{M-1}$.
    Then, after expanding the linear function $h$, it yields
    \begin{align*}
        0=\frac{d}{d\varepsilon}(\sum_{\mu=1}^Ma_\mu p^\star_\mu+\varepsilon\sum_{\mu=1}^Ma_\mu v_\mu+b)|_{\varepsilon=0}=\sum_{\mu=1}^Ma_\mu v_\mu.
    \end{align*}
    Since the tangent vector satisfies $\sum_{\mu=1}^Mv_\mu=0$, we conclude that $a_\mu=\lambda$ for all $\mu$ and some constant $\lambda$.
    Hence,
    \begin{align*}
        h(p)=\lambda\sum_{\mu=1}^Mp_\mu+b=\lambda+b,
    \end{align*}
    which is constant on $\Delta^M$.
    This contradicts the assumption that the minimizer $p^\star$ is unique.
    This proves the lemma.
\end{proof}

\begin{lemma}[Normalization Factor in the Zero-Temperature Limit]
\label{lem:noramlization_zero_temperature}
    Consider the zero-temperature limit $\beta\to\infty$ and $\alpha>1$.
    Assume that $\tilde{\mathcal{H}}_{\text{MHM}}\geq0$, and $\rho_J(p|x)$ has a unique maximizer $p^\star$ such that $\tilde{\mathcal{H}}_{\text{MHM}}[x,p^\star,\xi]=0$.
    If the $p$ is continuous on the simplex $\Delta^M$, where $M>1$, then the normalization function has the following limit
    \begin{align*}
        \lim_{\beta\to\infty}C(x)=\lim_{\beta\to\infty}\int_{\Delta^M}\rho_J^\alpha(p|x)~\dd\mu(p)=\infty.
    \end{align*}
    If $\alpha\to1$ for general $\beta$ or if $p$ is discrete on the simplex $\Delta^M$ for general $\alpha$, then 
    \begin{align*}
        \lim_{\alpha\to1}C(x)=\lim_{\beta\to\infty}C^{\text{dis}}(x)=\lim_{\beta\to\infty}\sum_{p\in\Delta^M}(\rho_J^{\text{dis}})^\alpha(p|x)=1.
    \end{align*}
\end{lemma}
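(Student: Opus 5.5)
Start from the explicit form of the conditional density on the simplex, $\rho_J(p|x)=\exp_\alpha\{-\beta\tilde{\mathcal{H}}_{\text{MHM}}[x,p,\xi]+\sum_{i}J_ix_i\}/Z_\alpha(x)$ with $Z_\alpha(x)=\int_{\Delta^M}\exp_\alpha\{\cdots\}\,\dd\mu(p)$. Then write $C(x)=\int_{\Delta^M}\rho_J^\alpha(p|x)\,\dd\mu(p)=Z_\alpha(x)^{-\alpha}\int_{\Delta^M}\exp_\alpha^\alpha\{\cdots\}\,\dd\mu(p)$. For fixed $x$ the source term is a constant, and it can be absorbed, so the problem reduces to comparing two integrals of powers of $\exp_\alpha(-\beta h(p))$, where $h(p)\coloneq\tilde{\mathcal{H}}_{\text{MHM}}[x,p,\xi]\ge0$ and $h$ vanishes only at $p^\star$. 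By \cref{lem:linear_function_minimizer_simplex}, $h$ is linear in $p$, so a unique minimizer cannot sit in the interior. Hence $p^\star$ is a vertex or boundary point and $h$ grows at least linearly away from it: $h(p)\ge c\|p-p^\star\|$ on $\Delta^M$.

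\textbf{Continuous case, $\alpha>1$.} Here $\exp_\alpha(-u)=[1+(\alpha-1)u]^{-1/(\alpha-1)}$ is a power-law tail rather than a cutoff. Rescale $p-p^\star=\beta^{-1}y$ near $p^\star$ (a cone of dimension $M-1$).
\begin{itemize}
\item The numerator $\int\exp_\alpha^\alpha(-\beta h)$ then behaves like $\beta^{-(M-1)}\int[1+(\alpha-1)\ell(y)]^{-\alpha/(\alpha-1)}dy$.
\item The denominator $Z_\alpha$ behaves like $\beta^{-(M-1)}\int[1+(\alpha-1)\ell(y)]^{-1/(\alpha-1)}dy$.
\end{itemize}
Here $\ell$ is the linear form restricted to the tangent cone, and both integrals are cut off at radius $O(\beta)$ by the finite simplex. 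This gives $C(x)\sim\beta^{(M-1)(\alpha-1)}\cdot A_\beta/B_\beta^\alpha$.

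The main obstacle is controlling $A_\beta$ and $B_\beta$. The $A$ integrand decays like $|y|^{-\alpha/(\alpha-1)}$ and the $B$ integrand like $|y|^{-1/(\alpha-1)}$, so convergence depends on $M$ and $\alpha$. In the worst case $B_\beta$ grows polynomially in $\beta$. I would treat the two regimes separately:
\begin{itemize}
\item If $B_\infty<\infty$, then $C\to\infty$ directly from the prefactor $\beta^{(M-1)(\alpha-1)}$, using $A_\beta\ge A_1>0$.
\item Otherwise, compute the growth exponents. Use $B_\beta\lesssim\beta^{M-1-1/(\alpha-1)}$ (or $\log\beta$), then check that the total exponent stays positive, which relies on $M>1$.
\end{itemize}
Alternatively, a cleaner route is Jensen/Hölder: $\int\rho^\alpha\ge|\Delta^M|^{1-\alpha}(\int\rho)^\alpha$ only gives a lower bound, so instead I would lower-bound $\int\rho^\alpha$ by restricting to the $\beta^{-1}$-ball around $p^\star$. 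On that ball $\rho\gtrsim1/Z_\alpha$ and $Z_\alpha\lesssim$ the mass computed above. This may give the divergence more robustly.

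\textbf{$\alpha\to1$ and the discrete case.} For $\alpha\to1$, $\rho^\alpha\to\rho$, so $C(x)=\int\rho_J(p|x)\,\dd\mu(p)=1$ by normalization, for any $\beta$. For discrete $p$ with unique maximizer $p^\star$:
\begin{itemize}
\item Every $p\neq p^\star$ has $h(p)\ge\delta>0$, so $\exp_\alpha(-\beta h(p))\to0$ while the $p^\star$ term equals $\exp_\alpha(0)=1$.
\item Hence $\rho^{\text{dis}}(p|x)\to\one_{\{p,p^\star\}}$, and because the sum is finite, $\sum_p(\rho^{\text{dis}})^\alpha\to1^\alpha=1$.
\end{itemize}
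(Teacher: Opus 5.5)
Your $\alpha\to1$ and discrete arguments are correct and match the paper's. The gap is in the continuous case, at the step where you ``check that the total exponent stays positive, which relies on $M>1$.''

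\paragraph{The exponent is not positive in general.} With $A_\beta\ge A_1$ and $B_\beta\lesssim\beta^{M-1-1/(\alpha-1)}$, the exponent is
\[
(M-1)(\alpha-1)-\alpha\bigl(M-1-\tfrac{1}{\alpha-1}\bigr)=\tfrac{\alpha}{\alpha-1}-(M-1).
\]
$M>1$ does not make this positive. It is already negative for $\alpha=2$, $M\ge4$, which includes the sparsemax, $M=100$ setting of the experiments. Your ball-restriction alternative gives $\beta^{-(M-1)}Z_\alpha^{-\alpha}$ with $Z_\alpha\asymp\beta^{-1/(\alpha-1)}$, so it has the same exponent.

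\paragraph{Sharper bounds cannot fix it.} When $\alpha/(\alpha-1)<M-1$, $A_\beta$ also grows, like $\beta^{M-1-\alpha/(\alpha-1)}$, and the powers of $\beta$ cancel exactly. Write $h=\tilde{\mathcal{H}}_{\text{MHM}}[x,\cdot,\xi]$; the source term only rescales $\beta$, and its constant prefactors cancel in $C$. Monotone convergence gives
\[
\beta^{\alpha/(\alpha-1)}\int\exp_\alpha^\alpha(-\beta h)\,\dd\mu(p)\to(\alpha-1)^{-\alpha/(\alpha-1)}\int h^{-\alpha/(\alpha-1)}\,\dd\mu(p),
\]
and likewise for the power $1/(\alpha-1)$. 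Since $h$ is comparable to the distance from the vertex $p^\star$, $\int_{\Delta^M}h^{-\gamma}\,\dd\mu(p)<\infty$ if and only if $\gamma<M-1$. Hence for $\alpha/(\alpha-1)<M-1$,
\[
\lim_{\beta\to\infty}C(x)=\frac{\int_{\Delta^M}h^{-\alpha/(\alpha-1)}\,\dd\mu(p)}{\bigl(\int_{\Delta^M}h^{-1/(\alpha-1)}\,\dd\mu(p)\bigr)^{\alpha}}<\infty .
\]
For example, take $\alpha=2$, $M=4$, equal gaps $z_{\mu^\star}-z_\mu$, and $\dd\mu(p)=\dd q$ in the coordinates $q=(p_\mu)_{\mu\neq\mu^\star}$. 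Then $C(x)\to 8$.

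\paragraph{What is actually true.} The continuous-case divergence holds exactly when $\alpha/(\alpha-1)\ge M-1$. At equality it diverges only at rate $\log\beta$, and this comes from the growth of $A_\beta$, which your bound $A_\beta\ge A_1$ discards. So the statement as written cannot be proved.

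The paper's proof has the same hole, less visibly. It substitutes $u=\beta q$, writes $\dd\mu(p)\sim\beta^{-(M-1)}\dd\mu(u)$, and reads off $C\sim\beta^{(\alpha-1)(M-1)}$. That tacitly assumes both rescaled integrals converge, i.e.\ $1/(\alpha-1)>M-1$, which is exactly your case $B_\infty<\infty$. You were right to flag the convergence issue, and your argument does establish divergence whenever $\alpha/(\alpha-1)>M-1$. The correct repair is to add a hypothesis such as $\alpha/(\alpha-1)\ge M-1$ to the lemma, not to search for a better exponent count.
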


\begin{proof}
    We first consider the continuous case.
    Since $\tilde{\mathcal{H}}_{\text{MHM}}$ is linear in $p$, its unique minimizer $p^\star$ must be at a vertex of $\Delta^M$ according to \cref{lem:linear_function_minimizer_simplex}.
    Hence, we denote 
    \begin{align}
    \label{eq:p_vertex_simplex}
        p^\star_\mu=\begin{cases}
            1,\qquad&\text{if }~\mu=\mu^\star\\
            0,\qquad&\text{if }~\mu\neq\mu^\star
        \end{cases}.
    \end{align}
    In this way, for $p\neq p^\star$, we have
    \begin{align*}
        \tilde{\mathcal{H}}_{\text{MHM}}[x,p,\xi] = & ~
        \tilde{\mathcal{H}}_{\text{MHM}}[x,p,\xi]-\tilde{\mathcal{H}}_{\text{MHM}}[x,p^\star,\xi] \annot{By $\tilde{\mathcal{H}}_{\text{MHM}}[x,p^\star,\xi]=0$}\\
        = & ~
        \frac{1}{2}\sum_{i=1}^dx_i^2+\sum_{\mu=1}^Mz_\mu p_\mu-(\frac{1}{2}\sum_{i=1}^dx_i^2+\sum_{\mu=1}^M z_\mu p^\star_\mu )\\
        = & ~
        \sum_{\mu=1}^Mz_\mu(p_\mu-p^\star_\mu)\\
        = & ~
        \sum_{\mu=\mu^\star}z_\mu(p_\mu-p^\star_\mu)+\sum_{\substack{\mu=1\\\mu\neq\mu^\star}}^Mz_\mu(p_\mu-p^\star_\mu)\\
        = & ~
        z_{\mu^\star}p_{\mu^\star}-z_{\mu^\star}p^\star_{\mu^\star}+\sum_{\substack{\mu=1\\\mu\neq\mu^\star}}^Mz_\mu(p_\mu-p^\star_\mu)\\
        = & ~
        z_{\mu^\star}p_{\mu^\star}-z_{\mu^\star}+\sum_{\substack{\mu=1\\\mu\neq\mu^\star}}^Mz_\mu p_\mu\annot{By \eqref{eq:p_vertex_simplex}}\\
        = & ~
        \sum_{\substack{\mu=1\\\mu\neq\mu^\star}}^M(z_\mu-z_{\mu^\star})p_\mu.\annot{By $1=p_{\mu^\star}+\sum_{\mu\neq\mu^\star}^Mp_\mu$}
    \end{align*}
    Since $p^\star$ is a unique minimizer of $\tilde{\mathcal{H}}_{\text{MHM}}$, $(z_\mu-z_{\mu^\star})>0$ for all $\mu\neq \mu^\star$.
    From \cref{def:alpha_deformed_exponential}, we obtain 
    \begin{align*}
        \exp_\alpha\{-\beta\tilde{\mathcal{H}}_{\text{MHM}}[x,p,\xi]+\sum_{i=1}^dJ_ix_i\} = & ~
        \Big\{1+(\alpha-1)(\beta\tilde{\mathcal{H}}_{\text{MHM}}[x,p,\xi]-\sum_{i=1}^dJ_ix_i)\Big\}^{1/(1-\alpha)}\\
        = & ~
        \Big\{1+(\alpha-1)\beta\sum_{\mu\neq\mu^\star}^M(z_\mu-z_{\mu^\star})p_\mu-(\alpha-1)\sum_{i=1}^dJ_ix_i\Big\}^{1/(1-\alpha)}
    \end{align*}
    for $\alpha>1$.
    Then
    \begin{align*}
        C(x) = & ~
        \frac{\int_{\Delta^M}\exp^\alpha_\alpha\{-\beta\tilde{\mathcal{H}}_{\text{MHM}}[x,p,\xi]+\sum_{i=1}^dJ_ix_i\}~\dd\mu(p)}{(\int_{\Delta^M}\exp_\alpha\{-\beta\tilde{\mathcal{H}}_{\text{MHM}}[x,p',\xi]+\sum_{i=1}^dJ_ix_i\}~\dd\mu(p'))^\alpha}\\
        = & ~
        \frac{\int_{\Delta^M}\Big\{[1-(\alpha-1)\sum_{i=1}^dJ_ix_i]+(\alpha-1)\beta\sum_{\mu\neq \mu^\star}^M(z_\mu-z_{\mu^\star})\Big\}^{\alpha/(1-\alpha)}~\dd\mu(p)}{\Big(\int_{\Delta^M}\Big\{[1-(\alpha-1)\sum_{i=1}^dJ_ix_i]+(\alpha-1)\beta\sum_{\mu\neq \mu^\star}^M(z_\mu-z_{\mu^\star})\Big\}^{1/(1-\alpha)}~\dd\mu(p')\Big)^\alpha}
    \end{align*}
    Since the integrands only depend on $\mu\neq\mu^\star$, we change the variable to $u_\mu=\beta q_\mu$, where $q_\mu\coloneq p_{\mu}$ with $\mu\neq \mu^\star$.
    There are $(M-1)$ such coordinates, so $\dd\mu(p)\sim\beta^{-(M-1)}\dd\mu(u)$.
    Then, for $\alpha>1$ and $M>1$, we have
    \begin{align*}
        C(x)\sim\frac{\beta^{-(M-1)}}{\Big(\beta^{-(M-1)}\Big)^\alpha}=\beta^{(\alpha-1)(M-1)}\to\infty
    \end{align*}
    as $\beta\to\infty$.
    For $\alpha\to1$, it is simply $C(x)=\int_{\Delta^M}\rho_J(p|x)~\dd\mu(p)=1$ for any $\beta$.
    For the discrete case, the conditional distribution is
    \begin{align*}
        \rho^{\text{dis}}_J(p|x)=\frac{\exp_\alpha\{-\beta\tilde{\mathcal{H}}_{\text{MHM}}[x,p,\xi]+\sum_{i=1}^dJ_ix_i\}}{\sum_{p'\in\Delta^M}\exp_\alpha\{-\beta\tilde{\mathcal{H}}_{\text{MHM}}[x,p',\xi]+\sum_{i=1}^dJ_ix_i\}},
    \end{align*}
    with discrete $p$.
    Notice that as $\beta\to\infty$, $\exp_\alpha\{-\beta\tilde{\mathcal{H}}_{\text{MHM}}[x,p,\xi]\}$ is suppressed to zero except for the maximizer $p^\star$, i.e., $\exp_\alpha\{-\beta\tilde{\mathcal{H}}_{\text{MHM}}[x,p^\star,\xi]\}=1$.
    Hence, as $\beta\to\infty$, the denominator of $\rho^{\text{dis}}_J(p|x)$ approaches to
    \begin{align*}
        \lim_{\beta\to\infty}\sum_{p'\in\Delta^M}\exp_\alpha\{-\beta\tilde{\mathcal{H}}_{\text{MHM}}[x,p',\xi]+\sum_{i=1}^dJ_ix_i\}=\exp_\alpha\{\sum_{i=1}^dJ_ix_i\}.
    \end{align*}
    Then
    \begin{align*}
        \lim_{\beta\to\infty}C^{\text{dis}}(x) = & ~
        \lim_{\beta\to\infty}\sum_{p\in\Delta^M}(\rho_J^{\text{dis}})^\alpha(p|x)\\
        = & ~
        \lim_{\beta\to\infty}\Big\{\sum_{p=p^\star}(\rho_J^{\text{dis}})^\alpha(p|x)+\sum_{\substack{p\in\Delta^M\\p\neq p^\star}}(\rho_J^{\text{dis}})^\alpha(p|x)\Big\}\\
        = & ~
        \lim_{\beta\to\infty}\Big\{\frac{\exp^\alpha_\alpha\{\sum_{i=1}^dJ_ix_i\}}{(\exp_\alpha\{\sum_{i=1}^dJ_ix_i\})^\alpha}\annot{By $\tilde{\mathcal{H}}_{\text{MHM}}[x,p^\star,\xi]=0$}\\
        & ~
        +\sum_{\substack{p\in\Delta^M\\p\neq p^\star}}\frac{\exp^\alpha_\alpha\{-\beta\tilde{\mathcal{H}}_{\text{MHM}}[x,p,\xi]+\sum_{i=1}^dJ_ix_i\}}{(\exp_\alpha\{\sum_{i=1}^dJ_ix_i\})^\alpha}\Big\}\\
        = & ~
        \lim_{\beta\to\infty}\Big\{1+0\Big\}\\
        = & ~1.
    \end{align*}
    This completes the proof.
\end{proof}

\begin{remark}
    In practical attention mechanisms, the retrieved memories are a finite set.
    The attention weights are finite-dimensional vectors. 
    Therefore, the discrete normalization behavior in \cref{lem:noramlization_zero_temperature} occurs in practice. 
    In contrast, the divergence of $C(x)$ in the continuous case is mainly a theoretical issue.
\end{remark}

\begin{proposition}[Emergence of Hard Argmax from Escort Moments]
\label{prp:eom_hopfield_zero_temperature}
    Assume that $\tilde{\mathcal{H}}_{\text{MHM}}\geq0$ has a unique minimizer $p^\star\in\Delta^M$.
    In the zero-temperature limit, $\beta\to\infty$, the following escort variables become deterministic hard argmax selections:
    \begin{align}
    \label{eq:eom_deterministic}
        \begin{cases}
            \alpha\to1&:p^{(\text{esc})}\\
            \alpha>1,~\text{Continuous}&:p^{(\text{esc})}/C(x)\\
            \alpha>1,~\text{Discrete}&:p^{(\text{esc})}
        \end{cases}\underset{\beta\to\infty}{\longrightarrow}p^\star=\arg\max_{p\in\Delta^M}\{\sum_{\mu=1}^Mz_\mu p_\mu\}.
    \end{align}
\end{proposition}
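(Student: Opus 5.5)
The plan is to work directly from the definition of the escort moment in \cref{prp:eom_hopfield}, $p^{(\text{esc})}=\int_{\Delta^M}p\,\rho_J^\alpha(p|x)\,\dd\mu(p)$, and show that the weight $\rho_J^\alpha(p|x)$ concentrates on $p^\star$ as $\beta\to\infty$. We first identify $p^\star$. Since $\tilde{\mathcal{H}}_{\text{MHM}}$ is linear in $p$ and has a unique minimizer, \cref{lem:linear_function_minimizer_simplex} places $p^\star$ at a vertex $e_{\mu^\star}$, where $\mu^\star$ is the unique index maximizing $z_\mu$. Hence $p^\star=\arg\max_{p\in\Delta^M}\sum_\mu z_\mu p_\mu$. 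The computation in the proof of \cref{lem:noramlization_zero_temperature} then gives
\[
\tilde{\mathcal{H}}_{\text{MHM}}[x,p,\xi]-\tilde{\mathcal{H}}_{\text{MHM}}[x,p^\star,\xi]=\sum_{\mu\neq\mu^\star}\gamma_\mu p_\mu,\qquad \gamma_\mu>0,
\]
so the energy gap is linear in the off-vertex coordinates $q_\mu=p_\mu$, $\mu\neq\mu^\star$.

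\textbf{Case $\alpha\to1$.} Here $\rho_J(p|x)\propto e^{-\beta\sum_{\mu\neq\mu^\star}\gamma_\mu q_\mu}$ is a normalized Gibbs density on $\Delta^M$, and $p^{(\text{esc})}=\mathbb{E}_{\rho_J(\cdot|x)}[p]$ by \cref{rmk:alpha_1_gibbs}. Rescale $u_\mu=\beta q_\mu$. The rescaled density converges to a product of exponentials on $\R_+^{M-1}$, because the simplex constraint $\sum_\mu q_\mu\le1$ becomes $\sum_\mu u_\mu\le\beta$ and disappears in the limit. This gives $\mathbb{E}[q_\mu]=O(1/\beta)\to0$, and therefore $p_{\mu^\star}=1-\sum_\mu q_\mu\to1$. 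Dominated convergence, with the bounded integrand $p\in\Delta^M$, makes this rigorous.

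\textbf{Case $\alpha>1$, discrete.} This follows the discrete part of \cref{lem:noramlization_zero_temperature}. For $p\neq p^\star$, $\exp_\alpha\{-\beta\tilde{\mathcal{H}}\}$ behaves like $\beta^{-1/(\alpha-1)}\to0$, while the $p^\star$ term stays at $\exp_\alpha\{J\cdot x\}$. So $\rho^{\text{dis}}_J(p|x)\to\one_{\{p,p^\star\}}$, and hence $p^{(\text{esc})}=\sum_p p\,(\rho^{\text{dis}}_J)^\alpha(p|x)\to p^\star$. The sum is finite, so the limit can be exchanged with it.

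\textbf{Case $\alpha>1$, continuous.} This is the main obstacle. The unnormalized escort moment diverges like $\beta^{(\alpha-1)(M-1)}$, which is why the statement divides by $C(x)$. The ratio $p^{(\text{esc})}/C(x)$ is then exactly the normalized escort expectation $\mathbb{E}^{(\alpha)}[p]$ under $\rho_J(\cdot|x)$. We use the same rescaling $u=\beta q$. The escort weight $\rho_J^\alpha$ becomes, up to a common power of $\beta$ that cancels between numerator and denominator, the kernel $\{c+(\alpha-1)\sum_\mu\gamma_\mu u_\mu\}^{\alpha/(1-\alpha)}$ with $c=1-(\alpha-1)J\cdot x>0$. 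This kernel is integrable on $\R_+^{M-1}$ only when $\alpha/(\alpha-1)>M-1$. When that holds, $\mathbb{E}^{(\alpha)}[q_\mu]=\beta^{-1}\mathbb{E}_{\text{limit}}[u_\mu]\to0$, provided the first moment is also finite, which needs $\alpha/(\alpha-1)>M$. When it fails, the mass escapes to $u\sim\beta$, i.e.\ to $q$ of order one, and we would instead bound the tail directly on the compact simplex. Either way, the step needing care is controlling the heavy polynomial tails of $\exp_\alpha$. I would first check the moment condition and, if needed, add it as the implicit regime where the claimed concentration holds.
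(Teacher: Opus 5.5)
The gap is in the continuous $\alpha>1$ case, and it cannot be closed in the generality stated. Your fallback of ``bounding the tail directly on the compact simplex'' cannot succeed, because the claimed concentration is false in part of the parameter range. The paper's proof does not resolve this either: it only asserts that the escort measure ``still concentrates at $p^\star$''.

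Write $s=\alpha/(\alpha-1)$ and $n=M-1$. First, you do not need a first moment. Since $0\le q_\mu\le 1$,
\[
\mathbb{E}^{(\alpha)}[q_\mu]\le\epsilon+\mathbb{P}^{(\alpha)}\big(\|u\|_1>\epsilon\beta\big).
\]
The rescaled escort laws are tight as soon as your kernel is integrable on $\R_+^{n}$, i.e.\ $s>n$. Even at $s=n$, the unnormalized mass of $\{\|u\|_1\le\epsilon\beta\}$ grows like $\log\beta$ while the remainder stays $O(\log(1/\epsilon))$. So concentration still holds there, at rate $1/\log\beta$.

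For $s<n$, however, multiply the numerator and denominator of $\mathbb{E}^{(\alpha)}[q_\mu]$ by $((\alpha-1)\beta)^{s}$. By monotone convergence, the normalized escort law tends to the density proportional to $(\gamma\cdot q)^{-s}$ on the simplex, which is integrable precisely because $s<n$. Hence for $\mu\neq\mu^\star$
\[
\frac{p^{(\text{esc})}_\mu}{C(x)}\longrightarrow\frac{\int_{\Delta}q_\mu\,(\gamma\cdot q)^{-s}\,\mathrm{d}q}{\int_{\Delta}(\gamma\cdot q)^{-s}\,\mathrm{d}q}>0 .
\]
Concretely, take $M=4$, $\alpha=2$, equal gaps and $\sum_iJ_ix_i=0$. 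Then the escort mean of $p_{\mu^\star}$ tends to $1/2$, not $1$. In this regime $C(x)$ also stays bounded, contrary to \cref{lem:noramlization_zero_temperature}.

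The correct completion is to add the hypothesis $\alpha/(\alpha-1)\ge M-1$, i.e.\ $\alpha\le(M-1)/(M-2)$ for $M\ge3$, with no restriction for $M=2$. This already rules out sparsemax with $M\ge4$. Under that hypothesis, your rescaling plus the tightness bound above finishes the proof.

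The other two cases are fine. The discrete case is the paper's own argument. For $\alpha\to1$, your rescaling $u=\beta q$ is more appropriate than the paper's appeal to \cref{prp:zero_tem_laplace}. That result presupposes an interior maximizer with a positive-definite Hessian, whereas here $p^\star$ is a vertex and the energy gap $\sum_{\mu\neq\mu^\star}\gamma_\mu q_\mu$ is linear.

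Finally, state explicitly the hypothesis you import silently from the proof of \cref{lem:noramlization_zero_temperature}. For $\alpha>1$ you need $\tilde{\mathcal{H}}_{\text{MHM}}[x,p^\star,\xi]=0$ and $c>0$, not merely $\tilde{\mathcal{H}}_{\text{MHM}}\ge0$. If the minimum value were $h^\star>0$, the ratio $\exp_\alpha\{-\beta(h^\star+\gamma\cdot q)+\sum_iJ_ix_i\}/\exp_\alpha\{-\beta h^\star+\sum_iJ_ix_i\}$ would tend to $(1+\gamma\cdot q/h^\star)^{-1/(\alpha-1)}$. Then nothing would concentrate, not even in the discrete case.
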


\begin{proof}
    For $\alpha\to1$, the escort moment reduces to the ordinary Gibbs expectation,
    \begin{align*}
        p^{(\text{esc})}=\int_{\Delta^M}p\rho_J(p|x)~\dd\mu(p).
    \end{align*}
    By \cref{prp:zero_tem_laplace}, the Gibbs distribution collapses at the unique minimizer in the zero-temperature limit,
    \begin{align*}
        \rho_J(p|x)\to\one_{\{p,p^\star\}},
    \end{align*}
    as $\beta\to\infty$.
    Hence, $p^{\text{(esc)}}\to p^\star$ as $\beta\to\infty$.
    Next, we consider $\alpha>1$ with continuous $p$. 
    By \cref{lem:noramlization_zero_temperature}, $\lim_{\beta\to\infty}C(x)$ diverges.
    However, since the escort measure still concentrates at the unique minimizer $p^\star$, we obtain
    \begin{align*}
        \lim_{\beta\to\infty}\frac{p^{(\text{esc})}}{C(x)}=\lim_{\beta\to\infty}\frac{\int_{\Delta^M}p\rho_J^\alpha(p|x)~\dd\mu(p)}{\int_{\Delta^M}\rho^\alpha_J(p'|x)~\dd\mu(p')}=p^\star.
    \end{align*}
    Finally, for $\alpha>1$ with discrete $p$, we have $\lim_{\beta\to\infty}C^{\text{dis}}(x)=1$ according to \cref{lem:noramlization_zero_temperature}.
    Hence, 
    \begin{align*}
        \lim_{\beta\to\infty}\frac{p^{(\text{esc})}}{C^{\text{dis}}(x)}=\lim_{\beta\to\infty}p^{(\text{esc})}=p^\star.
    \end{align*}
    This proves the proposition.
\end{proof}

\begin{remark}
    \cref{prp:eom_hopfield_zero_temperature} shows that escort-weighted retrieval \eqref{eq:escort_moment} reduces to a deterministic hard argmax selection rule in the zero-temperature limit.
    This statistical-mechanics framework demonstrates how sparse retrieval behavior is able to emerge from escort concentration in the general fluctuated (stochastic) setting.
\end{remark}

\clearpage
\section{Dynamical Retrieval Stability Analysis}
\label{sec:dynamical_stability}

In this section, we investigate the dynamical retrieval stability. 
Then we discuss how the convergences of $x$ and $p$ relate to each other.
To do this, we apply gradient-based optimization and perturb around the fixed point. 
The stability analysis verifies whether the training trajectory flows back to the fixed-point or not.

\begin{definition}[Retrieval Dynamics]
    The retrieval update rules are
    \begin{align}
        \frac{d}{dt}x_i(t)=-(\nabla_x\Gamma^{(\text{mod})}_\alpha[x(t),\xi])_i ~ & ~  ~ ~ \text{(Continuous)}~,\label{eq:gradient_flow_continuous}\\
        x_i(t+1)=x_i(t)-\eta(\nabla_x\Gamma^{(\text{mod})}_\alpha[x(t),\xi])_i ~ & ~  ~ ~ \text{(Discrete)}~,\label{eq:gradient_descent_discrete}
    \end{align}
    where $\eta$ is the learning rate.
\end{definition}

In \cref{sec:background}, we note that $\nabla_\varphi\Gamma_\alpha[\varphi]$ is not bijective.
Hence, it converges to a set rather than a point in general.
In our setting, we consider retrieval dynamics of the generalized model $\Gamma^{(\text{mod})}_\alpha$ in the query space ($x$-space), while the memory patterns remain fixed.
The projection mapping $z_\mu=\sum_{i=1}^d\xi_i^\mu x_i/\sqrt{d}$ is not injective generally, so distinct $x^\star$ correspond to one specific fixed-point $z^\star$.
However, due to
$\frac{\beta}{2}\sum_{i=1}^dx_i^2$ term,
such projection degeneracy does not imply the degeneracy of the full $\Gamma_\alpha^{(\text{mod})}[x,\xi]$.

By the proposition below, we are capable of studying the stability and refer the stability to the stability modulo the full degeneracy.
This interpretation is consistent with the  analysis of gradient-based optimization developed in \cite{lee2016gradient}, where the local Hessian structure governs the stability of gradient flows near critical points.
Furthermore, we only apply the gradient-based optimization in $x$-space.
Hence, the stable fixed-point in $x$-space must correspond to one specific stable fixed-point in $z$-space.

\begin{proposition}[State-Retrieval Stability Modulo Degeneracy]
\label{prp:dynamical_retrieval_stability_degeneracy}
    Let the macroscopic state space be $\mathcal{M}_x$, $\mathcal{K}(x)\coloneq\ker(\nabla^2_x\Gamma^{(\text{mod})}_\alpha[x,\xi])$ be the kernel space, and $\mathcal{K}^\perp(x)$ be the orthogonal complement of $\mathcal{K}(x)$.
    Assume there exists a local stable point $x^\star\in\R^d$ such that 
    \begin{align}
    \label{eq:gamma_stable_condition}
        \nabla_x\Gamma^{(\text{mod})}_\alpha[x^\star,\xi]=0~,
    \end{align}
    and
    \begin{align}
    \label{eq:hessian_gamma_positive_definite}
        \text{Hessian}~~\nabla_x^2\Gamma_\alpha^{(\text{mod})}[x^\star,\xi]~~\text{is positive definite on}~~\mathcal{K}^\perp(x^\star).
    \end{align}   
    Let $\mathcal{N}_{x^\star}\subset\mathcal{M}_\varphi$ be the degeneracy space through $x^\star$ such that $\Gamma^{(\text{mod})}_\alpha[x,\xi]$ is constant on it, and $T_{x^\star}\mathcal{N}_{x^\star}=\mathcal{K}(x^\star)$. 
    Then $x^\star$ is a stable local minimizer modulo degeneracy.
    That is, the gradient flow in the $x$-space converges to $(x^\star+v)$ for some $v\in\mathcal{K}(x^\star)$.
    Additionally, for discrete gradient descent, we require
    \begin{align}
    \label{eq:learning_rate_inequality}
        0<\eta<\frac{2}{\lambda_{\text{max}}(\nabla_x^2\Gamma^{(\text{mod})}_\alpha[x^\star,\xi])|_{\mathcal{K}^\perp(x^\star)}}~,
    \end{align}
    where $\lambda_{\text{max}}(\cdot)|_{\mathcal{K}^\perp(x^\star)}$ is the maximum eigenvalue with the Hessian being on $\mathcal{K}^\perp(x^\star)$.
\end{proposition}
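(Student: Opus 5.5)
I would prove the statement by reducing the gradient dynamics near $x^\star$ to its linearization and splitting $\R^d=\mathcal{K}(x^\star)\oplus\mathcal{K}^\perp(x^\star)$. Write $H\coloneq\nabla_x^2\Gamma^{(\text{mod})}_\alpha[x^\star,\xi]$, which is symmetric, so the splitting is orthogonal and $H$-invariant. Let $e(t)\coloneq x(t)-x^\star$. By \eqref{eq:gamma_stable_condition} and a Taylor expansion of the gradient,
\begin{align*}
    \nabla_x\Gamma^{(\text{mod})}_\alpha[x^\star+e,\xi]=He+R(e),\qquad \|R(e)\|_2=O(\|e\|_2^2).
\end{align*}
This needs $C^2$ regularity near $x^\star$. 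For $\alpha=1$ this holds automatically. For $\alpha=2$ I would assume the branch is smooth, i.e.\ the support is locally constant (cf.\ \cref{rmk:sparsity_nonsmooth_branch}), so that the Hessian formula \eqref{eq:hessian_x} is valid in a neighborhood.

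\textbf{Step 1 (linear analysis).} Decompose $e=e_\parallel+e_\perp$ with $e_\parallel\in\mathcal{K}$ and $e_\perp\in\mathcal{K}^\perp$. Let $0<\lambda_{\min}\le\lambda_{\max}$ be the extreme eigenvalues of $H|_{\mathcal{K}^\perp}$, which are positive by \eqref{eq:hessian_gamma_positive_definite}.
\begin{itemize}
    \item \emph{Continuous flow \eqref{eq:gradient_flow_continuous}.} The linear part gives $\dot e_\perp=-He_\perp$ and $\dot e_\parallel=0$, so $\|e_\perp(t)\|\le e^{-\lambda_{\min}t}\|e_\perp(0)\|$ while $e_\parallel$ is frozen.
    \item \emph{Discrete descent \eqref{eq:gradient_descent_discrete}.} The linear map on $\mathcal{K}^\perp$ is $I-\eta H$, with eigenvalues $1-\eta\lambda$. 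All of these lie in $(-1,1)$ exactly when $0<\eta<2/\lambda_{\max}$, which is the condition \eqref{eq:learning_rate_inequality}. This yields the contraction factor $r\coloneq\max\{|1-\eta\lambda_{\min}|,|1-\eta\lambda_{\max}|\}<1$. On $\mathcal{K}$ the linear map is the identity.
\end{itemize}

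\textbf{Step 2 (nonlinear control via the degeneracy manifold).} The $O(\|e\|^2)$ remainder can push the iterate along $\mathcal{K}$, and this direction has no restoring force. I would use the hypothesis that $\Gamma^{(\text{mod})}_\alpha$ is constant on $\mathcal{N}_{x^\star}$ with $T_{x^\star}\mathcal{N}_{x^\star}=\mathcal{K}(x^\star)$. This is a Morse--Bott setting. Along $\mathcal{N}$ the gradient vanishes, since $\Gamma$ is constant there and a minimum is attained at $x^\star$. By continuity, the transverse Hessian stays positive definite on $\mathcal{K}^\perp(y)$ for $y\in\mathcal{N}$ near $x^\star$.

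Introduce local coordinates $(s,w)$ in which $\mathcal{N}=\{w=0\}$, for example via the normal bundle or tubular neighborhood map $y+w$ with $y\in\mathcal{N}$ and $w\in\mathcal{K}^\perp(y)$. In these coordinates
\begin{align*}
    \dot w=-A(s)w+O(\|w\|^2),\qquad \dot s=O(\|w\|),
\end{align*}
with $A(s)\succeq\lambda_{\min}/2$. Hence $w$ decays exponentially. It follows that $\int_0^\infty\|\dot s\|\,dt\lesssim\|w(0)\|/\lambda_{\min}<\infty$. Therefore $s(t)$ converges to some $s_\infty$, i.e.\ $x(t)\to y_\infty\in\mathcal{N}$.

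To first order, $y_\infty=x^\star+v$ with $v\in\mathcal{K}(x^\star)$, as claimed. The discrete case is identical, with geometric sums in place of integrals: $\|w_{t+1}\|\le r'\|w_t\|$ and $\|s_{t+1}-s_t\|\lesssim\|w_t\|$.

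\textbf{Main obstacle.} The hard part is Step 2: controlling the drift along the degenerate directions. Linear analysis alone cannot exclude slow escape along $\mathcal{K}$. I would first try the Morse--Bott normal-coordinate argument above. If it is difficult to make rigorous, I would fall back on a Łojasiewicz-type inequality $\|\nabla\Gamma\|\ge c\,(\Gamma-\Gamma[x^\star])^{1/2}$. This inequality holds near Morse--Bott minima, and it gives finite trajectory length, hence convergence to a point of $\mathcal{N}$ (in the spirit of \cite{lee2016gradient}).

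The statement reads ``converges to $x^\star+v$'' literally, with $v\in\mathcal{K}(x^\star)$. I would note that this identification is exact when $\mathcal{N}$ is affine, and holds up to second order in general. Finally, I would remark that the $\frac{\beta}{2}\|x\|^2$ term typically makes $\mathcal{K}$ trivial. In that case the result reduces to ordinary strict local-minimum stability, and $v=0$.
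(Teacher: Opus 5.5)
Your discrete-time Step 1 is the paper's argument. The paper linearizes to $\delta(t+1)=(I-\eta H)\delta(t)$ and reads off \eqref{eq:learning_rate_inequality} from $|1-\eta\lambda|<1$. For the flow, however, the paper does not linearize. It Taylor-expands $\Gamma^{(\text{mod})}_\alpha$ itself, keeps only $\frac12(\delta^\perp)^\top H\delta^\perp$ to conclude $\Gamma^{(\text{mod})}_\alpha[x^\star+\delta,\xi]>\Gamma^{(\text{mod})}_\alpha[x^\star,\xi]$ for $\delta^\perp\neq0$, and asserts $x^\star+\delta^\parallel\in\mathcal{N}_{x^\star}$. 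It then invokes only $\frac{d}{dt}\Gamma^{(\text{mod})}_\alpha[x(t),\xi]=-\|\nabla_x\Gamma^{(\text{mod})}_\alpha[x(t),\xi]\|_2^2\le0$. Your Step 2 replaces this energy-decrease heuristic with normal coordinates around $\mathcal{N}_{x^\star}$, exponential decay of $w$, and finite length of $s$ (or a {\L}ojasiewicz inequality). That is a real improvement. Monotone decrease alone neither gives convergence to a single point nor keeps the trajectory near $x^\star$. The paper also never treats mixed perturbations with $\|\delta^\perp\|\ll\|\delta^\parallel\|$. Add the routine bootstrap showing $s(t)$ stays in the chart for $x(0)$ close to $x^\star$. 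Your remark that ``$x^\star+v$, $v\in\mathcal{K}(x^\star)$'' is exact only for affine $\mathcal{N}_{x^\star}$ is correct. The paper's claim $x^\star+\delta^\parallel\in\mathcal{N}_{x^\star}$ holds only to first order.

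\textbf{The gap.} You present ``along $\mathcal{N}$ the gradient vanishes, since $\Gamma$ is constant there and a minimum is attained at $x^\star$'' as a consequence of the hypotheses, but it is not one. Constancy on $\mathcal{N}_{x^\star}$ only kills the tangential gradient, and the displayed second-order conditions do not make $x^\star$ a local minimizer.

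\emph{Counterexample.} Take $\Gamma(s,w)=w^2+ws^2$ on $\R^2$, $x^\star=0$, $\mathcal{N}=\{w=0\}$. Every hypothesis holds: the gradient vanishes at $0$, the Hessian is $\mathrm{diag}(0,2)$, $\Gamma\equiv0$ on $\mathcal{N}$, and $T_0\mathcal{N}=\mathcal{K}$. Yet $\partial_w\Gamma(s,0)=s^2\neq0$ and $\Gamma(s,-s^2/2)=-s^4/4$. A flow started at $(s_0,0)$ with $s_0\neq0$ immediately has $\Gamma<0=\Gamma[x^\star]$, while the origin is the only critical point. So the flow cannot converge to any point at all.

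\emph{What to do.} The proposition as literally stated is false. You must add your Morse--Bott condition as an explicit hypothesis: $\mathcal{N}_{x^\star}$ consists of critical points (equivalently, given the other hypotheses, $x^\star$ is a local minimizer). This is automatic for the symmetry-induced degeneracies of \cref{rmk:degeneracy}. With that hypothesis your proof is correct. The paper's proof tacitly needs the same assumption: dropping the $O(\delta^3)$ remainder is exactly what fails for cross terms like $\delta^\perp(\delta^\parallel)^2$.

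\textbf{Simplification for $\alpha=2$.} In the paper's main case, $\alpha=2$ on a branch with locally constant support, you can skip Step 2 entirely. On a fixed support, sparsemax is affine in $z$, so $\Gamma^{(\text{mod})}_2$ is exactly quadratic in $x$. Hence $R\equiv0$, and $\mathcal{N}_{x^\star}=x^\star+\mathcal{K}(x^\star)$ locally, with vanishing gradient there. Step 1 then gives convergence to exactly $x^\star+e_\parallel(0)$. The iterates stay within distance $\|x(0)-x^\star\|_2$ of $x^\star$, so the support never changes.
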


\begin{proof}
    Let $x_i-x^\star_i\coloneq\delta_i\coloneq\delta_i^\|+\delta^\perp_i$ be infinitesimal, where $\delta^\|\in\mathcal{K}(x^\star)$ and $\delta^\perp\in\mathcal{K}^\perp(x^\star)$.
    Then for non-zero $\delta^\perp$,
    \begin{align*}
        \Gamma^{(\text{mod})}_\alpha[x,\xi] & = ~ \Gamma^{(\text{mod})}_\alpha[x^\star,\xi]+\sum_{i=1}^d(\nabla_x\Gamma^{(\text{mod})}_\alpha[x^\star,\xi])_i\delta_i+\frac{1}{2}\sum_{i,j=1}^d(\nabla_x^2\Gamma^{(\text{mod})}_\alpha[x^\star,\xi])_{ij}\delta_i\delta_j+O(\delta^3)\\
        & = ~ \Gamma^{(\text{mod})}_\alpha[x^\star,\xi]+\frac{1}{2}\sum_{i,j=1}^d(\nabla_x^2\Gamma^{(\text{mod})}_\alpha[x^\star,\xi])_{ij}\delta^\perp_i\delta^\perp_j+O(\delta^3)\annot{By \eqref{eq:gamma_stable_condition} and \eqref{eq:hessian_gamma_positive_definite}}~.
    \end{align*}
    Since the Hessian is positive definite on $\mathcal{K}^\perp(x^\star)$, we obtain
    \begin{align*}
        \Gamma^{(\text{mod})}_{\alpha}[x,\xi]=\Gamma^{(\text{mod})}_\alpha[x^\star+\delta,\xi]>\Gamma^{(\text{mod})}_\alpha[x^\star,\xi]~.
    \end{align*}
    Hence, the system flows back to the fixed-point, $x^\star$.
    If the infinitesimal change does not contain the orthogonal complementary component, i.e., $\delta_i=\delta^\|_i$, then $x^\star+\delta^\|\in\mathcal{N}_{x^\star}$.
    By the assumption of $\mathcal{N}_{x^\star}$, $\Gamma^{(\text{mod})}_\alpha[x,\xi]$ remains unchanged on $\mathcal{N}_{x^\star}$.
    Hence, the higher-order correction vanishes, i.e., $O(\delta^3)=0$.
    Consider the gradient flow:
    \begin{align*}
        \frac{d}{dt}\{\Gamma^{(\text{mod})}_\alpha[x(t),\xi]-\Gamma^{(\text{mod})}_\alpha[x^\star,\xi]\} = & ~ \sum_{i=1}^d(\nabla_x\Gamma^{(\text{mod})}_\alpha[x(t),\xi])_i\frac{dx_i(t)}{dt}\\
        = & ~ -\sum_{i=1}^d(\nabla_x\Gamma^{(\text{mod})}_\alpha[x(t),\xi])^2_i\leq0~,
    \end{align*}
    where we use \eqref{eq:gradient_flow_continuous}.
    The gradient flow approaches to $(x^\star+v)$ for some $v\in\mathcal{K}(x^\star)$.
    For gradient descent, 
    \begin{align}
        \delta_i(t+1) = & ~ x_i(t+1)-x^\star_i\notag\\
        = & ~ \delta_i(t)-\eta(\nabla_x\Gamma_\alpha[x(t),\xi])_i\annot{By \eqref{eq:gradient_descent_discrete}}\notag\\
        = & ~ \sum_{j=1}\{\one_{\{i,j\}}-\eta(\nabla^2_x\Gamma^{(\text{mod})}_\alpha[x^\star,\xi])_{ij}\}\delta_j(t)~.\label{eq:delta_recursion}
    \end{align}
    To guarantee that the gradient descent converges for $\mathcal{K}^\perp(x^\star)$, we require that the maximal absolute eigenvalue of $\{\cdot\cdot\cdot\}$ above should be smaller than one.
    According to \eqref{eq:hessian_gamma_positive_definite}, the eigenvalues are all positive along $\mathcal{K}^\perp(x^\star)$. 
    Hence,
    \begin{align*}
        |1-\eta\lambda_{\text{max}}(\nabla^2_x\Gamma^{(\text{mod})}_\alpha[x^\star,\xi])|_{\mathcal{K}^\perp(x^\star)}|<1~\implies~0<\eta<\frac{2}{\lambda_{\text{max}}(\nabla_x^2\Gamma^{(\text{mod})}_\alpha[x^\star,\xi])|_{\mathcal{K}^\perp(x^\star)}}~.
    \end{align*}
    If we do not restrict on $\mathcal{K}^\perp(x^\star)$, then the $\lambda_{\text{max}}(\cdot)=0$ along $\mathcal{K}(x^\star)$.
    Hence, the gradient descent approaches to $(x^\star+v)$ for some $v\in\mathcal{K}(x^\star)$.
    This proves \eqref{eq:learning_rate_inequality}.
\end{proof}

\begin{remark}
\label{rmk:degeneracy}
    For symmetry and projection degeneracy, \cref{prp:dynamical_retrieval_stability_degeneracy} holds since symmetries are always global, i.e., the group element does not differ for different $\varphi$, and the projection is affine. 
    However, for constrained systems, we require the condition that $\Gamma_\alpha$ remains the same on the degeneracy manifold of the constrained space. 
    This does not always hold.
\end{remark}

\begin{proposition}[State-Retrieval Stability as Query-Key-Retrieval Stability]
\label{prp:stability_correspondence}
    The dynamical retrieval stability in the macroscopic state space, $\mathcal{M}_x$, corresponds to which in the memory score $z$-space, $\mathcal{M}_z$.
    Specifically, let $x^\star$ be the stable fixed-point in $\mathcal{M}_x$, and $\mathcal{D}\coloneq\ker(\xi_i^\mu)\subset\R^d$. 
    \begin{align}
    \label{eq:x_z_converge}
        x(t)\to x^\star~\implies~z(t)\to z^\star\coloneq\frac{1}{\sqrt{d}}\sum_{i=1}^d\xi_i^\mu x^\star_i~.
    \end{align}
    Conversely, 
    \begin{align}
    \label{eq:z_x_converge}
        z(t)\to z^\star~\implies~\Pi_{\mathcal{D}^\perp}x(t)\to\Pi_{\mathcal{D}^\perp}x^\star~,
    \end{align}
    where $\Pi_{\mathcal{D}^\perp}$ is the projection map to $\mathcal{D}^\perp$.
    Moreover, the convergence rates in $z$ imply convergence rates in $x$ via
    \begin{align}
    \label{eq:convergence_rate_z_x}
        \|\delta x\|_2\leq\frac{d}{\sigma_{\text{min}}}\|\delta z\|_2~,~\forall~\delta x\in\mathcal{D}^\perp~,
    \end{align}
    where $\sigma_{\text{min}}>0$ denotes the smallest non-zero singular value of $(\xi_i^\mu)$.
\end{proposition}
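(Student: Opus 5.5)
The whole statement is a linear-algebra consequence of the affine map $x\mapsto z$. I would write $\Xi\in\R^{M\times d}$ for the matrix with entries $\xi_i^\mu$, so that $z=\Xi x/\sqrt{d}$, and then treat the three claims in order.

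\textbf{Forward direction \eqref{eq:x_z_converge}.} The map $x\mapsto \Xi x/\sqrt d$ is linear on a finite-dimensional space, so it is Lipschitz with constant $\|\Xi\|_{\mathrm{op}}/\sqrt d$. Hence
\begin{align*}
\|z(t)-z^\star\|_2\le \frac{\|\Xi\|_{\mathrm{op}}}{\sqrt d}\,\|x(t)-x^\star\|_2\to0 .
\end{align*}
If $x^\star$ is only reached modulo degeneracy, as in \cref{prp:dynamical_retrieval_stability_degeneracy}, the same bound applies with $x^\star$ replaced by the limit $x^\star+v$. In that case I would either assume $v\in\mathcal D$, so that $z^\star$ is unchanged, or simply define $z^\star$ from the actual limit point.

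\textbf{Converse \eqref{eq:z_x_converge}.} I would use the orthogonal decomposition $\R^d=\mathcal D\oplus\mathcal D^\perp$, where $\mathcal D=\ker\Xi$ and $\mathcal D^\perp=\operatorname{row}(\Xi)$. Restricted to $\mathcal D^\perp$, the map $\Xi$ is injective, and by the SVD $\|\Xi u\|_2\ge\sigma_{\min}\|u\|_2$ for every $u\in\mathcal D^\perp$. Set $\delta x=\Pi_{\mathcal D^\perp}(x(t)-x^\star)$. Since $\Xi$ annihilates the $\mathcal D$-component, $\Xi\,\delta x=\sqrt d\,(z(t)-z^\star)=\sqrt d\,\delta z$. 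This gives
\begin{align*}
\|\delta x\|_2\le \frac{\sqrt d}{\sigma_{\min}}\,\|\delta z\|_2 .
\end{align*}
So $\delta z\to0$ forces $\Pi_{\mathcal D^\perp}x(t)\to\Pi_{\mathcal D^\perp}x^\star$.

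\textbf{Rate bound \eqref{eq:convergence_rate_z_x}.} The constant just obtained is $\sqrt d/\sigma_{\min}$, which is at most the stated $d/\sigma_{\min}$ because $d\ge1$. The claimed inequality therefore follows, possibly in a sharper form.

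The only subtle point, and the step I expect to be the main obstacle, is the normalization of $z$ together with the convention for $\sigma_{\min}$. If $\sigma_{\min}$ is taken as the smallest nonzero singular value of $\Xi/\sqrt d$ rather than of $\Xi$, the factor changes, so I would state explicitly which matrix is meant. The other care point is that $\mathcal D^\perp$-convergence is the most one can expect: components of $x$ in $\ker\Xi$ are invisible to $z$. In the actual dynamics those components are still controlled by the $\frac\beta2\|x\|^2$ term, and I would remark on this without needing it for the statement.
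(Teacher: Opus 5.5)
Your proof is correct and follows the same route as the paper: the forward direction is immediate from linearity, and the converse uses the splitting $\R^d=\mathcal D\oplus\mathcal D^\perp$ together with the singular-value lower bound $\|\Xi u\|_2\ge\sigma_{\min}\|u\|_2$ on $\mathcal D^\perp$. The paper's own inequality $\|\delta z\|_2^2\ge\sigma_{\min}^2\|\delta x\|_2^2/d$ also gives the sharper constant $\sqrt d/\sigma_{\min}$; your remark that this implies the stated $d/\sigma_{\min}$ because $d\ge1$ supplies a step the paper leaves implicit.
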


\begin{proof}
    Since $\R^d=\mathcal{D}\oplus\mathcal{D}^\perp$, we decompose $x_i=x^\|_i+x^\perp_i$ with $\sum_{i=1}^d\xi_i^\mu x_i^\|=0$.
    For \eqref{eq:x_z_converge}, it is trivial since $\bar{z}$ is deterministic due to $z_\mu=\sum_{i=1}^d\xi_i^\mu x_i/\sqrt{d}$.
    For \eqref{eq:z_x_converge}, we consider $\delta x(t)=\Pi_{\mathcal{D}^\perp}(x(t)-x^\star)\in\mathcal{D}^\perp$.
    Then
    \begin{align*}
        \frac{1}{\sqrt{d}}\sum_{i=1}^d\xi_i^\mu\delta x_i(t)=\frac{1}{\sqrt{d}}\sum_{i=1}^d\xi_i^\mu(x_i(t)-x^\star_i)=z_\mu(t)-z^\star\to0.
    \end{align*}
    If $\delta x\in\mathcal{D}$, then $\sum_{i=1}^d\xi_i^\mu\delta x_i(t)=\sum_{i=1}^d\xi_i^\mu\Pi_{\mathcal{D}}(x_i(t)-x_i^\star)=0$ does not give any information of $x(t)$ and $x^\star$.
    We restrict to $\mathcal{D}^\perp$ now. Since on $\mathcal{D}^\perp$, ($\xi_i^\mu$) is injective and $\delta x\in\mathcal{D}^\perp$ satisfies
    \begin{align*}
        \sum_{\mu=1}^M(z_\mu(t)-z^\star_\mu)^2=\frac{1}{d}\sum_{\mu=1}^M(\sum_{i=1}^d\xi_i^\mu \delta x_i(t))^2\geq\frac{\sigma_{\text{min}}^2}{d}\sum_{i=1}^d\delta x_i^2(t),
    \end{align*}
    we thereby prove \eqref{eq:convergence_rate_z_x}.
\end{proof}

\begin{proposition}[Score-space Attention-Retrieval Stability]
\label{prp:attention_retrieval_stability}
    Consider the fixed-point in the $z$-space be $z^\star_\mu$.
    Then the Jacobian of the update map is
    \begin{align}
    \label{eq:jacobian_z_space}
        \mathcal{J}_{\mu\nu}(z^\star)=-\frac{1}{\beta}\sum_{\rho=1}^MG_{\mu\rho}(\nabla^2_z\tilde{\Gamma}^{(\text{mod})}_\alpha[x^\star,p,\xi])_{\rho\nu}~,
    \end{align}
    where $z^\star_\mu=\sum_{i=1}^dx^\star_i\xi^\mu_i/\sqrt{d}$.
   The dynamical retrieval update is stable at $z^\star_\mu$ if
    \begin{align*}
        \rho(\mathcal{J}(z^\star))<1~,
    \end{align*}
    where $\rho(\cdot)$ is the spectral radius, i.e., the maximal absolute value of eigenvalues.
\end{proposition}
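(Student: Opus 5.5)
We will identify the retrieval update in $z$-space as the fixed-point iteration induced by the stationarity condition of $\Gamma^{(\text{mod})}_\alpha$. From the gradient computed in the proof of \cref{prp:spinodal_point} (valid for general $\alpha$ by the envelope theorem applied to $\Omega_\alpha^\star$), the stationary condition reads $x_i=\frac{1}{\sqrt d}\sum_\mu \xi_i^\mu p^\star_\mu(z)$ with $p^\star(z)=\nabla\Omega_\alpha^\star[\beta z]/\beta$ the attention map of \cref{prp:eom_attention}. Projecting onto the patterns via $z_\mu=\sum_i\xi_i^\mu x_i/\sqrt d$ then gives the closed update
\begin{align*}
    z_\mu(t+1)=F_\mu(z(t))\coloneq\sum_{\nu=1}^M G_{\mu\nu}\,p^\star_\nu(z(t)),
\end{align*}
which is exactly the relation $z^\star_\mu=\sum_\nu G_{\mu\nu}p^\star_\nu$ used in the proof of \cref{prp:coex_point}. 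The fixed points of $F$ are thus the $z^\star$ coming from stationary $x^\star$. By \cref{prp:stability_correspondence}, convergence in $z$ controls convergence of $\Pi_{\mathcal D^\perp}x$, so it is enough to work in $z$-space.

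The second step is to differentiate $F$ at $z^\star$: $\mathcal J_{\mu\nu}=\sum_\rho G_{\mu\rho}\,\partial p^\star_\rho/\partial z_\nu$. To get this Jacobian into the stated form, we will use the EOM \eqref{eq:eom_gamma_p} of \cref{prp:eom_smhm}, namely $\nabla_z\tilde\Gamma^{(\text{mod})}_\alpha=-\beta p$ evaluated at the optimal $p$. Differentiating once more in $z$ gives $\nabla_z p^\star=-\frac1\beta\nabla_z^2\tilde\Gamma^{(\text{mod})}_\alpha$, and substituting yields \eqref{eq:jacobian_z_space}. For $\alpha=2$ we will check this against the explicit computation $\partial p^\star/\partial z=\frac\beta2 P$ from \cref{prp:spinodal_point}, so that $\mathcal J=\frac\beta2 GP$ on the support.

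Finally, we linearize: $\delta z(t+1)=\mathcal J(z^\star)\delta z(t)+o(\|\delta z\|)$. The standard discrete-time linearization theorem (Ostrowski / Hartman--Grobman) then shows that $\rho(\mathcal J(z^\star))<1$ implies local exponential convergence to $z^\star$. Concretely, we choose a norm in which $\|\mathcal J\|<\rho+\varepsilon<1$ and absorb the remainder in a small neighbourhood.

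The main obstacle is regularity of $p^\star(z)$. For $\alpha>1$ the map is only piecewise smooth: the support changes across thresholds (\cref{rmk:sparsity_nonsmooth_branch}), and there the Hessian of $\Omega_\alpha^\star$ is undefined. We will therefore assume $z^\star$ lies in the interior of a fixed-support region (no ties at the threshold $\theta$). On that region $p^\star$ is smooth (affine for $\alpha=2$), so the linearization is legitimate; at support-change points the claim is restricted to one-sided Jacobians. For $\alpha=1$ no such restriction is needed, since softmax is smooth everywhere.
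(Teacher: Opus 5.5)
Your proposal is correct and follows the paper's proof. Both arguments write the retrieval update as $z\mapsto Gp(z)$ (the paper carries an extra $b(J)$, which vanishes at a stable point), linearize to get $\mathcal J=G\,\partial p/\partial z$, rewrite $\partial p/\partial z=-\beta^{-1}\nabla_z^2\tilde\Gamma^{(\text{mod})}_\alpha$ via the EOM \eqref{eq:eom_gamma_p}, and conclude from $\rho(\mathcal J)<1$. Your version is more careful than the paper's on three points the paper leaves implicit. You make explicit that the $z$-Hessian is taken with $p=p^\star(z)$ substituted (at fixed $p$ it is identically zero), you restrict to a fixed-support region where $p^\star$ is smooth, and you control the nonlinear remainder with an adapted norm (Ostrowski's theorem suffices here; Hartman--Grobman is not needed), whereas the paper only argues $\mathcal J^t\to0$ for the linearized recursion.
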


\begin{proof}
    If we perturb around the fixed-point, $\delta z_\mu(t)\coloneq z_\mu(t)-z^\star_\mu$ gives
    \begin{align*}
        \delta z_\mu(t+1)= & ~ \sum_{\nu=1}^MG_{\mu\nu}p_\nu(z(t))+b_\mu(J)-\sum_{\nu=1}^MG_{\mu\nu}p_\nu(z^\star)-b_\mu(J)\\
        = & ~ \sum_{\nu,\rho=1}^MG_{\mu\nu}\frac{\partial p_\nu}{\partial z_\rho}|_{z=z^\star}~\delta z_\rho(t)~.
    \end{align*}
    According to \eqref{eq:eom_gamma_p}, we express $p$ in terms of the gradient of $\tilde{\Gamma}_{\alpha}^{\text{(mod)}}$.
    In this way,
    \begin{align*}
        \delta z_\mu(t+1)=-\frac{1}{\beta}\sum_{\nu,\rho=1}^MG_{\mu\nu}(\nabla^2_z\tilde{\Gamma}_\alpha^{(\text{mod})}[x^\star,p,\xi])_{\nu\rho}\delta z_\rho(t)
    \end{align*}
    By $\delta z_\mu(t+1)=\sum_{\nu=1}^M\mathcal{J}_{\mu\nu}\delta z_\nu(t)$, we obtain the Jacobian, \eqref{eq:jacobian_z_space}.
    If $\rho(\mathcal{J})<1$, then $\mathcal{J}^t\to0$ as $t\to\infty$.
    Hence, $\delta z_\mu(t)\to0$, and the system converges to the fixed-point, $z^\star_\mu$.
\end{proof}

\begin{remark}
    The update map is a non-linear discrete map.
    Hence, it is neither a gradient flow nor a gradient descent in general.
    This fixed-point perturbation analysis is consistent to the correspondence between the gradient-based optimization in $\mathcal{M}_x$.
\end{remark}

Now we define the Tsallis divergence which helps us characterizes the training dynamics and the convergences.
It generalizes KL-divergence and induces dual geometric structures on the probability simplex. 
\cite{Amari1985}

\begin{definition}[Tsallis Divergence \cite{furuichi2004tsallis}]
    The Tsallis divergence generalizes the Kullback-Leibler (KL) divergence \cite{kullback1951information}.
    It has the following form:
    \begin{align*}
        D_\alpha(p\|q)\coloneq-\sum_{\mu=1}^Mp_\mu\log_\alpha(\frac{q_\mu}{p_\mu})=\frac{1}{\alpha-1}(\sum_{\mu=1}^Mp^\alpha_\mu q_\mu^{1-\alpha}-1)~.
    \end{align*}
\end{definition}

\begin{remark}
    If $\alpha\to1$, the Tsallis divergence reduces to the KL divergence. 
    It follows from \cref{prp:alpha_deformed_functions_properties} in which $\log_\alpha\to\log$ as $\alpha\to1$.
    Hence,
    \begin{align*}
        D_{\alpha\to1}(p\|q)=-\lim_{\alpha\to1}\sum_{\mu=1}^Mp_\mu\log_\alpha(\frac{q_\mu}{p_\mu})=\sum_{\mu=1}^Mp_\mu\log(\frac{p_\mu}{q_\mu})\coloneq D_{\text{KL}}(p\|q)~.
    \end{align*}
\end{remark}

The dynamical stability of $p$ relates to the attention-retrieval stability via the Tsallis divergence.

\begin{proposition}[Tsallis Divergence and Attention-Retrieval Stability]
\label{prp:dynamical_z_p_stability}
    Let $\alpha>1$, $p^\star\in\Delta^M$ be the fixed maximizer in the simplex interior satisfying $z_\mu^\star=(Gp)_\mu^\star+b_\mu(J)$.
    Then there exist constants, $c_1,c_2>0$ such that
    \begin{align*}
        c_1\|z(t)-z^\star\|^2_2\leq D_\alpha(p(t)\|p^\star)\leq c_2\|p(t)-p^\star\|^2_2~.
    \end{align*}
\end{proposition}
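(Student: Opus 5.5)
We plan to prove both inequalities by a local second-order expansion of the Tsallis divergence around the interior fixed point $p^\star$, and to transfer the bound from $p$ to $z$ through the linear fixed-point relation $z=Gp+b(J)$. The statement is local: we consider $p(t)$ in a neighborhood $\mathcal{U}\subset\mathrm{int}\,\Delta^M$ of $p^\star$, where every $p_\mu$ and every $p^\star_\mu$ is bounded below by some $m>0$.

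\textbf{Upper bound.} Write $D_\alpha(p\|p^\star)=\frac{1}{\alpha-1}(\sum_\mu p_\mu^\alpha (p^\star_\mu)^{1-\alpha}-1)$. Set $f(p)\coloneq \sum_\mu p_\mu^\alpha (p^\star_\mu)^{1-\alpha}$.
\begin{itemize}
\item At $p=p^\star$ we have $f=\sum_\mu p^\star_\mu=1$, so $D_\alpha=0$.
\item The gradient is $\partial_\mu f=\alpha (p_\mu/p^\star_\mu)^{\alpha-1}$, which equals the constant $\alpha$ at $p^\star$. Along tangent directions $v$ with $\sum_\mu v_\mu=0$, the first-order term therefore vanishes.
\item The Hessian is $\mathrm{diag}\big(\alpha(\alpha-1)p_\mu^{\alpha-2}(p^\star_\mu)^{1-\alpha}\big)$, which is continuous and bounded on $\mathcal{U}$.
\end{itemize}
Taylor's theorem with Lagrange remainder, applied along the segment from $p^\star$ to $p(t)$ (which lies in the simplex by convexity), gives
\[
D_\alpha(p\|p^\star)=\tfrac{1}{2}\sum_\mu \alpha\,\tilde p_\mu^{\alpha-2}(p^\star_\mu)^{1-\alpha}(p_\mu-p^\star_\mu)^2
\]
for some intermediate point $\tilde p$. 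This yields $D_\alpha\le c_2\|p-p^\star\|_2^2$ with $c_2=\frac{\alpha}{2}\sup_{\mathcal{U},\mu}\tilde p_\mu^{\alpha-2}(p^\star_\mu)^{1-\alpha}$. This supremum is finite because the coordinates are bounded below by $m$ (this matters when $\alpha<2$) and above by $1$.

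\textbf{Lower bound.} The same Taylor formula also gives the reverse estimate $D_\alpha\ge c_0\|p-p^\star\|_2^2$, with $c_0=\frac{\alpha}{2}\inf_{\mathcal{U},\mu}\tilde p_\mu^{\alpha-2}(p^\star_\mu)^{1-\alpha}>0$. The infimum is positive since $\alpha>1$ and all coordinates lie in $[m,1]$. To pass to $z$:
\begin{itemize}
\item Use the update relation from the proof of \cref{prp:attention_retrieval_stability}: $z(t)-z^\star=G(p(t)-p^\star)$, since the source term $b(J)$ cancels. Hence $\|z-z^\star\|_2\le\|G\|_{\mathrm{op}}\|p-p^\star\|_2$.
\item Take $c_1=c_0/\|G\|_{\mathrm{op}}^2$. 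If $G=0$ the left-hand side vanishes and the inequality is trivial.
\end{itemize}
This gives $c_1\|z-z^\star\|_2^2\le D_\alpha(p\|p^\star)$.

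\textbf{Main obstacle.} The delicate point is uniformity of the constants. The Hessian weight $\tilde p_\mu^{\alpha-2}$ blows up or vanishes at the simplex boundary, so neither bound can hold globally with finite positive constants. I would handle this by stating the result on a neighborhood of $p^\star$ inside the interior, which is exactly where the interiority assumption on $p^\star$ is used. Alternatively, one can assume the trajectory stays in a compact subset of the interior, as it does near a stable fixed point by \cref{prp:attention_retrieval_stability}.

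A second subtlety concerns the direction of the $z$–$p$ link. Bounding $D$ below by $\|z-z^\star\|^2$ only requires the map $p\mapsto z$ to be Lipschitz, not invertible. For this reason I route the argument through $\|G\|_{\mathrm{op}}$ rather than through $G^{-1}$, which avoids needing any assumption on the invertibility of the Gram matrix.
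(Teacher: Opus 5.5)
Your proposal is correct and follows the paper's proof. Both arguments expand $D_\alpha(\cdot\|p^\star)$ to second order about the interior point $p^\star$, kill the linear term using $\sum_\mu(p_\mu-p^\star_\mu)=0$, and then transfer the bound via $\|z-z^\star\|_2\le\|G\|_{\mathrm{op}}\|p-p^\star\|_2$; your Lagrange-remainder version is a more careful rendering of the paper's $O(\|p-p^\star\|_2^3)$ step.

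One side remark in your ``main obstacle'' paragraph is wrong: both bounds do hold globally with finite positive constants, so the restriction to a neighborhood is not forced. For $\alpha>1$, $D_\alpha(\cdot\|p^\star)$ is continuous on the closed simplex and, by Jensen, strictly positive for $p\neq p^\star$. A compactness argument outside a small ball around $p^\star$ therefore extends both constants to all of $\Delta^M$.
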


\begin{proof}
    Since
    \begin{align*}
        \frac{\partial}{\partial p_\mu}D_\alpha(p\|q)=\frac{\alpha}{\alpha-1}q_\mu^{1-\alpha}p_\mu^{\alpha-1}~,~\frac{\partial^2}{\partial p_\mu\partial p_\nu}D_\alpha(p\|q)=\alpha q_\mu^{1-\alpha}p_\mu^{\alpha-2}\one_{\{\mu,\nu\}}~,
    \end{align*}
    we have the Taylor expansion
    \begin{align*}
        D_\alpha(p(t)\|p^\star) = & ~ D_\alpha(p^\star\|p^\star)+\frac{\alpha}{\alpha-1}\sum_{\mu=1}^M(p_\mu(t)-p^\star_\mu)+\frac{\alpha}{2}\sum_{\mu=1}^M\frac{1}{p_\mu^\star}(p_\mu(t)-p^\star_\mu)^2+O(\|p(t)-p^\star\|_2^3)\\
        = & ~ \frac{\alpha}{2}\sum_{\mu=1}^M\frac{1}{p_\mu^\star}(p_\mu(t)-p^\star_\mu)^2+O(\|p(t)-p^\star\|_2^3)~.
    \end{align*}
    Since $p^\star$ is in the simplex interior, $1>p^\star_\mu>0$ for all $\mu$.
    Hence, there exist $c_1,c_2>0$ such that
    \begin{align*}
        c_1\|p(t)-p^\star\|_2^2\leq D_\alpha(p(t)\|p^\star)\leq c_2\|p(t)-p^\star\|^2_2~.
    \end{align*}
    Moreover,
    \begin{align*}
        z_\mu(t)-z^\star_\mu=[G(p(t)-p^\star)]_\mu~\implies~\|z(t)-z^\star\|_2\leq\|G\|_{\text{op}}\cdot\|p(t)-p^\star\|_2~.
    \end{align*}
    Combining these two inequalities, we prove this proposition.
\end{proof}

\begin{remark}
    From \cref{prp:dynamical_z_p_stability}, we notice that the convergence in $\mathcal{M}_{z}$ does not control the convergence of the Tsallis divergence and the convergence in $p$-space.
    If the Gram matrix is invertible, then $\|p(t)-p^\star\|_2\leq\|G^{-1}\|_{\text{op}}\cdot\|z(t)-z^\star\|_2$.
    In this way, the convergence in $\mathcal{M}_{x}$ controls the convergence in $\mathcal{M}_{z}$ and the $p$-space.
    However, in the real world, the high correlations between memory patterns make $G_{\mu\nu}$ non-invertible.
    Hence, we do not guarantee the convergence of $p^\star$. 
\end{remark}

\clearpage

\section{Experiment Setup}
\label{sec:numerical_studies}

In our simulation, \cref{sec:numerical_results} shows the phase structure of modern Hopfield models by scanning the inverse temperature $\beta$, and the exponential memory load $\kappa=\log (M)/d$ in both the sparse $(\alpha=2)$ and dense $(\alpha=1)$ regimes. 
Here we introduce the support size to characterize the usage of memory weights.
Then we provide the experimental setup and further details in \cref{sec:exp_settings}.

\subsection{Effective Support Size and Sparsity}
\label{sec:size_sparsity}

To relate entropy functions to the size and the randomness of information concretely, we introduce the effective support size and the sparsity here.

\begin{definition}[Effective Support Size \cite{hill1973diversity}]
\label{def:effective_support_size}
    We define the (Tsallis-R\'enyi) effective support size as
    \begin{align*}
        K_{\text{eff}}^{(\alpha)}[p]\coloneq(\sum_{\mu=1}^Mp^\alpha_\mu)^{\frac{1}{1-\alpha}}~,
    \end{align*}
    where $p\in\Delta^M$ satisfying $\sum_{\mu=1}^Mp_\mu=1$ and $\alpha\in\R_+/\{1\}$.
\end{definition}

\begin{proposition}[Range of Effective Support Size \cite{hill1973diversity}]
\label{prp:range_effective_support_size}
    The effective support size satisfies the inequality: 
    \begin{align*}
        1\leq K_{\text{eff}}^{(\alpha)}[p]\leq M
    \end{align*}
    If $K_{\text{eff}}^{(\alpha)}[p]=1$, then $p$ is a peak. 
    If $K_{\text{eff}}^{(\alpha)}[p]=M$, then $p$ is uniform.
\end{proposition}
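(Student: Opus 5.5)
We prove the two-sided bound on $K_{\text{eff}}^{(\alpha)}[p]=(\sum_\mu p_\mu^\alpha)^{1/(1-\alpha)}$ by separating the cases $\alpha>1$ and $0<\alpha<1$. In each case we bound the power sum $S_\alpha[p]\coloneq\sum_{\mu=1}^M p_\mu^\alpha$ from both sides on the simplex $\Delta^M$, and then track how the monotone map $s\mapsto s^{1/(1-\alpha)}$ transforms these bounds. The sign of $1/(1-\alpha)$ flips between the two cases. The direction of the inequalities on $S_\alpha$ flips as well, so the final bounds on $K_{\text{eff}}^{(\alpha)}$ come out the same in both cases.

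\textbf{Case $\alpha>1$.} Since $0\le p_\mu\le1$, we have $p_\mu^\alpha\le p_\mu$, hence $S_\alpha[p]\le\sum_\mu p_\mu=1$. Equality holds iff each $p_\mu\in\{0,1\}$, i.e., iff $p$ is a vertex (a peak). For the lower bound we use the convexity of $t\mapsto t^\alpha$ and Jensen's inequality (equivalently, the power-mean inequality):
\begin{align*}
    \frac{1}{M}\sum_{\mu=1}^M p_\mu^\alpha\ \ge\ \Big(\frac{1}{M}\sum_{\mu=1}^M p_\mu\Big)^\alpha=M^{-\alpha}.
\end{align*}
This gives $S_\alpha[p]\ge M^{1-\alpha}$. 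By strict convexity, equality holds iff $p$ is uniform. Because $1/(1-\alpha)<0$, the map $s\mapsto s^{1/(1-\alpha)}$ is decreasing on $(0,\infty)$. Therefore $M^{1-\alpha}\le S_\alpha\le1$ yields $1\le K_{\text{eff}}^{(\alpha)}\le M$, with the equality characterizations carried over.

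\textbf{Case $0<\alpha<1$.} Everything reverses. Now $p_\mu^\alpha\ge p_\mu$, so $S_\alpha\ge1$, with equality iff $p$ is a vertex. Since $t\mapsto t^\alpha$ is strictly concave, Jensen gives $S_\alpha\le M^{1-\alpha}$, with equality iff $p$ is uniform. The exponent $1/(1-\alpha)$ is now positive, so the map is increasing, and again $1\le K_{\text{eff}}^{(\alpha)}\le M$.

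The only care needed is in the equality characterizations, and we consider two points. First, at $p_\mu=0$ the term $p_\mu^\alpha=0$ is still well defined because $\alpha>0$. Second, the vertex characterization relies on $p_\mu^\alpha=p_\mu$ forcing $p_\mu\in\{0,1\}$. These are the only delicate points. Optionally, we can remark that letting $\alpha\to1$ recovers the Shannon bound $1\le e^{S[p]}\le M$, by \cref{prp:tsallis_properties}.
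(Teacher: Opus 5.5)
Your proposal is correct and follows essentially the same route as the paper: split into $\alpha>1$ and $0<\alpha<1$, bound $\sum_\mu p_\mu^\alpha$ using $p_\mu^\alpha\le p_\mu$ (resp.\ $\ge$) and Jensen, then push the bounds through the monotone map $s\mapsto s^{1/(1-\alpha)}$. Your equality analysis, via strict convexity/concavity and $p_\mu^\alpha=p_\mu\iff p_\mu\in\{0,1\}$, supplies the ``only if'' direction that the statement asserts; the paper's proof only notes where the bounds saturate, i.e.\ the converse direction.
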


\begin{proof}
    We repeat the proof from \cite{hill1973diversity} for completeness.
    We decompose the range of $\alpha$ into $\alpha>1$ and $1>\alpha>0$ and focus on the former case first.
    For $\alpha>1$, we have $\sum_{\mu=1}^Mp_\mu^\alpha\leq1$ and  
    \begin{align*}
        1\leq & ~ (\sum_{\mu=1}^Mp_\mu^\alpha)^{\frac{1}{1-\alpha}}=K_{\text{eff}}^{(\alpha)}[p]\\
        = & ~ M^{\frac{1}{1-\alpha}}\cdot(\frac{1}{M}\sum_{\mu=1}^Mp_\mu^\alpha)^{\frac{1}{1-\alpha}}\\
        \leq & ~ M^{\frac{1}{1-\alpha}}\cdot [(\frac{1}{M}\sum_{\mu=1}^Mp_\mu)^\alpha]^{\frac{1}{1-\alpha}}\annot{By the Jensen's Inequality}\\
        = & ~ M~.
    \end{align*}
    For $1>\alpha>0$, then $1\leq\sum_{\mu=1}^Mp_\mu^\alpha$ and the similar derivation holds.
    We note that it saturates $1$ when $p$ is a vertex on the simplex, i.e., $p_{\mu_{\text{max}}}=1$ and $p_{\mu\neq\mu_{\text{max}}}=0$, where $\mu_{\text{max}}\coloneq\arg\max_\mu~p_\mu$ is non-degenerate. 
    It saturates the Jensen's inequality when $p_\mu=M^{-1}$ for every $\mu$.
\end{proof}

\begin{remark}
    By \cref{def:entropy_function},
    \begin{align*}
        K_{\text{eff}}^{(\alpha)}[p]=(1-(\alpha-1)\Psi_\alpha[p])^{\frac{1}{1-\alpha}}~,
    \end{align*}
    we differentiate it with respect to $\Psi_\alpha$. 
    It yields $(K_{\text{eff}}^{(\alpha)}[p])^{\alpha/(1-\alpha)}>0$.
    Hence, the effective support size is strictly increasing (monotonic) with respect to the Tsallis entropy.
    The higher the entropy is, the larger the effective support size is.
\end{remark}

\begin{lemma}[Threshold of Probability Function \cite{martins2016softmax,peters2019sparse}]
\label{lem:threshold_p_mu}
    The minimizer, $p\in\Delta^M$, of $\Gamma^{(\text{mod})}_\alpha[x,p,\xi]$ satisfies
    \begin{align}
    \label{eq:threshold_p_mu}
        p_\mu=(\frac{\alpha-1}{\alpha}\beta[z_\mu-\theta]_+)^{\frac{1}{\alpha-1}}~,
    \end{align}
    where $[u]_+\coloneq\max\{u,0\}$. 
    We determine $\theta$ by the normalization condition, $\sum_{\mu=1}^Mp_\mu=1$.
    This truncation only appears for $\alpha>1$.
    The support of the minimizers is thereby
    \begin{align*}
        \text{supp}(p)=\{\mu\in[1,M]:z_\mu>\theta\}~.
    \end{align*}
    We define $|\text{supp}(p)|$ as the exact support size.
\end{lemma}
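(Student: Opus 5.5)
The plan is to treat this as a finite-dimensional convex program on the simplex and solve it with KKT conditions, much as in the proof of \cref{prp:eom_attention}.

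First, fix $x$ and hence $z$. Minimizing $\tilde{\Gamma}^{(\text{mod})}_\alpha[x,p,\xi]$ over $p\in\Delta^M$ is the same as maximizing $\beta\sum_\mu z_\mu p_\mu+\Psi_\alpha[p]$ over $p\in\Delta^M$. By \cref{rmk:convexity_tsallis_entropy}, $\Psi_\alpha$ is strictly concave for $\alpha>0$. The feasible set is compact and convex, so a unique maximizer exists. Because the constraints are affine, Slater's condition holds, and the KKT conditions are therefore necessary and sufficient.

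Next, write the Lagrangian with a multiplier $\theta$ for $\sum_\mu p_\mu=1$ (rescaled by $\beta$) and multipliers $\lambda_\mu\ge 0$ for $p_\mu\ge0$. Using \eqref{eq:tsallis_entropy_function}, stationarity in $p_\mu$ reads
\begin{align*}
z_\mu-\frac{\alpha}{\beta(\alpha-1)}p_\mu^{\alpha-1}-\theta+\frac{\lambda_\mu}{\beta}=0,
\end{align*}
and complementary slackness gives $\lambda_\mu p_\mu=0$.

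Now split into cases. If $p_\mu>0$, then $\lambda_\mu=0$ and $p_\mu^{\alpha-1}=\frac{\alpha-1}{\alpha}\beta(z_\mu-\theta)$. For $\alpha>1$ the left side is positive, which forces $z_\mu>\theta$. If $p_\mu=0$, then $\lambda_\mu\ge0$ gives $z_\mu\le\theta$. Together these yield $p_\mu=(\frac{\alpha-1}{\alpha}\beta[z_\mu-\theta]_+)^{1/(\alpha-1)}$ in every case, and they identify the support as $\{\mu: z_\mu>\theta\}$.

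To pin down $\theta$, note that $\theta\mapsto\sum_\mu(\frac{\alpha-1}{\alpha}\beta[z_\mu-\theta]_+)^{1/(\alpha-1)}$ is continuous and nonincreasing. It is strictly decreasing wherever it is positive, tends to $\infty$ as $\theta\to-\infty$, and equals $0$ for $\theta\ge\max_\mu z_\mu$. By the intermediate value theorem there is exactly one $\theta$ giving normalization.

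Finally, handle the claim that truncation occurs only for $\alpha>1$. For $0<\alpha<1$ (and in the Shannon limit $\alpha\to1$), $\partial_{p_\mu}\Psi_\alpha\to+\infty$ as $p_\mu\to0^+$; this is the essential smoothness from \cref{prp:fenchel_conjugate_tsallis_entropy}. The optimum is therefore interior, every $\lambda_\mu=0$, and no $[\cdot]_+$ cutoff arises.

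I expect the main obstacle to be the boundary case analysis: justifying the use of KKT when $p_\mu^{\alpha-1}$ is not differentiable at $0$ for $1<\alpha<2$. I would handle this by a directional-derivative argument on the simplex rather than by naive gradients.
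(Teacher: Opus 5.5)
Your proposal is correct and takes essentially the same route as the paper. Both arguments form the Lagrangian with a multiplier for normalization and $\lambda_\mu\ge 0$ for nonnegativity, apply stationarity and complementary slackness, and split into the cases $p_\mu>0$ and $p_\mu=0$; the paper uses the opposite sign for $\theta$ and flips it at the end. Your existence and uniqueness argument, the intermediate-value argument for $\theta$, and the essential-smoothness treatment of $\alpha\le 1$ add rigor the paper leaves implicit. The obstacle you anticipate does not arise: KKT needs only first derivatives, and for $\alpha>1$ the map $p\mapsto p^\alpha$ is $C^1$ on $[0,\infty)$ with derivative $\alpha p^{\alpha-1}$ vanishing continuously at $0$. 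Only the second derivative blows up when $\alpha<2$, so no directional-derivative workaround is needed.
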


\begin{proof}
    We include the proof for completeness, adapting the standard argument from \cite{martins2016softmax,peters2019sparse} to our notation.
    Since $0\leq p_\mu\leq1$ and $\sum_{\mu=1}^Mp_\mu=1$, we impose these constriants as Lagrangian multiplier terms:
    \begin{align*}
        \mathcal{L}(p,z,\theta,\lambda)=\beta\sum_{\mu=1}^Mz_\mu p_\mu+\frac{1}{\alpha-1}(1-\sum_{\mu=1}^Mp_\mu^\alpha)+\beta\theta(\sum_{\mu=1}^Mp_\mu-1)+\sum_{\mu=1}^M\lambda_\mu p_\mu~.
    \end{align*}
    Let the optimal saddles that maximizes the Lagrangian function be $\bar{p}$, $\bar{z}$, $\bar{\theta}$, and $\bar{\lambda}$. 
    Then the Karush-Kuhn-Tucker (KKT) conditions are
    \begin{align*}
        \bar{p}_\mu\geq0~,~\sum_{\mu=1}^M\bar{p}_\mu=1 & , \annot{Primal}\\
        \bar{\lambda}_\mu\geq0 & ,  \annot{Dual}\\
        \bar{\lambda}_\mu \bar{p}_\mu=0  & , \annot{Complementary}\\
        \frac{\partial\mathcal{L}}{\partial p_\mu}|_{p=\bar{p},z=\bar{z},\theta=\bar{\theta},\lambda=\bar{\lambda}}=\beta\bar{z}_\mu-\frac{\alpha}{\alpha-1}(\bar{p}_\mu)^{\alpha-1}+\beta\bar{\theta}+\bar{\lambda}_\mu=0 & , \annot{Stationary}
    \end{align*}
    for all $\mu\in[1,M]$.
    By solving the stationary equation, for $\bar{p}_\mu>0$ and $\alpha>1$,
    \begin{align*}
        \bar{p}_\mu=(\frac{\alpha-1}{\alpha}\beta[\bar{z}_\mu+\bar{\theta}])^{\frac{1}{\alpha-1}}~.
    \end{align*}
    For $\bar{p}_\mu=0$ and $\alpha>1$, 
    \begin{align*}
        \bar{z}_\mu+\bar{\theta}+\bar{\lambda}_\mu=0~\implies~\bar{z}_\mu+\bar{\theta}\leq0~.
    \end{align*}
    Hence, after replacing $\bar{\theta}\to-\bar{\theta}$, we have
    \begin{align*}
        p_\mu=(\frac{\alpha-1}{\alpha}\beta[z_\mu-\theta]_+)^{\frac{1}{\alpha-1}}~,
    \end{align*}
    where we neglect bars to avoid redundant notations.
\end{proof}

\begin{remark}
    For $0<\alpha<1$, the exponent of \eqref{eq:threshold_p_mu} is negative, so the base should remain positive.
    In this way, there is no truncation and all minimizers $p_\mu>0$.
    For $\alpha\to1$, the Shannon entropy has the stationary equation,
    \begin{align*}
        \beta z_\mu-\log(p_\mu)-1+\beta\theta+\lambda_\mu=0~.
    \end{align*}
    The minimizer, $p_\mu$, is positive everywhere and does not truncate for finite $(\theta,z_\mu,\lambda_\mu)$.
\end{remark}

\begin{definition}[Sparsity \cite{martins2016softmax,peters2019sparse}]
    We say $p\in\Delta^M$ is $k$-sparse if the exact support size satisfies
    \begin{align*}
        |\text{supp}(p)|=k~,
    \end{align*}
    where $0\leq k\leq M$.
    The large the $k$ is, the less the sparsity is.
    In other words, if more points satisfy $p=0$, then the system is sparser.
\end{definition}

\begin{remark}
\label{rmk:sparsity_alpha}
    Higher entropy suggests larger effective support size but not higher sparsity. The sparsity depends on the boundary geometry of the entropy function. 
    \cref{lem:threshold_p_mu} states that only for $\alpha>1$, there is a truncation and several $p_\mu$ are zero.
    It is consistent with the argument in \cref{prp:fenchel_conjugate_tsallis_entropy}.
    Due to \cref{def:essentially_smooth}, the Tsallis entropy is not essentially smooth only for $\alpha>1$.
    Therefore, for $\alpha>1$, the optimizer sit on the boundaries of the simplex when the corresponding $z_\mu$ is below the threshold.
    This introduces the sparsity.
    Otherwise, the divergence of gradient on the boundary forbids the sparsity.
\end{remark}

\subsection{Settings}
\label{sec:exp_settings}

\textbf{Setup.} We consider a system of dimension $d$ with $M=e^{\kappa d}$ normalized stored patterns, $\{\xi^\mu\}_{\mu=1}^M$.
To probe the energy landscape, $\Gamma_\alpha^{(\text{mod})}$, we perform continuous scans by varying $\beta$ or $\kappa$.
For each scan, we use both up and down sweeps to detect hysteresis and metastability.

\begin{enumerate}

    \item \textbf{Up Sweep}: Initialize from low $\beta$ (or low $\kappa$) and increase.
    
    \item \textbf{Down Sweep}: Initialize from high $\beta$ (or high $\kappa$) and decrease.
    
\end{enumerate}

At each parameter value ($\beta$ or $|\kappa$), we initialize the optimization from the converged solution at the previous step (warm start). 
Then we minimize the free energy with respect to $x$ by gradient descent, 
\begin{align*}
    x(t+1)=x(t)-\eta(t)\nabla_x\Gamma_{\alpha}^{(\text{mod})}(x(t)),\qquad\eta(t)=\eta_0 2^{-m(t)},
\end{align*}
where $\eta_0=0.5$ and $m(t)\coloneq\min\{m\in\mathbb{N}_0\mid\Gamma_{\alpha}^{\text{(mod)}}(x(t+1))\leq\Gamma_{\alpha}^{\text{(mod)}}(x(t))\}$.
This procedure tracks continuous branches of local minima of $\Gamma^{(\text{mod})}_\alpha$.
To explore competing solutions, we perform multi-branch continuation.

\textbf{Data.} In our experiment, we consider clustered normalized patterns.
\begin{align*}
    \xi^\mu_i=\sqrt{1-\delta}\eta^{c(\mu)}_i+\sqrt{\delta}\zeta^\mu_i,
\end{align*}
where $c(\mu)\in\{1,...,C\}$ assigns pattern $\mu$ to a cluster, $\eta_i^{c(\mu)}$ are cluster centers sampled as $\eta^c\sim\mathcal{N}(0,\one_d)$, $\zeta^\mu\sim\mathcal{N}(0,\one_d)$ are i.i.d. normalized noise vectors, and $\delta\in[0,1]$ indicates the noise strength.
This construction introduces off-diagonal correlations in the Gram matrix, which makes the simulation more realistic than the i.i.d. setting.
In our simulation, we choose $C=4$ and $\delta=0.04$.

\textbf{Observables.} To characterize the coexistence regimes and retrieval success, we measure the following quantities.
\begin{enumerate}
    \item \textbf{Maximum Overlap:} 
    To verify whether a stable point aligns with the stored patterns, we measure the maximum overlap.
    $z^\star_{\text{max}}=\max_\mu|z^\star_\mu|$.

    \item \textbf{Exact Support Size:}  
    Each stable point has its own support and exact support size. 
    For example, $z^\star_\mu=\sum_{i=1}^dx_i^\star\xi_i^\mu/\sqrt{d}$ gives a specific value of $p^\star_\mu$ via \cref{lem:threshold_p_mu}, which determines $K_{\text{eff}}^{(\alpha>1)}$. 
    For $\alpha=2$, we measure $|\text{supp}(p^\star)|=\#\{\mu\in[1,M]:z^\star_\mu>\theta\}$ according to \cref{lem:threshold_p_mu}, where $\#$ indicates the cardinality.
    For $\alpha=1$, we measure $\{\mu\in[1,M]:p^\star_\mu>\varepsilon\}$, where we choose $\varepsilon=10^{-10}$ in our simulation.
    Hence, we denote the measured exact support size as
    \begin{align*}
        |\text{supp}(p^\star)|\coloneq\begin{cases}
            \#\{\mu\in[1,M]:p_\mu^\star>10^{-10}\}~&,~\alpha=1\\
            \#\{\mu\in[1,M]:z_\mu^\star>\theta\}~&,~\alpha=2
        \end{cases}.
    \end{align*}
    
    \item \textbf{Fixed Scored-Threshold Support Size:}
    In temperature analysis in the sparse case, we further define
    \begin{align*}
        K_{\Delta}\coloneq\{\mu\in[1,M]:z_\mu^\star>\Delta\},
    \end{align*}
    and we choose $\Delta=0.01$.
    Hence, it serves as a diagnostic for counting how many memory components achieve large query-key alignment.
    
    \item \textbf{Inverse Participation Ratio (IPR):} It is defined as the effective support size with $\alpha=2$, i.e., 
    $K_{\text{eff}}^{\text{IPR}}\coloneq K_{\text{eff}}^{(2)}=(\sum_\mu (p_\mu^\star)^2)^{-1}$.
    If IPR is large, then there are more components of $p^\star_\mu$ that are non-zero.

    \item \textbf{Free Energy:} 
    Distinct stable points have distinct free energy in general.
    To see which is more preferable, i.e., which one has the lowest free energy, we measure $\Gamma^{(\text{mod})}_\alpha[x,\xi]$.
\end{enumerate}

\paragraph{Results.}
We display the results in \cref{fig:alpha2_phase_transition_beta,fig:alpha1_phase_transition_beta,fig:alpha2_phase_transition_kappa,fig:alpha1_phase_transition_kappa}.
Please see \cref{sec:numerical_results} for detailed illustrations.

\clearpage

\section{Additional Experiments}
\label{sec:add_exp}
In this section, we first show the experiment of dynamical stability in modern Hopfield models.
Then we analyze the empirical retrieval behaviors in realistic end-to-end Transformer-architectures.
Finally, we validate the special coexistence point formula \cref{prp:coex_point} and spinodal formula \cref{prp:spinodal_point} numerically.

\subsection{Dynamical Stability Analysis for Temperature}

In this subsection, we provide the numerical validation for the theoretical dynamical retrieval stability condition; please see \cref{sec:dynamical_stability} for a detailed discussion and derivations.
Specifically, we only analyze the ordinary gradient descent (discrete) algorithm and verify the bound in \cref{prp:dynamical_retrieval_stability_degeneracy}.

\paragraph{Setup.}
We consider the number of pattern be $M=100$, pattern dimension be $d=64$, and inverse-temperature be $\beta=300$ as an example.
For simplicity, we apply fixed-step gradient descent
\begin{align*}
    x(t+1)=x(t)-\eta\nabla_x\Gamma^{(\text{mod})}_\alpha(x(t)),
\end{align*}
where $\eta$ is a constant.
Let $\eta_{\text{crit}}=2/\lambda_{\text{max}}(\nabla^2\Gamma_\alpha^{(\text{}mod)}(x^\star))$ be the theoretical critical learning rate threshold, $\lambda_{\text{max}}(\cdot)$ be the maximal eigenvalue, and $x^\star$ be a fixed local stable point.
We investigate the convergence through iteration numbers, $t$.
We define the initial condition as $x(t=0)=x^\star+A u_{\text{soft}}$, where $A=10^{-2}$ is a small amplitude and $u_{\text{soft}}$ is the eigen-direction of the Hessian with the smallest eigenvalue.
Notice that we derive $u_{\text{soft}}$ numerically, so it is not the exact eigenvector but exhibits tiny component in other directions.

\paragraph{Data.}
We use the same pattern set $\{\xi^\mu\}_{\mu=1}^M$ as in the previous experiments, where each pattern $\xi^\mu\in\R^d$ is normalized.
We generate this collection through the method in \cref{sec:exp_settings}.

\paragraph{Observables.}
We track two quantities along the gradient-descent trajectory: the distance to the fixed point, $\|x(t)-x^\star\|_2$ and the maximal overlap, $z_{\text{max}}(x(t))\coloneq \max_\mu z_\mu(x(t))$.

\paragraph{Results.}
In the sparse regime ($\alpha=2$), we demonstrate the result in \cref{fig:dyn_alpha2}.

\begin{figure}[t]
    \centering
    \includegraphics[width=1\linewidth]{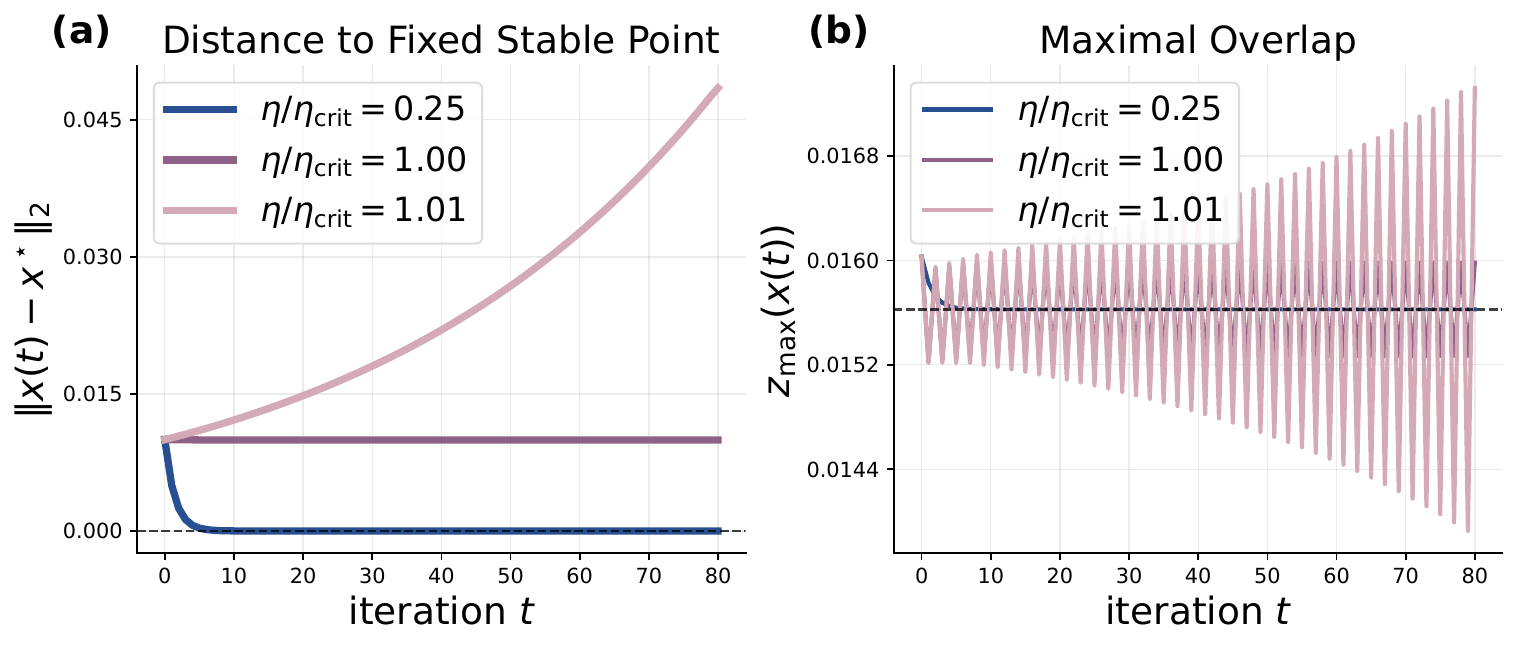}
    \caption{\textbf{Dynamical retrieval stability of MHM gradient descent in the sparse regime ($\alpha=2$).}
    {\small ($M=100,d=64$).
    We study the gradient descent update $x(t+1)=x(t)-\eta\nabla\Gamma_\alpha^{\text{(mod)}}(x(t))$. 
    We denote the critical learning rate as $\eta_{\text{crit}}=2/\lambda_{\text{max}}(\nabla^2\Gamma_\alpha^{(\text{mod})}(x^\star))$ with $\lambda_{\text{max}}$ the largest eigenvalue at a fixed stable point $x^\star$.
    \textbf{(a) Distance to the fixed point.}
    For $\eta<\eta_{\text{crit}}$, the training converges to the fixed point. 
    Otherwise, the dynamics is unstable or even diverges.
    \textbf{Maximal overlap.}
    For $\eta<\eta_{\text{crit}}$, the training converges to the fixed value of overlap $z_{\text{max}}\approx 0.0156$. 
    It is the saturation value of one-pattern perfect retrieval in \cref{fig:alpha2_phase_transition_beta} (a).
    Otherwise, the dynamics is unstable or even diverges.
    }}
    \label{fig:dyn_alpha2}
\end{figure}

\begin{enumerate}[label=(\alph*)]

    \item  For $\eta<\eta_{\text{crit}}$, the distance (blue curve) approaches to zero asymptotically, indicating that the training $x(t)$ converges to the fixed stable point, $x^\star$.
    At $\eta\approx\eta_{\text{crit}}$, the convergence becomes marginal.
    For $\eta>\eta_{\text{crit}}$, the training $x(t)$ deviates from the local point as iteration increases.
    This indicates a dynamical instability.
    Therefore, this plot provides the evidence that $\eta_{\text{crit}}$ separates the convergent regime. 

    \item The maximal overlap $z_{\text{max}}(x(t))$ exhibits the same behavior. 
    For $\eta<\eta_{\text{crit}}$, $z_{\max}(x(t))$ converges to the fixed value $z_{\max}(x^\star)$. 
    This indicates stable retrieval of the dominant memory pattern.
    Notably, the convergent value is $z_{\text{max}}(x^\star)\approx1/d\approx0.0156$, which is the one-pattern retrieval in \cref{fig:alpha2_phase_transition_beta} (a).
    At $\eta\approx\eta_{\text{crit}}$, it exhibits persistent oscillations around $z_{\text{max}}(x^\star)$. 
    For $\eta>\eta_{\text{crit}}$, the oscillations grow, indicating a dynamical instability in retrieval.
    
\end{enumerate}

In the dense regime ($\alpha=1$), we demonstrate the result in \cref{fig:dyn_alpha1}.

\begin{figure}[t]
    \centering
    \includegraphics[width=1\linewidth]{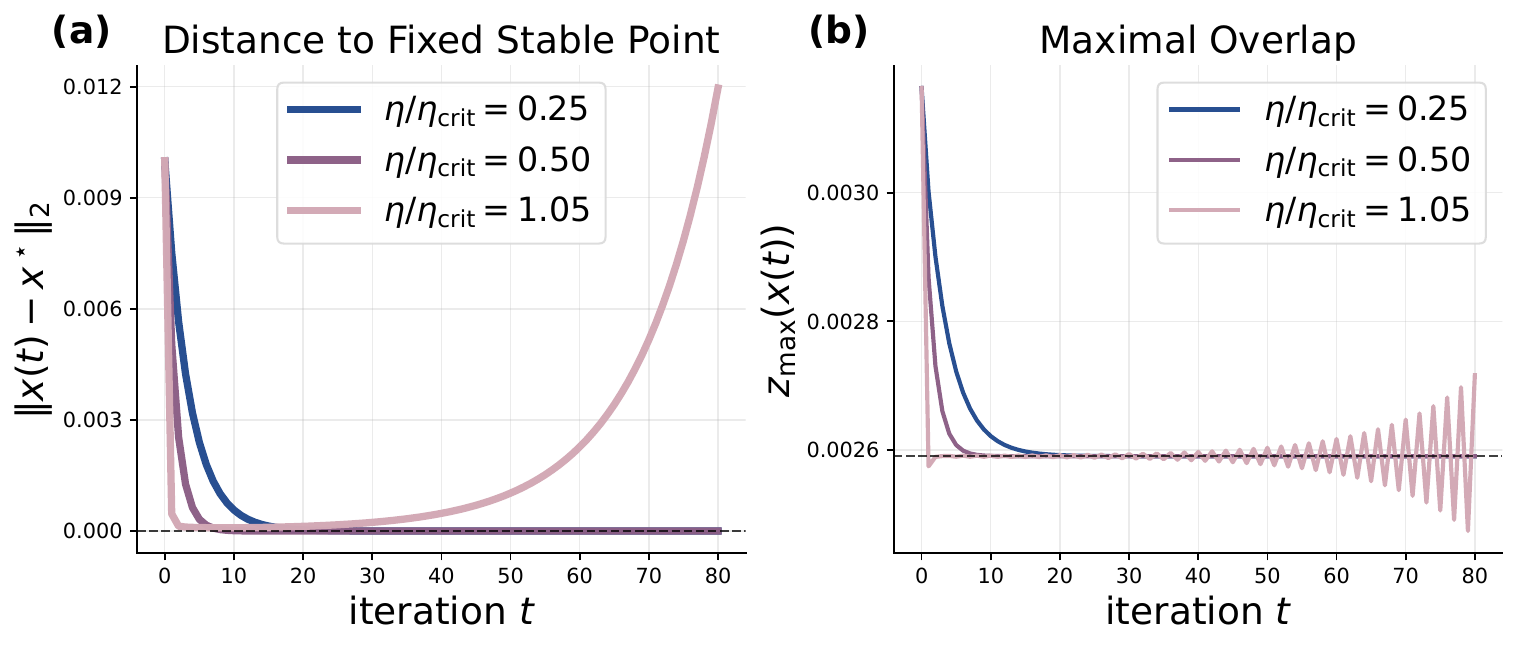}
    \caption{\textbf{Dynamical retrieval stability of MHM gradient descent in the dense regime ($\alpha=1$).}
    {\small ($M=100,d=64$).
    We study the gradient descent update $x(t+1)=x(t)-\eta\nabla\Gamma_\alpha^{\text{(mod)}}(x(t))$. 
    We denote the critical learning rate as $\eta_{\text{crit}}=2/\lambda_{\text{max}}(\nabla^2\Gamma_\alpha^{(\text{mod})}(x^\star))$ with $\lambda_{\text{max}}$ the largest eigenvalue at a fixed stable point $x^\star$.
    \textbf{(a) Distance to the fixed point.}
    For $\eta<\eta_{\text{crit}}$, the training converges to the fixed point. 
    For $\eta>\eta_{\text{crit}}$, a divergence follows after a decay.
    This reflects that the iterate close to $x^\star$ initially.
    After this, the unstable eigen-directions start to dominate.
    We provide the precise explanation in this section.
    \textbf{(b) Maximal overlap.}
    For $\eta<\eta_{\text{crit}}$, the training converges to the fixed value of overlap $z_{\text{max}}$. 
    For $\eta>\eta_{\text{crit}}$, a growing oscillations follow after an initial convergence.
    This is similar for the behavior in the distance plot.
    }}
    \label{fig:dyn_alpha1}
\end{figure}

\begin{enumerate}[label=(\alph*)]

    \item For $\eta<\eta_{\text{crit}}$, the iteration converges to the stable fixed point $x^\star$ rapidly. 
    Larger the learning rate is, more rapid the convergence is. 
    For $\eta>\eta_{\text{crit}}$, the distance declines first, and then grows up after some number of iteration.
    This phenomenon arises from unstable modes being small due to our initialization $x(t=0)$.
    In particular, from \eqref{eq:delta_recursion}, we have $\delta_i(t+1)=(1-\eta\lambda_i)\delta_i(t)$,
    where $\delta_i(t)\equiv x_i(t)-x_i^\star$, and $\lambda_i$ is the $i$-th eigenvalue of the Hessian $\nabla^2_x\Gamma_\alpha^{(\text{mod})}$.
    The convergence condition is 
    \begin{align*}
        |1-\eta\lambda_i|<1,\qquad\text{or }~0<\lambda_i<2/\eta.
    \end{align*}
    Hence, even if $\eta>\eta_{\text{crit}}$, some modes $x_i(t)$ still converge to the fixed point $x_i^\star$ if the eigenvalues satisfy $\lambda_i<2/\eta$.
    Initially, we consider the dominant mode be which with $\lambda_{\text{min}}$.
    Hence, it converges to the fixed point in the early stage.
    However, for $\lambda_i>2/\eta$, $\delta_i(t)$ grows with $t$.
    Hence, the unstable modes start to dominate, and the trajectory diverges.

    \item For $\eta<\eta_{\text{crit}}$, the iteration converges to the stable fixed value $z_{\text{max}}(x^\star)$.
    Larger the learning rate is, more rapid the convergence is.
    For $\eta>\eta_{\text{crit}}$, it exhibits the same behavior as in distance analysis.
    The dynamics converges first, then it diverges after certain iteration.
    This arises from unstable modes being small due to our initialization $x(t=0)$ as well.
    
\end{enumerate}

\subsection{Support and Capacity-Reliability Trade-off}
\label{sec:cap_reliability_memory_load}

We study how increasing the number of memories affects retrieval reliability and effective support (IPR) in a trained Transformer.
We compare three
attention maps, $\alpha\in\{1,1.5,2\}$ (softmax, entmax-1.5, and sparsemax), respectively.

\paragraph{Setup.}
The data generation pipeline is
\begin{align*}
    u_Q\xrightarrow[\text{fixed}]{A_Q}q^{\text{in}},\qquad\text{and }~u_\mu\xrightarrow[\text{fixed}]{A_K}k^{\text{in}}_\mu.
\end{align*}
We generate latent query and key variables and map them to observed inputs.
After that, we input $q^{\text{in}}$ and $k^{\text{in}}_\mu$ into a multi-head cross-attention Transformer, yielding the output $\hat{y}\in\R^{d_v}$.

\paragraph{Data.}
Each example contains a query $q^{\text{in}}\in\R^{d_{\text{in}}}$, a set of memory keys $k_\mu^{\text{in}}\in\R^{d_{\text{in}}}$, and independent values $v_\mu\sim\mathcal{N}(0,\one_{d_v})$ with $d_v=32$.
The latent dimension is $d_{\text{lat}}=16$, the input dimension is $d_{\text{in}}=24$, and the number of latent clusters is $C=4$.
For each example, we draw normalized cluster centers independently
\begin{align*}
    c_r=\frac{\tilde{c}_r}{\|\tilde{c}_r\|_2}.\qquad \tilde{c}_r\sim\mathcal{N}(0,\one_{d_{\text{lat}}}),\qquad r\in\{1,\cdot\cdot\cdot,C\}.
\end{align*}
Memory $\mu$ is assigned a cluster index $a(\mu)$, and its latent key is
\begin{align*}
    u_\mu=\frac{\sqrt{1-\delta}\,c_{a(\mu)}+\sqrt{\delta}\,\epsilon_\mu}{\|\sqrt{1-\delta}\,c_{a(\mu)}+\sqrt{\delta}\,\epsilon_\mu\|_2},\qquad\epsilon_\mu\sim\mathcal{N}(0,\one_{d_{\text{lat}}}).
\end{align*}
We choose $\delta=0.12$ to control the latent key variance.
The observed query and keys are then
\begin{align*}
    q^{\text{in}}=A_Q u_Q+\sigma_{\text{obs}}\eta_Q,\qquad k^{\text{in}}_\mu=A_K u_\mu+\sigma_K\eta_\mu,
\end{align*}
where $A_Q,A_K\in\R^{d_{\text{in}}\times d_{\text{lat}}}$ with the orthogonalization $A_Q^\top A_Q=A_K^\top A_K=\one_{\text{lat}}$, $\eta_Q,\eta_\mu\sim\mathcal{N}(0,\one_{\text{in}})$, $\sigma_{\text{obs}}=0.03$, and $\sigma_K=0.05$. Here we divide $u_Q$ generation into two cases, the selective retrieval case and the uniform-aggregation case.
\begin{enumerate}
    \item \textbf{Selective retrieval.}
    In this case, $u_Q$ retrieves one particular memory.
    Consider that we sample a target memory index $t$ from $\{1,...,M\}$. Therefore, we denote $u_t$ as the latent key located at the sampled target slot $t$.
    We then generate the latent query by adding noise
    \begin{align*}
        u_Q=\frac{u_t+\sigma_Q\epsilon_Q}{\|u_t+\sigma_Q\epsilon_Q\|_2},\qquad \epsilon_Q\sim\mathcal{N}(0,\one_{d_{\text{lat}}}).
    \end{align*}
    During training, the query-noise level cycles through $\sigma_Q\in\{0.05,0.10,0.15\}$ and the number of memories cycles through $M=\{8,16,32,64\}$ simultaneously.
    During evaluation, the query noise is fixed at $\sigma_Q=0.10$ while $M$ is increasing.

    \item \textbf{Uniform-aggregation.}
    In this case, we sample a target cluster $r^\star$. 
    Then the latent query is
    \begin{align*}
        u_Q=\frac{c_{r^\star}+\sigma_Q\epsilon_Q}{\|c_{r^\star}+\sigma_Q\epsilon_Q\|_2},\qquad \epsilon_Q\sim\mathcal{N}(0,\one_{d_{\text{lat}}}).
    \end{align*}
    We denote the relevant set as $R=\{\mu:a(\mu)=r^\star\}$.
    Since there are $C=4$ clusters, we choose $|R|=M/4$ in this case.
\end{enumerate}

\paragraph{Transformer.}
Inside the Transformer, we apply one-layer multi-head cross-attention.
Specifically, we conduct the first step via
\begin{align*}
    q_h=W^Q_h\text{LN}_Q\Big(W_Q^{\text{in}}q^{\text{in}}+r_Q\Big),\qquad k_{h,\mu}=W^K_h\text{LN}_K\Big(W_K^{\text{in}}k^{\text{in}}_\mu+r_K\Big),
\end{align*}
\begin{align*}
    v_{h,\mu}=W^V_h\text{LN}_V\Big(W^{\text{in}}_Vv_\mu+r_V\Big),
\end{align*}
where $W_Q^{\text{in}},W_K^{\text{in}}\in\R^{d_{\text{mod}}\times d_{\text{in}}}$, $W_V^{\text{in}}\in\R^{d_{\text{mod}}\times d_v}$,  $W_h^Q,W_h^K,W_h^V\in\R^{d_h\times d_{\text{mod}}}$, LN denotes the LayerNorm, and $h=\{1,\cdot\cdot\cdot,H\}$.
Then the multi-head output is followed by two residual updates
\begin{align*}
    x_1=W_Q^{\text{in}}q^{\text{in}}+r_Q+W^O\text{Concat}\Big(\sum_{\mu=1}^Mp_{1,\mu}v_{1,\mu},\cdot\cdot\cdot,\sum_{\mu=1}^Mp_{H,\mu}v_{H,\mu}\Big)\in\R^{d_{\text{mod}}},
\end{align*}
and
\begin{align*}
    x_2=x_1+\text{FFN}\Big(\text{LN}_{\text{FFN}}(x_1)\Big)\in\R^{d_{\text{mod}}},
\end{align*}
where $W^O\in\R^{d_{\text{mod}}\times d_{\text{mod}}}$, $p_{h,\mu}$ is the attention with the form \cref{prp:eom_attention}, and FFN$(x)\coloneq W_2^{\text{FFN}}\text{GELU}(W_1^{\text{FFN}}x)$.
Finally, the final predicted output is
\begin{align*}
    \hat{y}=W^{\text{out}}\text{LN}_{\text{final}}(x_2)\in\R^{d_v},
\end{align*}
where $W^{\text{out}}\in\R^{d_v\times d_{\text{mod}}}$.
We choose the architectural hyperparameters as
\begin{itemize}
    \item Model dimension $d_{\text{mod}}=d_{h}\times H=96$.

    \item 4 attention heads ($H=4$) with head dimension $d_h=24$.

    \item FFN with hidden dimension 192, i.e., $W_1^{\text{FFN}}\in\R^{192\times96}$, and $W_2^{\text{FFN}}\in\R^{96\times192}$.
    
\end{itemize} 

In the Training stage, we choose 2500 AdamW steps, batch size $B=64$, learning rate $\eta=3\times10^{-4}$, weight decay $10^{-4}$, MSE loss, $\beta=1$, $M\in\{8,16,32,64\}$.
Specifically, the MSE we evaluate is
\begin{align*}
    \text{MSE}=\frac{1}{B}\sum_{b=1}^B\Big\|\hat{y}^{(b)}-y_{\text{target}}^{(b)}\Big\|^2_2,
\end{align*}
where $y^{(b)}_{\text{target}}=y_{t_b}$ for the selective retrieval case, and $y^{(b)}_{\text{target}}=\sum_{\mu\in R_b}y_\mu^{(b)}/|R_b|$ for the uniform-aggregation case.

In inference stage, we fix $\beta=1$ and the learned parameters, e.g., $W_h^Q,W^O,r_Q,\cdot\cdot\cdot$. Then we increase the memory load from $M=8$ to $M=4096$ to measure how good the prediction $\hat{y}$ is.

\paragraph{Observables.}
We measure 
\begin{itemize}
    \item \textbf{Selective retrieval: Value top-1 accuracy.}
    We define the value top-1 accuracy as
    \begin{align*}
        A_{\text{value}}\coloneq\frac{1}{B}\sum_{b=1}^B\one\Big\{\arg\min_\mu\big\|\hat{y}^{(b)}-y^{(b)}_\mu\big\|_2^2=t_b\Big\}.
    \end{align*}
    This measures end-to-end success, rather than simply measures whether attention puts its largest weight on the correct key.
    
    \item \textbf{Selective retrieval: Support size (IPR).}
    As introduced in \cref{sec:exp_settings}, we measure the support via the inverse partition ratio (IPR).
    However, since we consider batch size and multi-head here, we further average over them, i.e., \begin{align*}
        K^{\text{IPR}}_{\text{eff}}\coloneq\frac{1}{HB}\sum_{b=1}^B\sum_{h=1}^H\frac{1}{\sum_{\mu=1}^Mp_{b,h,\mu}^2}.
    \end{align*}

    \item \textbf{Uniform-aggregation: Error/zero-output baseline.}
    The target is $y_{\text{target}}=\sum_{\mu\in R}y_\mu/|R|$, where $y_\mu\sim\mathcal{N}(0,\one_{d_v})$.
    Hence, the zero-output baseline is $E_0\coloneq\mathbb{E}\|y_{\text{target}}\|^2_2=d_v/|R|$.
    In this way, we define the error/zero-output baseline $E_{\text{rel}}$ as
    \begin{align*}
        E_{\text{rel}}\coloneq\frac{\text{MSE}}{E_0}=\frac{|R|}{Bd_v}\sum_{b=1}^B\Big\|\hat{y}^{(b)}-\frac{1}{|R_b|}\sum_{\mu\in R_b}y^{(b)}_{\mu}\Big\|^2_2.
    \end{align*}

    \item \textbf{Uniform-Aggregation: IPR / $\boldsymbol{|R|}$.}
    We measure the normalized IPR with the size of the target set
    \begin{align*}
        \rho_{\text{IPR}}\coloneq\frac{K_{\text{eff}}^{\text{IPR}}}{|R|}=\frac{1}{HB|R|}\sum_{b=1}^B\sum_{h=1}^H\frac{1}{\sum_{\mu=1}^Mp_{b,h,\mu}^2}.
    \end{align*}
    
\end{itemize}

\paragraph{Results.}

We present the experiments as follows.

\begin{figure}[h]
    \centering
    \includegraphics[width=1\linewidth]{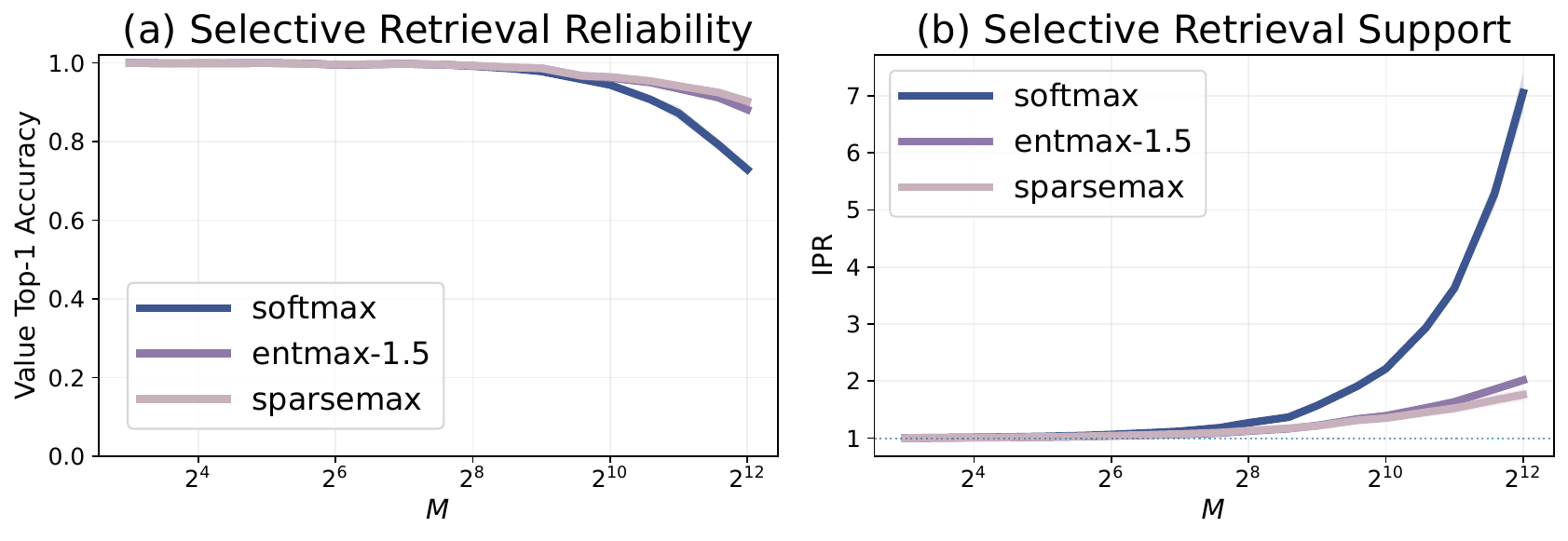}
    \caption{\textbf{Capacity-reliability trade-off for selective retrieval.}
    \small{
    We evaluate the trained Transformer at fixed $\beta_{\text{train}}=1$ while increasing $M$.
    \textbf{(a).} 
    We measure the value top-1 accuracy to characterize the retrieval reliability.
    Sparsemax retain the highest accuracy.
    Furthermore, their accuracy decreases as $M$ grows, indicating the capacity-reliability trade-off.
    \textbf{(b).}
    Softmax support remains full, while sparsemax use the least supports and is the most support-efficient.
    }}
    \label{fig:capacity_selective}
\end{figure}

From \cref{fig:capacity_selective} (a), the sparsemax arrives the best value top-1 accuracy, while the softmax arrives the worst one.
Furthermore, both of their accuracy decrease as $M$ increases.
This indicates that high memory load leads to lower reliability, and reveals the capacity-reliability trade-off.
From \cref{fig:capacity_selective} (b), the support of softmax remains full, while which of sparsemax is low.
This suggests that sparsemax is support-efficient.
Those plots conclude that for a selective task, sparsemax has not only high-accuracy but also high support-efficiency.

\begin{figure}[H]
    \centering
    \includegraphics[width=1\linewidth]{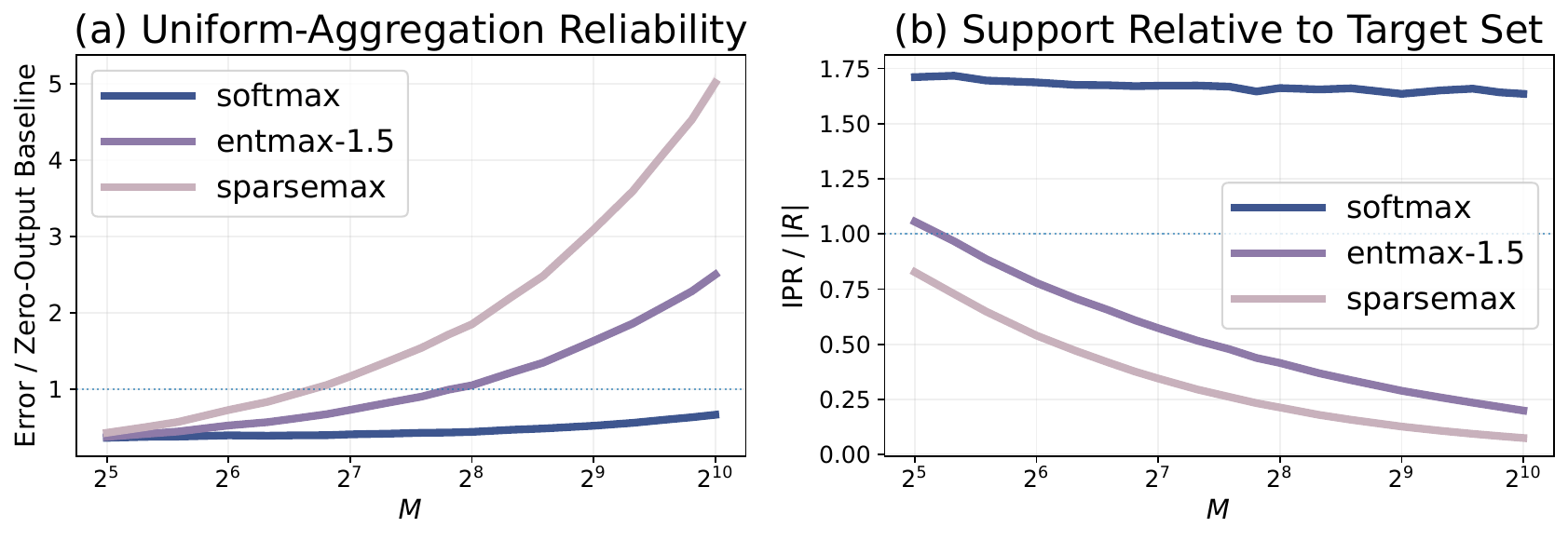}
    \caption{\textbf{Capacity-reliability trade-off for uniform-aggregated retrieval.}
    \small{
    We evaluate the trained Transformer at fixed $\beta_{\text{train}}=1$ while increasing $M$.
    \textbf{(a).} 
    We measure the ratio of the output error and the zero-output baseline to characterize the retrieval reliability.
    Sparsemax exhibits the worst reliability.
    Furthermore, their retrieval reliabilities become worse as $M$ grows, indicating the capacity-reliability trade-off.
    \textbf{(b).}
    Softmax support remains full, while sparsemax use the least supports and is the most support-efficient.
    }}
    \label{fig:capacity_aggregation}
\end{figure}

Compared to \cref{fig:capacity_selective}, \cref{fig:capacity_aggregation} (a) shows that softmax arrives the highest reliability among three attentions and the capacity-reliability trade-off as well.
From \cref{fig:capacity_aggregation} (b), we still observe that the sparsemax uses the least aggregated support size, and hence the most support-efficient.
In conclusion, although a large $\alpha$ leads to support-efficiency, the reliability depends on the task: sparsemax predicts a selective retrieval the best, while softmax predicts a uniform-aggregation retrieval the best.

\subsection{Support, Temperature, and Reliability}
\label{sec:support_tem_reliability}
In this section, we investigate the temperature effect on effective support size (IPR) and retrieval reliability in a realistic Transformer architecture numerically.
We compare three attention maps, $\alpha\in\{1, 1.5, 2\}$ (softmax,
entmax-1.5, and sparsemax), respectively.

\paragraph{Setup.}
We adopt the same setup and Transformer architecture as in \cref{sec:cap_reliability_memory_load}, i.e., one-layer four-head cross-attention Transformer with MSE training objective.
The only different setup is that we isolate $\beta$ by fixing $M$ at $M=128$ with query-noise level $\sigma_Q=0.10$ in the inference stage.
We vary the inverse-temperature over $\beta\in\{2^{-2},2^{-1},2^0,\cdot\cdot\cdot,2^9\}$.
Furthermore, we compare two protocals:

\begin{itemize}
    \item \textbf{Frozen-$\boldsymbol{\beta}$.}
    In this case, we first train each model at $\beta_{\text{train}}=1$.
    After training, we freeze every learned parameter ($W_h^Q,W^O,r_Q,$ and so forth).
    At inference, we observe how reliable the output performance is at each $\beta\in\{2^{-2},2^{-1},2^0,\cdot\cdot\cdot,2^9\}$.

    \item \textbf{Matched-$\boldsymbol{\beta}$.}
    In this case, we train a separate Transformer for every $\beta$.
    That is, for different $\beta$, we have different trained parameters $W_h^Q,W^O,r_Q,$ and so forth.
    
\end{itemize}

\paragraph{Observables.}
We adopt the same observables as which in \cref{sec:cap_reliability_memory_load}.

\paragraph{Results.}

We present the experiments as follows.

\begin{figure}[t]
    \centering
    \includegraphics[width=1\linewidth]{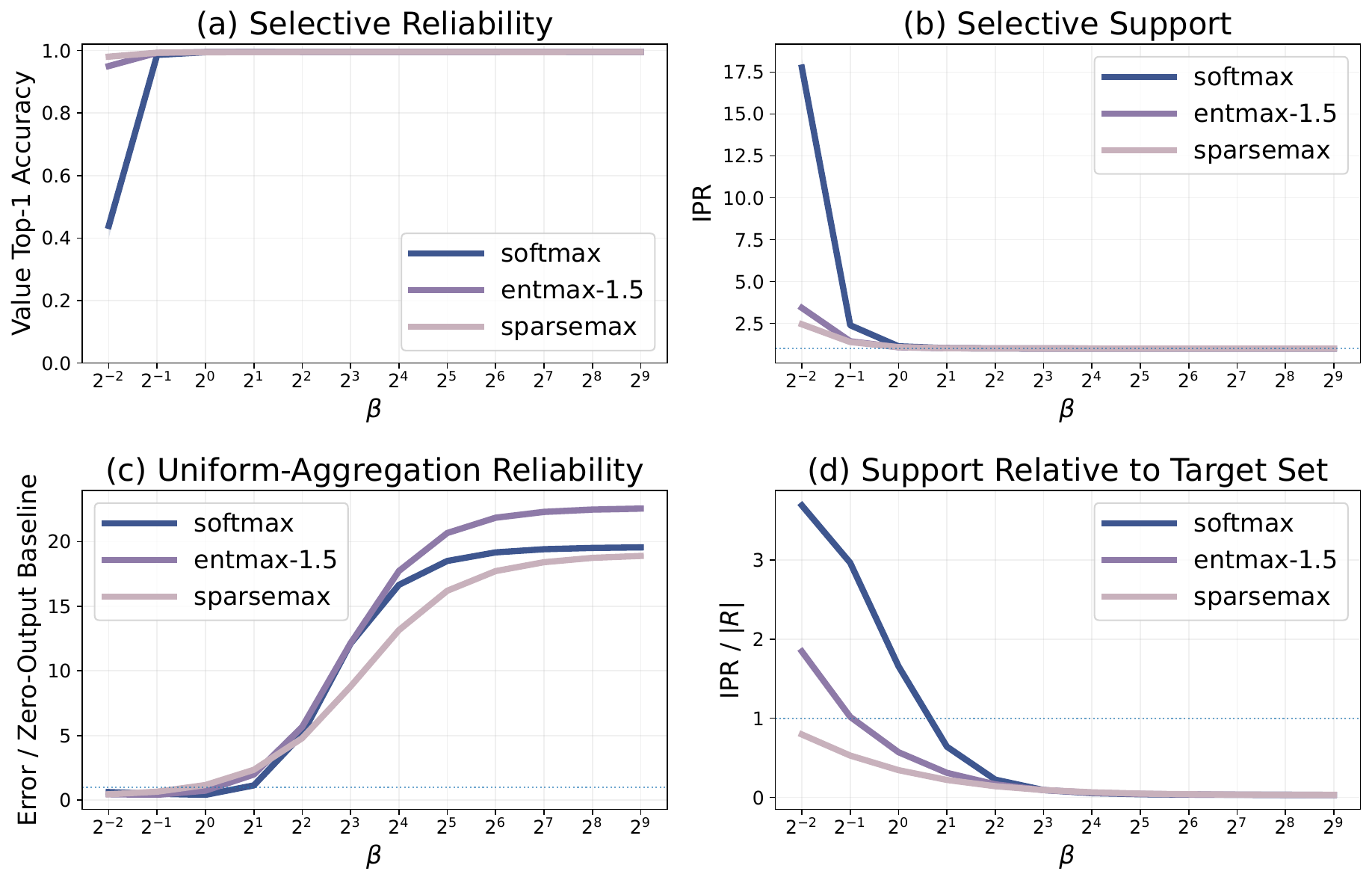}
    \caption{
    \textbf{Temperature effect on frozen-parameter Transformer.}
    \small{We train the Transformer parameters at $\beta_{\text{train}}=1$, then we fix those parameters, set $M=128$ and vary $\beta$ during inference.
    \textbf{(a).}
    The three attention families both exhibit high accuracy for a selective retrieval task.
    \textbf{(b).}
    For selective retrieval, the sparsemax uses the least effective support size (IPR), which shows the support-efficiency and reliability of high-$\beta$ sparsemax.
    \textbf{(c).}
    For a uniform-aggregation task, the three attention families have worse error relative to the zero-output baseline as $\beta$ increases.
    \textbf{(d).}
    The IPR relative to target-set size decline as $\beta$ grows, which indicates that the retrieval becomes selective.
    This leads to a mismatch for multi-memory aggregation.
    }}
    \label{fig:beta_frozen_M128}
\end{figure}

From \cref{fig:beta_frozen_M128} (a) and (b), we observe that sparsemax has highest accuracy and support-efficiency for frozen-$\beta$ case in a selective retrieval task.
Furthermore, both of the accuracy and IPR saturates at high $\beta$, arriving a single-pattern reliable retrieval for sparsemax.
For softmax, the support is full, but the IPR shows that its retrieval becomes strongly concentrated.
For the uniform-aggregation task, \cref{fig:beta_frozen_M128} (c) and (d) show that all three frozen models perform far worse than the zero-output baseline in the high-$\beta$ regime.
That is, since we train the parameters only at $\beta=1$, the Transformer exhibits greater selectivity, but creates a mismatch for multi-memory aggregation.

\begin{figure}[H]
    \centering
    \includegraphics[width=1\linewidth]{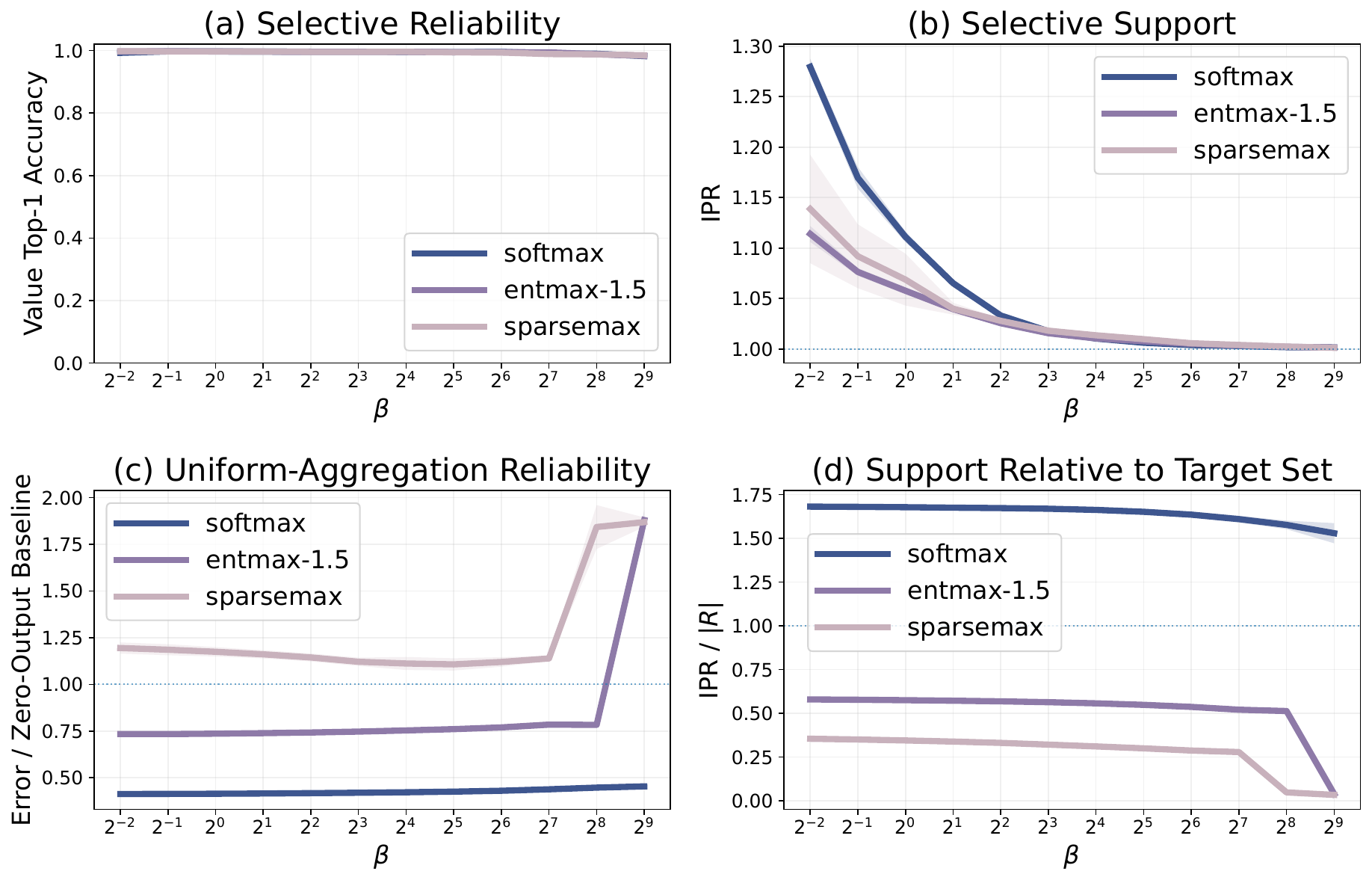}
    \caption{
    \textbf{Temperature effect on matched-parameter Transformer.}
    \small{ We train the Transformer parameters at each $\beta$, then we evaluate with $M=128$ during inference.
    \textbf{(a).}
    All three attention families maintain high accuracy in a selective retrieval task.
    \textbf{(b).}
    Sparsemax and entmax-1.5 are support-efficient in a selective retrieval task.
    \textbf{(c).}
    Softmax has the highest reliability among the three in a uniform-aggregation task.
    \textbf{(d).}
    Softmax uses numerous memories, which indicates a poor support-efficiency.
    }}
    \label{fig:beta_matched_M128}
\end{figure}

From \cref{fig:beta_matched_M128} (a) and (b), all three attention families maintain perfect accuracy in a selective retrieval task.
Furthermore, sparsemax and entmax-1.5 are more support-efficient that softmax.
For the uniform-aggregation task. \cref{fig:beta_matched_M128} (c) and (d) show that softmax achieves the highest reliability but its effective support remains broader than the relevant set.
Again, this indicates that high $\beta$ increases the selectivity, but achieves poor performance for an aggregation task.

\subsection{Numerical Validation of Coexistence and Spinodal Formula}

In this section, we validate the coexistence formula in \cref{prp:coex_point} and the spinodal formula in \cref{prp:spinodal_point} for $\alpha=2$ numerically.

\paragraph{Setup.}
We test the special two-key formulas under the uniform distribution and well-separation assumptions of \cref{prp:coex_point} and \cref{prp:spinodal_point}.
Then we construct a second dataset to test the full spinodal formula on larger supports.

\paragraph{Data.}
We construct two experiments as follows.
\begin{itemize}
    \item \textbf{Two-key systems.}
    We consider two unit-norm keys $\tilde{k}_1$ and $\tilde{k}_2$ satisfying $\tilde{k}_1^\top\tilde{k}_2=\epsilon$.
    Therefore, we set
    \begin{align*}
        \tilde{k}_1=e_1,\qquad \tilde{k}_2=\epsilon e_1+\sqrt{1-\epsilon^2}\,e_2
    \end{align*}
    Then we apply a random orthogonal transformation $Q_r$ to both keys
    \begin{align*}
        k_1=Q_r\tilde{k}_1,\qquad k_2=Q_r\tilde{k}_2,
    \end{align*}
    where the Gram matrix is preserved under this rotation, but removes the coordinate dependence. 
    We choose the state dimension as $d=16$, correlation as $\epsilon\in\{0,0.05,0.10\cdot\cdot\cdot,0.85\}$ and use 5 independent rotation.
    Hence, this gives $5\times 18=90$ two-key systems.
    
    \item \textbf{General $\boldsymbol{k}$-support systems.}
    In this experiment, we use $k\in\{3,4,5,6,8\}$ each with 10 independently generated systems and ambient dimension $d=20$.
    For each key index $\mu\in\{0,\cdot\cdot\cdot,k-1\}$, we construct the feature vector
    \begin{align*}
        \tilde{k}_\mu=\Big(b,\Big\{a_\ell\cos\Big(\frac{2\pi \ell \mu}{k}\Big),a_\ell\sin\Big(\frac{2\pi\ell \mu}{k}\Big)\Big\}^m_{\ell=1}\Big),\qquad m=\lfloor\frac{k-1}{2}\rfloor,
    \end{align*}
    where we sample $b\sim\text{Unif}(1.5,3.0)$, $a_\ell\sim\text{Unif}(0.15,0.80)$ independently.
    Then we normalize very vector and apply a random orthogonal rotation to $k_\mu$.
    
\end{itemize}

\paragraph{Observables.}
We measure

\begin{itemize}
    \item \textbf{Two-key: Special coexistence formula.}
    Under a uniform distribution and well-separation condition, the coexistence point occurs at
    \begin{align*}
        \beta_{\text{coexist}}=\frac{2}{1-\epsilon}.
    \end{align*}
    This is the special colsed-form result of \cref{prp:coex_point}.

    \item \textbf{Two-key: Numerical energy crossing.}
    We define the free energy difference
    \begin{align*}
        \Delta\Gamma(\beta)=\Gamma^{(\text{mod})}_{\alpha=2}(x_{\text{loc}};\beta)-\Gamma^{(\text{mod})}_{\alpha=2}(x_{\text{mix}};\beta),
    \end{align*}
    where $x_{\text{loc}}\coloneq k_1$ and $x_{\text{mix}}\coloneq (k_1+k_2)/2$.
    Then we compute the coexistence point from $\Delta\Gamma(\hat{\beta}_{\text{coexist}})=0$ numerically.

    \item \textbf{Two-key: Relative error.}
    We define the coexistence relative error as
    \begin{align*}
        E_{\text{coexist}}\coloneq\frac{\big|\hat{\beta}_{\text{coexist}}-\beta_{\text{coexist}}\big|}{\beta_{\text{coexist}}}.
    \end{align*}
    For the spinodal formula, we have $\beta_{\text{sp}}=2/(1-\epsilon)$ in the two-key systems.
    We obtain the empirical spinodal point by solving $\hat{c}_h(\hat{\beta}_{\text{sp}})=0$ numerically, where
    \begin{align*}
        \hat{c}_h(\beta)=\frac{\Gamma^{\text{(mod)}}_{\alpha=2}(x^\star+hu;\beta)+\Gamma^{\text{(mod)}}_{\alpha=2}(x^\star-hu;\beta)-2\Gamma^{\text{(mod)}}_{\alpha=2}(x^\star;\beta)}{h^2},
    \end{align*}
    $h=2\times10^{-5}$, and $u\coloneq(k_1-k_2)/\|k_1-k_2\|_2$.
    Then the spinodal relative error is defined as
    \begin{align*}
        E_{\text{sp}}\coloneq\frac{\big|\hat{\beta}_{\text{sp}}-\beta_{\text{sp}}\big|}{\beta_{\text{sp}}}.
    \end{align*}
    
    \item \textbf{General $\boldsymbol{k}$-support: Spinodal formula.}
    From \cref{prp:spinodal_point}, the predicted critical spinodal point is
    \begin{align*}
        \beta_{\text{sp}}=\frac{2}{\lambda_{\text{max}}(PGP)},
    \end{align*}
    where $P_{\mu\nu}=\one_{\{\mu,\nu\}}-k^{-1}$, and $G_{\mu\nu}=k_\mu^\top k_\nu/d$.
    We compute the empirical spinodal point $\hat{\beta}_{\text{sp}}$ from the equation $\hat{c}_h(\hat{\beta}_{\text{sp}})=0$ numerically.
    
\end{itemize}

\paragraph{Results.}

We present the experiments as follows.

\begin{figure}[h]
    \centering
    \includegraphics[width=1\linewidth]{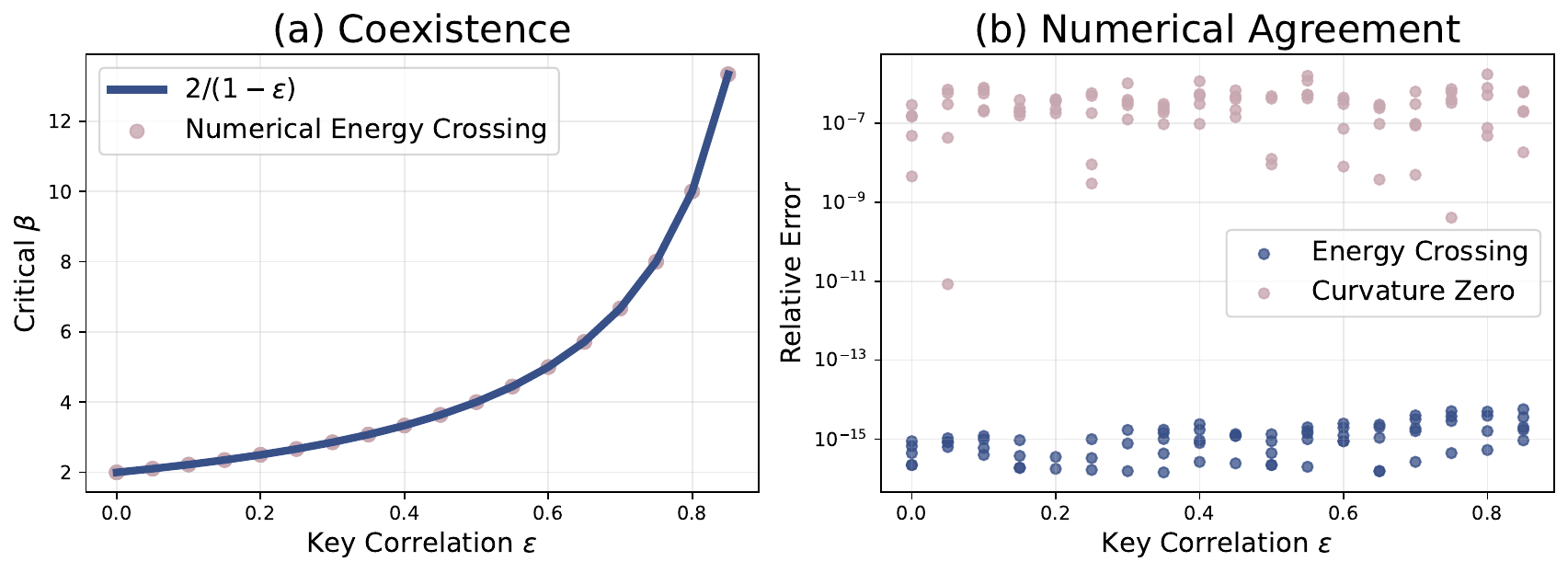}
    \caption{
    \textbf{Two-key coexistence and spinodal validation.}
    \small{
    We consider two unit keys with correlation $\epsilon$.
    \textbf{(a).} 
    The solid blue curve presents $\beta_{\text{coexist}}=2/(1-\epsilon)$.
    The numerical points are the root of the free energy difference $\Delta\Gamma(\hat{\beta}_{\text{coexist}})=0$.
    The numerical energy crossings coincide with the theoretical curve over the full correlation range.
    \textbf{(b).} 
    We plot the relative errors of energy crossing (coexistence) and curvature zero (spinodal).
    The low values validate the theoretical fromulas in \cref{prp:coex_point} and \cref{prp:spinodal_point}.
    }}
    \label{fig:two_key_coexistence_and_spinodal}
\end{figure}

From \cref{fig:two_key_coexistence_and_spinodal} (a), the blue curve represents the theoretical formula $\beta_{\text{coexist}}=2/(1-\epsilon)$.
The numerical energy crossing points lie on the theoretical curve.
This validates the special coexistence formula.
For \cref{fig:two_key_coexistence_and_spinodal} (b), we plot the relative errors of energy crossing (coexistence) and curvature zero (spinodal).
The maximal relative error of the coexistence formula is $5.73\times 10^{-15}$, and which of the spinodal formula is $1.72\times10^{-6}$.
They both validate the theoretical formulas in \cref{prp:coex_point} and \cref{prp:spinodal_point} under the uniform distribution and well-separation conditions.

\begin{figure}[h]
    \centering
    \includegraphics[width=0.5\linewidth]{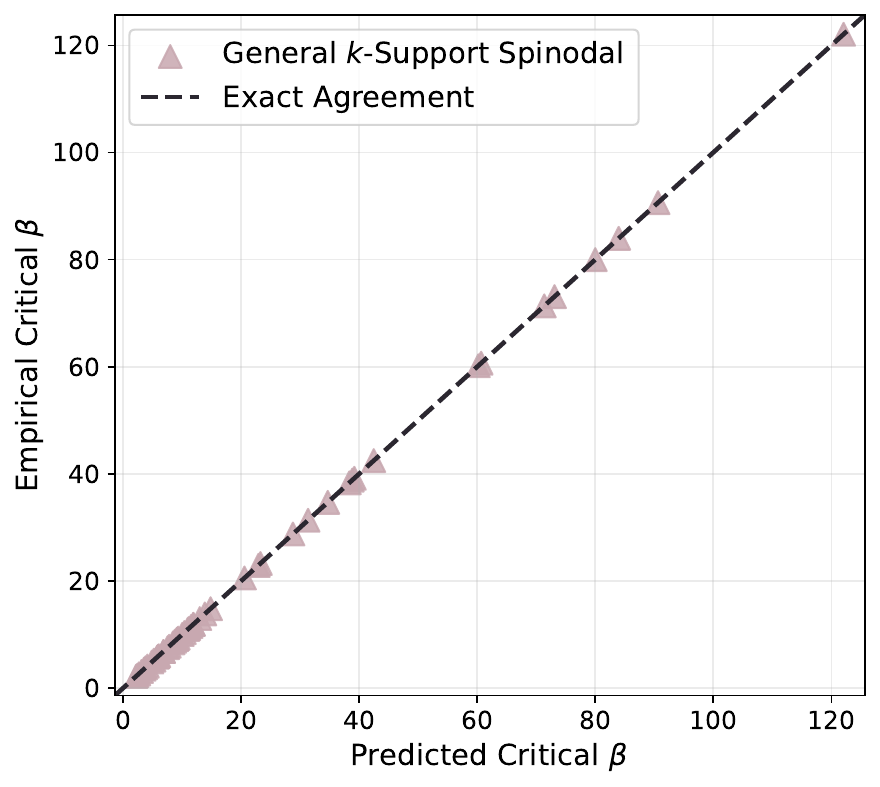}
    \caption{\textbf{General $\boldsymbol{k}$-support spinodal validation.}
    \small{
    We test the general $k$-support spinodal formula $\beta_{\text{jump}}=2/\lambda_{\text{max}}(PGP)$.
    We consider 50 systems with unit keys.
    The numerical points agree with the theoretical values, which validates \cref{prp:spinodal_point}.
    }}
    \label{fig:general_k_support_spinodal}
\end{figure}

In \cref{fig:general_k_support_spinodal}, we compare the theoretical spinodal formula \cref{prp:spinodal_point} and the empirical spinodal values.
Across all 50 general $k$-support systems, the predicted $\beta_{\text{sp}}$ aligns with the empirical one closely, with the maximal relative error $1.92\times10^{-6}$.
Hence, the agreement supports not only the qualitative existence of the transitions, but also the quantitative location of the predicted critical inverse-temperatures.

\clearpage
\def\arxivfont{\rm}
\bibliographystyle{plainnat}

\bibliography{refs}

\end{document}